\documentclass{article} 
\usepackage[preprint]{arxiv}
\usepackage{times}


\usepackage{amsmath,amsfonts,bm}
\usepackage{amsthm}
\usepackage{booktabs}
\usepackage{tablefootnote}
\newcommand{\paperhl}[1]{\textbf{#1}}
\newtheorem{theorem}{Theorem}[section]
\newtheorem{lemma}[theorem]{Lemma}
\newtheorem{corollary}[theorem]{Corollary}
\newtheorem{example}[theorem]{Example}

\theoremstyle{definition}
\newtheorem{definition}[theorem]{Definition}

\theoremstyle{remark}
\newtheorem{remark}[theorem]{Remark}









\def\eqref#1{equation~\ref{#1}}









\def\1{\bm{1}}










\DeclareMathAlphabet{\mathsfit}{\encodingdefault}{\sfdefault}{m}{sl}
\SetMathAlphabet{\mathsfit}{bold}{\encodingdefault}{\sfdefault}{bx}{n}













\usepackage{hyperref}
\usepackage{url}

\usepackage[dvipsnames]{xcolor}
\hypersetup{
  colorlinks=true,
  linkcolor=blue,     
  citecolor=MidnightBlue,      
  urlcolor=BrickRed,          
  pdfborder={0 0 0}
}

\title{Provable Benefits of Sinusoidal Activation for Modular Addition}

\author{
  Tianlong Huang \\
  Toyota Technological Institute at Chicago\thanks{Work partially conducted while at the University of Chicago.} \\
  \texttt{tianlong@ttic.edu}
  \And
  Zhiyuan Li \\
  Toyota Technological Institute at Chicago \\
  \texttt{zhiyuanli@ttic.edu}
}

\begin{document}

\maketitle
\setcounter{footnote}{0}
\renewcommand{\thefootnote}{\arabic{footnote}}

\begin{abstract}
This paper studies the role of activation functions in learning modular addition with two-layer neural networks. We first establish a sharp expressivity gap: sine MLPs admit width-$2$ exact realizations for any fixed length \(m\) and, with bias, width-$2$ exact realizations uniformly over all lengths. In contrast, the width of ReLU networks must scale linearly with $m$ to interpolate, and they cannot simultaneously fit two lengths with different residues modulo $p$. We then provide a novel Natarajan-dimension generalization bound for sine networks, yielding nearly optimal sample complexity $\widetilde{\mathcal{O}}(p)$ for ERM over constant-width sine networks. We also derive width‑independent, margin-based generalization for sine networks in the overparametrized regime and validate it. Empirically\footnote{Code: \url{https://github.com/1st-Leo-Huang/sinusoidal-modular-addition}.}, sine networks generalize consistently better than ReLU networks across regimes and exhibit strong length extrapolation.
\end{abstract}

\section{Introduction}

Most modern neural networks use \emph{nonperiodic} activations such as ReLU or GELU, a choice that is highly effective on vision and language benchmarks. When the target has inherently periodic structure, however, this choice can be statistically and computationally mismatched: approximating periodic functions with nonperiodic networks may require substantially larger width or depth than architectures that encode periodic features or activations \citep{Rahaman2019SpectralBias,rahimi2007randomfeatures,tancik2020fourier}. Beyond in-distribution generalization, models can degrade under distribution shift, especially at sequence lengths longer than training. Compositional tests (SCAN, CFQ) and long-context suites (LRA) expose brittleness and sensitivity to positional encoding \citep{lake2018systematicity,keysers2020cfq,tay2021lra}.

We study this mismatch through a standard testbed in deep learning: \emph{modular addition}. Given $m$ input tokens in $\{0,\ldots,p-1\}$, the label is their sum modulo $p$. This task generalizes $k$‑parity and is widely used to probe how networks represent and discover algorithms, as well as to study \emph{grokking}---delayed generalization after a long memorization phase \citep{power2022}. Mechanistic analyses report \emph{Fourier‑like} internal circuits for models that solve modular addition, where tokens are embedded as phases and addition is implemented as rotation on the unit circle. Distinct learning procedures (``clock'' vs.\ ``pizza'') emerge under different hyperparameters and architectures \citep{nanda2023,zhong2023}. These observations suggest a simple design principle: when the task is periodic, an \emph{explicit periodic inductive bias} should help.

Periodic representations already play a central role across machine learning. Sinusoidal positional encodings are historically canonical in Transformers \citep{Vaswani2017}; ROPE encodes positions as complex rotations, mapping offsets to phase differences and imposing a periodic bias preserving attention geometry \citep{su2021roformer}.
Fourier and random features mitigate spectral bias and improve high‑frequency fidelity \citep{rahimi2007randomfeatures,tancik2020fourier,Rahaman2019SpectralBias}; sinusoidal activations (SIREN) enable compact implicit neural representations for images, audio, and PDEs \citep{sitzmann2020siren}. In 3D view synthesis (NeRF), Fourier positional encodings are key to recovering fine detail from coordinates \citep{mildenhall2020nerf}, and spectral parameterizations power operator‑learning methods for PDEs \citep{li2020fno}. These observations motivate the following hypothesis:
\begin{center}\phantomsection\label{takeaway}
\textit{On periodic tasks, periodic bias increases expressivity and makes learning provably easier.}
\end{center}

We formalize and test this hypothesis in a minimal yet nontrivial setting: two-layer MLPs trained on a modular addition task with one-hot inputs and a shared, position-independent embedding. While the underlying principle implies broader utility, we restrict our theoretical validation to this testbed to derive sharp separation results. We compare ReLU and sinusoidal activations and analyze the multiclass $0$–$1$ loss in underparameterized, overparameterized, and out-of-distribution regimes.

We summarize our contributions below:

\begin{enumerate}

\item \textbf{Sharp expressivity gap between sine and ReLU networks.} Sine MLPs achieve exact modular addition with width-$2$ for any fixed length $m$ (Thm.~\ref{thm:sine_low_width_construction}) and, with bias, width-$2$ exact realizations uniformly over all lengths (Thm.~\ref{thm:existence_sine_high}). Without bias, a length-agnostic construction of width $\lfloor (p-1)/2\rfloor$ attains population accuracy $1-\frac{1}{p}$ for odd $p$ and accuracy $ 1-\tfrac{2}{p}$ for even $p$ (Thm.~\ref{thm:existence_sine_high}). In contrast, ReLU MLPs require width at least $\tfrac{m-p}{p+2}=\Omega(m/p-1)$ for exact realization at length $m$ (Thm.~\ref{thm:relu-width-lb}) and cannot simultaneously fit two lengths $m_1, m_2$ with $m_1\not\equiv m_2\pmod p$ (Thm.~\ref{clm:relu-two-lengths}).

\item \textbf{Unified underparameterized generalization for broad activations.}
Via a multiclass Natarajan‑dimension analysis based on pairwise reduction, we prove uniform convergence bounds for two‑layer MLPs with a wide family of activations—piecewise‑polynomial (include ReLU), trigonometric‑polynomial (include sine), and rational–exponential (include sigmoid/SiLU/QuickGELU). The resulting sample complexity is $\widetilde{\Theta}(dp)$ with width $d$ and vocabulary size $p$ (Thm.~\ref{thm:uniform-tilde}; Tab.~\ref{tab:capacity}).

\item \textbf{Width‑independent margin guarantees for overparameterized networks.}
Under spectral- and Frobenius-norm constraints for ReLU and a $\|V\|_{1,\infty}$ constraint for sine, we establish multiclass, width-independent margin generalization bounds. Our sine construction attains large normalized margins, leading to population error $\widetilde{\mathcal{O}}(p/\sqrt{n})$ when the normalized margin is $\Omega(1)$ (Thm.~\ref{thm:sin-width-margin-gen}). In contrast, the best known ReLU interpolants achieve normalized margins that decay exponentially with $m$, yielding substantially weaker bounds under comparable norms (Thm.~\ref{thm:relu-width-margin-gen}).

\item \textbf{Near‑optimal ERM sample complexity for constant‑width sine networks.}
We prove that any interpolating algorithm over constant‑width sine MLPs has sample complexity $\widetilde{\mathcal{O}}(p)$ (Cor.~\ref{cor:erm-constant-width-sine}).

\item \textbf{Empirical validation of our theory.}
With matched architectures, datasets, and training budgets, sine MLPs consistently outperform ReLU MLPs on modular addition across under- and overparameterized regimes; in the latter, larger normalized margins track improved test accuracy (Figs.~\ref{fig:underparam}-\ref{fig:overparam2}). Sine MLPs also retain near-perfect accuracy far beyond training lengths, while ReLU MLPs collapse to chance (Figs.~\ref{fig:ood}-\ref{fig:ood_bias}). These advantages extend to Transformers, where sine activations demonstrate significantly better sample efficiency than ReLU and GELU baselines (Fig.~\ref{fig:transformer}).

\end{enumerate}

\section{Related work}

\paragraph{Modular arithmetic as a probe of algorithmic learning and grokking.}
Delayed generalization (“grokking”) was first highlighted on modular arithmetic~\citep{power2022}. Reverse‑engineering reveals Fourier‑style internal mechanisms—tokens represented as phases and addition as rotation~\citep{nanda2023,zhong2023}—while a unifying view shows MLPs and transformers can implement an approximate CRT with coset‑tracking neurons using only $\mathcal{O}(\log p)$ frequencies~\citep{mccracken2025universal}. For $m{=}2$, analyses indicate an initial kernel regime followed by feature learning~\citep{mohamadi2024}, consistent with effective‑theory explanations of grokking~\citep{liu2022grokking}. Margin‑based perspectives explain the emergence of Fourier features~\citep{morwani2023feature,li2024fouriercircuits}, and optimizer/regularization choices modulate the dynamics~\citep{thilak2022}, with related phenomena observed beyond algorithmic data~\citep{omnigrok2022}. Fourier-style embeddings accelerate modular-addition learning and reduce grokking~\citep{zhou2024fourieraddition}. These observations motivate architectures with an explicit periodic bias. 

\paragraph{Periodic representations and encodings.}
Periodic structure is widely used in modern models: sinusoidal and rotary positional encodings~\citep{Vaswani2017,su2021roformer}; random Fourier features and sinusoidal encodings to address spectral bias~\citep{rahimi2007randomfeatures,tancik2020fourier,Rahaman2019SpectralBias}; and periodic activations for implicit neural representations (SIREN)~\citep{sitzmann2020siren}. Spectral parameterizations underlie NeRF and neural operators~\citep{mildenhall2020nerf,li2020fno}. We instantiate this bias in a minimal algorithmic setting—two‑layer MLPs with shared embeddings—showing that sine activations align with modular addition and enable compact constructions with favorable sample complexity.

\paragraph{Capacity and generalization.}
Classical tools bound expressivity via growth functions and sign‑pattern counting for semialgebraic classes~\citep{Warren1968,GoldbergJerrum1995,AnthonyBartlett2009}, while multiclass uniform convergence is governed by the Natarajan dimension~\citep{natarajan1989learning,haussler1995generalization,ShalevShwartzBenDavid2014}. For piecewise‑linear/polynomial networks, nearly tight VC bounds scale like $\Theta(WL\log W)$ up to factors~\citep{BartlettHarveyLiawMehrabian2017}. We adapt these techniques to integer-valued shared‑embedding inputs and analyze both ReLU and sine units, obtaining uniform‑convergence bounds and width‑independent margin guarantees tailored to our setting.

\paragraph{Margins, overparameterization, and length generalization.}
In overparameterized regimes, margins and layerwise norms are more predictive of generalization than raw width~\citep{neyshabur2018towards}. Representative results include spectral‑norm and $L_{2,1}$ analyses~\citep{DBLP:journals/corr/BartlettFT17}, size‑independent Rademacher bounds under Frobenius and $L_{1,\infty}$ controls~\citep{Golowich2017SizeIndependent}, and PAC‑Bayesian robustness to weight noise~\citep{Neyshabur2018PACBayesianSpectrally}. Length extrapolation is a distinct stressor: SCAN/CFQ and LRA expose brittleness and positional‑encoding sensitivity~\citep{lake2018systematicity,keysers2020cfq,tay2021lra}; ALiBi and Hard‑ALiBi improve scaling with length~\citep{press2022alibi,jelassi2024repeat}, whereas scaling alone often fails and performance can follow term frequency rather than structure~\citep{zhou2024lengthgen,razeghi2022termfreq}. Our analysis and experiments show that periodic activations provide a principled route to strong length extrapolation.

\section{Model setup}

\textbf{Notation. }For an integer $p\ge 2$, let $[p]:=\{0,\dots,p-1\}$ and $e_i$ denote the $i$-th standard basis vector of $\mathbb{R}^p$. For nonnegative $f,g$, we write $f(n)=\mathcal{O}(g(n))$ (respectively\ $f(n)=\Omega(g(n))$) if there exists an absolute constant $C>0$ such that for all $n\ge 0$, $f(n)\le Cg(n)$ (respectively\ $f(n)\ge Cg(n)$). We use $f(n)=\Theta(g(n))$ when both bounds hold. We write $f(n)=\widetilde{\mathcal{O}}(g(n))$ to suppress absolute constants (independent of the model architecture and data) and polylog factors. The symbols  $\widetilde{\Omega}(\cdot)$ and $\widetilde{\Theta}(\cdot)$ are defined analogously.

\textbf{Task and data. }Fix an integer $p\ge 2$ and vocabulary $\mathcal{V}=[p]$. Each example is a length-$m$ sequence $s_{1:m}\in[p]^m$ with $s_1,\dots,s_m\stackrel{\text{i.i.d.}}{\sim}\mathrm{Unif}([p])$. We use one-hot encoding and a shared, position-independent input embedding, so the network observes only the bag-of-tokens vector
\[
x \;=\; \sum_{i=1}^m e_{s_i}\ \in \{0,1,\dots,m\}^p, \qquad \|x\|_1=m.
\]
The effective instance space\footnote{In the out-of-domain regime, it is extended to $\mathcal X:=\bigcup_{m\ge 2}\mathcal{X}_m$.} is defined as
\[
\mathcal{X}_m\;=\;\bigl\{x\in\{0,1,\dots,m\}^p:\ \|x\|_1=m\bigr\},\qquad |\mathcal{X}_m|=\binom{m+p-1}{p-1}.
\]
Labels are modular sums $y \equiv\bigl(\sum_{i=1}^m s_i\bigr)\!\!\pmod p\in[p]$. Let $\mathcal{D}_m$ denote the induced population distribution on $\mathcal{X}_m\times[p]$. Given $n$ training samples we draw
\[
S\;=\;\bigl\{(x^{(i)},y^{(i)})\bigr\}_{i=1}^n\ \stackrel{i.i.d.}{\sim} \mathcal{D}_m^n.
\]

\textbf{Model. }We study two-layer MLPs of width $d$ with a shared input embedding, comparing standard ReLU and sinusoidal activations. Parameters are $\theta=(W,V)$ with
$W\in\mathbb{R}^{d\times p}$ and $V\in\mathbb{R}^{p\times d}$. For $x\in\mathbb{R}^{p}$ and an elementwise activation $\sigma\in\{\mathrm{ReLU},\sin\}$, the score vector\footnote{Unless otherwise noted, MLPs are defined without first-layer bias. When bias is used, we write \(\theta=(W,V,b)\) and define \(s^\theta(x)=V\,\sigma(Wx+b)\), where \(b\in\mathbb{R}^d\).} is
\[
s^\theta(x) \;=\; V\,\sigma(Wx)\in\mathbb{R}^{p}.
\]
We refer to networks with $\sigma=\sin$ as \emph{sine networks} and those with $\sigma=\mathrm{ReLU}$ as \emph{ReLU networks}. The induced predictor is
\[
h_\theta(x)\;=\;\mathrm{uargmax}_{\ell\in[p]} s^\theta_\ell(x)\;=\;\begin{cases}
\ell, & \text{if } s^\theta_\ell(x) > s^\theta_k(x) \text{ for all } k\neq \ell, \\
\bot, & \text{otherwise},
\end{cases}
\]
where $\bot$ denotes an invalid prediction (counted as an error). The predictor returns a valid label only if the maximum score is unique, so $h_\theta$ is well defined. The hypothesis class of score functions is
$$\mathcal{H}_\Theta \;=\; \{\, s^\theta : \theta=(W,V)\in\mathbb{R}^{d\times p}\times\mathbb{R}^{p\times d} \,\}.$$

\textbf{Training. }We minimize the empirical cross-entropy loss over $S=\mathcal{D}_{\text{train}}$, treating $s^\theta(x)$ as the logits for the $p$-class classification problem with labels in $[p]$. Optimization employs AdamW and Muon; implementation details and hyperparameters are provided in App.~\ref{app:experiment}.

\section{Expressivity of sine and ReLU MLPs}\label{sec:expressivity}

We establish a sharp expressivity gap for modular addition under a shared, position-independent embedding. For sine MLPs, we provide explicit constructions showing they are highly efficient: width-$2$ suffices for exact realization at any fixed length, and with bias, for all lengths simultaneously (Thms.~\ref{thm:sine_low_width_construction}--\ref{thm:existence_sine_high}). Conversely, we prove that ReLU MLPs face fundamental limitations: their width must scale linearly with sequence length $m$ to interpolate, and they fail to generalize across incongruent lengths (Thms.~\ref{thm:relu-width-lb}--\ref{clm:relu-two-lengths}). We outline the logic of these proofs in App.~\ref{app:roadmap} and provide full details in App.~\ref{sec:proofs_in_expressivity}.

\begin{theorem}[Exact realization at fixed length by a width-$2$ sine network]\label{thm:sine_low_width_construction}
For any fixed \(m\ge 2\) and \(p\ge 2\), there exists a width-$2$ sine network \(s^\theta(x)=V\,\sin(Wx)\) that exactly realizes $Y \equiv \Bigl(\sum_{i=1}^m s_i\Bigr)\!\!\pmod p$ for all $x=(s_1,\cdots,s_m)\in\mathcal{X}_m$, i.e., $\mathbb{P}_{(X,Y)\sim \mathcal{D}_m}\!\big[h_\theta(X)=Y\big]=1.$
\end{theorem}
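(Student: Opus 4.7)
The core observation is that because $\|x\|_1=m$ is \emph{fixed}, any constant that we would want to add as a first-layer bias can be realized by uniformly shifting a row of $W$: if $W_{i,j}=\alpha j+\gamma$, then
\[
(Wx)_i=\alpha\sum_{j=0}^{p-1}j\,x_j+\gamma\sum_{j=0}^{p-1}x_j=\alpha T+\gamma m,
\]
where $T:=\sum_{i=1}^m s_i$. Thus a bias-free width-$2$ network can simulate a one-frequency $\{\sin,\cos\}$ feature pair of $T$.

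The plan is to build such a pair and then read off the modular label by a cosine-similarity argmax.
\begin{enumerate}
\item \textbf{Construct $W$.} Take row $1$ as $W_{1,j}=\tfrac{2\pi}{p}j$ and row $2$ as $W_{2,j}=\tfrac{2\pi}{p}j+\tfrac{\pi}{2m}$. By the observation above,
\[
(Wx)_1=\tfrac{2\pi}{p}T,\qquad (Wx)_2=\tfrac{2\pi}{p}T+\tfrac{\pi}{2},
\]
so $\sin(Wx)=\bigl(\sin(2\pi T/p),\ \cos(2\pi T/p)\bigr)^\top$ on $\mathcal X_m$.
\item \textbf{Construct $V$.} Set the $\ell$-th row of $V$ to $V_\ell=\bigl(\sin(2\pi\ell/p),\ \cos(2\pi\ell/p)\bigr)$ for each $\ell\in[p]$. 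The angle-subtraction identity gives
\[
s^\theta_\ell(x)=V_\ell\cdot\sin(Wx)=\cos\!\Bigl(\tfrac{2\pi}{p}(T-\ell)\Bigr).
\]
\item \textbf{Argmax recovery.} For $\ell\in[p]$, the score $s^\theta_\ell(x)$ equals $1$ iff $T\equiv\ell\pmod p$, and is otherwise at most $\cos(2\pi/p)<1$. Hence the argmax is unique and equals $Y=T\bmod p$, so $h_\theta(x)=Y$ pointwise on $\mathcal X_m$, giving population accuracy $1$.
\end{enumerate}

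\textbf{Main obstacle.} The only conceptually delicate step is step~1: without first-layer bias, one typically cannot obtain both a sine and a cosine of the same argument from a width-$2$ layer. The fixed-length constraint $\|x\|_1=m$ is what rescues the construction, letting a shared additive term $\gamma$ in row $2$ of $W$ contribute exactly $\gamma m$ to $(Wx)_2$ and thereby realize a $\pi/2$ phase shift. Once this is seen, the remainder reduces to the standard Fourier identity $\cos(\tfrac{2\pi}{p}(T-\ell))$, whose unique maximizer is the correct modular label.
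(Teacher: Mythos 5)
Your proposal is correct and follows essentially the same route as the paper's proof: the same choice of $W$ (row $1$ a pure frequency $2\pi/p$, row $2$ offset by a constant $\pi/(2m)$ which, because $\|x\|_1=m$, acts as a $\pi/2$ phase shift), the same $V$ rows $(\sin(2\pi\ell/p),\cos(2\pi\ell/p))$, and the same angle-subtraction identity yielding $s^\theta_\ell(x)=\cos\bigl(\tfrac{2\pi}{p}(T-\ell)\bigr)$ with a unique maximizer at $\ell\equiv T\pmod p$. The only cosmetic difference is that the paper reduces the entries of $W$ modulo $2\pi$ into $[-\pi,\pi)$ (which is harmless since $x\in\mathbb{Z}^p$) to also record norm bounds used later.
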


\begin{theorem}[Uniform-in-length expressivity of sine networks]\label{thm:existence_sine_high}
Fix \(p\ge 2\) and consider two-layer sine MLPs with prediction rule
\(
h_\theta(x)=\mathrm{uargmax}_{\ell\in[p]} s^\theta_\ell(x).
\)

\noindent\textbf{With bias.} There exists a width-$2$ sine network \(s^\theta(x)=V\,\sin(Wx+b)\) that exactly realizes
\(
Y \equiv \bigl(\sum_{i=1}^m s_i\bigr)\!\!\pmod p
\)
for all \(m\ge 2\), that is,
\[
\mathbb{P}_{(X,Y)\sim\mathcal{D}_m}\!\big[h_\theta(X)=Y\big]=1.
\]
\noindent\textbf{Without bias.} There exists a sine network \(s^\theta(x)=V\,\sin(Wx)\) of width
\(d=\big\lfloor (p-1)/2 \big\rfloor\) such that, for all \(m\ge 2\),
\begin{enumerate}
    \item If \(p\) is odd, then \(\mathbb{P}_{(X,Y)\sim \mathcal{D}_m}\!\big[h_\theta(X)=Y\big]\ge 1 - \frac{1}{p}\).
    \item If \(p\) is even, then \(\mathbb{P}_{(X,Y)\sim \mathcal{D}_m}\!\big[h_\theta(X)=Y\big]\ge 1-\frac{2}{p}\).
\end{enumerate}
\end{theorem}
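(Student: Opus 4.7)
The plan is to exploit the discrete Fourier structure on $\mathbb{Z}/p\mathbb{Z}$. Under the shared embedding, $\sum_{j=0}^{p-1} j\,x_j=\sum_{i=1}^m s_i\equiv y\pmod{p}$, so setting $\omega=2\pi/p$, any term $\sin(k\omega\sum_j j\,x_j+\phi)$ equals $\sin(k\omega y+\phi)$ and depends on $x$ only through $y\bmod p$. With first-layer weights of the form $W_{h,\cdot}=k_h\omega(0,1,\ldots,p-1)$, every score $s^\theta_\ell(x)$ therefore collapses to a function $f_\ell(y)$, and the task reduces to choosing $W$, $V$ (and possibly $b$) so that $\mathrm{uargmax}_\ell f_\ell(y)=y$ on as large a subset of $[p]$ as possible. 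Note also that $y=\sum_i s_i \bmod p$ is uniform on $[p]$ for every $m\ge 1$, so the failure probability is simply the fraction of exceptional $y$'s.

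\textbf{Width-$2$ construction with bias.} Take both rows of $W$ equal to $\omega(0,1,\ldots,p-1)$ and biases $b_1=\pi/2$, $b_2=0$, so the hidden activations become $\cos(\omega y)$ and $\sin(\omega y)$. Define readout rows $V_\ell=(\cos(\omega\ell),\sin(\omega\ell))$. The cosine-subtraction identity gives $s^\theta_\ell(x)=\cos(\omega(\ell-y))$, which is uniquely maximized at $\ell=y$ since $p\nmid(\ell-y)$ for any $\ell\neq y$ in $[p]$; hence the realization is exact and holds uniformly over $m$.

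\textbf{Width-$\lfloor(p-1)/2\rfloor$ construction without bias.} Set $K=\lfloor(p-1)/2\rfloor$, $W_{h,j}=h\omega j$, and $V_{\ell,h}=\sin(h\omega\ell)$ for $h=1,\ldots,K$. Product-to-sum gives
\[
s^\theta_\ell(x)=\tfrac{1}{2}\sum_{h=1}^{K}\bigl[\cos(h\omega(\ell-y))-\cos(h\omega(\ell+y))\bigr].
\]
A Dirichlet-kernel computation based on $\sum_{h=0}^{p-1}\cos(h\omega t)=p\cdot\mathbf{1}[p\mid t]$ together with the symmetry $\cos(h\omega t)=\cos((p-h)\omega t)$ evaluates $\sum_{h=1}^{K}\cos(h\omega t)=\tfrac{p}{2}\mathbf{1}[p\mid t]-\tfrac{1}{2}$ for odd $p$, with an extra $-\tfrac{1}{2}(-1)^t$ correction for even $p$ coming from the missing midpoint frequency $h=p/2$. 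Substituting $t=\ell-y$ and $t=\ell+y$ and subtracting, the $(-1)^t$ contribution cancels because $(\ell+y)-(\ell-y)=2y$ is even, so in both parities
\[
s^\theta_\ell(x)=\tfrac{p}{4}\bigl(\mathbf{1}[\ell\equiv y\pmod p]-\mathbf{1}[\ell\equiv -y\pmod p]\bigr).
\]
The unique argmax is $\ell=y$ precisely when $y\not\equiv -y\pmod p$. The exceptional set is $\{0\}$ for odd $p$ and $\{0,p/2\}$ for even $p$, with probability $1/p$ and $2/p$ respectively; elsewhere the predictor returns $y$ exactly.

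\textbf{Main obstacle.} The delicate step is picking the Fourier-dual readout $V_{\ell,h}=\sin(h\omega\ell)$ and the width $K=\lfloor(p-1)/2\rfloor$, which exactly matches the number of independent cosine frequencies on $\mathbb{Z}/p\mathbb{Z}$ after folding $h\leftrightarrow p-h$. Once this is chosen, the even-$p$ parity residual cancels automatically in the antisymmetric combination produced by product-to-sum, pinning the failure set to $\{y:2y\equiv 0\pmod p\}$ in a unified way; checking that no unique argmax exists on this set (so the prediction is the error symbol $\bot$) then yields the claimed bounds.
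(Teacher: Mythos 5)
Your proposal is correct and follows essentially the same route as the paper: the width-$2$ construction uses the same cosine/sine pair with a phase-shift bias and the cosine-difference identity, and the biasless construction uses the same DFT Gram computation via product-to-sum plus the Dirichlet-kernel identity $\sum_{h=0}^{p-1}\cos(h\omega t)=p\,\mathbf{1}[p\mid t]$ (the paper's Lemma~\ref{lem:sine-gram}). Your observation that the even-$p$ parity term $(-1)^t$ cancels in the antisymmetric combination because $(\ell+y)-(\ell-y)=2y$ is always even is a slightly cleaner packaging of the same case analysis, not a different argument.
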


\begin{theorem}[Width lower bound for modular addition with ReLU networks]\label{thm:relu-width-lb}
If a ReLU network $s^\theta(x)=V\,\mathrm{ReLU}(Wx)$ exactly realizes modular addition on $\mathcal X_m$, then necessarily
\[
d\geq \frac{m-p}{p+2}=\Omega\left(\frac{m}{p}-1\right).
\]
\end{theorem}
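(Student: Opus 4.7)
The plan is to restrict attention to a one-dimensional affine curve inside $\mathcal{X}_m$ whose labels cycle through $[p]$, and then exploit the bounded piece-count of a ReLU network along a line. Concretely, set $x(t)=t\,e_1+(m-t)\,e_0\in\mathbb{R}^p$ for $t\in[0,m]$: at every integer $k\in\{0,1,\ldots,m\}$ we have $x(k)\in\mathcal X_m$ with label $k\bmod p$, so consecutive integer points always carry distinct labels. The idea is to argue that $t\mapsto h_\theta(x(t))$ cannot accumulate enough constancy regions to realize all $m+1$ labels unless the width $d$ grows linearly in $m/p$.

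\textbf{Key steps.} First, I would analyse the piecewise linear structure along the line: since $Wx(t)$ is affine in $t$, each pre-activation $(Wx(t))_i$ is affine and has at most one breakpoint on $[0,m]$, so $\mathrm{ReLU}(Wx(t))$ contributes at most $d$ breakpoints and $s^\theta(x(t))=V\,\mathrm{ReLU}(Wx(t))$ is a $p$-tuple of continuous piecewise linear functions sharing at most $d+1$ common affine pieces. Second, on each such affine piece the argmax of $s^\theta(x(t))$ is the argmax of $p$ affine functions of one variable; since the upper envelope of $p$ lines is convex piecewise linear with at most $p$ segments, the argmax takes at most $p$ distinct values on each piece, so over all $d+1$ pieces the map $h_\theta(x(t))$ is piecewise constant with at most $p(d+1)$ maximal intervals of constancy, i.e.\ at most $p(d+1)-1$ transition points in $(0,m)$. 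Third, exact realization forces $h_\theta(x(k))=k\bmod p$ at each integer $k$, and consecutive integers carry distinct labels, so $h_\theta(x(t))$ must transition at least once in each of the $m$ open intervals $(k,k+1)$; this yields $m\le p(d+1)-1$ and therefore $d\ge (m+1-p)/p=\Omega(m/p-1)$, which implies the stated $d\ge (m-p)/(p+2)$.

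\textbf{Main obstacle.} The principal care point is the $\mathrm{uargmax}$ convention: at every integer $k$ the top score must be \emph{uniquely} attained (otherwise $h_\theta$ outputs $\bot$, counted as an error), so a tie at an integer cannot be exploited to avoid a transition — this is exactly what validates the lower bound on the transition count. A secondary subtlety is that multiple ReLU breakpoints may coincide with one another, or with upper-envelope crossings, which only makes the counting more conservative; these degeneracies do not affect the $\Omega(m/p)$ rate but may explain why a slightly less sharp accounting recovers the stated constant $1/(p+2)$ rather than the tighter $1/p$.
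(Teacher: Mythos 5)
Your proposal is correct and takes essentially the same route as the paper: both restrict attention to the one-dimensional path $x(t)=(m-t)e_0+te_1$, exploit that each pre-activation is affine in $t$ with at most one ReLU breakpoint, and count oscillations along the path. The only real difference is bookkeeping. You count transitions of the argmax across upper-envelope segments; the paper counts sign changes of the adjacent-class margins $g_r=f_r-f_{r\oplus 1}$ on ``clean'' integer intervals, absorbing the degenerate cases into a $2d$-interval budget. Your route is a little leaner and, once made rigorous, even yields a marginally tighter constant, roughly $d\ge (m+1-p)/p$, which implies the stated $(m-p)/(p+2)$.

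One step is stated loosely, though the conclusion is right. From ``the argmax takes at most $p$ distinct values on each affine piece'' you jump to ``at most $p(d+1)$ maximal intervals of constancy''; this inference does not hold in general (a bounded image does not bound the number of constancy intervals, and $\bot$ can occur as a singleton between two envelope segments). The correct justification is that every transition point of $h_\theta$ lies either at one of the $\le d$ ReLU breakpoints or at one of the $\le p-1$ envelope breakpoints interior to each of the $\le d+1$ affine pieces, giving at most $d+(d+1)(p-1)=p(d+1)-1$ transitions in total. Your ``main obstacle'' remark about strict $\mathrm{uargmax}$ at integers is exactly the observation that closes the argument: at every integer $k$ the top score is unique, so by continuity $h_\theta$ is locally constant at $k$; hence integers are never transition points, and each open interval $(k,k+1)$ must contain a distinct transition, giving $m\le p(d+1)-1$. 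With that correction spelled out, the proof is complete.
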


\begin{theorem}[Impossibility of exact realization at two incongruent lengths for ReLU networks]\label{clm:relu-two-lengths}
Let $m_1,m_2$ with $m_1\not\equiv m_2\pmod p$. There is no ReLU network $s^\theta(x)=V\,\mathrm{ReLU}(Wx)$ such that
\[
h_\theta(x)\;=\;y(x)\quad\text{for all }x\in \mathcal{X}_{m_1}\cup\mathcal{X}_{m_2}.
\]
\end{theorem}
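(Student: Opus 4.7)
The plan is to exploit the fact that a bias\-free ReLU network $s^\theta(x)=V\,\mathrm{ReLU}(Wx)$ is \emph{positively homogeneous of degree one} in $x$: for every $\lambda>0$,
\[
s^\theta(\lambda x)=V\,\mathrm{ReLU}(W(\lambda x))=\lambda\,V\,\mathrm{ReLU}(Wx)=\lambda\,s^\theta(x).
\]
Since positive scaling preserves strict inequalities, the induced $\mathrm{uargmax}$ predictor $h_\theta$ is invariant under multiplication by any positive scalar: $h_\theta(\lambda x)=h_\theta(x)$ whenever the maximum of $s^\theta(x)$ is uniquely attained (and $\lambda>0$).

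Next I would produce two witnesses that lie on a common ray but belong to $\mathcal X_{m_1}$ and $\mathcal X_{m_2}$ respectively. The cleanest choice is the ``constant'' sequences $x_1:=m_1 e_1\in\mathcal X_{m_1}$ and $x_2:=m_2 e_1\in\mathcal X_{m_2}$ (corresponding to the all-ones token sequence of the respective lengths). Their true labels are
\[
y(x_1)\equiv m_1\pmod p,\qquad y(x_2)\equiv m_2\pmod p,
\]
and by construction $x_2=(m_2/m_1)\,x_1$.

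Assume for contradiction that some ReLU network exactly realizes modular addition on $\mathcal X_{m_1}\cup\mathcal X_{m_2}$. Then in particular $h_\theta(x_1)=m_1\bmod p$, which forces the maximum of $s^\theta(x_1)$ to be uniquely attained. Applying the homogeneity observation with $\lambda=m_2/m_1>0$ yields $h_\theta(x_2)=h_\theta(x_1)=m_1\bmod p$, whereas exact realization on $\mathcal X_{m_2}$ demands $h_\theta(x_2)=m_2\bmod p$. Hence $m_1\equiv m_2\pmod p$, contradicting the hypothesis $m_1\not\equiv m_2\pmod p$.

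There is no real obstacle here; the only subtlety is to insist that exact realization at $x_1$ implies a \emph{unique} argmax, so that homogeneity transfers the prediction to $x_2$ without falling into the $\bot$ case. The argument also reveals that the obstruction is structural: any bias\-free positively homogeneous architecture (not just ReLU) cannot simultaneously fit two lengths with different residues modulo $p$, which is precisely why the companion sine result of Theorem~\ref{thm:existence_sine_high} requires bias to remove the ray-collision, while Theorem~\ref{thm:sine_low_width_construction} (fixed $m$) does not.
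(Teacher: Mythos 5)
Your proposal is correct and matches the paper's proof essentially verbatim: the same positive-homogeneity observation, the same ray witnesses $m_1 e_1$ and $m_2 e_1$, and the same contradiction via scale invariance of the $\mathrm{uargmax}$. The extra remark you make about the unique-argmax subtlety is already implicit in the paper's use of $h_\theta(x)=y(x)$ (which rules out $\bot$), so nothing is missing or added in substance.
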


However, high expressivity does not guarantee generalization. Indeed, constant-size sine networks can shatter infinite sets when inputs are unbounded:
\begin{example}[Lem. 7.2~\citep{AnthonyBartlett2009}; see also App.~\ref{tb:sine-disc-unbounded}]\label{ex:sine-disc-unbounded}
    The class $\mathcal{F} = \{x \mapsto \operatorname{sgn}(\sin(ax)) : a \in \mathbb{R}^+\}$ of functions defined on $\mathbb{N}$ has $\operatorname{VCdim}(F) = \infty$. 
\end{example}

  Leveraging the structure of our integer-valued, bounded input space, the following section establishes generalization via uniform‑convergence bounds that scale linearly with parameter counts.

\section{Generalization in the underparameterized regime}\label{sec:classical}

We establish uniform convergence guarantees for two-layer MLPs with shared embeddings across a broad class of activations. We characterize the learnability of these models via the Natarajan dimension. Our proof strategy bounds the growth function by counting sign patterns induced by pairwise margins (Lem.~\ref{lem:growth_function}) and invokes the Multiclass Fundamental Theorem (Thm.~\ref{thm:unif_conv}). We provide a high-level roadmap of this reduction in App.~\ref{app:roadmap} and detailed proofs in App.~\ref{app:proof_underparameterized}.

\begin{definition}[VC-dimension]\label{def:VC}
The \emph{VC-dimension} of $\mathcal{B}\subseteq\{-1,+1\}^{\mathcal{Z}}$, denoted $\mathrm{VCdim}(\mathcal{B})$, is the maximal size of a set $T\subset\mathcal{Z}$ that is \emph{shattered} by $\mathcal{B}$, meaning the restriction of $\mathcal{B}$ to $T$ realizes all sign patterns: $\mathcal{B}_{|_T}=\{-1,+1\}^{T}$.
\end{definition}

\begin{definition}[Natarajan-dimension]\label{def:natarajan}
The \emph{Natarajan-dimension} of $\mathcal H\subseteq [p]^{\mathcal X}$, denoted $\mathrm{Ndim}(\mathcal H)$, is the maximal size of a set $S\subset\mathcal X$ that is \emph{N-shattered} by $\mathcal H$. A set $S$ is N-shattered if there exist $f_1,f_2:S \to [p]$ with $f_1(x)\neq f_2(x)$ such that for every binary selector $b\in \{1,2\}^S$, there is some $h\in\mathcal H$ satisfying $h(x)=f_{b(x)}(x)$ for all $x\in S$.
\end{definition}

\begin{definition}[Piecewise-polynomial activation]\label{def:ppoly}
A function $\sigma:\mathbb{R}\to\mathbb{R}$ is \emph{piecewise polynomial with at most $L\ge 1$ pieces and maximal piece degree $r\ge 1$} if there exist breakpoints
\[
-\infty=b_0<b_1<\cdots<b_{L-1}<b_L=+\infty
\]
and polynomials $P_1,\dots,P_L$ with $\deg P_\ell\le r$ such that $\sigma(t)=P_\ell(t)$ for all $t\in(b_{\ell-1},b_\ell]$, $\ell\in[L]$.
\end{definition}

\begin{definition}[Trigonometric-polynomial activation]\label{def:tp-activation}
Let $K\in\mathbb{N}_0$. A function $\sigma:\mathbb{R}\to\mathbb{R}$ is a \emph{trigonometric polynomial of degree at most $K$} if
\begin{equation*}
\sigma(t) = a_0+\sum_{k=1}^{K}\bigl(a_k\cos(kt)+b_k\sin(kt)\bigr)
\end{equation*}
for some real coefficients $a_0, (a_k)_{k\le K}, (b_k)_{k\le K}$.
\end{definition}

\begin{definition}[Polynomial–rational–exponential activation]\label{def:exp-activation}
Fix $k\in\mathbb{R}\setminus\{0\}$, $c\geq 0$, $\tau> 0$, $a,b\in\mathbb{R}$, and a polynomial $P$ with degree $r:=\deg P\in\mathbb{N}_0$. Define
\begin{equation*}
\sigma(t) = P(t)\frac{a e^{k t}+b}{c e^{k t}+\tau}.
\end{equation*}
\end{definition}

\begin{theorem}[Uniform convergence for broad activation families]\label{thm:uniform-tilde}
Let $\sigma$ be one of: piecewise-polynomial (Def.~\ref{def:ppoly}), trigonometric-polynomial (Def.~\ref{def:tp-activation}), or polynomial–rational–exponential (Def.~\ref{def:exp-activation}). Let $\mathcal{H}_\sigma$ be the corresponding two-layer class. Then for every $\delta\in(0,1)$, with probability at least $1-\delta$ over the random draw of the training set $\mathcal{D}_{\text{train}}\sim\mathcal{D}_m^n$,
\[
\sup_{h\in\mathcal{H}_\sigma}\Bigl|\mathbb{P}_{(X,Y)\sim\mathcal{D}_m}\left[h(X)\neq Y\right]-\mathbb{P}_{(X,Y)\sim\mathcal{D}_{\text{train}}}\left[h(X)\neq Y\right]\Bigr|
\;\le\;
\widetilde{\mathcal{O}}\!\left(\sqrt{\frac{dp+\log(1/\delta)}{n}}\right).
\]
\end{theorem}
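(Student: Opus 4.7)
The plan is to control the multiclass growth function of $\mathcal{H}_\sigma$ by counting sign patterns of pairwise margin functions, bound the Natarajan dimension via Warren-type polynomial sign-pattern estimates, and then apply the multiclass uniform convergence theorem (Thm.~\ref{thm:unif_conv}). By Lem.~\ref{lem:growth_function}, for any sample $\{x_i\}_{i=1}^N \subseteq \mathcal{X}_m$, the restriction $h_\theta|_{\{x_i\}}$ is determined by the signs of the $M := \binom{p}{2} N$ pairwise margins $\Delta_{\ell,k}(x_i;\theta) := s^\theta_\ell(x_i) - s^\theta_k(x_i)$. Hence the growth function $\Pi_{\mathcal{H}_\sigma}(N)$ is at most the number of sign patterns realized by these $M$ real-valued functions of $\theta$. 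The Natarajan dimension is then the largest $N$ with $\Pi_{\mathcal{H}_\sigma}(N) \ge 2^N$, so Thm.~\ref{thm:unif_conv} will translate an $\widetilde{\mathcal{O}}(dp)$ Natarajan-dimension bound into the stated uniform-convergence rate.

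The core technical work is to express each $\Delta_{\ell,k}(x_i;\cdot)$ as a polynomial (or a sign-equivalent polynomial) of controlled degree in a reparametrized variable set, and then invoke Warren's theorem. For \emph{piecewise-polynomial} $\sigma$, I would condition on the activation pattern of each preactivation $(Wx_i)_j$, i.e., which of the $L$ pieces it falls into: each piece-indicator is the sign of a linear function of $W$, giving at most $(O(NdL))^{dp}$ cells; inside each cell $\Delta_{\ell,k}(x_i;\theta)$ is a polynomial of degree $\le r+1$ in $(W,V)$, and Warren applies per cell. For \emph{trigonometric-polynomial} $\sigma$, the integer-coordinate inputs $x_i\in\{0,\ldots,m\}^p$ with $\|x_i\|_1=m$ let angle-addition/Chebyshev identities rewrite $\cos(k(Wx_i)_j)$ and $\sin(k(Wx_i)_j)$ as polynomials of total degree $\le Km$ in the $2p$ variables $\{\cos W_{jl}, \sin W_{jl}\}_{l\in[p]}$; concatenating over neurons and appending the linear dependence on $V$, each margin is polynomial of degree $\le Km+1$ in $\le 3dp$ new variables. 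For \emph{polynomial--rational--exponential} $\sigma$, substituting $u_{jl}:=e^{kW_{jl}}$ turns $e^{k(Wx_i)_j}$ into $\prod_l u_{jl}^{x_{i,l}}$, so each $\sigma((Wx_i)_j)$ is a ratio $N_{ij}/D_{ij}$ with $D_{ij}\ge\tau>0$; clearing the per-margin common denominator $\prod_j D_{ij}$ preserves signs and yields a polynomial of degree $\le r+dm+1$ in $\le 3dp$ variables $(W,u,V)$.

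In each case, Warren's theorem yields a sign-pattern count of the form $(C\,N\,Z)^{O(dp)}$ with $Z=\mathrm{poly}(m,K,L,r)$. Setting this $\ge 2^N$ gives $\mathrm{Ndim}(\mathcal{H}_\sigma) = \widetilde{\mathcal{O}}(dp)$, with only polylogarithmic overhead in $m$ and the activation constants. Plugging into Thm.~\ref{thm:unif_conv} delivers the claimed $\widetilde{\mathcal{O}}\!\bigl(\sqrt{(dp+\log(1/\delta))/n}\bigr)$ bound.

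The main obstacle is the trigonometric case: Ex.~\ref{ex:sine-disc-unbounded} shows that on \emph{unbounded} integer inputs a single-parameter sine class already has infinite VC dimension, so one cannot hope for a growth-function bound based on frequency alone. The reconciliation is that $\mathcal{X}_m\subset\{0,\ldots,m\}^p$ is a finite lattice, which enables the Chebyshev linearization of degree $\le Km$ and injects only a $\log m$ factor into the final bound. A secondary care is ensuring that clearing denominators across $d$ neurons in the rational--exponential case does not inflate the polynomial degree super-linearly in $d$; this is guaranteed by $\tau>0$ and $c\ge 0$, which keep every $D_{ij}$ strictly positive and contribute only total degree $dm$, so the $\log$ in Warren's bound again absorbs the $d$ dependence and preserves the $\widetilde{\mathcal{O}}(dp)$ scaling.
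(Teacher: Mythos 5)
Your proposal mirrors the paper's proof essentially step for step: reduce the Natarajan dimension to sign patterns of pairwise margins via Lem.~\ref{lem:growth_function}, reparametrize each activation family (activation-pattern cells for piecewise-polynomial, $(\cos W_{jv},\sin W_{jv})$ via Lem.~\ref{lem:trig_sum_poly} for trigonometric, $u_{jv}=e^{kW_{jv}}$ with denominator clearing for rational--exponential) to obtain bounded-degree polynomials in $\mathcal{O}(dp)$ variables, invoke a Warren/Anthony--Bartlett sign-pattern count (Thm.~\ref{thm:AB83}), absorb the resulting $\log n$ via Lem.~\ref{lem:absorb}, and finish with Thm.~\ref{thm:unif_conv}. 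This is the same argument the paper gives in Thms.~\ref{thm:ppoly-upper}, \ref{thm:tpoly-upper}, and \ref{thm:exp-upper}.
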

As direct corollaries of Thm.~\ref{thm:uniform-tilde}, we have:
\begin{corollary}
Two-layer MLPs with activation ReLU ($\sigma(t)=\max\{0,t\}$), monomial ($\sigma(t)=t^{m}$), sine ($\sigma(t)=\sin t$), Sigmoid ($\sigma(t)=\tfrac{e^{t}}{e^{t}+1}$), SiLU ($\sigma(t)=t\,\mathrm{sigmoid}(t)$), QuickGELU ($\sigma(t)=t\,\mathrm{sigmoid}(\beta t),\ \beta>0$) have sample complexity $\widetilde{\mathcal{O}}(dp)$.
\end{corollary}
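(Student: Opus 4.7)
The plan is to match each of the six activations to one of the three broad families introduced in Defs.~\ref{def:ppoly}--\ref{def:exp-activation}, so that Theorem~\ref{thm:uniform-tilde} applies verbatim. The key observation is that for every activation on the list, the family-specific parameters ($L$, $r$, $K$, and the polynomial degree in the rational-exponential case) are absolute constants independent of the width $d$, modulus $p$, and sample size $n$. Hence the constants they contribute are absorbed into the $\widetilde{\mathcal{O}}(\cdot)$ notation, and the bound in Theorem~\ref{thm:uniform-tilde} directly yields a uniform convergence rate of $\widetilde{\mathcal{O}}\!\bigl(\sqrt{(dp+\log(1/\delta))/n}\bigr)$, which inverts to a sample complexity of $\widetilde{\mathcal{O}}(dp)$.

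I would then proceed case by case. ReLU fits Def.~\ref{def:ppoly} with $L=2$ pieces, breakpoints $b_0=-\infty,\,b_1=0,\,b_2=+\infty$, pieces $P_1\equiv 0$ and $P_2(t)=t$, and maximal degree $r=1$. The monomial $t^m$ is piecewise polynomial with $L=1$ piece and degree $r=m$. Sine is a trigonometric polynomial (Def.~\ref{def:tp-activation}) of degree $K=1$ with coefficients $(a_0,a_1,b_1)=(0,0,1)$. Sigmoid $e^{t}/(e^{t}+1)$ matches Def.~\ref{def:exp-activation} with $P\equiv 1$ (hence $r=0$), $k=1$, $a=1$, $b=0$, $c=1$, $\tau=1$. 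SiLU $t\,\mathrm{sigmoid}(t)$ uses the identical $(k,a,b,c,\tau)$ but takes $P(t)=t$ (so $r=1$). QuickGELU $t\,\mathrm{sigmoid}(\beta t)$ fits the same class with $P(t)=t$, $k=\beta>0$, and $(a,b,c,\tau)=(1,0,1,1)$.

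Plugging these identifications into Theorem~\ref{thm:uniform-tilde} and setting the right-hand side to $\epsilon$ yields $n=\widetilde{\mathcal{O}}(dp/\epsilon^{2}+\log(1/\delta)/\epsilon^{2})$, i.e., sample complexity $\widetilde{\mathcal{O}}(dp)$ in the architectural parameters. There is no real obstacle here since Theorem~\ref{thm:uniform-tilde} does all the heavy lifting; the only mildly subtle points are (i) verifying that the activation parameters $L,r,K,\beta$ are treated as absolute constants and thus swallowed by the $\widetilde{\mathcal{O}}$, and (ii) checking the nondegeneracy conditions $\tau>0$ and $k\neq 0$ of Def.~\ref{def:exp-activation}, both of which hold in all three sigmoid-based cases.
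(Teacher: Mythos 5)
Your proposal is correct and follows precisely the route the paper intends: the corollary is labeled a ``direct corollary'' of Thm.~\ref{thm:uniform-tilde}, and the only work is to verify that each activation fits one of Defs.~\ref{def:ppoly}--\ref{def:exp-activation}, which you do case by case with the right parameter instantiations (e.g.\ ReLU as $L=2,r=1$; sigmoid/SiLU/QuickGELU with $(a,b,c,\tau)=(1,0,1,1)$ and $k=1$ or $k=\beta$). One tiny imprecision: for the monomial $\sigma(t)=t^m$ the degree $r=m$ is not strictly an ``absolute constant,'' but since $r$ enters the Natarajan bounds of Thms.~\ref{thm:ppoly-upper}--\ref{thm:exp-upper} only through $\log(r)$, it is absorbed into $\widetilde{\mathcal{O}}(\cdot)$ anyway, so the conclusion stands unchanged.
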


\begin{corollary}[Sample-complexity upper bound for ERM with constant-width sine networks]\label{cor:erm-constant-width-sine}
Fix a constant width \(d\ge 2\). With probability at least \(1-\delta\) over the random draw of the training set \(\mathcal{D}_{\text{train}}\sim\mathcal{D}_m^n\), for all interpolating ERM solutions \(\hat\theta\):
\[
\mathbb{P}_{(X,Y)\sim\mathcal{D}_m}\!\big[h_{\hat\theta}(X)\neq Y\big]
\;\le\;
\widetilde{\mathcal{O}}\!\left(\sqrt{\frac{p+\log(1/\delta)}{n}}\right),
\]
where \(\widetilde{\mathcal{O}}(\cdot)\) hides polylogarithmic factors in \(n\), \(m\), and \(\delta^{-1}\). Consequently, the sample complexity is \(\widetilde{\mathcal{O}}(p)\).
\end{corollary}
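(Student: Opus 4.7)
The plan is to realize Cor.~\ref{cor:erm-constant-width-sine} as a direct composition of the uniform‑convergence bound (Thm.~\ref{thm:uniform-tilde}) with the fixed‑length expressivity result (Thm.~\ref{thm:sine_low_width_construction}). First, I would verify that $\sigma(t)=\sin t$ fits Def.~\ref{def:tp-activation} as a trigonometric polynomial of degree $K=1$ (take $a_0=a_1=0$ and $b_1=1$), so that Thm.~\ref{thm:uniform-tilde} applies verbatim to the width‑$d$ class $\mathcal{H}_{\sin}$. This yields, with probability at least $1-\delta$,
\[
\sup_{h\in\mathcal{H}_{\sin}}\!\Bigl|\mathbb{P}_{\mathcal{D}_m}[h(X)\neq Y]-\mathbb{P}_{\mathcal{D}_{\text{train}}}[h(X)\neq Y]\Bigr|\;\le\;\widetilde{\mathcal{O}}\!\Bigl(\sqrt{(dp+\log(1/\delta))/n}\Bigr).
\]
Substituting a constant width $d=\mathcal{O}(1)$ absorbs the $d$ factor into the $\widetilde{\mathcal{O}}$ and collapses the numerator to $p+\log(1/\delta)$.

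Second, I would invoke Thm.~\ref{thm:sine_low_width_construction} to guarantee that the width‑$d$ sine class with $d\ge 2$ contains an \emph{exact} realizer of modular addition on every $\mathcal{X}_m$. Consequently, for any training sample $\mathcal{D}_{\text{train}}\subset\mathcal{X}_m\times[p]$, realizability holds and every interpolating ERM solution $\hat\theta$ attains $\mathbb{P}_{\mathcal{D}_{\text{train}}}[h_{\hat\theta}(X)\neq Y]=0$. Plugging this into the uniform gap produces the population bound in the statement, \emph{simultaneously} for all interpolators $\hat\theta$ — it is precisely the uniformity of Thm.~\ref{thm:uniform-tilde} over $\mathcal{H}_{\sin}$ that enables this transfer to the entire ERM solution set. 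The $\widetilde{\mathcal{O}}(p)$ sample‑complexity claim then follows by inverting the bound: fixing any target error $\epsilon>0$ requires $n\ge\widetilde{\Omega}(p/\epsilon^2)$, which is $\widetilde{\mathcal{O}}(p)$ for constant $\epsilon$.

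There is essentially no hard step here — the statement is a clean corollary. The only accounting I would be careful about is that the $\widetilde{\mathcal{O}}$ convention declared in the Notation paragraph permits suppression of polylog factors in $n$, $m$, and $\delta^{-1}$; the dependence on $m$ enters because $|\mathcal{X}_m|=\binom{m+p-1}{p-1}$ and the growth‑function bookkeeping underlying Thm.~\ref{thm:uniform-tilde} generates $\mathrm{polylog}(m)$ terms, which are exactly what $\widetilde{\mathcal{O}}$ hides. No extra probabilistic argument or refined analysis is needed beyond the composition above.
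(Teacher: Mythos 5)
Your proof is correct and follows the paper's intended route: the corollary is presented as a direct consequence of Thm.~\ref{thm:uniform-tilde} (with $\sin$ as a degree-$1$ trigonometric polynomial and $d=\mathcal{O}(1)$), combined with realizability from Thm.~\ref{thm:sine_low_width_construction} so that interpolating solutions exist and have zero empirical error. The only minor elision is that the width-$2$ exact realizer must be padded to width $d$ (by zeroing the extra neurons) to live in $\mathcal{H}_{\sin}$ for general $d\ge 2$, but that is immediate and does not affect the argument.
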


\begin{remark}[Near-optimality]
Cor.~\ref{cor:erm-constant-width-sine} is nearly optimal for label-permutation equivariant algorithms (Def.~\ref{def:label_permutation_equivariance}): it matches the information-theoretic lower bound $\Omega(p)$ established in Thm.~\ref{thm:pac-lower}. Rmk.~\ref{rmk:examples_for_label_perm} provides notable examples of label-permutation equivariant learners, including AdaGrad, Adam, and GD/SGD with momentum under i.i.d.\ final-layer initialization.
\end{remark}

\begin{table}[t]
\caption{Capacity bounds for two-layer MLPs with $W$ parameters and width $d$. $\widetilde{\Theta}(\cdot)$ suppresses polylog factors in $W$, $d$, and the bound on the input. \textbf{Bold} entries are our contributions; Natarajan lower bounds come from Thm.~\ref{thm:ndim-lower} and upper bounds from Thm.~\ref{thm:ndim-upper-unified}. Full details are in App.~\ref{app:capacity-sources}.}
\label{tab:capacity}
\begin{center}

\begingroup
\setlength{\tabcolsep}{8pt}        
\renewcommand{\arraystretch}{1.10}  


\begin{tabular*}{\linewidth}{@{\extracolsep{\fill}}llcc@{}}
\toprule
Activation & Input type & VCdim\tablefootnote{VC-dimension lower bounds are existential: for given size and depth budgets, there exists a network that shatters a set of the claimed cardinality. Upper bounds are universal: they hold for every network in the family.} & Ndim \\
\midrule
Piecewise linear & real-valued
& \hyperref[tb:pl-real]{$\Theta(W\log W)$}
& \bm{$\widetilde{\Theta}(W)$} \\
Piecewise polynomial & real-valued
& \hyperref[tb:pp-real]{$\Theta(W\log(W))$}
& \bm{$\widetilde{\Theta}(W)$} \\
Pfaffian, incl.\ standard sigmoid & real-valued
& \hyperref[tb:pfaffian-real]{$\mathcal{O}(d^{2}W^{2})$}
& --- \\
Standard sigmoid & real-valued
& \hyperref[tb:sigmoid-real]{$\Omega(W\log W)$}
& $\Omega(W \log W)$ \\
\paperhl{Standard sigmoid} & \paperhl{integer-valued, bounded}
& \hyperref[tb:sigmoid-disc-bounded]{\paperhl{$\Omega(W)$, $\widetilde{\mathcal{O}}(W)$}}
& \paperhl{\bm{$\widetilde{\Theta}(W)$}} \\
Sine & integer-valued, unbounded
& \hyperref[tb:sine-disc-unbounded]{$\infty$}
& $\infty$ \\
\paperhl{Trigonometric polynomial} & \paperhl{integer-valued, bounded}
& --- & \paperhl{\bm{$\widetilde{\mathcal{O}}(W)$}} \\
\paperhl{Rational exponential} & \paperhl{integer-valued, bounded}
& --- & \paperhl{\bm{$\widetilde{\mathcal{O}}(W)$}} \\
\bottomrule
\end{tabular*}
\endgroup

\end{center}
\end{table}

\section{Generalization in the overparameterized regime}
\label{sec:overparameterized}

Uniform-convergence yields stronger bounds for two-layer sine MLPs than for ReLU networks, yet these still scale with hidden width. What happens as width becomes very large? To bridge this gap, we establish width-independent, margin-based generalization bounds. Our analysis builds on the $\ell_\infty$ vector-contraction bound for Rademacher complexity~\citep{foster2019linfty}. For sine MLPs, we control the contracted complexity via the Dudley entropy integral and covering numbers adapted to periodic activations. For ReLU MLPs, we leverage positive homogeneity to apply a layer-wise peeling argument~\citep{Golowich2017SizeIndependent}. We provide a high-level roadmap of these logical steps in App.~\ref{app:roadmap} before detailing the full proofs in App.~\ref{app:margin-bound-linf}.

Let \(v_j\) denote the \(j\)-th row of \(V\) for \(j\in[p]\). We write
\(\|V\|_{1,\infty} := \max_{j\in[p]} \|v_j\|_{1}\) (the maximum row \(\ell_1\)-norm), 
\(\|V\|_{2}\) for the spectral norm, and \(\|W\|_{F}\) for the Frobenius norm.

\begin{definition}[Empirical margin]
Let \((x,y)\) be a labeled example with \(y\in[p]\) and score vector \(s^\theta(x)\in\mathbb{R}^p\). The multiclass margin is
\[
\gamma_\theta(x,y)\;:=\; s^\theta_y(x)\;-\;\max_{k\in[p]\setminus\{y\}} s^\theta_k(x).
\]
For a finite sample \(S=\{(x^{(i)},y^{(i)})\}_{i=1}^n\), define the (empirical) margin of \(S\) as
\[
\gamma_\theta(S)\;:=\;\min_{i\in[n]}\,\gamma_\theta\!\big(x^{(i)},y^{(i)}\big).
\]
We say that the classifier interpolates \(S\) if \(\gamma_\theta(S)>0\).
\end{definition}

\begin{theorem}[Two-layer $\sin$ MLP, margin-based generalization]\label{thm:sin-width-margin-gen}
Consider the two-layer MLP $s^\theta(x)=V \sin(Wx)\in\mathbb{R}^p$ on $\mathcal{X}_m$. 
Fix $\delta\in(0,1)$ and assume $d\ge 2p$. With probability at least $1-\delta$ over the random draw of the training set $\mathcal{D}_{\text{train}}\sim\mathcal{D}_m^n$, for all interpolating solutions $\theta$ with normalized margin $\overline{\gamma}_{\theta,\mathrm{sin}} := \frac{\gamma_\theta(\mathcal{D}_{\mathrm{train}})}{\|V\|_{1,\infty}}=\Omega(1)$, it holds that
\[\mathbb{P}_{(X,Y)\in\mathcal{D}_m}\big[h_\theta(X)\neq Y\big]
\;\le\;
\widetilde{\mathcal{O}}\!\left(p\,\sqrt{\frac{1}{n}}\right),\]
where $\widetilde{\mathcal{O}}(\cdot)$ hides polylogarithmic factors in $n$, $m$, and $\delta^{-1}$.
\end{theorem}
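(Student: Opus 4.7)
The plan is to combine a standard margin-based multiclass generalization reduction with the $\ell_\infty$ vector-contraction bound for Rademacher complexity~\citep{foster2019linfty} and a Dudley entropy integral tailored to the periodicity of $\sin$ on the integer-valued inputs in $\mathcal{X}_m$.

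\textbf{Steps 1--2 (margin reduction and $\ell_\infty$ contraction).} I would first pass from the $0$--$1$ error to the ramp-loss surrogate $\phi_\gamma(t)=\min\{1,\max(0,1-t/\gamma)\}$ applied to the multiclass margin $\gamma_\theta$ at scale $\gamma:=\gamma_\theta(\mathcal{D}_{\text{train}})$. Because $\theta$ interpolates with margin $\gamma$, the empirical surrogate vanishes, so a standard symmetrization leaves only the Rademacher term $\mathcal{R}_n(\phi_\gamma\circ \mathcal{H}_\Theta)$; a dyadic peeling over normalized margin values handles the data-dependence of $\gamma$ with a $\log\log n$ overhead. The multiclass margin is $2$-Lipschitz in $\ell_\infty$ of the score vector, so the surrogate is $O(1/\gamma)$-Lipschitz in $\ell_\infty$; applying the Foster--Rakhlin $\ell_\infty$ vector contraction then reduces the multiclass Rademacher complexity---up to polylog factors and a polynomial-in-$p$ multiclass factor from aggregating across output coordinates---to the scalar Rademacher complexity of a single output coordinate.

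\textbf{Step 3 (scalar complexity via convex hull and Dudley).} Each output coordinate has the form $v^\top\sin(Wx)=\sum_i v_i\sin(w_i^\top x)$ with $\|v\|_1\le\|V\|_{1,\infty}$. Since $|\sin|\le 1$, this class is contained in $\|V\|_{1,\infty}$ times the symmetric convex hull of $\mathcal{F}_{\sin}:=\{x\mapsto\sin(w^\top x):w\in\mathbb{R}^p\}$, so it suffices to control $\mathcal{R}_n(\mathcal{F}_{\sin})$. The crucial observation is that every $x\in\mathcal{X}_m$ has \emph{integer} coordinates, so $w\mapsto\sin(w^\top x)$ is $2\pi$-periodic in each $w_k$; hence without loss of generality $w\in[0,2\pi)^p$. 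Combined with the $1$-Lipschitzness of $\sin$ and $\|x\|_1=m$, this yields a sup-norm $\epsilon$-cover of $\mathcal{F}_{\sin}$ of size $(2\pi m/\epsilon)^p$. Plugging into Dudley's entropy integral gives $\mathcal{R}_n(\mathcal{F}_{\sin})=\widetilde{\mathcal{O}}(\sqrt{p/n})$.

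\textbf{Step 4 (combine), and the main obstacle.} Chaining the previous estimates yields a Rademacher bound of order $\widetilde{\mathcal{O}}(\|V\|_{1,\infty}/\gamma\cdot \mathrm{poly}(p)/\sqrt{n})$; the normalized-margin hypothesis $\overline{\gamma}_{\theta,\sin}=\gamma/\|V\|_{1,\infty}=\Omega(1)$ cancels the scale factor to leave the stated $\widetilde{\mathcal{O}}(p/\sqrt{n})$ bound. The main technical difficulty is controlling $\mathcal{R}_n(\mathcal{F}_{\sin})$ without any norm control on $W$: Example~\ref{ex:sine-disc-unbounded} shows that the same class over real-valued inputs has infinite VC (hence Rademacher) complexity, so the argument succeeds only because integer-valued inputs together with the $2\pi$-periodicity of $\sin$ collapse the effective parameter domain to the compact cube $[0,2\pi)^p$. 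A secondary challenge is making the bound uniform over all interpolating $\theta$ (with both data-dependent $\gamma$ and unconstrained $\|V\|_{1,\infty}$), which is handled by geometric peeling over dyadic bins.
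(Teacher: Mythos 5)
Your proof follows essentially the same route as the paper's: a ramp-loss surrogate, the Foster--Rakhlin $\ell_\infty$ vector contraction, a coordinate reduction to the single-sine family $\mathcal{F}_{\sin}$ (your convex-hull framing is equivalent to the paper's $\ell_1$--$\ell_\infty$ duality argument, both peeling off the $\|V\|_{1,\infty}$ scale), and a Dudley entropy integral over the compact cube $[-\pi,\pi)^p$ that $2\pi$-periodicity on integer inputs makes available. The one step you omit is the paper's closing verification that some $\theta$ with $\overline{\gamma}_{\theta,\mathrm{sin}}=\Omega(1)$ actually exists when $d\ge 2p$, via the explicit width-$2p$ construction of Sec.~\ref{subsec:sine_high_margin_construction}; that existence check is the only place the hypothesis $d\ge 2p$ is used, and without it the theorem would be vacuous.
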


\begin{theorem}[Two-layer ReLU MLP, margin-based generalization]\label{thm:relu-width-margin-gen}
Assume $p>m$, $n>m^2$, and $n\geq 17$. Fix $\delta\in(0,1)$. Suppose the width satisfies $
d\ \ge\ 64\,p\,m^{\frac{m}{2}+2}\,4.67^m.$
 With probability at least $1-\delta$ over the random draw of the training set $\mathcal{D}_{\text{train}}\sim\mathcal{D}_m^n$, for all interpolating solutions $\theta$ with normalized margin $
\overline{\gamma}_{\theta,\mathrm{ReLU}}\ :=\ \frac{\gamma_\theta(\mathcal{D}_{\mathrm{train}})}{\|V\|_2\|W\|_F}=\Omega\left(\frac{1}{\sqrt{p}}\cdot \frac{1}{\,m^{1.5m+2.5}\,6.34^m\,}\right)$,
it holds that
\[
\mathbb{P}_{(X,Y)\sim\mathcal{D}_m}\!\big[h_\theta(X)\neq Y\big]
\ \le\
\widetilde{\mathcal{O}}\!\left(p\,m^{1.5m+2.5}6.34^m\sqrt{\frac{m}{n}}\,\right),
\]
where $\widetilde{\mathcal{O}}(\cdot)$ hides polylogarithmic factors in $n$ and $\delta^{-1}$.
\end{theorem}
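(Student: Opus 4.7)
The plan is to invoke a standard multiclass margin-based generalization bound and to control the Rademacher complexity of the two-layer ReLU class by the layer-wise peeling technique of \citep{Golowich2017SizeIndependent}, which exploits positive homogeneity of ReLU to produce bounds that depend only on the norm product $\|V\|_2\|W\|_F$ and the input $\ell_2$-bound, not on the hidden width $d$.

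First I would extract the input bound: for every $x\in\mathcal{X}_m$, $\|x\|_2^2\le \|x\|_\infty\|x\|_1\le m\cdot m$, so $\|x\|_2\le m$. Then for a radius $R>0$ and margin scale $\gamma>0$, define $\mathcal{H}_R=\{V\,\mathrm{ReLU}(Wx):\|V\|_2\|W\|_F\le R\}$ and apply the standard multiclass margin bound: the population $0$-$1$ error is at most the empirical fraction with margin below $\gamma$ (which vanishes for interpolators satisfying $\gamma_\theta(\mathcal{D}_{\mathrm{train}})\ge\gamma$) plus $\mathcal{O}(\mathcal{R}_n(\mathcal{H}_R)/\gamma)$, up to a $\widetilde{\mathcal{O}}(\sqrt{\log(1/\delta)/n})$ confidence term. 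A dyadic union bound over $R$ and $\gamma$ makes the statement uniform over all interpolating $\theta$ at polylogarithmic cost.

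Second I would bound $\mathcal{R}_n(\mathcal{H}_R)$. The multiclass-to-scalar reduction is performed by the $\ell_\infty$ vector-contraction inequality of \citep{foster2019linfty} (the same vehicle invoked for the sine case), which introduces at most a $\sqrt{p}$ factor relative to the scalar ReLU class $\{x\mapsto v^\top\mathrm{ReLU}(Wx):\|v\|_2\|W\|_F\le R\}$. The peeling argument then bounds this scalar complexity by $\widetilde{\mathcal{O}}(R\,m/\sqrt{n})$ independently of the width $d$, so $\mathcal{R}_n(\mathcal{H}_R)=\widetilde{\mathcal{O}}(\sqrt{p}\,R\,m/\sqrt{n})$. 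Since the normalized margin satisfies $\gamma=\bar{\gamma}_{\theta,\mathrm{ReLU}}\cdot R$, the norm product $R$ cancels in $\mathcal{R}_n/\gamma$, yielding an error bound of $\widetilde{\mathcal{O}}(\sqrt{p}\,m/(\bar{\gamma}_{\theta,\mathrm{ReLU}}\sqrt{n}))$. Substituting the hypothesized lower bound $\bar{\gamma}_{\theta,\mathrm{ReLU}}=\Omega(1/(\sqrt{p}\,m^{1.5m+2.5}\,6.34^m))$ yields the stated rate $\widetilde{\mathcal{O}}(p\,m^{1.5m+2.5}\,6.34^m\sqrt{m/n})$.

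The width assumption $d\ge 64\,p\,m^{m/2+2}\,4.67^m$ plays no role in the statistical argument itself; instead, it guarantees that at least one interpolating $\theta$ with the stated normalized margin actually exists, via a companion constructive argument in which ReLU features approximate a periodic target on $\mathcal{X}_m$. The main technical obstacle is tracking the precise polynomial and exponential factors in $m$ through the peeling step, the input bound, and the vector contraction, as well as certifying a sharp lower bound on the best achievable ReLU normalized margin. Because approximating a periodic target by ReLU features forces the attainable normalized margin to decay super-polynomially fast in $m$, the resulting bound is substantially weaker than in the sine case, which is precisely the contrast with Thm.~\ref{thm:sin-width-margin-gen} that the theorem is intended to highlight.
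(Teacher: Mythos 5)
Your architecture matches the paper's: apply a multiclass margin bound via $\ell_\infty$ vector contraction \citep{foster2019linfty}, collapse the multiclass class to a scalar ReLU family, bound that by the positive-homogeneous peeling argument of \citep{Golowich2017SizeIndependent}, observe the norm product cancels against the normalized margin, and use the ``Pizza''-style explicit construction to justify the width hypothesis. However, your chain does not actually yield the stated exponent. You bound the scalar Rademacher complexity by $\widetilde{\mathcal{O}}(R\,m/\sqrt n)$ using the worst-case input bound $\|x\|_2\le m$. The peeling argument gives something proportional to $R\cdot Q_2(S)/\sqrt n$ where $Q_2(S)=(\tfrac1n\sum_i\|x^{(i)}\|_2^2)^{1/2}$; your deterministic bound gives $Q_2\le m$. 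Tracing your chain through, $\mathcal{R}_n/\gamma=\widetilde{\mathcal{O}}(\sqrt p\,m/(\overline\gamma\sqrt n))$, and substituting $\overline\gamma=\Omega\bigl(1/(\sqrt p\,m^{1.5m+2.5}\,6.34^m)\bigr)$ gives $\widetilde{\mathcal{O}}(p\,m^{1.5m+3.5}\,6.34^m/\sqrt n)$, which is a factor $\sqrt m$ larger than the claimed $\widetilde{\mathcal{O}}(p\,m^{1.5m+2.5}\,6.34^m\sqrt{m/n})=\widetilde{\mathcal{O}}(p\,m^{1.5m+3}\,6.34^m/\sqrt n)$. Since the $\widetilde{\mathcal{O}}$ only hides polylogarithmic factors in $n$ and $\delta^{-1}$, not polynomial factors in $m$, this is a real gap.

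The paper closes this gap using a data-dependent high-probability bound on $Q_2(S)$: under i.i.d.\ uniform tokens, $\mathbb{E}\|x\|_2^2=m(1+\tfrac{m-1}{p})$, and Hoeffding on the sum $\sum_i\|x^{(i)}\|_2^2$ gives $Q_2(S)=\widetilde{\mathcal{O}}(\sqrt m)$ with high probability, using precisely the assumptions $p>m$ and $n>m^2$ (and $n\ge 17$ enters a monotonicity step in the entropy-log factor). These three hypotheses are otherwise unused in your argument, which should have flagged the missing refinement. A second, smaller point: you correctly note that uniformity over the normalized margin and the norm budget needs a dyadic union bound over $(R,\gamma)$; the paper is terse here but does need this too. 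Finally, you correctly identify that the width condition only serves the non-vacuity of the hypothesis class by guaranteeing that the explicit ReLU construction achieves the stated normalized margin; certifying those norm and margin estimates is the bulk of the appendix work, which you acknowledge but do not carry out.
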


\begin{remark}
The exponential dependence on $m$ in Thm.~\ref{thm:relu-width-margin-gen} is an artifact of our proof technique, not a fundamental limitation. To guarantee interpolation, our proof utilizes an explicit construction akin to the ``Pizza'' algorithm~\citep{zhong2023}, an implementation of the approximate Chinese Remainder Theorem (aCRT) template~\citep{mccracken2025universal}. While this validates learnability via a mechanistic template (approximating periodic embeddings with ReLUs), it yields conservative margin estimates. Empirically (Figs.~\ref{fig:overparam2}, \ref{fig:overparam_app2}, \ref{fig:overparam_app4}), trained ReLU networks achieve significantly larger margins at smaller widths, suggesting the theoretical gap stems from the inefficiency of the specific construction rather than intrinsic model constraints.
\end{remark}

\section{Experiments}
\label{sec:experiments}

To evaluate our theory, we train two-layer MLPs with sine and ReLU activations on modular addition under three regimes: (i) underparameterized, (ii) overparameterized—both defined by the number of parameters relative to the training set size—and (iii) out-of-domain regime that tests extrapolation to sequence lengths unseen during training, \emph{i.e.}, length extrapolation. Full setup details and additional figures are provided in App.~\ref{app:experiment}.

\subsection{Underparameterized regime.} We evaluate our sample complexity predictions by training matched architectures that differ only in their nonlinearity (sine vs.\ ReLU), using AdamW with zero weight decay on identical datasets and with identical optimization hyperparameters (Fig.~\ref{fig:underparam}).

\paragraph{Sine networks are consistently more sample efficient than ReLU networks.}
Across widths and training sizes, sine networks consistently outperform ReLU in both training and test accuracy, attaining a given accuracy at substantially smaller widths. For a fixed training set size, reducing the width—provided it remains sufficient for optimization—improves test accuracy for both activations, consistent with our uniform convergence guarantee in Sec.~\ref{sec:classical}.

\begin{figure}[!t]
  \centering
  \includegraphics[width=\linewidth]{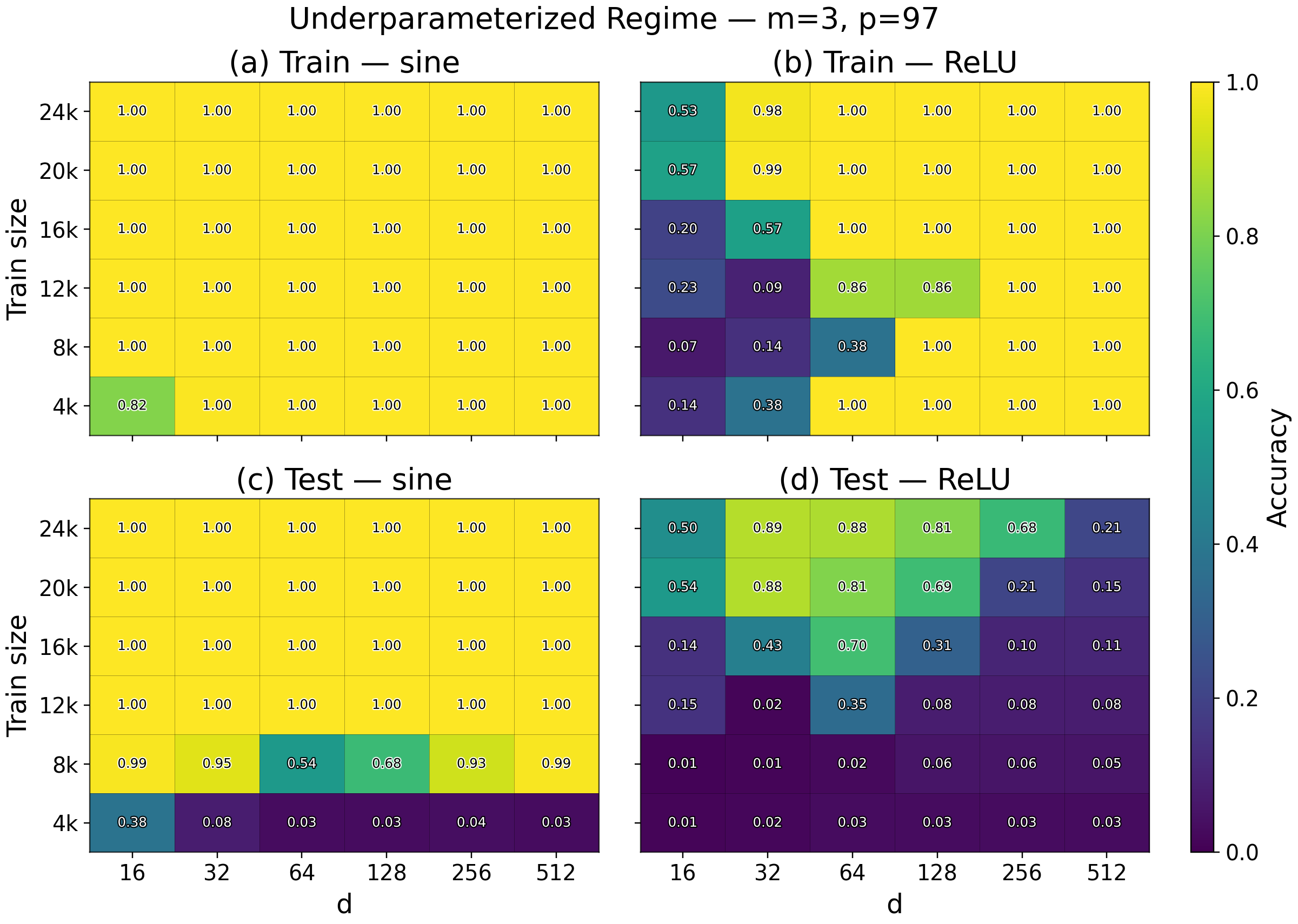}
\caption{Accuracies for two-layer sine and ReLU MLPs in the underparameterized regime.}
  \label{fig:underparam}
\end{figure}
\subsection{Overparameterized regime.} To validate the margin-based bounds in Sec.~\ref{sec:overparameterized}, we train wide two-layer MLPs with Muon\footnote{With decoupled weight decay, Muon’s induced spectral geometry
upper bounds $\|V\|_{2}\|W\|_{F}$ and $\|V\|_{1,\infty}$ up to dimension-dependent constants;
see App.~\ref{app:experiment}.} and sweep over decoupled weight decay rates. For sine models we apply weight decay only to the second layer; for ReLU we decay both layers. We report the $0.5$th‑percentile rather than the minimum margin because the latter is often dominated by rare outliers; a small quantile yields a stable large‑margin proxy and, by Cor.~\ref{cor:margin_generalization}, only adds an additive $0.5\%$ term to the population error. We log training and test accuracies, the $0.5$th‑percentile of the training margin $\gamma^{0.5\%}_{\text{train}}
:= \operatorname{Percentile}_{0.5}\big\{\gamma_\theta(x^{(i)},y^{(i)})\big\}_{i=1}^n
$, 
together with normalized margins that factor out layer scales:
\[\text{ReLU:}\quad
\widehat{\gamma}_{\mathrm{ReLU}} \;=\; \frac{\gamma^{0.5\%}_{\text{train}}}{\|V\|_{2}\,\|W\|_{F}},\qquad\text{Sine:}\quad
\widehat{\gamma}_{\mathrm{sin}} \;=\; \frac{\gamma^{0.5\%}_{\text{train}}}{\|V\|_{1,\infty}}.
\]
\paragraph{Normalized margins track generalization in the overparameterized regime.}
Figs.~\ref{fig:overparam1} and \ref{fig:overparam2} show that, as weight decay increases through a moderate range, normalized margins grow and test accuracy improves; with excessively large decay, training accuracy falls and generalization degrades. These trends align with the prediction that, in the overparameterized regime, generalization is governed by effective layer scales and margins.

\begin{figure}[!t]
  \centering
  \includegraphics[width=\linewidth]{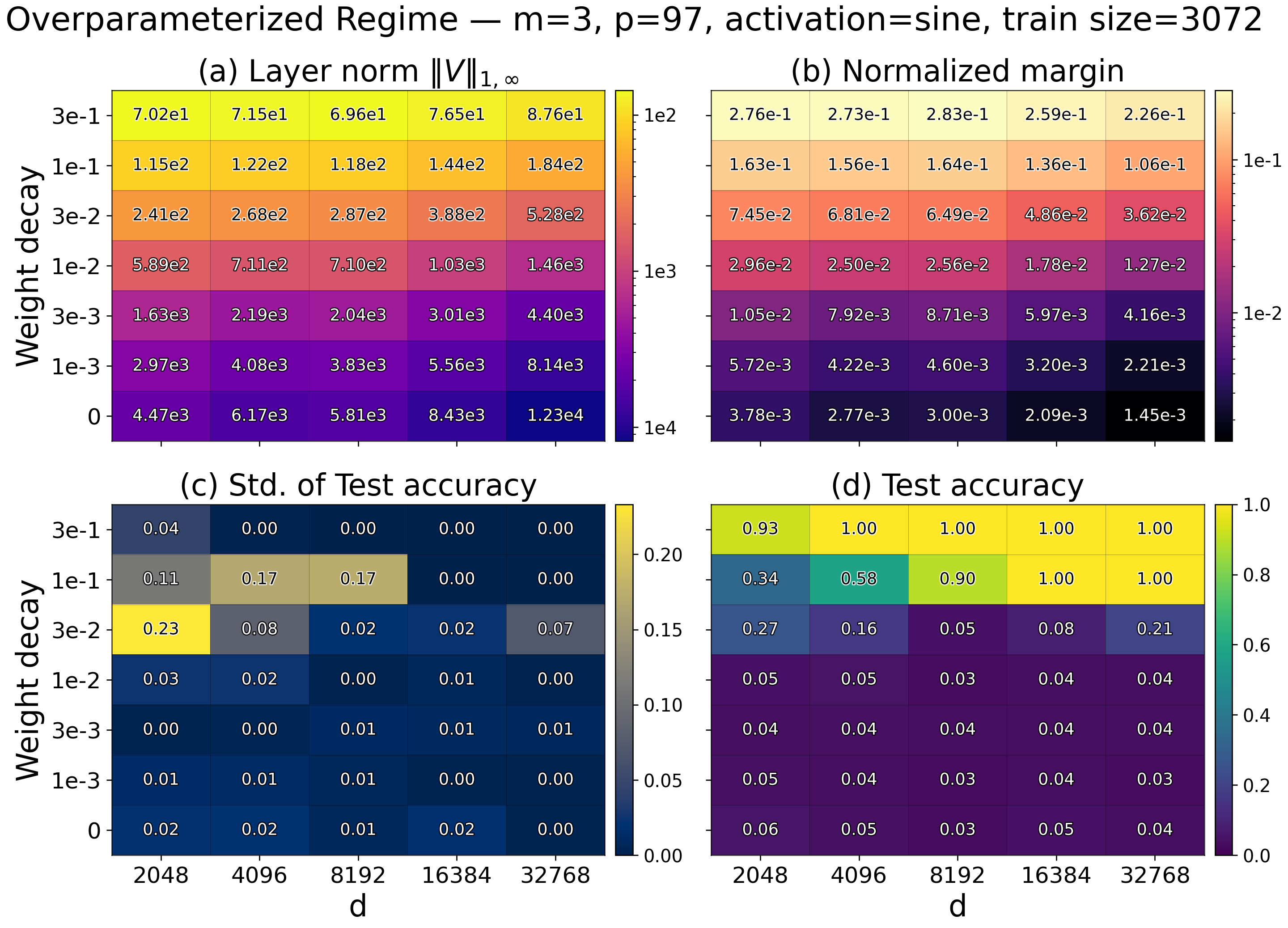}
 \caption{Two-layer sine networks in the overparameterized regime. Clockwise from top left: layer norm, normalized margin, test accuracy, and standard deviation of test accuracy.}
\label{fig:overparam1}
\end{figure}

\begin{figure}[!t]
\begin{center}
\centerline{\includegraphics[width=\linewidth]{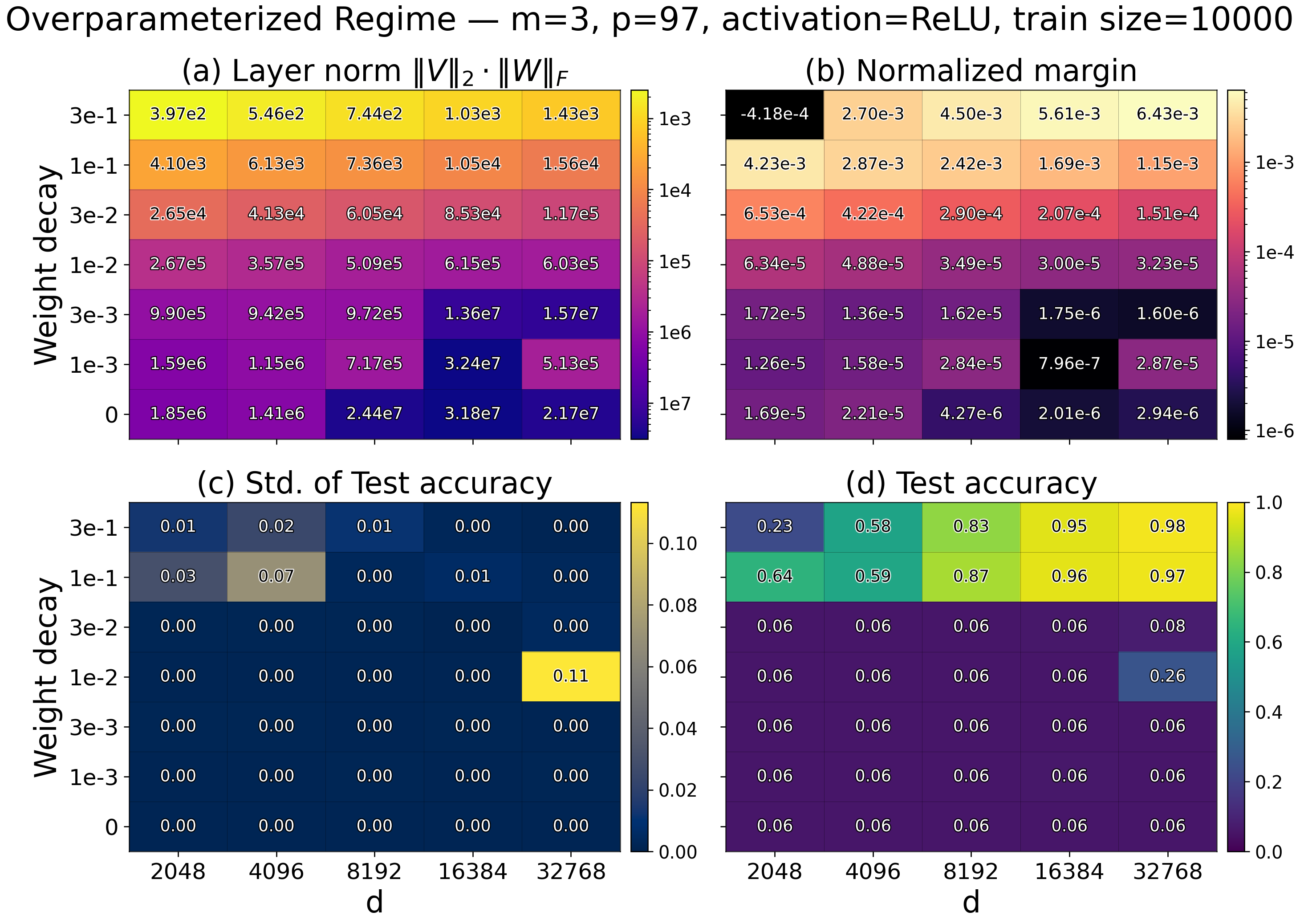}}
\end{center}
 \caption{Two-layer ReLU networks in the overparameterized regime (panels as in Fig.~\ref{fig:overparam1}).}
\label{fig:overparam2}
\end{figure}

\subsection{Out-of-domain (OOD) regime (length generalization).}

We study length generalization, i.e., the ability to generalize to test datasets with sequence lengths unseen during training. The training sequence length $m$ is sampled uniformly from $\{2,3,4,5,7,13,19\}$, and we report the population accuracy of the trained model on a uniform distribution over data of fixed lengths, for lengths up to $811$.

\paragraph{Sine networks achieve near-perfect length generalization, while ReLU networks struggle in-distribution.}

We compare the length generalization capability of MLPs with sine and ReLU activations in Fig.~\ref{fig:ood}. Once the data budget exceeds a threshold, sine MLPs achieve perfect accuracy on all seen lengths and remain essentially perfect on much longer unseen lengths. In contrast, ReLU MLPs struggle even in-domain and quickly degrade to chance-level accuracy on longer OOD lengths. These behaviors align closely with our expressivity results in Sec.~\ref{sec:expressivity}. Thm.~\ref{thm:existence_sine_high} shows that a width-$2$ sine network without bias can learn modular addition uniformly over all sequence lengths with high accuracy, implying that sine MLPs can map the shared embedding to a periodic representation of the modular sum. Conversely, Thm.~\ref{thm:relu-width-lb} implies that the required ReLU width must grow linearly with $m$ to interpolate, and Thm.~\ref{clm:relu-two-lengths} shows that no fixed-width ReLU network can exactly match the ground truth at two incongruent lengths, indicating that ReLU MLPs admit no comparably simple periodic parametrization.

\begin{figure}[!t]
  \centering
  \includegraphics[width=\linewidth]{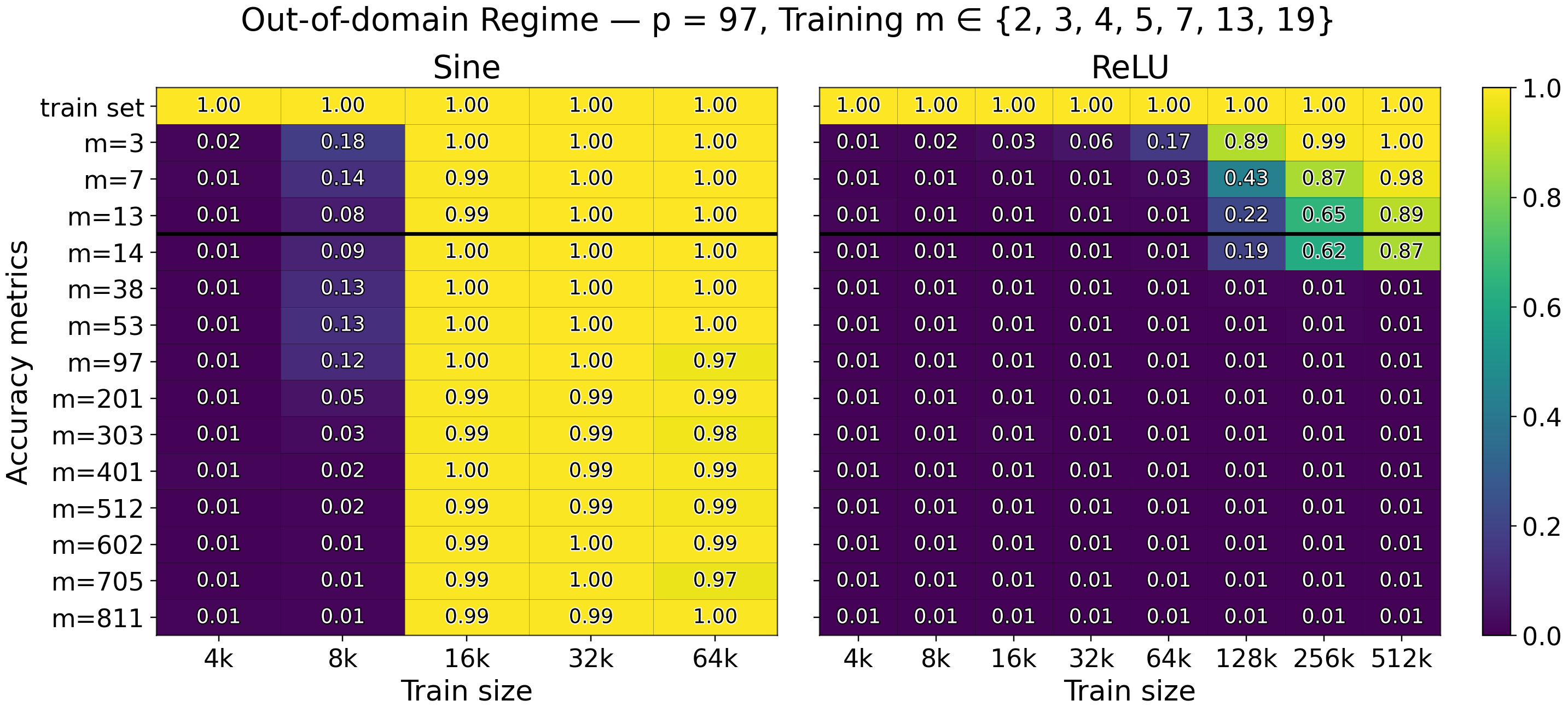}
\caption{Out-of-domain accuracies of two-layer sine and ReLU MLPs, with no bias; each heatmap cell reports accuracy under the \hyperref[reporting_conventions]{Best-over-WD} scheme.}
  \label{fig:ood}
\end{figure}

\paragraph{Bias improves the robustness of sine MLP length generalization.}

Figure~\ref{fig:ood_bias} compares sine MLPs with and without a first-layer bias. Empirically, adding the bias preserves the excellent in-distribution performance while making length generalization more robust: accuracy remains high over a wider range of OOD lengths and weight-decay values. This phenomenon is naturally connected to our expressivity results. Thm.~\ref{thm:existence_sine_high} shows that the bias effectively allows sines with different phases, and thus a richer periodic basis that implicitly includes cosine-like components, which makes it easier for the network to implement a modular rule that remains consistent across many sequence lengths. This additional expressivity provides a plausible explanation for the more stable and robust length generalization observed in biased sine MLPs.

\begin{figure}[!t]
  \centering
  \includegraphics[width=\linewidth]{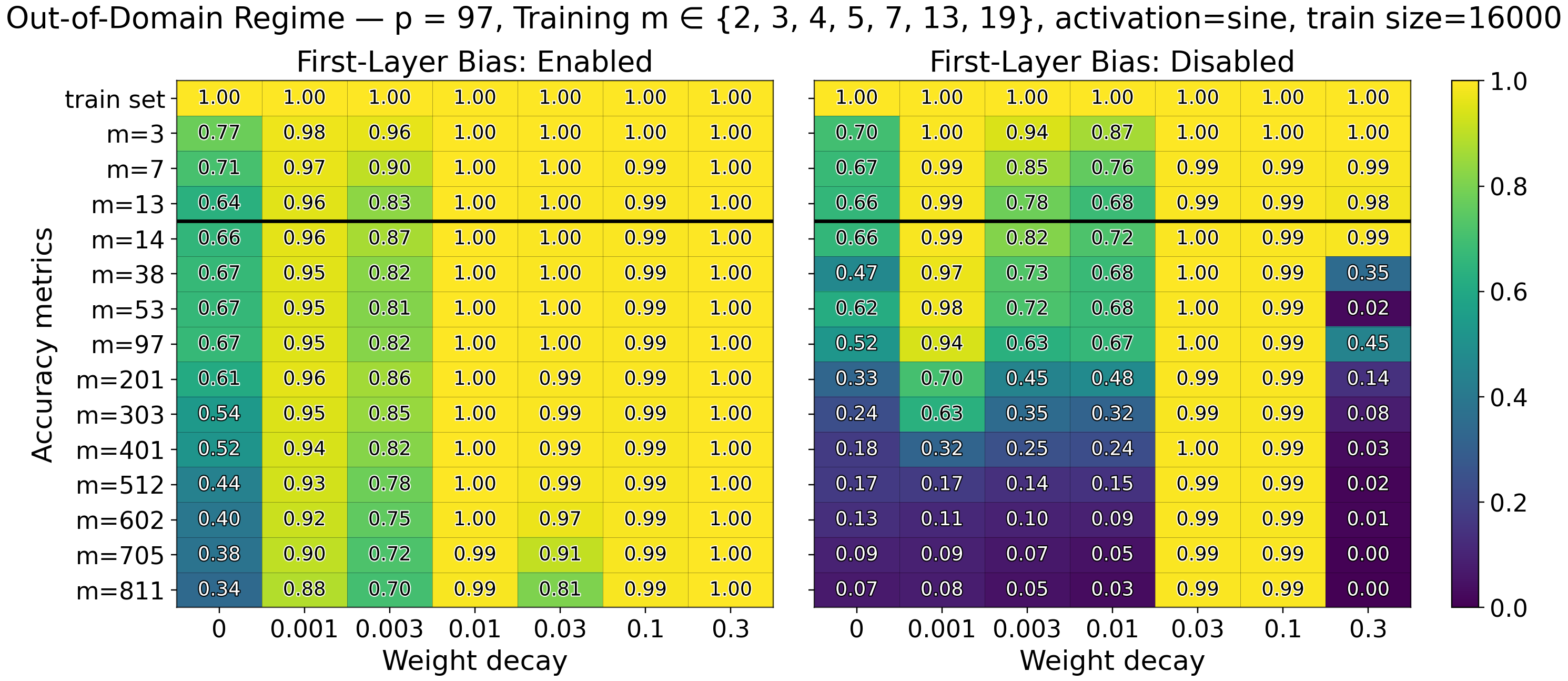}
\caption{Out-of-domain accuracies for two-layer sine MLPs with and without first-layer bias.}
  \label{fig:ood_bias}
\end{figure}

\subsection{Transformer baseline.}
To verify that our observations are not specific to MLPs, we additionally train a 1-layer, 1-head decoder-only Transformer on modular addition and vary the feed-forward activation among \emph{sine}, \emph{ReLU}, and \emph{GELU}. 

\paragraph{Sine activation also improves sample efficiency in Transformers.}
Fig.~\ref{fig:transformer} reports train and test accuracies as a function of training-set size. Consistent with the MLP results, the sine activation attains high test accuracy with substantially fewer samples, while ReLU and GELU require more data and remain notably worse at moderate data budgets. This corroborates our theory across architectures: when optimization succeeds, the sine nonlinearity yields solutions that generalize more reliably on modular addition.

\begin{figure}[!t]
  \centering
  \includegraphics[width=\linewidth]{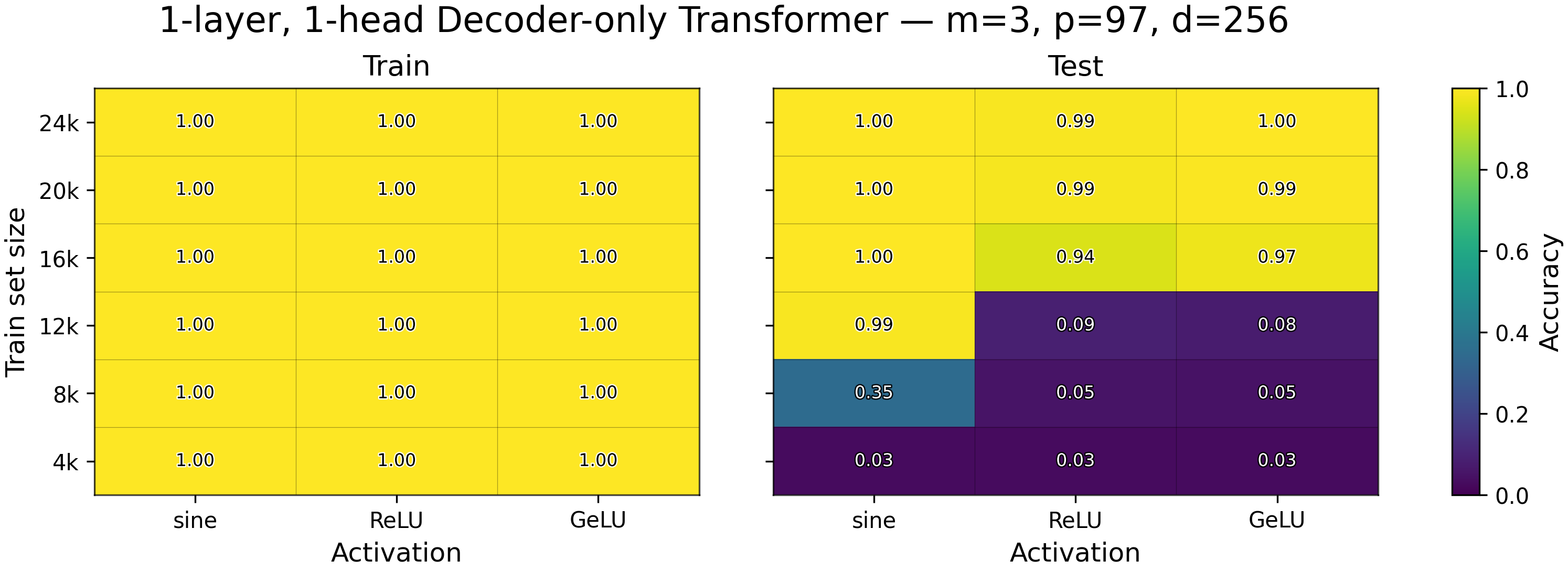}
\caption{Train and test accuracy on modular addition for a 1-layer, 1-head decoder-only Transformer with FFN activations \emph{sine}, \emph{ReLU}, and \emph{GELU}.}
  \label{fig:transformer}
\end{figure}

\section{Conclusion}
We establish that periodic activations offer provable advantages for learning modular addition. Our analysis reveals a sharp expressivity gap under a shared, position-independent embedding: width-$2$ sine MLPs suffice for exact modular addition (Thms.~\ref{thm:sine_low_width_construction}, \ref{thm:existence_sine_high}), whereas ReLU networks require width scaling linearly with sequence length and cannot generalize across incongruent lengths (Thms.~\ref{thm:relu-width-lb}, \ref{clm:relu-two-lengths}). We complement this with statistical guarantees, deriving $\widetilde{\Theta}(dp)$ uniform convergence bounds (Thm.~\ref{thm:uniform-tilde}); specialized to sine with constant width, any interpolating ERM learner achieves nearly optimal $\widetilde{\mathcal{O}}(p)$ sample complexity (Thm.~\ref{cor:erm-constant-width-sine}). In the overparameterized regime, we establish width-independent margin bounds (Thm.~\ref{thm:sin-width-margin-gen}). Empirically, sine networks outperform ReLU models in sample efficiency (Fig.~\ref{fig:underparam}), and their superior test accuracy tracks with larger normalized margins in the overparameterized regime (Figs.~\ref{fig:overparam1}--\ref{fig:overparam2}). In the OOD regime, sine MLPs generalize far beyond training lengths, while ReLU networks degrade to chance (Figs.~\ref{fig:ood}, \ref{fig:ood_bias}). These benefits extend to Transformer architectures, where sine activations yield significantly better sample efficiency than standard ReLU and GELU baselines (Fig.~\ref{fig:transformer}). Together, our results support a robust design principle: encoding periodicity directly into the architecture maximizes both expressivity and learnability for periodic tasks.

\section*{Acknowledgments}
The authors thank Mohamad Amin Mohamadi for his valuable discussions during the early stages of this work. This work was supported by DARPA AIQ grant and OpenAI SuperAlignment grant.

\clearpage      

\bibliography{arxiv}
\bibliographystyle{arxiv}

\appendix

\section{Additional related work}

\paragraph{Mechanistic and optimization‑centric perspectives.}
Mechanistic studies catalog multiple circuit families that realize modular addition~\citep{nanda2023,zhong2023}. Analyses tie the features that emerge during training to margin maximization and frequency selection~\citep{morwani2023feature,li2024fouriercircuits}, and show that optimizer choice and regularization can change the time‑to‑grok and failure modes~\citep{thilak2022,abbe2023sgd}. Our formulation codifies these observations by hard‑wiring a periodic inductive bias in a two‑layer MLP and then proving expressivity and generalization benefits.

\paragraph{Capacity results beyond the main text.}
Lower bounds based on bit‑extraction and upper bounds via growth‑function arguments together yield near‑matching VC estimates for piecewise‑linear networks~\citep{BartlettHarveyLiawMehrabian2017}. For piecewise‑polynomial activations, classical results give $\mathcal{O}(WL^{2}+WL\log W)$ upper bounds with refined scaling via unit counts~\citep{AnthonyBartlett2009,BartlettHarveyLiawMehrabian2017}. For Pfaffian activations (e.g., sigmoid, $\tanh$), capacity is polynomial in the parameter size~\citep{karpinski1997pfaffian,AnthonyBartlett2009}. These follow the general line of bounding polynomial sign patterns~\citep{Warren1968,GoldbergJerrum1995,AnthonyBartlett2009}, and in multiclass settings are summarized by the Natarajan dimension~\citep{natarajan1989learning,haussler1995generalization,ShalevShwartzBenDavid2014}.

\paragraph{Learning parity and modular structure with gradient methods.}
Fourier characters (parities) drive SQ hardness: under uniform inputs, even weak learning of related classes is impossible in the SQ model~\citep{ODonnell2014,blum1994stoc,kearns1998sq,Reyzin2020SQSurvey}. With noise, LPN stays difficult—BKW is sub‑exponential~\citep{blum2003bkw}. Gradient descent on shallow nets emphasizes low‑degree components, matching SQ lower bounds~\citep{vempala2018gd}; training time under SGD relates to leap complexity~\citep{abbe2023sgd}, and fixed large‑support parities can be provably hard~\citep{shoshani2025hardness}. In contrast, minibatch SGD can efficiently learn quadratic XOR via a find‑then‑tune phase with near‑optimal samples~\citep{glasgow2023sgd}.

\paragraph{Implicit bias of optimizers.}
Optimization geometry induces solution selection~\citep{gunasekar2018characterizing}. For AdamW, analyses identify convergence to KKT points of an $\ell_\infty$‑constrained problem, implying an $\ell_\infty$ max‑margin bias~\citep{xie2024implicit}; related max‑margin behavior appears for Adam on separable data~\citep{zhang2024implicit}. Muon has been argued to enforce effective spectral‑norm control and to favor max‑margin solutions in spectral geometry~\citep{chen2025muonSpectral,fan2025implicit}. These observations motivate the norm choices in our width‑independent margin bounds.

\paragraph{Margin‑based generalization guarantees and path‑norm viewpoints.}
Beyond parameter counts, generalization can be controlled by margins and layerwise scales~\citep{neyshabur2018towards}. Prior bounds include spectral and row‑wise $L_{2,1}$ results for Lipschitz activations~\citep{DBLP:journals/corr/BartlettFT17}, size‑independent Rademacher bounds for positively homogeneous nets and, separately, $L_{1,\infty}$ constraints for Lipschitz activations~\citep{Golowich2017SizeIndependent}, and PAC–Bayesian variants robust to weight perturbations~\citep{Neyshabur2018PACBayesianSpectrally}. Path norms yield rescaling‑invariant capacity measures~\citep{NeyshaburTomiokaSrebro2015}, connect to Barron‑space approximation/estimation in two‑layer models~\citep{Emawu2019BarronFlow}, underpin Path‑SGD~\citep{NeyshaburSalakhutdinovSrebro2015}, and extend to modern DAG ReLU networks~\citep{GononBrisebarreRicciettiGribonval2024}.

\paragraph{Gradient descent dynamics and empirical margins.}
On separable data with cross‑entropy, gradient descent drives norm growth while the predictor direction converges to the hard‑margin SVM~\citep{Soudry2018ImplicitBias}. For positively homogeneous networks, gradient flow increases a layer‑normalized margin and converges to a KKT point of a margin maximization problem~\citep{LyuLi2020MaxMargin}; directional convergence and alignment extend to deep homogeneous settings~\citep{JiTelgarsky2020Directional}. In the mean‑field limit for infinitely wide two‑layer nets, the limiting classifier is max‑margin in an appropriate function space~\citep{ChizatBach2020ImplicitBias}. For non‑homogeneous networks, normalized margins still grow once the risk is small, with directional convergence to a KKT point~\citep{CaiEtAl2025NonHomogeneousGD}. BatchNorm alters the bias, encouraging more uniform margins and faster directional convergence~\citep{CaoZouLiGu2023BNImplicitBias}.

\paragraph{Additional context on OOD generalization and length extrapolation.}
Self‑attention at fixed size has formal expressivity limits~\citep{hahn2020limits}, while under an idealized norm‑regularized inference scheme, causal transformers can provably length‑generalize for certain Limit Transformer functions~\citep{huang2025framework}. Practical mitigations include ALiBi/Hard‑ALiBi~\citep{press2022alibi,jelassi2024repeat}, prompting strategies that elicit multi‑step reasoning~\citep{wei2022cot}, and periodic compression for long chains of thought within a fixed context window (e.g., PENCIL)~\citep{yang2025pencil}.

\section{Experimental setup and additional results}
\label{app:experiment}

\subsection{Experimental setup}

In this section, we explain the configuration used in all experiments.

\textbf{Data.} For each run we generate a static training set of size $n$ once and reshuffle it at the start of every epoch; the test set contains $10{,}000$ i.i.d.\ samples from each $\mathcal{D}_m$.

\textbf{Initialization and reproducibility.} Unless otherwise specified, we run each hyperparameter configuration with three random seeds \(\{1337,1338,1339\}\) and report metrics averaged over these runs. For each run, we generate static training and test input sequences i.i.d.\ with \texttt{torch.randint} and initialize all weights i.i.d.\ from \(\mathcal{N}(0,0.01^2)\). This setup ensures reproducible initializations and, within each sweep over training set sizes at fixed \((m,p)\), that smaller training sets are strict subsets of larger ones.

\textbf{Precision and implementation.} We implement all experiments in PyTorch, with TF32 disabled and \texttt{float32} precision throughout. Metrics are logged with Weights \& Biases. Each run uses a single NVIDIA GPU (RTX A4000 or A6000, RTX 5000 or 6000 Ada, RTX Pro 6000 Blackwell Max-Q, L40S, A100, H100, or H200).

\textbf{Training.}
We use mini-batch training with a fixed batch size of \(1024\). Models are trained for up to \(300{,}000\) epochs, and we report metrics at the final checkpoint.

\subsubsection{Underparameterized regime.}
Unless otherwise specified, we use AdamW with a constant learning rate of \(10^{-3}\) and \emph{zero} weight decay. All other AdamW hyperparameters are left at their PyTorch defaults (betas \((0.9,0.999)\), \(\varepsilon=10^{-8}\)). We do not use learning rate schedules, warmup, or gradient clipping. When comparing activations, we match \((m,p,d,n)\) and optimizer settings. We report train and test accuracy. We also evaluate vanilla SGD (learning rate \(0.1\), momentum \(0\), dampening \(0\), weight decay \(0\), Nesterov disabled) in Fig.~\ref{fig:under_sgd_app1}.

\subsubsection{Overparameterized regime.}
We use Muon with a constant learning rate of \(10^{-3}\) and vary the decoupled weight decay. Momentum, Nesterov momentum, and Newton–Schulz steps are left at the library defaults (momentum \(0.95\), Nesterov enabled, \(5\) Newton–Schulz steps). We do not use learning rate schedules, warmup, or gradient clipping. We report train and test accuracy and the $0.5$th-percentile of the training margin, \(\gamma^{0.5\%}_{\text{train}}\). We report the layer norms \(\|W\|_{F}\) and \(\|V\|_{2}\) for ReLU models, and \(\|V\|_{1,\infty}\) for sine models, enabling computation of the normalized margins used in Sec.~\ref{sec:experiments}.

\paragraph{Optimizer choice (overparameterized regime).}\label{app:optimizer-choice}
We adopt Muon to match the norm-based denominators used in our normalized margins.
For any $A\in\mathbb{R}^{m\times n}$,
\[
\|A\|_{F}\le \sqrt{\operatorname{rank}(A)}\,\|A\|_{2}
\le \sqrt{\min\{m,n\}}\,\|A\|_{2},\quad
\|A\|_{1,\infty}:=\max_i \|A_{i,:}\|_{1}\le \sqrt{n}\,\max_i \|A_{i,:}\|_{2}
\le \sqrt{n}\,\|A\|_{2}.
\]
Hence $\|V\|_{2}\|W\|_{F}$ and $\|V\|_{1,\infty}$ are controlled, up to explicit dimension factors, under the spectral geometry induced by Muon with decoupled weight decay.

\subsubsection{Out-of-domain (OOD) regime.}

We fix the network width $d=1024$ and train with the Muon optimizer at a constant learning rate of $10^{-3}$, and vary only the decoupled weight decay. We sweep decoupled weight decay over $\{0.3,0.1,0.03,0.01,0.003,0.001,0\}$, applying it to the second layer $V$ for sine MLPs  and to both layers $V$ and $W$ for ReLU. When the first-layer bias is enabled for sine MLPs, we optimize the bias with AdamW (no weight decay) while keep Muon for all other parameters. Training uses a static multi-length set with $m \in \{2,3,4,5,7,13,19\}$. For $p=97$, the total sample budget is $n \in \{4\text{k}, 8\text{k}, 16\text{k}, 32\text{k}, 64\text{k}\}$; for $p=53$, $n \in \{1\text{k}, 2\text{k}, 4\text{k}, 8\text{k}, 16\text{k}\}$. In each case, the budget is split as evenly as possible across the $m$ values, and each per-$m$ shard is reshuffled every epoch. For evaluation, we construct—once per $m$ and seed—fixed held-out test sets for OOD lengths $m\in\{14,38,53,97,201,303,401,512,602,705,811\}$. We also track training and in-domain test accuracy for $m\in\{3,7,13\}$. To ensure determinism and independence, we use independent CPU generators seeded with $\texttt{seed} \times 1009 + m$ for the training data of length $m$ and $\texttt{seed} \times 2009 + m$ for the corresponding test data; this yields per-$m$ test sets that are identical across epochs, prevents leakage via seed collisions, and ensures that, within each $m$, smaller training sets are strict prefixes of larger ones. 

\paragraph{Reporting conventions.} \label{reporting_conventions}

For some plots, we additionally use a \emph{best-over-WD} scheme: for each accuracy metric, we compute the accuracy averaged over seeds for every weight decay value and report the maximum of these averages.

\subsubsection{Transformer architecture and training details}\label{app:transformer-details}

\textbf{Task and tokenization.}
For a given modulus $p$ and number of summands $m$, each example is a sequence of length $2m$:
$x_1,+,x_2,+,\dots,x_m,=$, where $x_i\in\{0,\dots,p-1\}$. The vocabulary has size $p{+}2$ and includes two special symbols for ``$+$'' and ``$=$''. The model predicts the residue $(\sum_{i=1}^m x_i) \bmod p$ from the final position. We train with the standard cross-entropy loss on the last token’s logits and report accuracy.

\textbf{Embeddings.}
We use a single-layer \emph{decoder-only} Transformer with one self-attention head. Tokens are embedded via a learned lookup table $\mathrm{tok\_emb}\in\mathbb{R}^{(p+2)\times d}$ and added to a learned positional embedding $\mathrm{pos\_emb}\in\mathbb{R}^{(2m)\times d}$. We use fixed $d=256$ in Fig.~\ref{fig:transformer}.
\[
H^{(0)} \;=\; E_{\text{tok}}(s_{1:L}) \;+\; E_{\text{pos}}(1{:}L)\in\mathbb{R}^{L\times d}.
\]

\textbf{Layer normalization.}
Position-wise LN over the feature dim $d$ with trainable $\gamma,\beta\in\mathbb{R}^d$ (init $\gamma=\mathbf 1,\ \beta=\mathbf 0$, $\varepsilon=10^{-5}$):
\[
\mathrm{LN}(h)=\gamma\odot\frac{h-\mu(h)}{\sqrt{\mathrm{Var}(h)+\varepsilon}}+\beta.
\]

\textbf{Attention block.}
We apply pre-norm LayerNorm, then a single-head causal self-attention with projections
$Q=H W_Q$, $K=H W_K$, $V=H W_V$ ($W_Q,W_K,W_V\in\mathbb{R}^{d\times d}$, no biases). Attention logits are scaled by $1/\sqrt{d}$ and masked with a lower-triangular causal mask. The attention output is added to the residual stream.
\[
\tilde H=\mathrm{LN}(H^{(0)}),\; Q=\tilde HW_Q,\; K=\tilde HW_K,\; V=\tilde HW_V,
\]
\[
M_{ij}=\begin{cases}0& j\le i\\ -\infty& j>i\end{cases},\quad
P=\mathrm{softmax}\!\Big(\frac{QK^\top}{\sqrt d}+M\Big),\quad
H^{(1)}=H^{(0)}+PV.
\]

\textbf{Feed-forward block.}
After a second pre-norm LayerNorm, we apply a two-layer MLP with expansion factor 4,
$\mathrm{FFN}(h)=W_2\,\phi(W_1 h)$, where $W_1\in\mathbb{R}^{4d\times d}$, $W_2\in\mathbb{R}^{d\times 4d}$. We sweep the activation $\phi\in\{\mathrm{sine},\mathrm{ReLU},\mathrm{GELU}\}$. The FFN output is added to the residual stream, followed by a final LayerNorm.
\[
\hat H=\mathrm{LN}(H^{(1)}),\quad H^{(2)}=H^{(1)}+W_2\,\phi(W_1\hat H),\quad
H^{(3)}=\mathrm{LN}(H^{(2)}).
\]

\textbf{Output layer.}
A learned, untied linear head \(W_{\text{out}}\in\mathbb{R}^{(p+2)\times d}\) maps hidden states to logits in $\mathbb{R}^{p+2}$; we evaluate the last position only. No dropout or label smoothing is used.
\[
Z_{:,t}=W_{\text{out}}\,h^{(3)}_{t}\in\mathbb{R}^{p+2},\qquad
\hat y=\mathrm{uargmax}_{c\in\{0,\dots,p-1\}} Z_{c,L}.
\]

\textbf{Overall model.}
For a sequence $s_{1:L}$ with $L=2m$ tokens drawn from a vocabulary of size $p{+}2$,
\[
\mathrm{TF}_\theta(s_{1:L}) \;=\; W_{\text{out}}\!\circ\!\mathrm{LN}\!\circ\!\big(\mathrm{Id}+(W_2\!\circ\!\phi\!\circ\!W_1)\!\circ\!\mathrm{LN}\big)\!\circ\!\big(\mathrm{Id}+\mathrm{Attn}\!\circ\!\mathrm{LN}\big)\!\circ\!\big(\mathrm{tok\_emb}+\mathrm{pos\_emb}\big)(s_{1:L}).
\]

\textbf{Initialization and precision.}
All linear and embedding weights are initialized i.i.d.\ $\mathcal{N}(0,\sigma^2)$ with $\sigma=10^{-2}$; LayerNorm scales are initialized to one and biases to zero. Training uses \texttt{float32} precision without mixed precision or TF32.

\textbf{Optimization and data.}
We generate a \emph{fixed} training set of size $n$ once per run and a fixed test set of $10{,}000$ examples; the training set is reshuffled each epoch via index permutation. We use AdamW with constant learning rate, decoupled weight decay, batch size $1024$, and gradient-norm clipping ($1.0$). We average metrics over seeds $\{1337,1338,1339\}$. 
\[
S_{\text{train}}=\{(s^{(j)},y^{(j)})\}_{j=1}^n,\quad
g\leftarrow \nabla_\theta \mathcal L,\quad
g\leftarrow \mathrm{clip}_{\|g\|_2\le 1.0}(g),\quad
\theta\leftarrow \mathrm{AdamW}(\theta,g;\eta,\lambda).
\]
\textbf{Hyperparameters for Fig.~\ref{fig:transformer}.}
We use $(m,p,d)=(3,97,256)$, train for a fixed number of epochs with AdamW (learning rate $10^{-4}$, weight decay $0.0$, batch size $1024$), and report train/test accuracy averaged over the above seeds. The x-axis of Fig.~\ref{fig:transformer} orders FFN activations as \emph{sine}, \emph{ReLU}, and \emph{GELU}; rows correspond to $n\in\{4\text{k},8\text{k},12\text{k},16\text{k},20\text{k},24\text{k}\}$.

\subsection{Additional results}
We provide additional figures for our experiments.

Figs.~\ref{fig:underparam_app1} and \ref{fig:underparam_app2} present multiple plots for the underparameterized sweeps at $(m,p)=(2,307)$ and $(4,53)$, respectively. In both cases, sine networks dominate ReLU networks for matched width and training budget, and the advantage widens as width decreases, until optimization begins to fail.

\begin{figure}[!t]
  \centering
  \includegraphics[width=\linewidth]{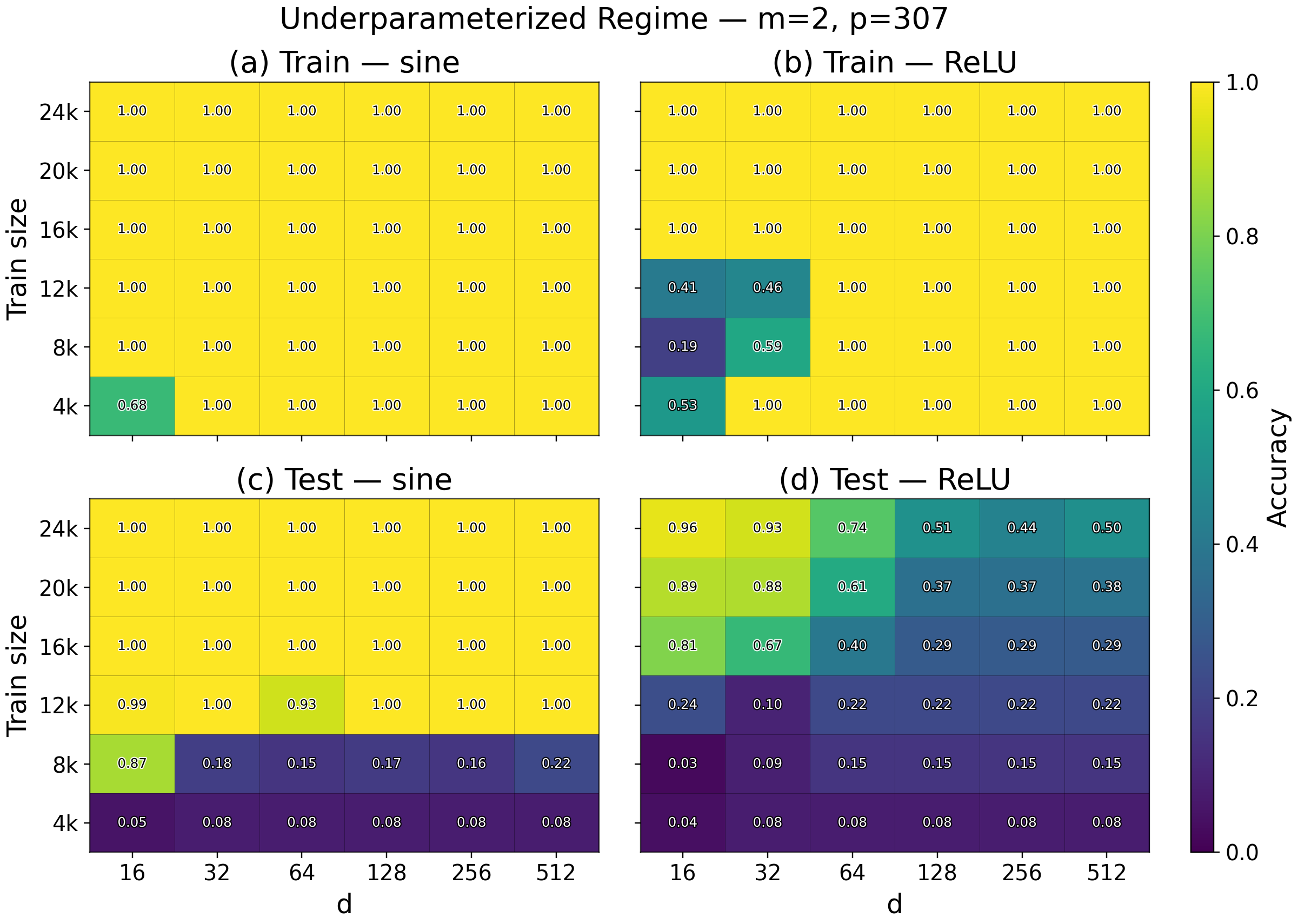}
  \caption{Underparameterized regime ($m=2$, $p=307$). Final train/test accuracies for two-layer MLPs with sine vs. ReLU activations under matched budgets.}
  \label{fig:underparam_app1}
\end{figure}

\begin{figure}[!t]
  \centering
  \includegraphics[width=\linewidth]{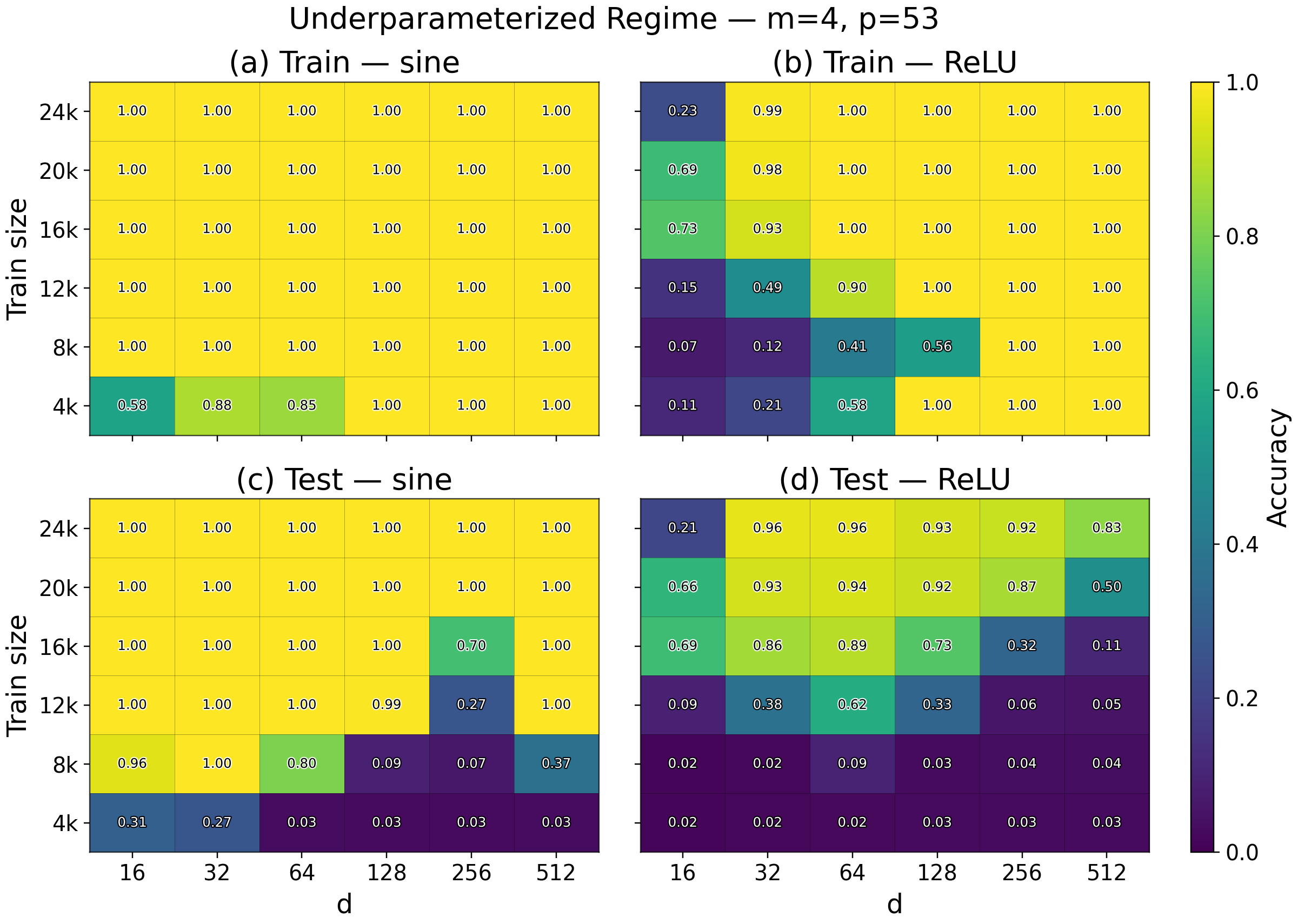}
  \caption{Underparameterized regime ($m=4$, $p=53$).}
  \label{fig:underparam_app2}
\end{figure}

Fig.~\ref{fig:under_sgd_app1} reports underparameterized sweeps at $(m,p)=(3,97)$ with vanilla SGD. On the test set, sine consistently outperforms ReLU at matched width and training budget, with performance peaking at small–moderate widths and degrading at large width despite perfect training accuracy. Compared with AdamW, the qualitative picture is unchanged, but SGD generalization is slower and more learning-rate sensitive. ReLU test accuracy under SGD closely matches ReLU under AdamW, whereas sine under SGD improves much more slowly after interpolation and never reaches the test accuracy of sine under AdamW.

\begin{figure}[!t]
  \centering
  \includegraphics[width=\linewidth]{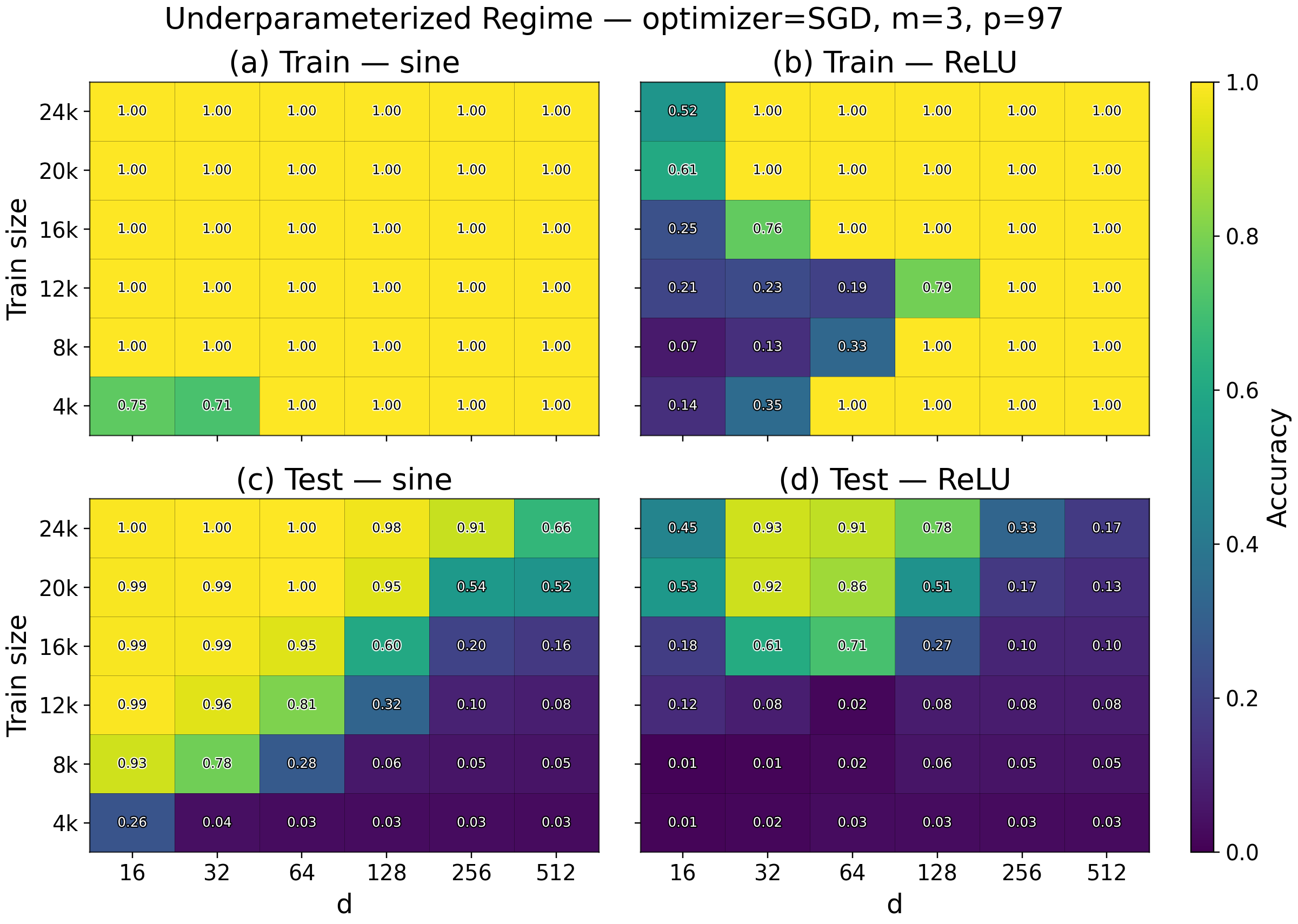}
  \caption{Underparameterized regime with Vanilla SGD ($m=3$, $p=97$).}
  \label{fig:under_sgd_app1}
\end{figure}

Figs.~\ref{fig:overparam_app1}--\ref{fig:overparam_app4} present multiple plots for the overparameterized sweeps at $(m,p)=(2,307)$ and $(4,53)$, respectively. In both cases, as weight decay increases through a moderate range, normalized margins grow and test accuracy improves; for excessively large decay, training accuracy falls and generalization degrades. These trends align with the prediction that, in the overparameterized regime, generalization is governed by effective layer scales and margins.

\begin{figure}[!t]
  \centering
  \includegraphics[width=\linewidth]{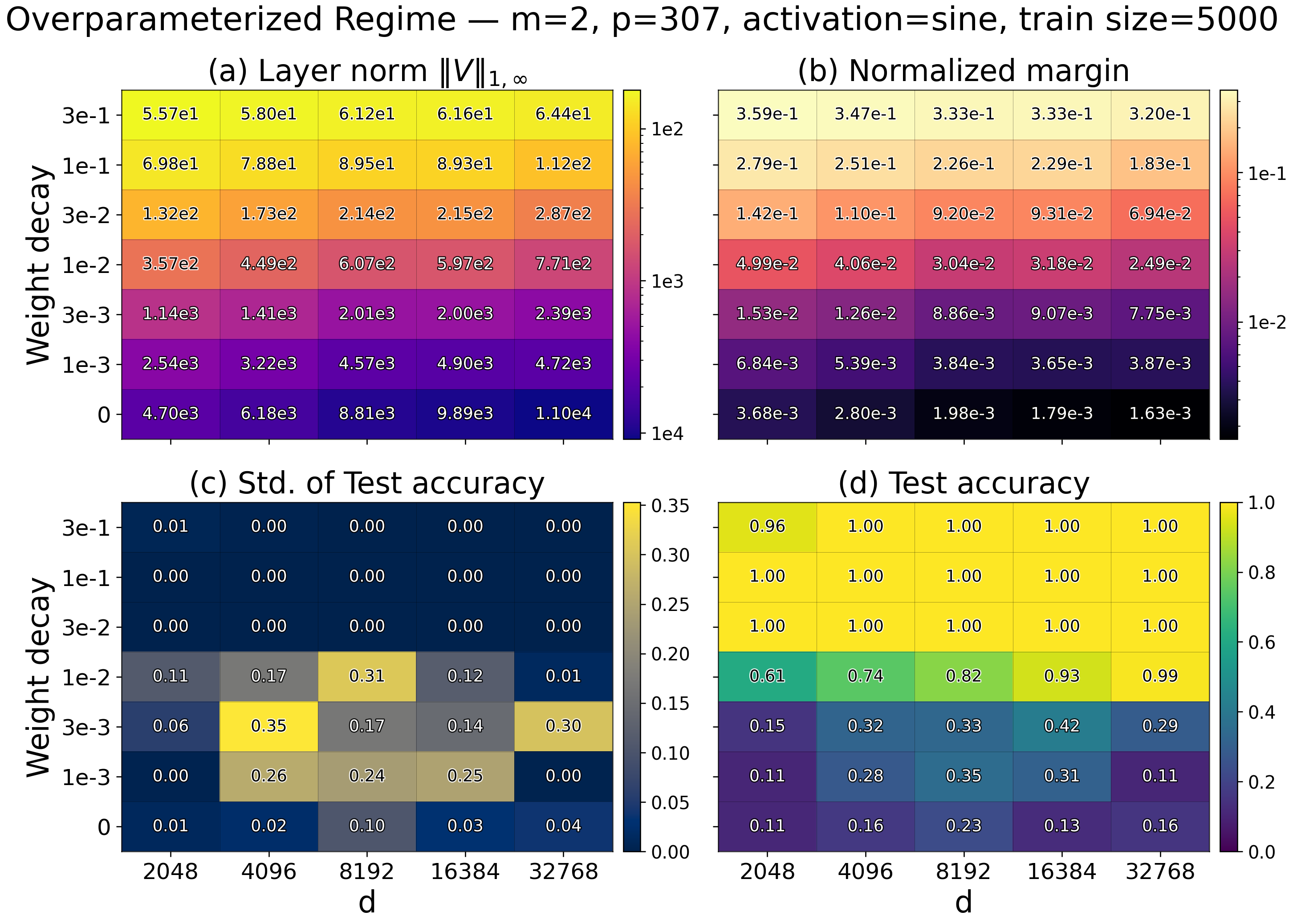}
 \caption{Two-layer sine networks in the overparameterized regime.}
  \label{fig:overparam_app1}
\end{figure}

\begin{figure}[!t]
  \centering
  \includegraphics[width=\linewidth]{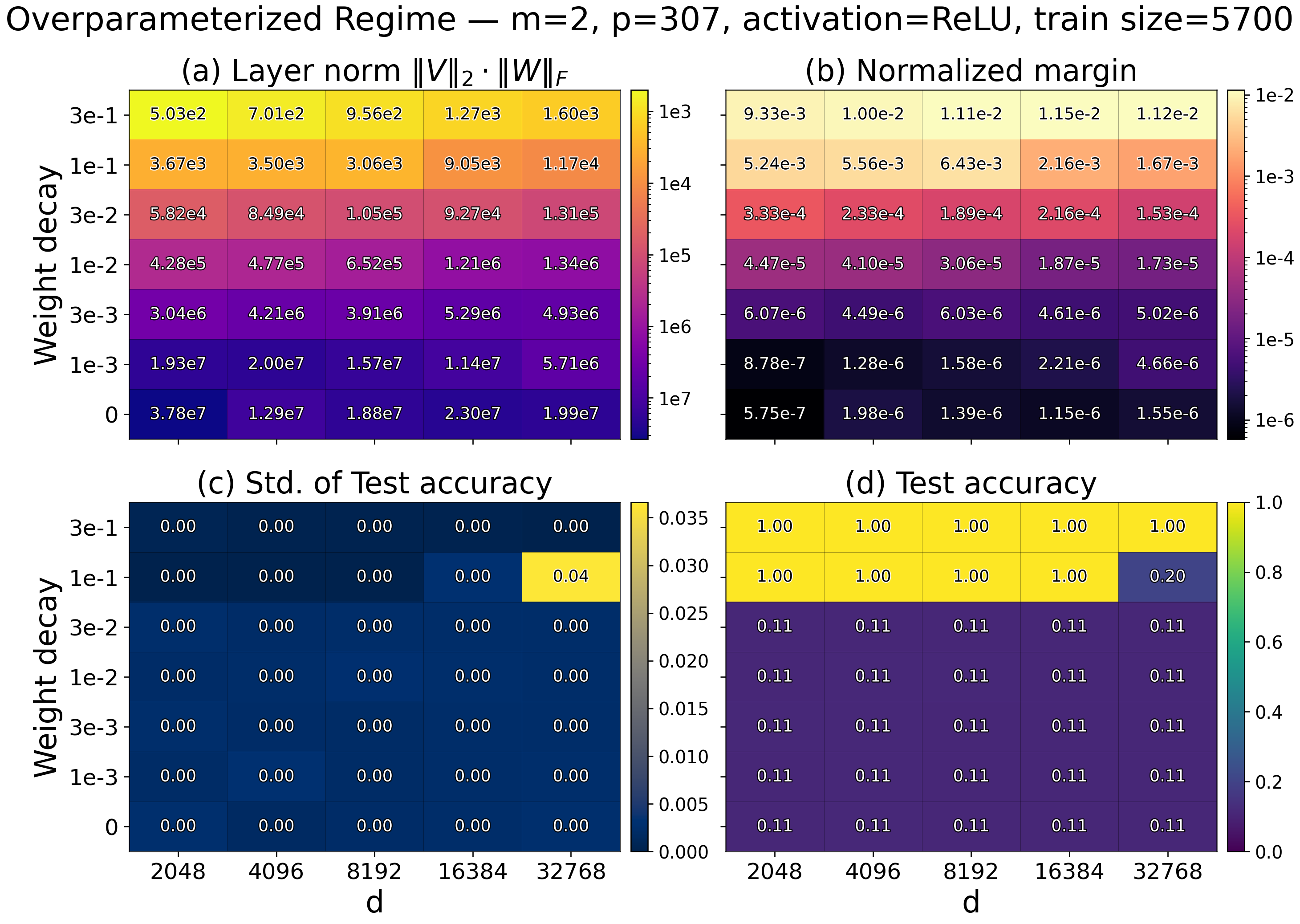}
 \caption{Two-layer ReLU networks in the overparameterized regime.}
  \label{fig:overparam_app2}
\end{figure}

\begin{figure}[!t]
  \centering
  \includegraphics[width=\linewidth]{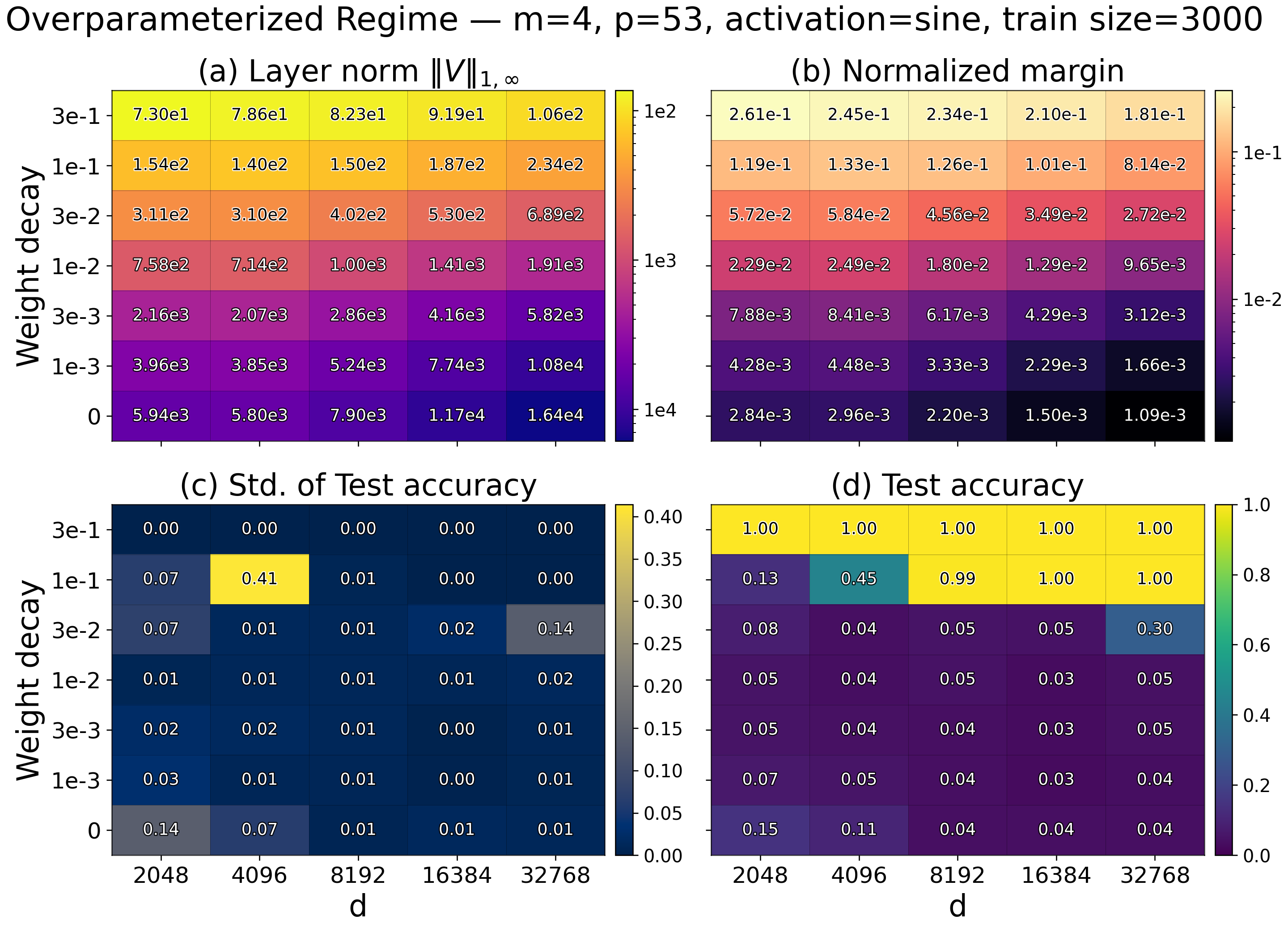}
 \caption{Two-layer sine networks in the overparameterized regime.}
  \label{fig:overparam_app3}
\end{figure}

\begin{figure}[!t]
  \centering
  \includegraphics[width=\linewidth]{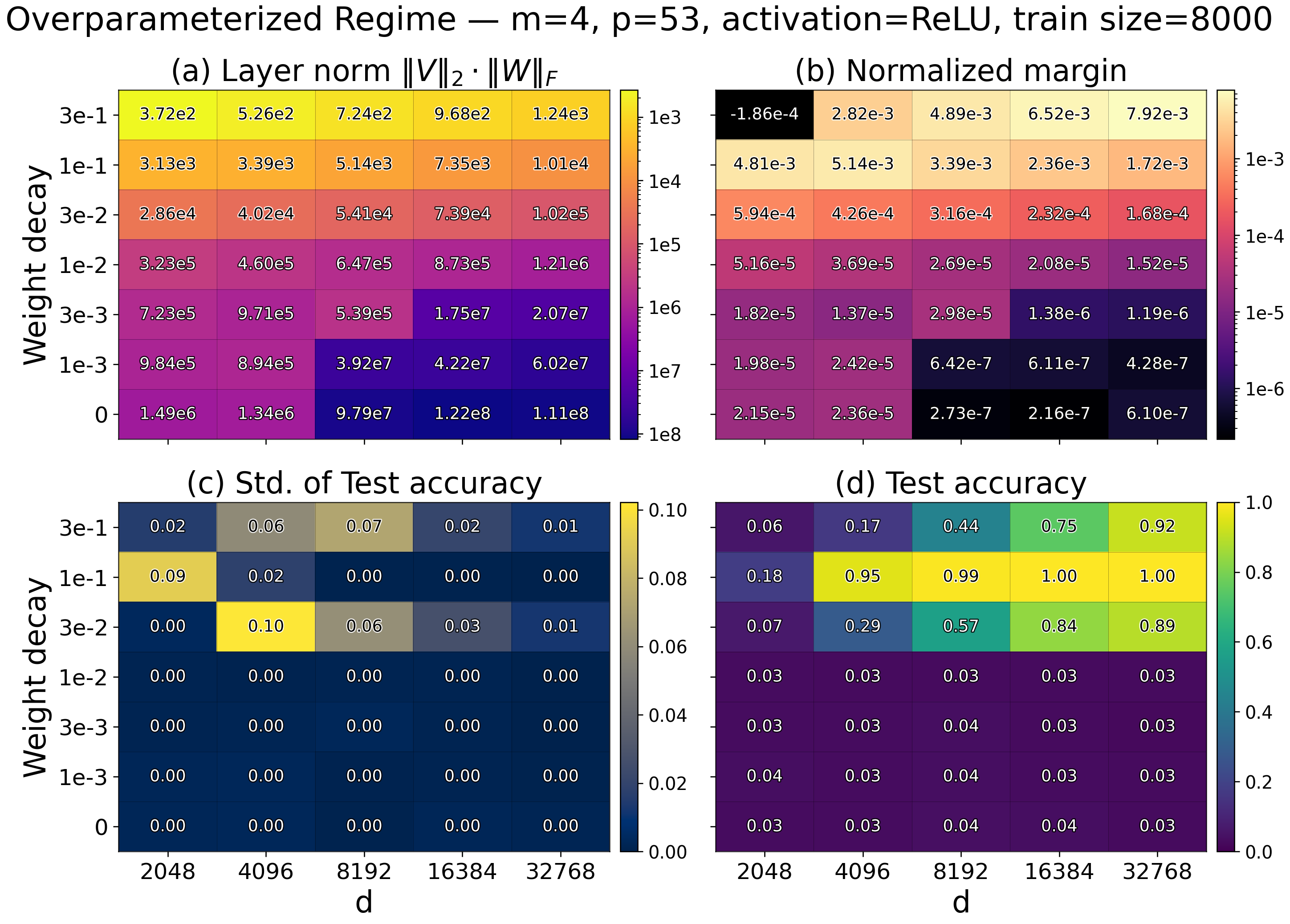}
 \caption{Two-layer ReLU networks in the overparameterized regime.}
  \label{fig:overparam_app4}
\end{figure}

Fig.~\ref{fig:ood_app} provides additional plots for the out-of-domain sweep, including per-length accuracies across the weight-decay grid. Once the data budget reaches 8k examples, sine MLPs achieve perfect accuracy on all seen lengths and remain essentially perfect on unseen lengths far beyond the training support. In contrast, ReLU MLPs struggle even in-domain and quickly collapse to chance accuracy on larger OOD lengths.

\begin{figure}[!t]
  \centering
  \includegraphics[width=\linewidth]{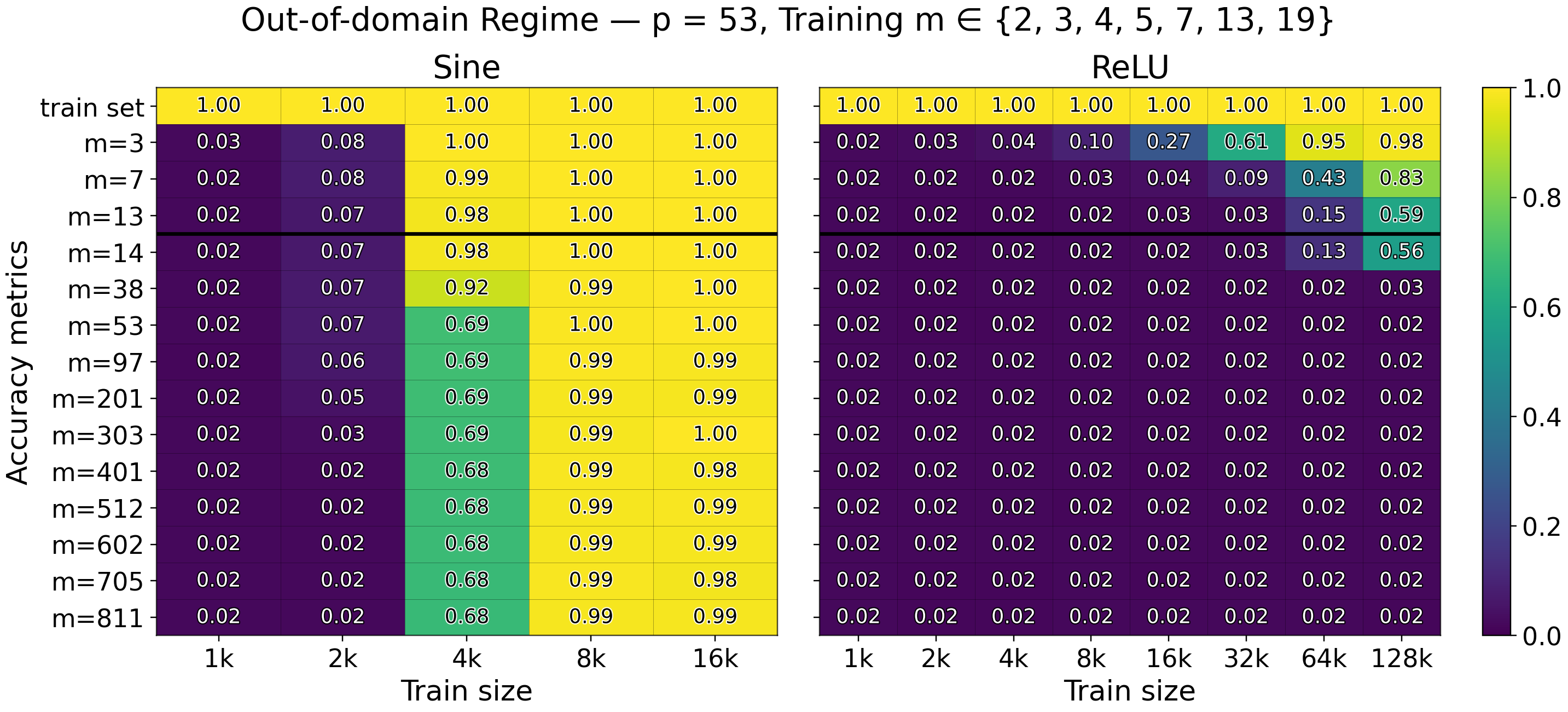}
\caption{Out-of-domain accuracies of two-layer sine and ReLU MLPs, with no bias; each heatmap cell reports accuracy under the \hyperref[reporting_conventions]{Best-over-WD} scheme.}
  \label{fig:ood_app}
\end{figure}

Fig.~\ref{fig:ood_bias_app} shows additional plots comparing out-of-domain accuracies for two-layer sine MLPs with and without a first-layer bias. Enabling a first-layer bias in sine MLPs substantially improves robustness, leading to solutions that generalize stably and consistently.

\begin{figure}[!t]
  \centering
  \includegraphics[width=\linewidth]{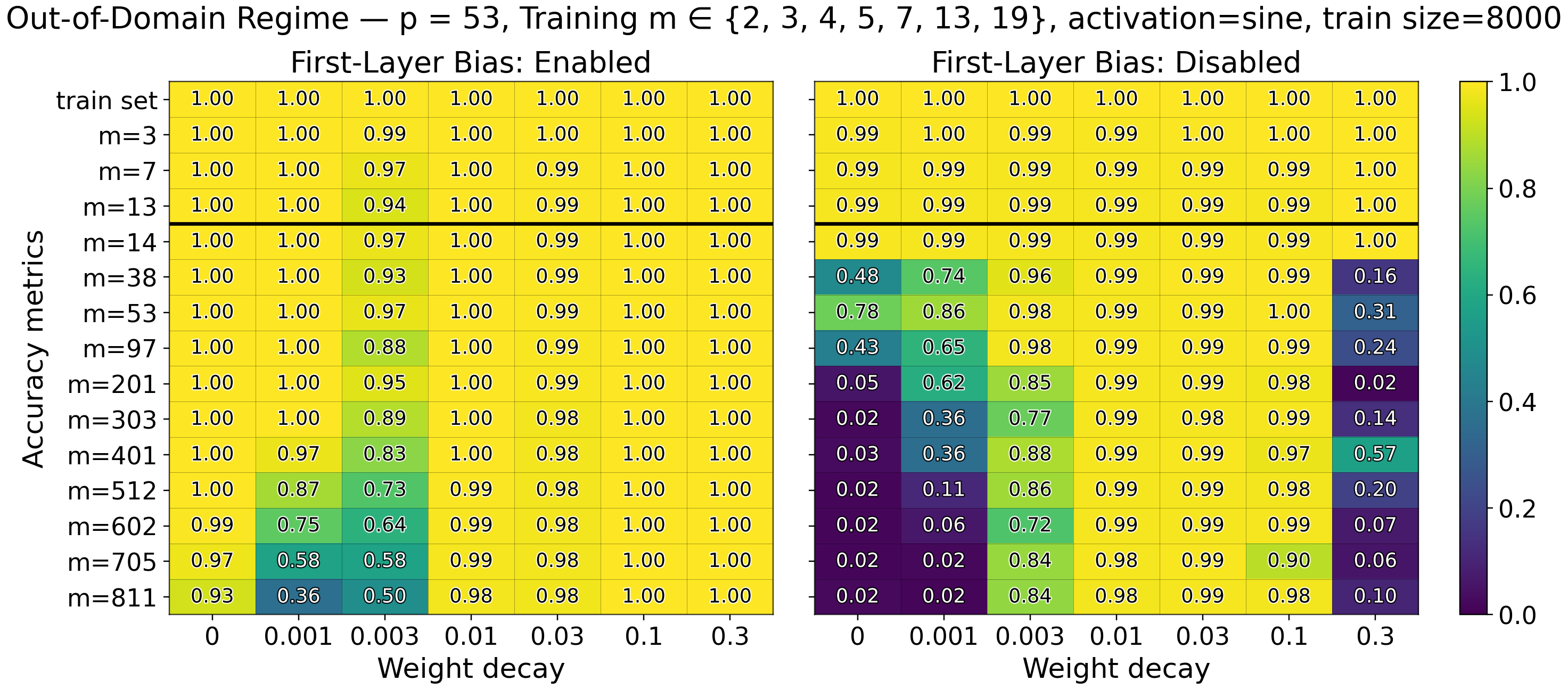}
\caption{Out-of-domain accuracies for two-layer sine MLPs with and without first-layer bias.}
  \label{fig:ood_bias_app}
\end{figure}

\FloatBarrier

\section{Proof Outlines}\label{app:roadmap}

This appendix outlines the logical structure of our theoretical results, clarifying the connections between the main text theorems and the detailed proofs in subsequent sections.

\subsection{Expressivity (Sec.~\ref{sec:expressivity})}

\textbf{Sine Networks (Thms.~\ref{thm:sine_low_width_construction} and \ref{thm:existence_sine_high}).}
The proofs for sine networks are constructive. The core intuition is that the periodicity of the activation function naturally aligns with the modular arithmetic task.
\begin{enumerate}
    \item \textbf{Encoding residues:} For an input $x \in \mathcal{X}_m$, the dot product $w^T x$ represents a sum of integers. By choosing weights proportional to frequencies $2\pi/p$, we ensure that inputs summing to $k$ and $k+p$ map to the same phase angle on the unit circle.
    \item \textbf{Decoding via orthogonality:} A single sine neuron captures the vertical projection of this phase. To fully distinguish $p$ classes, a width of $2$ allows the network to compute both sine and cosine components, effectively implementing a Fourier basis that isolates specific residues.
    \item \textbf{Uniformity:} For Thm.~\ref{thm:existence_sine_high}, we extend this construction to hold for all lengths $m$ simultaneously. With bias, we can realign the phases for any $m$. Without bias, we rely on a width-$\lfloor (p-1)/2 \rfloor$ construction that mimics a discrete Fourier transform to separate residue classes.
\end{enumerate}

\textbf{ReLU Networks (Thms.~\ref{thm:relu-width-lb} and \ref{clm:relu-two-lengths}).}
The proofs for ReLU networks utilize geometric counting arguments and linear algebra.
\begin{enumerate}
    \item \textbf{Counting linear regions:} A ReLU network partitions the input space into linear regions where the function is affine. To approximate the ``sawtooth'' function of modular addition exactly, the network must change slope (oscillate) $\Omega(m)$ times along specific directions. Thm.~\ref{thm:relu-width-lb} establishes that a width-$d$ network cannot generate enough linear regions to match this complexity when $d$ is small relative to $m/p$.
    \item \textbf{Incompatibility of lengths:} Thm.~\ref{clm:relu-two-lengths} demonstrates that the affine transformation required to fit length $m_1$ creates errors at length $m_2$ if $m_1 \not\equiv m_2 \pmod p$.
\end{enumerate}

\subsection{Underparameterized Generalization (Sec.~\ref{sec:classical})}

\textbf{Uniform Convergence (Thm.~\ref{thm:uniform-tilde}).}
The argument follows classical techniques \citep{Warren1968,GoldbergJerrum1995,AnthonyBartlett2009,ShalevShwartzBenDavid2014}. The proof of Thm.~\ref{thm:uniform-tilde} proceeds via the following three-step reduction:

\begin{enumerate}
    \item \textbf{Generalization Bound via Natarajan Dimension:}
    We first invoke the Multiclass Fundamental Theorem (Thm.~\ref{thm:unif_conv}), which uniformly bounds the generalization gap by $\widetilde{\mathcal{O}}(\sqrt{d_{N}/n})$, where $d_{N}$ is the Natarajan dimension of the hypothesis class $\mathcal{H}$.
    
    \item \textbf{Reduction to Pairwise VC-Dimension:}
    Directly computing $d_{N}$ is difficult. We utilize a reduction (Lem.~\ref{lem:growth_function}) which bounds the Natarajan dimension of a $p$-class model by the VC-dimension of its induced binary pairwise comparisons.
    
    \item \textbf{Bounding VC-Dimension via Parameter Counting:}
    For the activations considered (trigonometric, piecewise-polynomial, rational-exponential), the pairwise difference function $f(x, \theta) = s^\theta_i(x) - s^\theta_j(x)$ is semi-algebraic (specifically, a polynomial or rational function of the parameters $\theta$). We apply classical bounds on the VC-dimension of polynomial function classes (Thm.~\ref{thm:AB83}).
\end{enumerate}

\subsection{Overparameterized Generalization (Sec.~\ref{sec:overparameterized})}

\textbf{Margin-based Bounds (Thms.~\ref{thm:sin-width-margin-gen} and \ref{thm:relu-width-margin-gen}).}
These proofs show that all interpolating networks with large normalized margins generalize (Sec.~\ref{app:margin-bound-linf}), and that at least one such network exists (Sec.~\ref{sec:high_margin_construction}).

\begin{enumerate}
    \item \textbf{Rademacher Generalization Bounds:} We first connect population loss with Rademacher complexity via standard generalization bounds (Thm.~\ref{thm:mohri33}).
    \item \textbf{Ramp Loss Surrogate:} We treat the $0$-$1$ multiclassification loss as upper bounded by the ramp loss (Cor.~\ref{cor:margin_generalization}).
    \item \textbf{Vector Contraction:} We use the vector-contraction inequality for Rademacher complexity (Thm.~1 of \citealt{foster2019linfty}).
    \item \textbf{Sine MLPs:} We bound the contracted complexity using the Dudley entropy integral (Lem.~\ref{lem:dudley_entropy}) and covering numbers for sine networks (Lem.~\ref{lemma:covering bound}), yielding a bound independent of width $d$.
    \item \textbf{ReLU MLPs:} We apply a ``peeling'' argument tailored to positive homogeneous functions (Lem.~1 of \citealt{Golowich2017SizeIndependent}; see Lem.~\ref{lem:Golowich2017}). This technique allows us to strip away the activation function and bound complexity using the spectral and Frobenius norms of the weight matrices, though the resulting bound depends on $m$ due to the complexity of our construction (Sec.~\ref{sec:high_margin_construction}).
\end{enumerate}

\section{Proofs in Expressivity}\label{sec:proofs_in_expressivity}

Throughout, denote the ground-truth label as
\[
y(x)\ :=\sum_{i=1}^m s_i \pmod p\ \in[p],
\]

\subsection{Low-width sine construction for a fixed-length input (Thm.~\ref{thm:sine_low_width_construction})}

\begin{proof}[Proof of Thm.~\ref{thm:sine_low_width_construction}]
Let $\phi:=\tfrac{2\pi}{p}$. Define $W\in\mathbb{R}^{2\times p}$ and $V\in\mathbb{R}^{p\times 2}$ by, for each $r\in[p]$ and $q\in[p]$,
\[
W_{1,r}\ :=\ (\phi r)\!\!\!\pmod{2\pi}\in[-\pi,\pi),\qquad
W_{2,r}\ :=\ \bigl(\phi r+\tfrac{\pi}{2m}\bigr)\!\!\!\pmod{2\pi}\in[-\pi,\pi),
\]
\[
V_{q,1}\ :=\ \sin(\phi q),\qquad V_{q,2}\ :=\ \cos(\phi q).
\]
Adding integer multiples of $2\pi$ to coordinates of $W$ does not change $\sin(Wx)$ because $x\in\mathbb{Z}^p$. Therefore, for any $x\in\mathcal{X}_m$,
\[
\sin\!\bigl((Wx)_1\bigr)=\sin\left(\phi\sum_{r=0}^{p-1}rx_r\right)=\sin(\phi y(x))\]
and, since $\|x\|_1=m$,
\[
\sin\!\bigl((Wx)_2\bigr)=\sin\left(\phi\sum_{r=0}^{p-1}rx_r+\frac{\pi}{2m}\sum_{r=0}^{p-1}x_r\right)=\cos(\phi y(x)).
\]
Thus, for each $q\in[p]$,
\[
\begin{aligned}
s^\theta_q(x)
&=V_{q,1}\sin(\phi y(x))+V_{q,2}\cos(\phi y(x))\\
&=\sin(\phi q)\sin(\phi y(x))+\cos(\phi q)\cos(\phi y(x))\\
&=\cos\!\bigl(\phi(y(x)-q)\bigr).
\end{aligned}
\]
Therefore $h_\theta(x)=\mathrm{uargmax}_{q\in[p]} s^\theta_q(x)=y(x)$.

The margin satisfies
\[
\min_{x}\ \Big( s^\theta_{y(x)}(x)-\max_{q\ne y(x)} s^\theta_q(x)\Big)
=1-\cos\!\Big(\tfrac{2\pi}{p}\Big)\ \ge\ \frac{8}{p^2},
\]
using $1-\cos t\ge \tfrac{2}{\pi^2}t^2$ for $t\in[0,\pi]$. Moreover, $\|W\|_\infty\le \pi$, $\|V\|_\infty\le 1$.
\end{proof}

\subsection{A length-agnostic sine network (Thm.~\ref{thm:existence_sine_high})}

We will use two elementary lemmas.

\begin{lemma}[Uniformity of the modular sum]\label{lem:uniform}
For fixed $m\in\mathbb{N}$, if $s_1,\dots,s_m\stackrel{\mathrm{i.i.d.}}{\sim}\mathrm{Unif}([p])$, then $y(x)\equiv \sum_{i=1}^m s_i\pmod p$ is uniform on $[p]$.
\end{lemma}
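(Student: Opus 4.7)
The plan is to exploit the translation invariance of the uniform distribution on the finite cyclic group $\mathbb{Z}/p\mathbb{Z}$. The key ingredient is the \emph{absorption property}: if $U \sim \mathrm{Unif}([p])$ is independent of any $[p]$-valued random variable $Z$, then $U + Z \pmod p$ is again uniform on $[p]$. To see this, condition on $Z$: for every $k \in [p]$ and every attainable value $z$ of $Z$, $\mathbb{P}(U + Z \equiv k \mid Z = z) = \mathbb{P}(U \equiv k - z \pmod p) = 1/p$, so the law of total probability yields $\mathbb{P}(U + Z \equiv k) = 1/p$.

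With this lemma in hand, I would induct on $m$. The base case $m = 1$ is immediate by hypothesis. For the inductive step, decompose
\[
y \equiv \sum_{i=1}^m s_i \equiv \Bigl(\sum_{i=1}^{m-1} s_i\Bigr) + s_m \pmod p.
\]
By i.i.d.\ sampling, $s_m$ is independent of $\sum_{i=1}^{m-1} s_i$, so the absorption property applied with $U = s_m$ and $Z = \sum_{i=1}^{m-1} s_i \pmod p$ yields that $y$ is uniform on $[p]$.

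As a non-inductive alternative, one can argue directly by fiber counting: for each $k \in [p]$ and any fixed $(s_1, \dots, s_{m-1}) \in [p]^{m-1}$, there is a unique $s_m \in [p]$ making $\sum_{i=1}^m s_i \equiv k \pmod p$. Hence the preimage of $k$ under the modular sum has cardinality $p^{m-1}$, and since $(s_1, \dots, s_m)$ is uniform on $[p]^m$, we obtain $\mathbb{P}(y \equiv k) = p^{m-1}/p^m = 1/p$. No substantive obstacle arises; this is a standard fact about convolutions on finite cyclic groups, used here merely to ensure that the target label distribution $\mathcal{D}_m$ has uniform marginal on $[p]$, which supports the accuracy computations in Thm.~\ref{thm:existence_sine_high}.
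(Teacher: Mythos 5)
Your proof is correct, but it takes a genuinely different route from the paper. The paper argues via the discrete Fourier transform: writing $\omega = e^{2\pi i/p}$, it computes $\mathbb{E}\,\omega^{t S_m} = (\mathbb{E}\,\omega^{t U})^m = 0$ for every nonzero frequency $t \in \{1,\dots,p-1\}$, and concludes that since all nontrivial Fourier coefficients vanish, $S_m$ must be uniform on $[p]$. You instead use the \emph{absorption} (or ``one-time pad'') property of the uniform distribution on the cyclic group $\mathbb{Z}/p\mathbb{Z}$ --- that adding an independent uniform element to anything gives something uniform --- either by induction on $m$ or, equivalently, by your direct fiber-counting argument showing each residue class has exactly $p^{m-1}$ preimages. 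Your approach is more elementary, needing no complex exponentials, and generalizes immediately to arbitrary finite groups and to the case where only one $s_i$ is uniform while the rest have any distribution. The paper's character-sum argument is a compact one-liner that fits the Fourier-analytic theme running through the expressivity constructions (Lemma~\ref{lem:sine-gram}, the width-$2p$ construction), so it is the more natural choice in context, but both proofs are fully rigorous.
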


\begin{proof}
Let $U\sim\mathrm{Unif}([p])$ and write $S_m:=\sum_{i=1}^m s_i\pmod p$. For any $\omega:=e^{2\pi i/p}$ and $t\in\{1,\dots,p-1\}$, the discrete fourier transform of $S_m$ is
\[
\mathbb{E}\,\omega^{t S_m}
=\bigl(\mathbb{E}\,\omega^{t U}\bigr)^m
=\Big(\frac{1}{p}\sum_{u=0}^{p-1}\omega^{tu}\Big)^m
=0.
\]
Hence the discrete Fourier coefficients of $S_m$ vanish at all nonzero frequencies, and $S_m$ is uniform on $[p]$.
\end{proof}

\begin{lemma}[Sine Gram identity on $\mathbb{Z}/p\mathbb{Z}$]\label{lem:sine-gram}
Let $p\ge 2$ and $K:=\lfloor (p-1)/2\rfloor$. For $a,b\in[p]$ define
\[
S(a,b)\ :=\ \sum_{k=1}^{K}\sin\!\Big(\frac{2\pi k}{p}\,a\Big)\,\sin\!\Big(\frac{2\pi k}{p}\,b\Big).
\]
Then:
\begin{enumerate}
\item If $p$ is odd,
\[
S(a,b)=
\begin{cases}
\ \ \dfrac{p}{4}, & a\equiv b\not\equiv 0\ \pmod p,\\[4pt]
-\dfrac{p}{4}, & a\equiv -b\not\equiv 0\ \pmod p,\\[4pt]
0, & \text{otherwise (in particular if }a=0\text{ or }b=0).
\end{cases}
\]
\item If $p$ is even,
\[
S(a,b)=
\begin{cases}
\ \ \dfrac{p}{4}, & a\equiv b\not\equiv 0,\tfrac{p}{2}\ \pmod p,\\[4pt]
-\dfrac{p}{4}, & a\equiv -b\not\equiv 0,\tfrac{p}{2}\ \pmod p,\\[4pt]
0, & \text{otherwise (in particular if }a\in\{0,\tfrac{p}{2}\}\text{ or }b\in\{0,\tfrac{p}{2}\}).
\end{cases}
\]
\end{enumerate}
\end{lemma}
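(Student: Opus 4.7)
My plan is to reduce $S(a,b)$ to a single univariate cosine sum via the product-to-sum identity and then evaluate that cosine sum through a finite-Fourier argument. Using $\sin\alpha\sin\beta=\tfrac12\bigl[\cos(\alpha-\beta)-\cos(\alpha+\beta)\bigr]$, I get
\[
S(a,b)\;=\;\tfrac12\bigl(C(a-b)-C(a+b)\bigr),\qquad C(n):=\sum_{k=1}^{K}\cos\!\bigl(\tfrac{2\pi k n}{p}\bigr),
\]
which depends only on $(a\pm b)\pmod p$. Before analysing $C$, I would dispose of the degenerate cases where every summand of $S(a,b)$ is zero: namely $a\in\{0\}$ (for odd $p$) or $a\in\{0,p/2\}$ (for even $p$), and symmetrically in $b$. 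These trivially give $S(a,b)=0$ and cover the ``otherwise'' branch of the statement whenever one of $a,b$ lies in the zero set of $\sin(2\pi k\,\cdot/p)$.

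\textbf{Closed form for $C(n)$.} Next I would compute $C(n)$ via the standard identity $\sum_{k=0}^{p-1}\omega^{kn}=p\cdot\mathbf{1}[n\equiv 0\!\!\pmod p]$, where $\omega:=e^{2\pi i/p}$. For odd $p$, the index set $\{-K,\dots,K\}$ with $K=(p-1)/2$ is a complete residue system mod $p$, so taking real parts yields $1+2C(n)=p\cdot\mathbf{1}[n\equiv 0]$; hence $C(n)=(p-1)/2$ when $n\equiv 0\!\!\pmod p$ and $C(n)=-1/2$ otherwise. For even $p$ and $K=p/2-1$, the same index set misses only $k=p/2$, which contributes $\omega^{(p/2)n}=(-1)^n$; the identity becomes $1+2C(n)+(-1)^n=p\cdot\mathbf{1}[n\equiv 0]$, giving $C(n)=(p-2)/2$ when $n\equiv 0\!\!\pmod p$ and $C(n)=-\tfrac12\bigl(1+(-1)^n\bigr)$ otherwise. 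Note that $(-1)^n$ is a well-defined function of $n\bmod p$ precisely because $p$ is even.

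\textbf{Case analysis and main obstacle.} Substituting these values into $S(a,b)=\tfrac12(C(a-b)-C(a+b))$, I would split by whether each of $a-b,\,a+b$ vanishes mod $p$. The configurations ``$a\equiv b$ and $a\not\equiv -b$'' and ``$a\equiv -b$ and $a\not\equiv b$'' yield $\pm p/4$ in both parities, since one $C$ evaluates to the ``$n\equiv 0$'' value and the other to the off-diagonal value (using $a+b\equiv 2a\not\equiv 0$ under the exclusions on $a$). The configuration where both congruences hold forces $a\in\{0\}$ (odd $p$) or $a\in\{0,p/2\}$ (even $p$), which is the already handled degenerate case. The remaining ``neither'' configuration requires checking that $C(a-b)=C(a+b)$; for odd $p$ this is immediate since both sides equal $-1/2$. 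The main obstacle is the even-$p$ ``neither'' case, which hinges on the parity identity $(-1)^{a+b}=(-1)^{a-b}$, valid because $(a+b)-(a-b)=2b$ is even; once this single observation is made, both cosine sums equal the same value and cancel, assembling the table in the statement.
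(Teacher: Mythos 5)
Your proposal is correct and takes essentially the same route as the paper: both proofs apply the product-to-sum identity to reduce $S(a,b)$ to $\tfrac12\bigl(C(a-b)-C(a+b)\bigr)$, evaluate the cosine sum $C(n)$ by pairing $\pm k$ inside the geometric sum $\sum_{k=0}^{p-1}\omega^{kn}$ (handling the extra $k=p/2$ term as $(-1)^n$ for even $p$), and then case-split on the congruences of $a\pm b$. The only cosmetic difference is that you phrase the even-$p$ off-diagonal value as $-\tfrac12(1+(-1)^n)$ and resolve the remaining cancellation via the parity observation $(-1)^{a+b}=(-1)^{a-b}$, whereas the paper writes it as $(-1-\cos(\pi u))/2$ and splits on $u$ even or odd; these are the same fact.
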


\begin{proof}
Using $\sin u\,\sin v=\tfrac{1}{2}\bigl(\cos(u-v)-\cos(u+v)\bigr)$,
\[
S(a,b)=\tfrac{1}{2}\sum_{k=1}^{K}\cos\!\Big(\tfrac{2\pi k}{p}(a-b)\Big)
-\tfrac{1}{2}\sum_{k=1}^{K}\cos\!\Big(\tfrac{2\pi k}{p}(a+b)\Big).
\]
For odd $p$ we have $K=\tfrac{p-1}{2}$ and, for $u\in[p]$,
\[
T_{1}(u):=\sum_{k=1}^{K}\cos\!\Big(\tfrac{2\pi k}{p}u\Big)
=\begin{cases}
\ \ \tfrac{p-1}{2}, & u\equiv 0,\\[2pt]
-\tfrac{1}{2}, & u\not\equiv 0,
\end{cases}
\]
which follows by taking real parts in $\sum_{k=0}^{p-1}e^{2\pi iku/p}=0$ and pairing $k$ with $p-k$.
Thus $S(a,b)=\tfrac{1}{2}\bigl(T_{1}(a-b)-T_{1}(a+b)\bigr)$, yielding the stated values.

For even $p$ we have $K=p/2-1$ and, for $u\in[p]$,
\[
T_{0}(u):=\sum_{k=1}^{K}\cos\!\Big(\tfrac{2\pi k}{p}u\Big)
=\frac{-1-\cos(\pi u)}{2}
=\begin{cases}
-1, & u\ \text{even but }u\not\equiv 0,\\
0, & u\ \text{odd},\\
\tfrac{p}{2}-1, & u\equiv 0,
\end{cases}
\]
using $1+2\sum_{k=1}^{p/2-1}\cos(2\pi k u/p)+\cos(\pi u)=0$ for $u\not\equiv 0$.
Hence $S(a,b)=\tfrac{1}{2}\bigl(T_{0}(a-b)-T_{0}(a+b)\bigr)$, which gives the stated cases. When $a\in\{0,p/2\}$ or $b\in\{0,p/2\}$, every summand vanishes (since $\sin(0)=\sin(\pi k)=0$), so $S(a,b)=0$.
\end{proof}

\begin{proof}[Proof of Thm.~\ref{thm:existence_sine_high}]
\noindent

\textbf{With bias. }Let \(s^\theta(x)=V\,\sin(Wx+b)\) and set \(\phi:=\tfrac{2\pi}{p}\).
Define \(W\in\mathbb{R}^{2\times p}\) and \(V\in\mathbb{R}^{p\times 2}\) by, for each \(r,q\in\{0,\dots,p-1\}\),
\[
W_{1,r}\ :=\ (\phi r)\!\!\!\pmod{2\pi}\in[-\pi,\pi),\qquad
W_{2,r}\ :=\ (\phi r)\!\!\!\pmod{2\pi}\in[-\pi,\pi),
\]
\[
V_{q,1}\ :=\ \sin(\phi q),\qquad V_{q,2}\ :=\ \cos(\phi q),\qquad
b_1=0,\qquad b_2=\frac{\pi}{2}.
\]
The same calculation as in the proof of Thm.~\ref{thm:sine_low_width_construction} shows that
\(s_\ell(x)=\cos\!\big(\phi(y(x)-\ell)\big)\). For the correct label $\ell=y(x)$, the score is $1$. For any incorrect label $\ell\neq y(x)$, the score is strictly less than $1$. Thus \(h_\theta(x)=y(x)\) for every \(m\).

\medskip
\noindent
\textbf{Without bias. }Let \(s^\theta(x)=V\,\sin(Wx)\), \(K:=\lfloor (p-1)/2\rfloor\), and \(\alpha=(0,1,\dots,p-1)^\top\in\mathbb{Z}^p\).
Define a width-\(K\) sine network by
\[
W\in\mathbb{R}^{K\times p},\quad W_{k,:}=\frac{2\pi k}{p}\,\alpha^\top\ (k=1,\dots,K),\qquad
V\in\mathbb{R}^{p\times K},\quad V_{\ell k}=\sin\!\Big(\frac{2\pi k}{p}\,\ell\Big).
\]
For any \(x\in\mathcal{X}_m\), the \(k\)-th hidden unit is
\[
\phi_k(x)=\sin\!\Big(\frac{2\pi k}{p}\,\langle\alpha,x\rangle\Big)=\sin\!\Big(\frac{2\pi k}{p}\,y(x)\Big),
\]
so for \(\ell\in[p]\),
\[
s_\ell(x)=(V\phi(x))_\ell=\sum_{k=1}^K \sin\!\Big(\frac{2\pi k}{p}\,\ell\Big)\,\sin\!\Big(\frac{2\pi k}{p}\,y(x)\Big)=S\bigl(\ell,\,y(x)\bigr),
\]
with \(S\) from Lem.~\ref{lem:sine-gram}.

If \(p\) is odd, Lem.~\ref{lem:sine-gram} implies:
for \(y(x)=0\), all scores are \(0\). Under the strict uniqueness rule, this counts as a tie (invalid), so the prediction is wrong.
for \(y(x)\neq 0\), one has \(s_{y(x)}(x)=p/4\), which is strictly greater than \(s_{-y(x)}(x)=-p/4\) and \(s_\ell(x)=0\) otherwise. Thus the max is unique and \(h_\theta(x)=y(x)\).
By Lem.~\ref{lem:uniform}, \(Y\) is uniform, so \(\mathbb{P}\!\left[h_\theta(X)=Y\right]= 1 - \frac{1}{p}\) for every \(m\).

If \(p\) is even, Lem.~\ref{lem:sine-gram} gives:
for \(y(x)\in\{0,\tfrac{p}{2}\}\), all scores are \(0\), resulting in a tie (invalid prediction) for both residues.
for all other residues, \(s_{y(x)}(x)=p/4\) is the unique maximum (strictly greater than 0 and $-p/4$). By Lem.~\ref{lem:uniform}, \(Y\sim\mathrm{Unif}([p])\), hence \(\mathbb{P}[h_\theta(X)=Y]= 1-\frac{2}{p}\) for every \(m\).
\end{proof}

\subsection{ReLU width lower bound (Thm.~\ref{thm:relu-width-lb})}
\label{sec:relu-width-lb}

By a one-dimensional counting-path argument, we show that any ReLU MLP that exactly implements modular addition requires width $\Omega(m/p-1)$.

\begin{proof}[Proof of Thm.~\ref{thm:relu-width-lb}]
Consider the one-dimensional path of bags
\[
x^{(s)}:=(m-s)e_0+s e_1,\qquad s=0,1,\dots,m,
\]
along which the true label is $\ell(s)\equiv s\pmod p$.
Let $y\in\mathbb{R}^d$ be the first column of $W$ and $z\in\mathbb{R}^d$ the difference between the second and first columns, $y_k=W_{k,1}$, $z_k=W_{k,2}-W_{k,1}$.
Then the $k$-th preactivation is affine in $s$:
\[
a_k(s)=[Wx^{(s)}]_k=(m-s)W_{k,1}+sW_{k,2}=m\,y_k+s\,z_k,
\]
and the hidden unit $h_k(s):=\mathrm{ReLU}(a_k(s))$ is continuous piecewise-affine with at most one breakpoint at $s_k:=-m\,y_k/z_k$ (if $z_k\neq 0$).
Consequently, for each class $r\in[p]$ the score
\[
f_r(s):=s^\theta_r\!\bigl(x^{(s)}\bigr)=\sum_{k=1}^d v_{r,k}\,h_k(s)
\]
is a continuous piecewise-affine function whose breakpoint set $\mathcal{B}\subset[0,m]$ has cardinality at most $d$.

For $r\in[p]$ define the adjacent-class margin
\[
g_r(s):=f_r(s)-f_{r\oplus 1}(s),\qquad r\oplus 1=\begin{cases}r+1,&r\le p-2,\\ 0,&r=p-1.\end{cases}
\]
Each $g_r$ is continuous piecewise-affine with breakpoints contained in $\mathcal{B}$. Sort $\mathcal{B}$ and note that $[0,m]\setminus\mathcal{B}$ has at most $d+1$ connected components.

Write $I_s:=[s,s+1]$ for $s=0,\dots,m-1$.
Call $I_s$ \emph{spoiled} if $I_s\cap\mathcal{B}\neq\emptyset$ and \emph{clean} otherwise.
A noninteger breakpoint lies in exactly one $I_s$, an integer breakpoint $j\in\{1,\dots,m-1\}$ lies in both $I_{j-1}$ and $I_j$, and endpoints $0$ or $m$ lie in exactly one $I_s$.
Hence the number of spoiled $I_s$ is at most $2|\mathcal{B}|\le 2d$, and therefore the number of clean $I_s$ is at least $m-2d$.

Exact realization of the labels along the path requires that for every integer $s$, the correct class score is strictly greater than all others to avoid an invalid prediction ($\bot$).
At step $s$, the label is $\ell(s)$, so we must have $f_{\ell(s)}(s) > f_{\ell(s)\oplus 1}(s)$, implying:
\[
g_{\ell(s)}(s)\ >\ 0.
\]
At step $s+1$, the label becomes $\ell(s+1)=\ell(s)\oplus 1$. Thus, we must have $f_{\ell(s)\oplus 1}(s+1) > f_{\ell(s)}(s+1)$, implying:
\[
g_{\ell(s)}(s+1)\ <\ 0.
\]
If $I_s$ is clean, then $g_{\ell(s)}$ is affine on $I_s$. Since it is strictly positive at $s$ and strictly negative at $s+1$, it must have a \emph{nontrivial} zero $t_s \in (s,s+1)$ by the Intermediate Value Theorem. Because $I_s$ is clean, we have $t_s \in [0,m]\setminus\mathcal{B}$.

Moreover, a piecewise-affine $g_r$ can have at most one nontrivial zero in each connected component of $[0,m]\setminus\mathcal{B}$, hence at most $d+1$ such zeros overall.
Summing over $r\in[p]$ gives that the number of clean intervals satisfies
\[
m-2d\ \le\ \sum_{r=0}^{p-1}(d+1)\ =\ p(d+1).
\]
Rearranging yields
\[
d\ \ge\ \frac{m-p}{p+2}\ =\ \Omega\!\Big(\frac{m}{p}-1\Big).
\]
\end{proof}

\subsection{No simultaneous exact fit for two lengths with ReLU (Thm.~\ref{clm:relu-two-lengths})}

\begin{proof}[Proof of Thm.~\ref{clm:relu-two-lengths}]
ReLU is positively $1$-homogeneous: $\mathrm{ReLU}(\alpha z)=\alpha\,\mathrm{ReLU}(z)$ for all $\alpha\ge 0$.
Thus, for any $\alpha>0$ and any $x\in\mathbb{R}^p$,
\[
s^\theta(\alpha x)=V\,\mathrm{ReLU}(W(\alpha x))=\alpha\,V\,\mathrm{ReLU}(Wx)=\alpha\,s^\theta(x),
\]
so scaling preserves the $\mathrm{uargmax}$:
\begin{equation}\label{eq:ray-invariance}
h_\theta(\alpha x)=h_\theta(x)\qquad\forall\,\alpha>0.
\end{equation}
Let $e_1\in\mathbb{R}^p$ be the first basis vector and set
\[
x^{(1)}:=m_1 e_1\in\mathcal{X}_{m_1},\qquad x^{(2)}:=m_2 e_1\in\mathcal{X}_{m_2}.
\]
Then $x^{(2)}=\tfrac{m_2}{m_1}\,x^{(1)}$, so by \eqref{eq:ray-invariance},
$h_\theta(x^{(2)})=h_\theta(x^{(1)})$.
However,
\[
y(x^{(1)})\equiv m_1\pmod p,\qquad y(x^{(2)})\equiv m_2\pmod p,
\]
and $m_1\not\equiv m_2\pmod p$ by assumption.
Therefore at least one of $x^{(1)}$ or $x^{(2)}$ must be misclassified by any $\theta$, ruling out perfect accuracy on $\mathcal{X}_{m_1}\cup\mathcal{X}_{m_2}$.
\end{proof}


\section{A PAC lower bound for label-permutation-equivariant learner}

Although we introduced specific MLPs, the lower bound below holds for any (possibly randomized) \emph{label-permutation-equivariant} learner.

\begin{definition}[Learning algorithm; Def.~3.1 in \citep{LiZhangArora2021}]
A (possibly randomized) supervised learning algorithm $\mathcal{A}$ maps a training sample 
$S=\{(x^{(i)},y^{(i)})\}_{i=1}^n\in(\mathcal{X}\times\mathcal{Y})^n$ to a hypothesis 
$\widehat{h}=\mathcal{A}(S):\mathcal{X}\to\mathcal{Y}$. 
For randomized $\mathcal{A}$, the output is a distribution over hypotheses; two randomized algorithms are considered the same if their induced output distributions on functions coincide for every input sample.
\end{definition}

\begin{definition}[Label-permutation equivariance]\label{def:label_permutation_equivariance}
Let $\mathbb{S}(\mathcal{Y})$ be the group of permutations of the label space $\mathcal{Y}$. 
A learning algorithm $\mathcal{A}$ is \emph{label-permutation equivariant} if, for every dataset 
$S=\{(x^{(i)},y^{(i)})\}_{i=1}^n$ and every $\sigma\in \mathbb{S}(\mathcal{Y})$,
\[
\mathcal{A}\!\left(\{(x^{(i)},\sigma(y^{(i)}))\}_{i=1}^n\right)
\;=\;
\sigma \circ \mathcal{A}(S)
\quad\text{as functions }\mathcal{X}\to\mathcal{Y}.
\]
For randomized $\mathcal{A}$, equality is in distribution of the output functions.
\end{definition}

\begin{remark}[Notable algorithms for label-permutation equivariance]\label{rmk:examples_for_label_perm}
Def.~\ref{def:label_permutation_equivariance} parallels \citep[Def.~3.2]{LiZhangArora2021} but with the group acting on labels rather than inputs. Their Appendix~C criterion (Thm.~C.1; Rem.~C.1 for adaptive methods) applies verbatim to actions on output coordinates: with an i.i.d. final-layer initialization, the learner is label-permutation equivariant. Thus AdaGrad and Adam satisfy this. Since orthogonal equivariance strictly contains permutation equivariance, the same reasoning shows GD/SGD with momentum are label-permutation equivariant under i.i.d. final-layer initialization \citep[Table~1]{LiZhangArora2021}.
\end{remark}

The ground-truth labeling function is
\[
f(x)\;=\;\sum_{k=0}^{p-1}k\,x_k \pmod p\in[p].
\]

A learner that already knows this rule requires essentially no data, since it can compute $f(x)$ exactly from $x$.

We capture label symmetry by requiring that the learner be label-permutation–equivariant. For the analysis, we use the following standard symmetrization device.

\begin{lemma}[Equivariant symmetrization]\label{lem:symmetrization}
Fix a (possibly randomized) label-permutation-equivariant algorithm $\mathcal{A}$ and a realized sample $S=\{(X_i,f(X_i))\}_{i=1}^n$. For any $\sigma\in\mathbb{S}([p])$ define the symmetrized output
\[
\widehat f_\sigma\ :=\ \sigma^{-1}\circ \mathcal{A}\big(\{(X_i,\sigma(f(X_i)))\}_{i=1}^n\big).
\]
Then for deterministic $\mathcal{A}$ one has $\widehat f_\sigma=\mathcal{A}(S)$ for every $\sigma$; for randomized $\mathcal{A}$, $\widehat f_\sigma$ has the same distribution as $\mathcal{A}(S)$. Consequently, for any event $\mathcal{E}$ depending on the learned function,
\[
\mathbb{P}\big[\mathcal{E}(\mathcal{A}(S))\big]\;=\;\mathbb{P}\big[\mathcal{E}(\widehat f_\Sigma)\big]\quad\text{when }\Sigma\sim\mathrm{Unif}(\mathbb{S}_p)\text{ is independent of }S.
\]
\end{lemma}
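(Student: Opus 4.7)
The statement is essentially an unwinding of the label-permutation equivariance axiom, so the proof is short and structural rather than computational. The plan is to handle the deterministic case first by a direct substitution, then lift to the randomized case by invoking equality in distribution, and finally average over $\Sigma$ to obtain the probability identity.

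\paragraph{Step 1: Deterministic case.} I would begin by fixing $\sigma\in\mathbb{S}([p])$ and applying Def.~\ref{def:label_permutation_equivariance} to the relabeled sample $S^\sigma:=\{(X_i,\sigma(f(X_i)))\}_{i=1}^n$, which gives $\mathcal{A}(S^\sigma)=\sigma\circ\mathcal{A}(S)$ as functions $\mathcal{X}\to[p]$. Pre-composing both sides with $\sigma^{-1}$ and using $\sigma^{-1}\circ\sigma=\mathrm{id}_{[p]}$ yields
\[
\widehat f_\sigma \;=\; \sigma^{-1}\circ\mathcal{A}(S^\sigma)\;=\;\sigma^{-1}\circ\sigma\circ\mathcal{A}(S)\;=\;\mathcal{A}(S),
\]
which establishes the first claim pointwise in $\sigma$ and in $S$.

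\paragraph{Step 2: Randomized case.} For randomized $\mathcal{A}$, the equivariance axiom asserts equality of the induced distributions over hypotheses: $\mathcal{A}(S^\sigma)\stackrel{d}{=}\sigma\circ\mathcal{A}(S)$. Since $\sigma^{-1}$ is a deterministic bijection of $[p]$, post-composition with $\sigma^{-1}$ is a measurable map on the space of functions $\mathcal{X}\to[p]$ and hence preserves equality in distribution. Applying it to both sides gives $\widehat f_\sigma\stackrel{d}{=}\sigma^{-1}\circ\sigma\circ\mathcal{A}(S)=\mathcal{A}(S)$, proving the second claim.

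\paragraph{Step 3: Averaging over $\Sigma$.} Let $\Sigma\sim\mathrm{Unif}(\mathbb{S}([p]))$ be independent of $S$ and of the internal randomness of $\mathcal{A}$. Conditioning on $\Sigma=\sigma$, Step~2 yields $\mathbb{P}[\mathcal{E}(\widehat f_\Sigma)\mid\Sigma=\sigma]=\mathbb{P}[\mathcal{E}(\mathcal{A}(S))]$, where the right-hand side does not depend on $\sigma$. Taking expectation over $\Sigma$ gives the third claim:
\[
\mathbb{P}\big[\mathcal{E}(\widehat f_\Sigma)\big]\;=\;\mathbb{E}_\Sigma\!\left[\mathbb{P}\big[\mathcal{E}(\widehat f_\Sigma)\mid \Sigma\big]\right]\;=\;\mathbb{P}\big[\mathcal{E}(\mathcal{A}(S))\big].
\]

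\paragraph{Expected difficulty.} There is essentially no hard step; the only subtlety to get right is the precise meaning of ``same algorithm'' for randomized learners in Step~2, namely that equivariance is stated as equality of the output distribution on functions, and that post-composition with a fixed bijection is a measurable pushforward. Once that bookkeeping is in place, the identities are automatic, and the independence of $\Sigma$ from $S$ and from $\mathcal{A}$'s internal randomness is what permits the clean conditional-expectation step.
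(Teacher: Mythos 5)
Your proof is correct and follows essentially the same route as the paper's: unwind the equivariance definition in the deterministic case, observe that pre-composition with the fixed bijection $\sigma^{-1}$ preserves equality in distribution for the randomized case, and then average over the independent $\Sigma$. The only cosmetic difference is that you spell out the tower-law step in Step~3, which the paper leaves implicit.
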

\begin{proof}
Deterministic case: by equivariance $\mathcal{A}(\sigma(S))=\sigma\circ\mathcal{A}(S)$, hence $\sigma^{-1}\circ\mathcal{A}(\sigma(S))=\mathcal{A}(S)$. 

Randomized case: for each fixed \(\sigma\), label-permutation equivariance implies \( \mathcal A(\sigma(S)) \overset{d}= \sigma\circ \mathcal A(S)\). Therefore \(\sigma^{-1}\circ \mathcal A(\sigma(S)) \overset{d}= \mathcal A(S)\). With \(\Sigma\) independent of \(S\), \(\widehat f_\Sigma\) has the same distribution as \(\mathcal A(S)\).

\end{proof}

We will therefore analyze $\widehat f_\Sigma$ for a uniform random permutation $\Sigma$, and, by Lem.~\ref{lem:symmetrization}, this entails no loss of generality for the original (unsymmetrized) learner.

We measure performance by the population $0$--$1$ risk against the canonical rule $f$,
\[
L(\widehat f)=\mathbb{P}_{X}\!\big[\widehat f(X)\neq f(X)\big],
\]
where $X\sim\mathcal{D}_X$ is an independent test draw and $\widehat f$ is random due to $S$, $\Sigma$, and any internal randomness of $\mathcal{A}$.

\begin{lemma}\label{lem:unif}
If $X$ is generated as above, then $f(X)\sim\mathrm{Unif}([p])$. Hence $f(X_1),\dots,f(X_n)$ are i.i.d.\ uniform on $[p]$.
\end{lemma}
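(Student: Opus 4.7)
The plan is to reduce the statement directly to the already-established Lem.~\ref{lem:uniform}. First I would unpack the definition of $X$: by construction $X=\sum_{i=1}^m e_{s_i}$ with $s_1,\dots,s_m\stackrel{\text{i.i.d.}}{\sim}\mathrm{Unif}([p])$, so the $k$-th coordinate $X_k=\#\{i:s_i=k\}$ counts occurrences of token $k$. Therefore
\[
f(X)\;=\;\sum_{k=0}^{p-1} k\,X_k\;=\;\sum_{k=0}^{p-1} k\,\#\{i:s_i=k\}\;=\;\sum_{i=1}^m s_i\pmod p,
\]
which is exactly the random variable treated in Lem.~\ref{lem:uniform}. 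Invoking that lemma yields $f(X)\sim\mathrm{Unif}([p])$.

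For the second claim, the bags $X_1,\dots,X_n$ are i.i.d.\ by construction of the sample $S\sim\mathcal{D}_m^n$, and $f$ is a deterministic (measurable) function $\mathcal{X}_m\to[p]$. Applying $f$ pointwise preserves independence and identical distribution, so $f(X_1),\dots,f(X_n)$ are i.i.d., and by the first part each is uniform on $[p]$.

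There is no real obstacle here: the lemma is essentially a re-packaging of Lem.~\ref{lem:uniform} under the current notation, used later to set up the symmetrization argument against a label-permutation-equivariant learner. The only point requiring care is the translation between the ``token'' description ($s_1,\dots,s_m$) and the ``bag'' description ($X$), which is immediate from $\|X\|_1=m$ and $\sum_k k X_k=\sum_i s_i$.
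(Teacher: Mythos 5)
Your proof is correct and follows the same essential reduction as the paper: both begin by unpacking the bag-of-tokens representation to express $f(X)\equiv\sum_{i=1}^m s_i\pmod p$. The only divergence is in how uniformity of this modular sum is established. You cite the already-proved Lem.~\ref{lem:uniform}, which uses a discrete Fourier argument, while the paper's proof re-derives the fact directly by conditioning on $s_2,\dots,s_m$ and observing that $(s_1+T)\bmod p$ is uniform because $s_1$ is uniform and independent of $T$. Your reduction is cleaner (it avoids re-proving a fact already in the paper), though the paper's version keeps the PAC-lower-bound appendix self-contained. Both arguments are sound, and your handling of the i.i.d.\ claim via the deterministic map $f$ matches the paper's treatment.
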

\begin{proof}
There exist random variables $s_1,\dots,s_m\in[p]$ such that
\[
X_k=\sum_{i=1}^m \mathbf{1}\{s_i=k\}\qquad(k=0,\dots,p-1).
\]
Therefore
\[
f(X)=\sum_{k=0}^{p-1} k\,X_k \equiv \sum_{i=1}^m s_i \pmod p.
\]
Notice that $s_1\sim\mathrm{Unif}([p])$ and $s_1\perp(s_2,\dots,s_m)$. Let $T:=\sum_{i=2}^m s_i\pmod p$. Then $(s_1+T)\bmod p$ is uniform on $[p]$, so $f(X)\sim\mathrm{Unif}([p])$. The i.i.d.\ statement follows from the i.i.d.\ draws of $X_i$.
\end{proof}

\begin{lemma}\label{lem:sym}
Fix any realized training set $S$. Let $R\subseteq[p]$ be the set of residues that appear among $f(X_1),\dots,f(X_n)$, and let $U=[p]\setminus R$ with $K=|U|$. Conditional on $S$ and on the values $\{\Sigma(u):u\in R\}$ revealed by the permuted sample $\Sigma(S)$, the restriction $\Sigma|_U$ is a uniformly random bijection from $U$ to $[p]\setminus \Sigma(R)$.
\end{lemma}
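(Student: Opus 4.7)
The plan is to combine the independence $\Sigma\perp S$ with the elementary fact that a uniform random permutation, conditioned on its values at a fixed subset of indices, restricts to a uniform random bijection on the complement. First, I would fix a realization of the training set $S$, which in turn fixes $R$, $U$, and $K=|U|$. Since $\Sigma$ is drawn uniformly from $\mathbb{S}([p])$ independently of $S$, its conditional distribution given $S$ is still uniform on $\mathbb{S}([p])$.

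Next, I would observe that the permuted labels $(\Sigma(f(X_i)))_{i=1}^n$ appearing in $\Sigma(S)$ depend on $\Sigma$ only through the tuple $(\Sigma(r))_{r\in R}$: given $S$, the restriction $\Sigma|_R$ is exactly the information carried by the relabeled sample, and the unrevealed coordinates $\Sigma|_U$ do not affect $\Sigma(S)$ in any way. Hence the $\sigma$-algebra generated by $(S,\Sigma(S))$ coincides (modulo $S$) with that generated by $(\Sigma(r))_{r\in R}$, and it suffices to analyze the conditional law of $\Sigma|_U$ given a fixed injective assignment $\Sigma|_R=\phi:R\hookrightarrow[p]$.

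Finally, for any such $\phi$, the set $\{\sigma\in\mathbb{S}([p]):\sigma|_R=\phi\}$ has exactly $K!$ elements, each in bijection with a choice of bijection $U\to[p]\setminus\phi(R)$; this correspondence is measure-preserving under the uniform distribution on $\mathbb{S}([p])$. Consequently, conditional on $\Sigma|_R=\phi$, the restriction $\Sigma|_U$ is uniform on the $K!$ bijections $U\to[p]\setminus\phi(R)$. Substituting $\phi=\Sigma|_R$ yields the claim. There is no substantive obstacle; the only point requiring care is the reduction in the second step, namely verifying that $\Sigma(S)$ reveals no information about $\Sigma$ beyond $(\Sigma(r))_{r\in R}$, which is immediate because the label transformation in $\Sigma(S)$ touches no index outside $R$.
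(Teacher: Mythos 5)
Your argument is correct and matches the paper's proof in substance: both hinge on the independence $\Sigma\perp S$ plus the elementary fact that a uniform permutation, conditioned on its values on $R$, is uniform over the $K!$ bijective completions on $U$. Your middle step---observing that $\Sigma(S)$ carries no information about $\Sigma$ beyond $\Sigma|_R$---is a valid and slightly more careful unpacking of what the lemma's hypothesis already encodes, but it does not constitute a different route.
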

\begin{proof}
$\Sigma$ is uniform over $\mathbb{S}_p$, independent of the data. Conditioning on $\{\Sigma(u):u\in R\}$ leaves all completions of $\Sigma$ on $U$ equally likely. There are $K!$ such completions.
\end{proof}

\begin{lemma}[Risk lower bound via unseen residues]\label{lem:riskK}
Let $K$ be the number of unseen residues determined by $S$. For any (possibly randomized) label-permutation-equivariant learner, over the random draw,
\[
\mathbb{E}_{\Sigma}\left[L(\widehat f_\Sigma)\mid S\right]\;\ge\;\frac{K-1}{p}.
\]
\end{lemma}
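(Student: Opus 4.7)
The plan is to exploit two sources of randomness made available by the preceding lemmas: the uniform permutation $\Sigma$ from the symmetrization device (Lem.~\ref{lem:symmetrization}) and the population uniformity of $f(X)$ from Lem.~\ref{lem:unif}. First I would re-express the error event in terms of the permuted learner's raw output. Setting $g_\Sigma := \mathcal{A}(\{(X_i,\Sigma(f(X_i)))\}_{i=1}^n)$ so that $\widehat f_\Sigma = \Sigma^{-1}\circ g_\Sigma$, the event $\widehat f_\Sigma(X)=f(X)$ is exactly $g_\Sigma(X)=\Sigma(f(X))$. The critical structural observation is that $g_\Sigma$ depends on $\Sigma$ only through its restriction $\Sigma|_R$ to the seen residues, since the training labels fed to $\mathcal{A}$ all lie in $\Sigma(R)$.

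Next I would split the analysis according to whether the test point lands on a seen or unseen residue. Drawing $X$ independently of $S$ and $\Sigma$ and invoking Lem.~\ref{lem:unif} gives $\mathbb{P}[f(X)\in U]=K/p$ and $\mathbb{P}[f(X)\in R]=(p-K)/p$. On the seen event I would bound the accuracy trivially by $1$; on the unseen event I would apply Lem.~\ref{lem:sym}: conditional on $S$, $X$, and $\Sigma|_R$, the random value $\Sigma(f(X))$ is uniformly distributed over $[p]\setminus\Sigma(R)$, a set of size $K$. Since $g_\Sigma(X)$ is measurable with respect to these conditioning variables (together with any internal randomness of $\mathcal{A}$, which can also be conditioned on), the probability of a match is at most $1/K$. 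Combining the two cases yields
\[
\mathbb{E}_\Sigma\!\left[1-L(\widehat f_\Sigma)\mid S\right]\;\le\;\frac{p-K}{p}+\frac{K}{p}\cdot\frac{1}{K}\;=\;\frac{p-K+1}{p},
\]
which rearranges to the claimed lower bound $(K-1)/p$ on the expected risk.

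The main obstacle is not the calculation but the careful bookkeeping of the conditioning structure: one must argue cleanly that $g_\Sigma(X)$ is determined by $(S,X,\Sigma|_R,\text{internal randomness})$ while $\Sigma|_U$ remains a uniformly random bijection onto $[p]\setminus\Sigma(R)$ after all this conditioning. Once this measurability and independence picture is pinned down, the randomized-algorithm case adds nothing beyond treating $\mathcal{A}$'s internal coins as an independent source that can be frozen throughout the argument.
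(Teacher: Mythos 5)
Your proof is correct and follows the same route as the paper's: condition on $S$, split on whether $f(X)$ is a seen or unseen residue, and apply the conditional uniformity of $\Sigma|_U$ from Lem.~\ref{lem:sym} to bound the success probability on unseen residues by $1/K$. Your explicit decomposition $\widehat f_\Sigma = \Sigma^{-1}\circ g_\Sigma$ together with the observation that $g_\Sigma$ depends on $\Sigma$ only through $\Sigma|_R$ actually makes the measurability step cleaner than the paper's somewhat terse phrasing ("for any prediction rule measurable with respect to $S$, $\Sigma$, and $X$"), which implicitly relies on exactly this fact.
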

\begin{proof}
Condition on a realized $S$ and its unseen set $U$ (size $K$). By Lem.~\ref{lem:unif}, $\mathbb{P}\left[f(X)=u\right]=1/p$ for each $u\in[p]$. For any unseen $u\in U$, Lem.~\ref{lem:sym} implies $\Sigma(u)$ is uniform over a set of $K$ labels, independent of $X$ given $f(X)=u$. Thus, for any prediction rule measurable with respect to $S$, $\Sigma$, and $X$, the success probability at residue $u$ is at most $1/K$, so the misclassification probability is at least $(K-1)/K$. Summing over $u\in U$,
\[
\mathbb{E}_{\Sigma}\left[L(\widehat f_\Sigma)\mid S\right]\;\ge\;\sum_{u\in U}\mathbb{P}\left[f(X)=u\right]\cdot\frac{K-1}{K}
=\frac{K}{p}\cdot\frac{K-1}{K}=\frac{K-1}{p}.
\]
\end{proof}

\begin{lemma}\label{lem:Kfacts}
Let $K$ be the number of residues in $[p]$ not hit by $f(X_1),\dots,f(X_n)$. Then
\begin{align*}
\mathbb{E}[K]&=p\Big(1-\frac{1}{p}\Big)^{\!n},\\
\mathrm{Var}(K)&\le \mathbb{E}[K],\\
\mathbb{P}\!\left[K\le \mathbb{E}[K]-t\right]&\le \exp\!\left(-\frac{2t^2}{n}\right)\quad\text{for all }t\ge 0.
\end{align*}
\end{lemma}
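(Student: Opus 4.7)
The plan is to decompose $K$ into a sum of indicator variables, compute the expectation and covariances directly, and then invoke McDiarmid's inequality for the one-sided concentration bound. Write $K=\sum_{r\in[p]}Z_r$ where $Z_r=\mathbf{1}\{r\notin\{f(X_1),\dots,f(X_n)\}\}$.

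For the expectation, I would apply Lem.~\ref{lem:unif}, which says $f(X_1),\dots,f(X_n)$ are i.i.d.\ uniform on $[p]$, to get $\mathbb{P}[Z_r=1]=(1-1/p)^n$ for every $r\in[p]$. Linearity of expectation then yields $\mathbb{E}[K]=p(1-1/p)^n$.

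For the variance, expand $\mathrm{Var}(K)=\sum_{r}\mathrm{Var}(Z_r)+\sum_{r\neq s}\mathrm{Cov}(Z_r,Z_s)$. Each Bernoulli term satisfies $\mathrm{Var}(Z_r)\le \mathbb{E}[Z_r]$. For $r\neq s$, independence of the labels gives $\mathbb{E}[Z_rZ_s]=(1-2/p)^n$, so
\[
\mathrm{Cov}(Z_r,Z_s)=(1-2/p)^n-(1-1/p)^{2n}.
\]
The elementary inequality $(1-1/p)^2=1-2/p+1/p^2\ge 1-2/p$ makes this covariance nonpositive, so dropping the off-diagonal terms gives $\mathrm{Var}(K)\le \sum_r \mathbb{E}[Z_r]=\mathbb{E}[K]$.

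For the tail bound, observe that $K=g(f(X_1),\dots,f(X_n))$ where $g$ is the ``number of residues missed'' functional. Since changing a single coordinate $f(X_i)$ can either add or remove at most one residue from the set of hit values, $g$ has bounded differences with constants $c_i=1$. McDiarmid's inequality applied to the i.i.d.\ sequence $(f(X_i))_{i=1}^n$ (using independence from Lem.~\ref{lem:unif}) then yields the one-sided bound
\[
\mathbb{P}\!\left[K\le \mathbb{E}[K]-t\right]\le \exp\!\left(-\frac{2t^2}{n}\right).
\]
No serious obstacle arises; the only small subtlety is the nonpositivity of covariances, which reduces to the elementary convexity fact above.
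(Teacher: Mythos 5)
Your proof is correct and follows essentially the same route as the paper: the same indicator decomposition $K=\sum_r Z_r$, the same computation of pairwise probabilities, the same observation that off-diagonal covariances are nonpositive via $(1-1/p)^2\ge 1-2/p$, and the same application of McDiarmid with bounded-difference constants $c_i=1$ over the i.i.d.\ residues.
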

\begin{proof}
Let $I_u=\mathbf{1}\{\text{residue }u\text{ is unseen}\}$ for $u\in[p]$. By Lem.~\ref{lem:unif}, the $n$ residues are i.i.d.\ uniform, so
\[
\mathbb{P}\left[I_u=1\right]=\Big(1-\frac{1}{p}\Big)^n=:q,\qquad
\mathbb{P}\left[I_u=I_v=1\right]=\Big(1-\frac{2}{p}\Big)^n=:q_2\quad(u\neq v).
\]
Thus $\mathbb{E}[K]=\sum_{u}\mathbb{E}[I_u]=pq$ and
\[
\mathrm{Var}(K)=\sum_u\mathrm{Var}(I_u)+\!\!\sum_{u\neq v}\!\mathrm{Cov}(I_u,I_v)
=pq(1-q)+p(p-1)(q_2-q^2)\le pq(1-q)\le pq=\mathbb{E}[K],
\]
since $(1-2/p)^n\le (1-1/p)^{2n}$. For concentration, expose the independent residues $Z_i=f(X_i)\in[p]$. The mapping $(Z_1,\dots,Z_n)\mapsto K$ is $1$-Lipschitz (changing one residue can alter the number of unseen residues by at most $1$), so McDiarmid's inequality yields the tail bound.
\end{proof}

\begin{lemma}[Logarithmic inequality]\label{lem:log}
For $x\in(0,1)$, $\log(1-x)\ge -\frac{x}{1-x}$. Hence, for integers $p\ge 2$ and all $n\ge 0$,
\[
\Big(1-\frac{1}{p}\Big)^{\!n}\;\ge\;\exp\!\Big(-\frac{n}{p-1}\Big).
\]
\end{lemma}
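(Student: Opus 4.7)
The lemma consists of a real-variable inequality followed by a direct specialization, so the plan is to prove the former and then substitute. For the inequality $\log(1-x)\ge -x/(1-x)$ on $(0,1)$, I see three equally short routes and would pick whichever is most in keeping with the paper's style.

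The cleanest route is an integral comparison: write $-\log(1-x)=\int_0^x \frac{dt}{1-t}$ and note that the integrand is increasing on $[0,x]$ with maximum $1/(1-x)$, so $-\log(1-x)\le \frac{x}{1-x}$, which rearranges to the claim. A second route is to use the Taylor series $-\log(1-x)=\sum_{k\ge 1} x^k/k$ and bound $1/k\le 1$ termwise to get $-\log(1-x)\le \sum_{k\ge 1} x^k = \frac{x}{1-x}$. A third route is purely calculus-based: set $f(x):=\log(1-x)+\frac{x}{1-x}$, check $f(0)=0$ and
\[
f'(x)=-\frac{1}{1-x}+\frac{1}{(1-x)^2}=\frac{x}{(1-x)^2}\ge 0,
\]
so $f\ge 0$ on $[0,1)$.

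For the corollary, I would specialize to $x=1/p$, which is valid because $p\ge 2$ puts $1/p$ in $(0,1/2]\subset(0,1)$. This gives $\log(1-1/p)\ge -\tfrac{1/p}{1-1/p}=-\tfrac{1}{p-1}$. Multiplying by $n\ge 0$ preserves the direction (the case $n=0$ is trivial since both sides of the target inequality equal $1$), and exponentiating the resulting inequality $n\log(1-1/p)\ge -n/(p-1)$, using monotonicity of $\exp$, yields $(1-1/p)^n\ge \exp(-n/(p-1))$, as required.

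I do not anticipate any real obstacle; the only minor care points are verifying that $1/p$ lies in the open interval $(0,1)$ (so $p=2$ is still admissible), and noting that multiplying by the nonnegative integer $n$ keeps the inequality's direction so the subsequent exponentiation is legitimate.
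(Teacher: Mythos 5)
Your proposal is correct, and your third route (defining $f(x)=\log(1-x)+\tfrac{x}{1-x}$, checking $f(0)=0$ and $f'(x)=\tfrac{x}{(1-x)^2}\ge 0$) is word-for-word the paper's own argument. The integral-comparison and Taylor-series routes you also sketch are valid and equally short alternatives; the specialization to $x=1/p$ and the subsequent exponentiation are handled correctly.
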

\begin{proof}
Define $h(x)=\log(1-x)+\frac{x}{1-x}$. Then $h'(x)=\frac{x}{(1-x)^2}\ge 0$ and $h(0)=0$, so $h(x)\ge 0$ on $(0,1)$.
\end{proof}

\begin{lemma}[Permutation exposure martingale bound]\label{lem:permBD}
Let $U=\{u_1,\dots,u_K\}\subseteq [p]$ and consider a real-valued function $G$ of the random restriction $\Sigma|_U$, where $\Sigma|_U:U\to[p]\setminus\Sigma(R)$ is a uniformly random bijection (conditional on $S$). Reveal $\Sigma(u_1),\dots,\Sigma(u_K)$ sequentially and let $M_i=\mathbb{E}[G\mid \Sigma(u_1),\dots,\Sigma(u_i)]$. If for all $i$ one has
\[
|M_i-M_{i-1}|\ \le\ c_i,
\]
then for all $t\ge 0$,
\[
\mathbb{P}\!\left[G\le \mathbb{E}[G]-t\right]\ \le\ \exp\!\left(-\frac{2t^2}{\sum_{i=1}^K c_i^2}\right).
\]
\end{lemma}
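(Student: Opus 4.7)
The plan is to apply the Azuma--Hoeffding inequality to the Doob martingale obtained by exposing the values $\Sigma(u_1),\dots,\Sigma(u_K)$ one coordinate at a time.

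First I would set up the filtration $\mathcal{F}_0\subset\mathcal{F}_1\subset\cdots\subset\mathcal{F}_K$, where $\mathcal{F}_0$ is trivial (everything is taken conditional on $S$) and $\mathcal{F}_i=\sigma(\Sigma(u_1),\dots,\Sigma(u_i))$ for $i\ge 1$. Define $M_i:=\mathbb{E}[G\mid\mathcal{F}_i]$. By the tower property, $(M_i)_{i=0}^K$ is a martingale with $M_0=\mathbb{E}[G]$. Since the full restriction $\Sigma|_U$ is determined by $(\Sigma(u_1),\dots,\Sigma(u_K))$, the random variable $G$ is $\mathcal{F}_K$-measurable, so $M_K=G$. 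The martingale differences $D_i:=M_i-M_{i-1}$ are bounded by $c_i$ directly by hypothesis, so no further verification is needed at this step.

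Next I would invoke Azuma--Hoeffding for bounded-difference martingales to conclude
\[
\mathbb{P}\!\left[G-\mathbb{E}[G]\le -t\right]\;=\;\mathbb{P}\!\left[M_K-M_0\le -t\right]\;\le\;\exp\!\Big(-\tfrac{2t^2}{\sum_{i=1}^K c_i^2}\Big),
\]
which is exactly the claim.

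The only technical point worth flagging is the constant in the exponent. The textbook form of Azuma's inequality under $|D_i|\le c_i$ yields exponent $-t^2/(2\sum c_i^2)$, whereas the lemma asserts the sharper $-2t^2/\sum c_i^2$. The improvement is standard and comes from the Hoeffding-range refinement: conditional on $\mathcal{F}_{i-1}$, the random variable $D_i$ lies almost surely in an interval of length at most $c_i$ (rather than $2c_i$), because the $c_i$ in the lemma statement should be read as the conditional oscillation of $M_i$ given $\mathcal{F}_{i-1}$ induced by the choice of $\Sigma(u_i)$. Invoking the McDiarmid/Hoeffding-style martingale concentration in this form gives the advertised constant. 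The rest of the argument is mechanical; no structural obstacle arises, and this constant-handling is the only place where the plan requires care.
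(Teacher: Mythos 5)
Your proposal follows the paper's proof exactly: form the Doob exposure martingale $M_i=\mathbb{E}[G\mid\Sigma(u_1),\dots,\Sigma(u_i)]$, observe $M_0=\mathbb{E}[G]$ and $M_K=G$, and apply Azuma--Hoeffding. Your added remark about the exponent is correct and worth keeping: read literally, the hypothesis $|M_i-M_{i-1}|\le c_i$ only yields the weaker Azuma constant $\exp(-t^2/(2\sum c_i^2))$, and the stated $\exp(-2t^2/\sum c_i^2)$ requires the Hoeffding-range (McDiarmid-style) reading that, conditionally on $\mathcal{F}_{i-1}$, the increment $M_i-M_{i-1}$ lies in an interval of length $c_i$ — a condition that the actual application (swap changing $G$ by at most $2/p$) does satisfy, but that the paper's one-line proof leaves implicit.
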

\begin{proof}
$(M_i)_{i\in\mathbb{N}}$ is a Doob martingale with bounded differences; apply Azuma--Hoeffding.
\end{proof}

\begin{theorem}[PAC lower bound for label-permutation-equivariant learner]\label{thm:pac-lower}
Fix $\varepsilon\in(0,\tfrac12)$ and $\delta\in(0,\tfrac12)$. There exists an integer $p_0=p_0(\varepsilon,\delta)$ such that for all $p\ge p_0$, every label-permutation-equivariant learner $\mathcal{A}$ that (with probability at least $1-\delta$ over the random draw of the training samples $S$ and any internal randomness) achieves population risk at most $\varepsilon$ against the canonical rule $f$ must use
\[
n\;\ge\; (p-1)\,\Big(\log\frac{1}{\varepsilon}-1\Big)=\Omega(p).
\]
Equivalently, for every $n\le (p-1)\big(\log\frac{1}{\varepsilon}-1\big)$ and every such learner,
\begin{equation}\label{eq:succ-prob-3terms}
\mathbb{P}\!\big[L(\widehat f)\le \varepsilon\big]
\;\le\; \exp\!\left(-\,\frac{(e-1)^2}{2\,\log(1/\varepsilon)}\,\varepsilon^2\,p\right)
\;+\;\exp(-c_1\,\varepsilon\,p)
\;+\;\exp\!\left(-\,\frac{c_2\,\varepsilon^2\,p}{\log(1/\varepsilon)}\right),
\end{equation}
for absolute constants $c_1,c_2>0$. In particular, $\mathbb{P}\left[L(\widehat f)\le \varepsilon\right]\le\delta$ for all sufficiently large $p$.
\end{theorem}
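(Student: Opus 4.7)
The strategy is to use the symmetrization device (Lem.~\ref{lem:symmetrization}) to reduce analysis to $\widehat f_\Sigma=\Sigma^{-1}\circ\mathcal A(\Sigma(S))$ for $\Sigma\sim\mathrm{Unif}(\mathbb{S}_p)$ independent of $S$: this output has the same distribution as the original $\widehat f$, but the learner's labels for any residue \emph{unseen} in training are, up to a uniform random relabeling, hidden from it. Let $R\subseteq[p]$ be the residues appearing in $\{f(X_i)\}_{i=1}^n$, $U=[p]\setminus R$, $K=|U|$. Using Lem.~\ref{lem:unif} (so $\mathbb P[f(X)=u]=1/p$ for every $u$) together with Lem.~\ref{lem:sym} (so $\Sigma|_U$ is a uniform random bijection to $[p]\setminus\Sigma(R)$ given $\Sigma|_R$), one obtains the representation
\[
L(\widehat f_\Sigma)\;\ge\;\tfrac{1}{p}\sum_{u\in U}\bigl(1-p_u(\Sigma(u))\bigr)\;=\;\tfrac{K}{p}-\tfrac{T}{p},
\]
where $p_u(\ell):=\mathbb P[\widehat h(X)=\ell\mid f(X)=u]$ is measurable with respect to $(S,\Sigma|_R)$ and $T:=\sum_{u\in U}p_u(\Sigma(u))$. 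Since $\sum_\ell p_u(\ell)\le 1$, $\mathbb E_\Sigma[T\mid S,\Sigma|_R]\le 1$, which recovers the in-expectation bound $\mathbb E_\Sigma[L\mid S]\ge(K-1)/p$ of Lem.~\ref{lem:riskK}.

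\textbf{Two layers of concentration.} For $n\le(p-1)(\log(1/\varepsilon)-1)$, Lem.~\ref{lem:log} gives $\mathbb E[K]\ge p\exp(-n/(p-1))\ge e\,\varepsilon\, p$. Fix an outer threshold $K^*=(1+\alpha)\varepsilon p$ with $\alpha\in(0,e-1)$ and split
\[
\mathbb P\!\big[L(\widehat f)\le\varepsilon\big]\;\le\;\mathbb P\!\big[K<K^*\big]\;+\;\mathbb E_S\!\Big[\mathbf 1\{K\ge K^*\}\cdot\mathbb P_\Sigma\!\big[L(\widehat f_\Sigma)\le\varepsilon\,\big|\,S\big]\Big].
\]
For the first term, McDiarmid on the $1$-Lipschitz map $(f(X_1),\dots,f(X_n))\mapsto K$ (Lem.~\ref{lem:Kfacts}) yields $\mathbb P[K\le K^*]\le\exp(-2(\mathbb E[K]-K^*)^2/n)$, and $n\le p\log(1/\varepsilon)$ turns this into an $\exp(-\Theta(\varepsilon^2 p/\log(1/\varepsilon)))$ factor; tuning $\alpha$ to two different values produces the first and third exponentials in \eqref{eq:succ-prob-3terms}. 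For the second term, condition on $S$ with $K\ge K^*$ and on $\Sigma|_R$. The event $L(\widehat f_\Sigma)\le\varepsilon$ forces $T\ge K-\varepsilon p$, i.e., $T$ exceeds its mean by at least $(\alpha-1)\varepsilon p-1$. An application of Lem.~\ref{lem:permBD}, combined with the bounded-differences bound discussed below, then yields a tail of order $\exp(-\Theta(\varepsilon p))$, producing the $\exp(-c_1\varepsilon p)$ term. Integrating both tails and choosing $p_0(\varepsilon,\delta)$ so that each of the three exponentials is below $\delta/3$ contradicts $\mathbb P[L(\widehat f)\le\varepsilon]>\delta$ and closes the proof.

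\textbf{Main obstacle.} The technically delicate point is the permutation-concentration step: one must bound the Doob increments $|M_i-M_{i-1}|$ for $T=\sum_{u\in U}p_u(\Sigma(u))$ when a \emph{uniform bijection} (not i.i.d.\ entries) is revealed one coordinate at a time. A coupling on pairs of permutations that agree on the first $i-1$ revealed coordinates and differ at the $i$-th by a single downstream swap shows that such a swap alters at most two terms $p_{u_i}(\cdot)$ and $p_{u_j}(\cdot)$, each in $[0,1]$, giving $|M_i-M_{i-1}|\le 2$; Azuma over $K\le p$ steps then yields $\mathbb P_\Sigma[T\ge\mathbb E[T]+s]\le\exp(-s^2/(2K))$, which, evaluated at $s=K-\varepsilon p-1$ with $K\ge K^*$, is exactly the $\exp(-\Omega(\varepsilon p))$ rate needed. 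Extracting clean constants that match \eqref{eq:succ-prob-3terms} requires care in how the outer threshold $K^*$ is optimized against the inner deviation budget, but does not introduce any further nontrivial ingredients beyond the symmetrization, Lem.~\ref{lem:Kfacts}, Lem.~\ref{lem:log}, and Lem.~\ref{lem:permBD} already in hand.
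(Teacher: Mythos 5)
Your proof is correct and follows essentially the same route as the paper: symmetrize via Lem.~\ref{lem:symmetrization}, use $\mathbb{E}_\Sigma[L\mid S]\ge (K-1)/p$, apply the permutation-martingale tail of Lem.~\ref{lem:permBD} conditional on $S$, concentrate $K$ via McDiarmid (Lem.~\ref{lem:Kfacts} with Lem.~\ref{lem:log}), and combine by splitting on whether $K$ is small. The one cosmetic difference is that you run Azuma on $T=\sum_{u\in U}p_u(\Sigma(u))$ with $O(1)$ transposition increments, whereas the paper runs it directly on $L(\widehat f_\Sigma)$ with increments $c_i=2/p$; since $L\ge K/p-T/p$ these rescale the exponent identically, so the rates match (modulo a small typo: with $K\ge(1+\alpha)\varepsilon p$ and $\mathbb{E}[T]\le 1$ the excess is $\alpha\varepsilon p-1$, not $(\alpha-1)\varepsilon p-1$, and with $c_i=2$ the Azuma bound is $\exp(-s^2/(8K))$ rather than $\exp(-s^2/(2K))$, neither of which affects the $\exp(-\Omega(\varepsilon p))$ rate).
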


\begin{proof}[Proof of Thm.~\ref{thm:pac-lower}]
By Lem.~\ref{lem:symmetrization}, it suffices to analyze $\widehat f_\Sigma$ with $\Sigma\sim\mathrm{Unif}(\mathbb{S}_p)$ independent of $S$, since $L(\widehat f_\Sigma)$ has the same distribution as $L(\mathcal{A}(S))$. Fix $S$ and let $U$ be the unseen residue set, $|U|=K$. Consider $G(\Sigma)=L(\widehat f_\Sigma)$ with $X\sim\mathcal{D}_X$ independent of $(S,\Sigma)$. Changing \(\Sigma(u_i)\) can only swap at most two preimages, so it cannot affect the event \({\widehat f_\Sigma(X)=f(X)}\) unless \(f(X)\) equals one of those two preimages, which has probability at most \(2/p\) by Lem.~\ref{lem:unif}. Hence in Lem.~\ref{lem:permBD} we may take $c_i=2/p$, so for all $t\ge 0$,
\begin{equation}\label{eq:perm-tail}
\mathbb{P}_\Sigma\!\Big[L(\widehat f_\Sigma)\ \le\ \mathbb{E}_\Sigma[L\mid S]-t\ \Bigm|\ S\Big]
\ \le\ \exp\Big(-\frac{p^{2}t^{2}}{2K}\Big).
\end{equation}
Here and below, probabilities $\mathbb{P}_\Sigma[\cdot\mid S]$ are over the random restriction $\Sigma|_U$ (conditional on $S$), while $X\sim\mathcal D_X$ is independent of $(S,\Sigma)$.

By Lem.~\ref{lem:riskK}, $\mathbb{E}_\Sigma[L\mid S]\ge (K-1)/p$. Set
\[
t\;=\;\max\!\left\{\frac{K-1}{p}-\varepsilon,\ 0\right\}.
\]
Plugging this into \eqref{eq:perm-tail} yields the conditional bound
\begin{equation}\label{eq:cond-two-terms}
\mathbb{P}_\Sigma\big[L(\widehat f_\Sigma)\le \varepsilon\mid S\big]
\ \le\ \mathbf{1}{\{K\le \varepsilon p+1\}}
\ +\ \exp\!\Big(-\frac{\big(\max\left\{(K-1)-\varepsilon p,0\right\}\big)^{2}}{2K}\Big).
\end{equation}

Let $\mu=\mathbb{E}[K]=p(1-\tfrac1p)^n$. By Lem.~\ref{lem:log}, for $n\le (p-1)\big(\log\frac{1}{\varepsilon}-1\big)$,
\[
\mu\;\ge\; p\,\exp\!\Big(-\frac{n}{p-1}\Big)\ \ge\ p\,\exp\!\Big(-\log\frac{1}{\varepsilon}+1\Big)\ =\ e\,\varepsilon p.
\]
By Lem.~\ref{lem:Kfacts} (McDiarmid over the $n$ i.i.d.\ residues),
\begin{equation}\label{eq:K-small}
\mathbb{P}\!\big(K\le \varepsilon p+1\big)\ \le\ \exp\!\Big(-\frac{2(\mu-\varepsilon p-1)^2}{n}\Big).
\end{equation}
Since $\mu\ge e\varepsilon p$, for all $p\ge \tfrac{2}{(e-1)\varepsilon}$ we have $\mu-\varepsilon p-1\ge \tfrac{e-1}{2}\,\varepsilon p$. Using also $n\le p\log(1/\varepsilon)$ gives from \eqref{eq:K-small}
\[
\mathbb{P}\!\big(K\le \varepsilon p+1\big)\ \le\ \exp\!\left(-\,\frac{(e-1)^2}{2\,\log(1/\varepsilon)}\,\varepsilon^2\,p\right).
\]

Split on $\{K\ge \mu/2\}$ vs.\ $\{K<\mu/2\}$. By Lem.~\ref{lem:Kfacts} with $t=\mu/2$,
\begin{equation}\label{eq:K-half}
\mathbb{P}\!\big[K<\mu/2\big]\ \le\ \exp\!\Big(-\frac{\mu^2}{2n}\Big)
\ \le\ \exp\!\left(-\,\frac{c_2\,\varepsilon^2\,p}{\log(1/\varepsilon)}\right)
\end{equation}
for a universal $c_2>0$, since $\mu\ge e\varepsilon p$ and $n\le p\log(1/\varepsilon)$. On $\{K\ge \mu/2\}$ we have $K\ge \tfrac{e}{2}\varepsilon p$ and thus, for all sufficiently large $p$,
\[
\frac{\big((K-1)-\varepsilon p\big)^2}{2K}
\ \ge\ \frac{\big((\tfrac{e}{2}-1)\varepsilon p-1\big)^2}{e\,\varepsilon p}
\ \ge\ c_1\,\varepsilon\,p
\]
for a universal $c_1>0$. Hence
\begin{equation}\label{eq:exp-term}
\mathbb{E}\!\left[\exp\!\Big(-\frac{\big(\max\{(K-1)-\varepsilon p,0\}\big)^{2}}{2K}\Big)\right]
\ \le\ \exp\!\left(-c_1\,\varepsilon p\right)\ +\ \exp\!\left(-\,\frac{c_2\,\varepsilon^2\,p}{\log(1/\varepsilon)}\right).
\end{equation}

Taking expectations in \eqref{eq:cond-two-terms} and since $n\le p\log(1/\varepsilon)$,
\begin{align*}
    \mathbb{P}\!\big[L(\widehat f_\Sigma)\le \varepsilon\big]&\le\ \exp\!\left(-\,\frac{((e-1)\varepsilon p)^2}{2\,n}\right)
\ +\ \exp\!\left(-c_1\,\varepsilon p\right)
\ +\ \exp\!\left(-\,\frac{c_2\,\varepsilon^2\,p}{\log(1/\varepsilon)}\right)\\
&\le \exp\!\left(-\,\frac{(e-1)^2}{2\,\log(1/\varepsilon)}\,\varepsilon^2\,p\right)
\ +\ \exp\!\left(-c_1\,\varepsilon p\right)
\ +\ \exp\!\left(-\,\frac{c_2\,\varepsilon^2\,p}{\log(1/\varepsilon)}\right).
\end{align*}

Finally, by Lem.~\ref{lem:symmetrization}, $L(\widehat f_\Sigma)$ has the same distribution as $L(\widehat f)$, yielding \eqref{eq:succ-prob-3terms}. In particular, for any $\delta\in(0,\tfrac12)$, choosing $p\ge p_0(\varepsilon,\delta)$ sufficiently large makes the right-hand side at most $\delta$. This shows that if $\mathbb{P}\left[L(\widehat f)\le \varepsilon\right]\ge 1-\delta$ then necessarily $n>(p-1)(\log(1/\varepsilon)-1)$, proving the sample-complexity lower bound $n=\Omega(p)$.
\end{proof}

\section{Proofs in Underparameterized Domain}\label{app:proof_underparameterized}

\subsection{Upper bound of Natarajan-dimension}


Let $\mathcal X$ be an instance space, let $p\in\mathbb{N}$ with $p\geq 2$, and let $[p]=\{1,\dots,p\}$ be the label set.
Fix a domain ${\mathcal U}$ and a function class $\mathcal F$. For a finite $T\subseteq\mathcal U$, write
$\mathcal F_{|_T} := \{\, f_{|_T} : f\in\mathcal F \,\}$ for its restriction and
$|\mathcal F_{|_T}|$ for the number of distinct labelings on $T$ realized by $\mathcal F$.


\begin{definition}[Growth function]\label{def:growth}
Let $\mathcal{B}\subseteq\{-1,+1\}^{\mathcal{Z}}$ be a binary hypothesis class on a domain $\mathcal{Z}$.
For $n\in\mathbb N$, the \emph{growth function} of $\mathcal B$ is
\[
\Pi_{\mathcal{B}}(n)\;:=\;\max\bigl\{\,\left|\mathcal{B}_{|_T}\right|: T\subseteq\mathcal{Z},\ |T|=n\,\bigr\}.
\]
\end{definition}

\begin{definition}[Network class and pairwise reduction]\label{def:pairwise}
Let $\mathcal H_{\Theta}\subseteq ([p]\cup\{\bot\})^{\mathcal X}$ be a network class realized by score
vectors $s^{\theta}(x)=(s^{\theta}_1(x),\dots,s^{\theta}_p(x))\in\mathbb R^p$, $\theta\in\Theta$, $x\in\mathcal{X}$. The predictor is defined by strict maximization:
\[
h_\theta(x)\;=\;\mathrm{uargmax}_{\ell\in[p]} s^\theta_\ell(x)\;=\;\begin{cases}
\ell, & \text{if } s^\theta_\ell(x) > s^\theta_k(x) \text{ for all } k\neq \ell, \\
\bot, & \text{otherwise},
\end{cases}
\]

Define the \emph{pairwise reduction} on the domain
\[
\mathcal Z_{\rm pair}\;:=\;\mathcal X\times\{(i,j)\in[p]\times[p]: i<j\}
\]
by the binary reduction class $\mathcal G_{\Theta}\subseteq\{-1,+1\}^{\mathcal Z_{\rm pair}}$ for $\mathcal{H}_{\Theta}$ consisting of functions
\[
g_{\theta}(x,i,j)=\operatorname{sgn}\left(s_i^{\theta}(x)-s_j^{\theta}(x)\right)=
\begin{cases}
+1,& \text{if } s^{\theta}_i(x) \ge s^{\theta}_j(x),\\
-1,& \text{if } s^{\theta}_i(x) < s^{\theta}_j(x).
\end{cases}
\]
\end{definition}

\begin{definition}[Number of realized multiclass labelings]\label{def:Lambda}
For $S=\{x^{(1)},\dots,x^{(n)}\}\subset\mathcal X$ and hypothesis class $\mathcal H\subseteq [p]^{\mathcal X}$, define
\[
\Lambda_{\mathcal{H}}(S)\;:=\; \bigl|\mathcal {H}_{|_S} \bigr|
\;=\;
\left|\left\{\,\bigl(h(x^{(1)}),\ldots,h(x^{(n)})\bigr)\in [p]^n\,:\,h\in\mathcal H\,\right\}\right|.
\]
\end{definition}


The lemma below connects the Natarajan-dimension to the growth function of the reduction class, which is a key tool in this section.

\begin{lemma}[Natarajan shattering and labelings]\label{lem:labelings}
If a finite set $S=\{x^{(1)},\dots,x^{(n)}\}\subset\mathcal X$ is Natarajan-shattered by a hypothesis $\mathcal H\subset [p]^{\mathcal{X}}$,
then $\Lambda_{\mathcal{H}}(S)\ge 2^{n}$.
\end{lemma}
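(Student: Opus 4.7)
The statement is essentially a direct consequence of the definition of Natarajan shattering, so I would approach it by explicitly exhibiting $2^n$ distinct labelings of $S$ realized by $\mathcal{H}$, one for each binary selector. The plan is to index witnesses by $b\in\{1,2\}^n$, apply the shattering hypothesis to extract a hypothesis $h_b\in\mathcal{H}$ for each, and then show that the induced restriction map is injective on selectors.

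First I would invoke the Natarajan-shattering hypothesis to fix the two witness functions $f_1,f_2:S\to[p]$ with $f_1(x)\ne f_2(x)$ for every $x\in S$, and then for each selector $b\in\{1,2\}^S$ obtain $h_b\in\mathcal{H}$ satisfying $h_b(x^{(i)})=f_{b(x^{(i)})}(x^{(i)})$ for all $i\in[n]$. This yields a family $\{h_b\}_{b\in\{1,2\}^S}$ of size $2^n$ inside $\mathcal{H}$, producing $2^n$ labeling tuples in $[p]^n$.

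Next I would verify that these tuples are pairwise distinct. Given $b\ne b'$, there exists some coordinate $x^{(i)}$ where $b(x^{(i)})\ne b'(x^{(i)})$; by construction $h_b(x^{(i)})=f_{b(x^{(i)})}(x^{(i)})$ and $h_{b'}(x^{(i)})=f_{b'(x^{(i)})}(x^{(i)})$, and these two values differ because $f_1(x^{(i)})\ne f_2(x^{(i)})$. Hence the restriction tuples $(h_b(x^{(1)}),\ldots,h_b(x^{(n)}))$ are all distinct, giving $\Lambda_{\mathcal H}(S)=|\mathcal H_{|_S}|\ge 2^n$.

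There is no real obstacle here: the argument is a one-to-one correspondence between $\{1,2\}^n$ and a subset of $\mathcal H_{|_S}$, with injectivity forced by the coordinate-wise disagreement between $f_1$ and $f_2$. The only point worth stating carefully is the use of $f_1(x)\ne f_2(x)$ for \emph{every} $x\in S$, since this is what converts a coordinate disagreement in the selector into a coordinate disagreement in the labeling.
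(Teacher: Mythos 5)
Your proposal is correct and takes essentially the same approach as the paper's proof: extract the witnesses $f_1,f_2$ from the Natarajan-shattering definition, map each selector $b\in\{1,2\}^S$ to the restriction $h_b|_S$, and argue injectivity via the pointwise disagreement $f_1(x)\ne f_2(x)$. No further comment is needed.
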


\begin{proof}[Proof of Lem.~\ref{lem:labelings}]
By Def.~\ref{def:natarajan}, there exist $f_1,f_2\in[p]^S$ with $f_1(x)\neq f_2(x)$
for all $x\in S$ such that for every selector $b:S\to\{1,2\}$ there is $h_b\in\mathcal H$
with $h_b(x)=f_{b(x)}(x)$ for all $x\in S$.
Define $\Phi:\{1,2\}^S\to\mathcal H_{|_S}$ by $\Phi(b)=h_b|_S$.

If $b\neq b'$, pick $x_0\in S$ with $b(x_0)\neq b'(x_0)$. Then
\[
\Phi(b)(x_0)=h_b(x_0)=f_{b(x_0)}(x_0)\neq f_{b'(x_0)}(x_0)=h_{b'}(x_0)=\Phi(b')(x_0),
\]
so $\Phi(b)\neq\Phi(b')$. Thus $\Phi$ is injective and
\[
\Lambda_{\mathcal H}(S)=|\mathcal H_{|_S}|\;\ge\;|\Phi(\{1,2\}^S)|=|\{1,2\}^S|=2^{|S|}=2^n.
\]
\end{proof}

\begin{lemma}[Labelings and pairwise reduction]\label{lem:injection}
Fix $S=\{x^{(1)},\dots,x^{(n)}\}\subset \mathcal{X}$ and a hypothesis class $\mathcal H_{\Theta}\subseteq [p]^{\mathcal X}$. Then
\[
\Lambda_{\mathcal H_{\Theta}}(S)\ \le\ \Pi_{\mathcal G_{\Theta}}\!\bigl(n\,p(p-1)/2\bigr).
\]
\end{lemma}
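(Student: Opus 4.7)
The plan is to bound $\Lambda_{\mathcal H_\Theta}(S)$ by exhibiting an injection from the realized multiclass labelings into the sign patterns of the pairwise binary class on the enlarged domain $S_{\mathrm{pair}} := S \times \{(i,j)\in[p]^2 : i<j\}$, which has cardinality $np(p-1)/2$. Once such an injection is in place, the inequality $\Lambda_{\mathcal H_\Theta}(S) \le \Pi_{\mathcal G_\Theta}(np(p-1)/2)$ follows immediately from the definition of the growth function $\Pi$.

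For each realized labeling $\lambda=(\lambda_1,\ldots,\lambda_n)\in[p]^n$ appearing among $\{(h_\theta(x^{(1)}),\ldots,h_\theta(x^{(n)})):\theta\in\Theta\}$, I fix a witness parameter $\theta_\lambda\in\Theta$ with $h_{\theta_\lambda}(x^{(k)})=\lambda_k$ for every $k$, and define the candidate map $\Phi(\lambda) := g_{\theta_\lambda}|_{S_{\mathrm{pair}}} \in \{-1,+1\}^{S_{\mathrm{pair}}}$. To prove $\Phi$ is injective, suppose $\lambda \ne \lambda'$ and pick $k$ with $\lambda_k \ne \lambda'_k$; without loss of generality $\lambda_k < \lambda'_k$. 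Because $\lambda_k\in[p]$ is the strict uargmax for $\theta_\lambda$ at $x^{(k)}$, we have $s^{\theta_\lambda}_{\lambda_k}(x^{(k)}) > s^{\theta_\lambda}_{\lambda'_k}(x^{(k)})$, so $g_{\theta_\lambda}(x^{(k)},\lambda_k,\lambda'_k) = +1$; symmetrically, the strict uargmax property for $\theta_{\lambda'}$ gives $g_{\theta_{\lambda'}}(x^{(k)},\lambda_k,\lambda'_k) = -1$. Hence $\Phi(\lambda)$ and $\Phi(\lambda')$ disagree at the coordinate $(x^{(k)},\lambda_k,\lambda'_k)\in S_{\mathrm{pair}}$, establishing injectivity. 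The image of $\Phi$ is a subset of the restriction of $\mathcal G_\Theta$ to $S_{\mathrm{pair}}$, whose size is controlled by $\Pi_{\mathcal G_\Theta}(np(p-1)/2)$, completing the chain.

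The main subtlety — and the only place the argument could fail — is the role of the restriction to $[p]^n$ in the definition of $\Lambda$. This restriction excludes tuples containing the invalid symbol $\bot$, which is exactly what delivers strict (not merely weak) winner inequalities and hence unambiguous $\pm 1$ signs at the chosen comparison pair. Without this restriction, a tie at the top of the score vector (e.g., a strict-max labeling coexisting with a $\bot$-labeling at the same point) could allow two different labelings to share the same sign pattern under the convention $\operatorname{sgn}(0)=+1$, and $\Phi$ would cease to be injective; recognizing that the $[p]^n$-valued constraint is precisely what rules this out is the only nontrivial step.
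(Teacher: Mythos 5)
Your proof is correct and follows essentially the same route as the paper. The paper phrases the argument via disjoint fibers $A(h)=\{g_\theta|_T : \theta\in\Theta,\ h_\theta|_S=h\}$ and then picks a representative from each; you collapse this to choosing one witness $\theta_\lambda$ per realized labeling up front and showing the resulting map $\Phi$ is injective. These are the same injection, and both hinge on exactly the same observation: at a coordinate $k$ where two $[p]^n$-labelings disagree, strict uargmax forces opposite signs of $g$ at the pair $(\lambda_k,\lambda'_k)$. Your closing remark correctly identifies why the restriction to $[p]^n$ (excluding $\bot$) is what delivers strict winner inequalities and hence unambiguous $\pm1$ at the disagreement coordinate; the paper leaves this implicit in the hypothesis $\mathcal H_\Theta\subseteq[p]^{\mathcal X}$ but uses it in precisely the same way.
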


\begin{proof}[Proof of Lem.~\ref{lem:injection}]
Set
\[
T\;:=\;S\times \{(i,j)\in[p]\times[p]:\ i<j\}\ \subset\ \mathcal Z_{\rm pair}.
\]
For each $h\in (\mathcal H_{\Theta})_{|_S}$ define the fiber
\[
W(h):=\{\theta\in\Theta:\ {h_\theta}_{|_S}=h\},
\quad\text{and}\quad
A(h):=\{\,{g_\theta}_{|_T}:\ \theta\in W(h)\,\}\ \subseteq\ (\mathcal G_{\Theta})_{|_T}.
\]
Now we will show that if $h\neq h'$, then $A(h)\cap A(h')=\emptyset$.

Pick $x\in S$ with $h(x)=i$ and $h'(x)=j\neq i$. Without loss of generality, assume $i<j$.
For any $\theta\in W(h)$, the predictor yields a valid output $i$, which implies strict maximality: $s_i^\theta(x) > s_k^\theta(x)$ for all $k \neq i$. In particular, $s_i^\theta(x) > s_j^\theta(x)$, hence
$g_\theta(x,i,j)=+1$.

Conversely, for any $\theta'\in W(h')$, the predictor yields $j$, which implies $s_j^{\theta'}(x) > s_i^{\theta'}(x)$. Therefore, $s_i^{\theta'}(x) < s_j^{\theta'}(x)$, hence
$g_{\theta'}(x,i,j)=-1$.

Thus every element of $A(h)$ has $+1$ and every element of $A(h')$ has $-1$ at
the coordinate $(x,i,j)\in T$, so $A(h)\cap A(h')=\emptyset$.

Since
   $|(\mathcal H_\Theta)_{|S}|\le p^n<\infty$ and each $A(h)\neq \emptyset $, fix an arbitrary choice function $\Psi$ selecting one element of $A(h)$ for each $h\in (\mathcal H_{\Theta})_{|_S}$.
Then the map
\[
\Psi:\ (\mathcal H_{\Theta})_{|_S}\ \longrightarrow\ (\mathcal G_{\Theta})_{|_T},\qquad h\longmapsto \Psi(h)
\]
is well-defined and injective. Therefore,
\[
\Lambda_{\mathcal H_{\Theta}}(S)=|(\mathcal H_{\Theta})_{|_S}|
\ \le\ |(\mathcal G_{\Theta})_{|_T}|
\ \le\ \Pi_{\mathcal G_{\Theta}}(|T|)
\ =\ \Pi_{\mathcal G_{\Theta}}\!\bigl(n\tbinom{p}{2}\bigr)
\ =\ \Pi_{\mathcal G_{\Theta}}\!\bigl(n\,p(p-1)/2\bigr).
\]
\end{proof}

Together with Lems.~\ref{lem:labelings} and~\ref{lem:injection}, we have Lem.~\ref{lem:growth_function}:
\begin{lemma}[Natarajan shattering and the growth function]\label{lem:growth_function}
If $S=\{x^{(1)},\dots,x^{(n)}\}\subset\mathcal X$ is Natarajan-shattered by a $p$-class network class ${\mathcal H_{\Theta}}$, then
\[
2^n\ \le\ \Pi_{\mathcal G_{\Theta}}\!\bigl(n\,p(p-1)/2\bigr),
\]
where $\mathcal G_{\Theta}$ is the reduction class of ${\mathcal H_{\Theta}}$.
\end{lemma}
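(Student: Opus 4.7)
The plan is to obtain the stated inequality by composing the two preceding lemmas. Specifically, I would first invoke Lem.~\ref{lem:labelings} on the Natarajan-shattered set $S$: shattering guarantees at least $2^n$ distinct restrictions of hypotheses to $S$, so $\Lambda_{\mathcal{H}_\Theta}(S)\ge 2^n$. Then I would apply Lem.~\ref{lem:injection} to the same $S$: the pairwise-sign reduction embeds multiclass labelings on $S$ injectively into binary labelings on $T=S\times\{(i,j):i<j\}$, yielding
\[
\Lambda_{\mathcal{H}_\Theta}(S)\ \le\ \Pi_{\mathcal{G}_\Theta}\!\bigl(|T|\bigr)\ =\ \Pi_{\mathcal{G}_\Theta}\!\bigl(n\,p(p-1)/2\bigr).
\]
Chaining the two bounds yields the claim.

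Since both component lemmas are already proved, there is essentially no obstacle here; the proof of Lem.~\ref{lem:growth_function} is a one-line concatenation. If I were stating the chain more carefully, the only sanity check is the size of $T$: for a fixed $S$ of size $n$ and pair index set of size $\binom{p}{2}=p(p-1)/2$, the product has exactly $n\,p(p-1)/2$ elements, matching the argument of $\Pi_{\mathcal{G}_\Theta}$ on the right-hand side. The useful consequence (exploited later to bound the Natarajan dimension via sign-pattern counting for the pairwise margin functions $s^\theta_i-s^\theta_j$) is that any growth-function upper bound on $\mathcal{G}_\Theta$, combined with $2^n\le \Pi_{\mathcal{G}_\Theta}(np(p-1)/2)$ and the Sauer–Shelah lemma applied to $\mathcal{G}_\Theta$, directly yields a quantitative upper bound on $\mathrm{Ndim}(\mathcal{H}_\Theta)$.
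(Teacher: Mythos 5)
Your proposal is exactly the paper's argument: the lemma is stated as an immediate consequence of combining Lem.~\ref{lem:labelings} (Natarajan shattering implies $\Lambda_{\mathcal{H}_\Theta}(S)\ge 2^n$) with Lem.~\ref{lem:injection} (the pairwise reduction bounds $\Lambda_{\mathcal{H}_\Theta}(S)$ by $\Pi_{\mathcal{G}_\Theta}(n\,p(p-1)/2)$). Your chaining and the accounting of $|T|$ are both correct.
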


\begin{definition}[k-combination of $\operatorname{sgn}(\mathcal F)$]\label{def:k-combination}
Let $\mathcal{Z}$ be any domain and let $\mathcal F\subseteq \mathbb R^{\mathbb R^{D}\times \mathcal Z}$ be a class of real-valued functions of the form
$(a,z)\mapsto f(a,z)$, with $a\in\mathbb R^{D}$ and $z\in\mathcal Z$.
A binary class $\mathcal H\subseteq \{-1,+1\}^{\mathcal Z}$ is a \emph{$k$-combination of $\operatorname{sgn}(\mathcal F)$} if there exist
a Boolean map $g:\{-1,+1\}^k\to\{-1,+1\}$ and functions $f_1,\ldots,f_k\in\mathcal F$
such that for every $h\in\mathcal H$ there is $a\in\mathbb R^{D}$ with
\[
h(z)\;=\;g\!\bigl(\operatorname{sgn}(f_1(a,z)),\ldots,\operatorname{sgn}(f_k(a,z))\bigr)
\qquad \text{for all } z\in\mathcal Z.
\]
We say $f\in\mathcal F$ is $C^D$ in its parameters if, for every fixed $z$, the map $a\mapsto f(a,z)$ is $C^D$.
\end{definition}

\begin{definition}[Regular zero-set intersections; Def.\ 7.3~\citep{AnthonyBartlett2009}]\label{def:regular}
For differentiable $f_1,\dots,f_k:\mathbb R^D\to\mathbb R$, the family $\{f_1,\dots,f_k\}$
has \emph{regular zero-set intersections} if for every nonempty $I\subseteq\{1,\dots,k\}$,
the Jacobian of $(f_i)_{i\in I}$ has full row rank $|I|$ at every $a$ with $f_i(a)=0$ for all $i\in I$.
\end{definition}

\begin{definition}[Solution set components bound; Def.\ 7.5~\citep{AnthonyBartlett2009}]\label{def:SSC}
Let $\mathcal G$ be a set of real-valued functions on $\mathbb R^{D}$.
We say $\mathcal G$ has \emph{solution set components bound $B$} if for any $1\le k\le D$
and any $\{f_1,\ldots,f_k\}\subseteq\mathcal G$ that has regular zero-set intersections,
\[
\mathrm{CC}\!\left(\bigcap_{i=1}^k \{\,a\in\mathbb R^{D}: f_i(a)=0\,\}\right)\ \le\ B,
\]
where $\mathrm{CC}(\cdot)$ is the number of connected components.
\end{definition}

\begin{theorem}[General Growth function upper bound; Thm.\ 7.6~\citep{AnthonyBartlett2009}]\label{thm:AB76}
Let $\mathcal F\subseteq \mathbb R^{\mathbb R^{D}\times \mathcal Z}$ be closed under addition of constants,
assume every $f\in\mathcal F$ is $C^D$ in $a$, and let
\[
\mathcal G\ :=\ \bigl\{\, a\mapsto f(a,z): f\in\mathcal F,\ z\in\mathcal Z \,\bigr\}.
\]
If $\mathcal G$ has a solution set components bound $B$
and $\mathcal H\subseteq\{-1,+1\}^{\mathcal Z}$ is a $k$-combination of $\operatorname{sgn}(\mathcal F)$,
then for all $N\ge D/k$,
\[
\Pi_{\mathcal H}(N)\ \le\ B\sum_{i=0}^{D}\binom{Nk}{i}
\ \le\ B\Bigl(\tfrac{e\,Nk}{D}\Bigr)^{D}.
\]
\end{theorem}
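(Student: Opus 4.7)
The plan is to reduce the growth function of $\mathcal H$ to a sign-pattern counting problem for the family $\{F_{i,j}\}$ defined below, and then bound the number of sign patterns by a Milnor--Thom style cell count that consumes the solution set components bound $B$ once at each codimension. First I would fix an arbitrary $N$-tuple $(z_1,\dots,z_N)\in\mathcal Z^N$ and set
\[
F_{i,j}(a)\ :=\ f_j(a,z_i),\qquad i\in[N],\ j\in[k],
\]
so each $F_{i,j}\in\mathcal G$ is $C^D$ on $\mathbb R^D$. By Def.~\ref{def:k-combination}, every $h\in\mathcal H_{|_{\{z_1,\dots,z_N\}}}$ is determined by the $Nk$-tuple $\bigl(\operatorname{sgn}(F_{i,j}(a))\bigr)_{i,j}$ through the single Boolean map $g$, so $\Pi_{\mathcal H}(N)$ is at most the number of distinct sign patterns realized by the family $\{F_{i,j}\}$ as $a$ ranges over $\mathbb R^D$.

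Second, I would eliminate nonregularity of the zero sets using the hypothesis that $\mathcal F$ is closed under addition of constants. The trick is to replace $F_{i,j}$ with $\tilde F_{i,j}(a):=F_{i,j}(a)+c_{i,j}$ for a generic small $(c_{i,j})\in\mathbb R^{Nk}$; these perturbed functions still lie in $\mathcal G$, so the solution-set components bound $B$ still applies. A Sard-type argument shows that almost every sufficiently small choice of $(c_{i,j})$ yields a family with regular zero-set intersections in the sense of Def.~\ref{def:regular}, and a limiting/semicontinuity argument shows that the number of sign patterns of the original family is at most the number of sign patterns of the perturbed, regular family. Hence it suffices to prove the bound under regularity.

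Third, assuming regularity, I would count the connected components of the arrangement complement $\mathbb R^D\setminus\bigcup_{i,j}\{F_{i,j}=0\}$ by induction on $D$, together with a charging argument that assigns each open cell to a unique critical face of the arrangement. By transversality, every such face is a connected component of a common zero set $\bigcap_{(i,j)\in I}\{F_{i,j}=0\}$ for some $I\subseteq[N]\times[k]$ with $|I|\le D$, and by Def.~\ref{def:SSC} each such intersection has at most $B$ connected components. Summing over the $\binom{Nk}{i}$ choices of $I$ with $|I|=i$ for $i=0,1,\dots,D$ gives
\[
\#\bigl\{(\operatorname{sgn}(F_{i,j}(a)))_{i,j}: a\in\mathbb R^D\bigr\}\ \le\ B\sum_{i=0}^{D}\binom{Nk}{i}\ \le\ B\bigl(eNk/D\bigr)^{D},
\]
where the final inequality is the standard bound $\sum_{i=0}^D\binom{M}{i}\le (eM/D)^D$ valid for $M\ge D$, applied with $M=Nk\ge D$ (the hypothesis $N\ge D/k$).

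The principal obstacle is the cell-to-face charging inside the inductive cell count: one needs careful bookkeeping to guarantee that each open cell is counted at exactly one codimension and that the regularity hypothesis descends to each slice obtained by freezing some $F_{i,j}=0$, so that the solution set components bound $B$ can be invoked uniformly at every inductive step. The perturbation step is secondary but delicate, since it requires simultaneously that (i) regularity is generic, which follows from Sard, and (ii) sign-pattern counts are lower semicontinuous along the family $\{F_{i,j}+c_{i,j}\}$ as $(c_{i,j})\to 0$, which follows from continuity of $F_{i,j}$ in $a$ together with local normal forms for regular zero sets.
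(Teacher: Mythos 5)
The paper states Thm.~\ref{thm:AB76} purely as a citation of Theorem~7.6 in \citep{AnthonyBartlett2009} and supplies no proof of its own, so there is no internal argument to compare against; the relevant question is whether your plan matches the standard proof in that reference. Your three-step outline---(i) reduce the growth function of $\mathcal H$ on a fixed $N$-tuple to counting sign patterns of the $Nk$ functions $F_{i,j}(a)=f_j(a,z_i)\in\mathcal G$, (ii) use closure of $\mathcal F$ under addition of constants to perturb to a family with regular zero-set intersections, via a Sard-type genericity argument plus semicontinuity of the sign-pattern count, and (iii) under regularity, bound the number of sign patterns by charging each cell to a connected component of some $\bigcap_{(i,j)\in I}\{F_{i,j}=0\}$ with $|I|\le D$, invoking the solution set components bound $B$ once per subset $I$ and summing to get $B\sum_{i=0}^D\binom{Nk}{i}$, then applying the standard tail bound $\sum_{i=0}^D\binom{M}{i}\le(eM/D)^D$ for $M=Nk\ge D$---is a faithful reconstruction of the Anthony--Bartlett route and reaches the right combinatorics. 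The only place I would temper the wording is step (iii): the counting lemma in the reference is not quite an ``induction on $D$'' but an inductive/Morse-theoretic sweep over the hypersurfaces with careful tracking of how each new zero set splits existing cells; you are right to single out this cell-to-face bookkeeping (and the descent of the regularity hypothesis to slices) as the genuine technical crux, and with that caveat the plan correctly identifies every structural ingredient the proof uses.
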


\begin{theorem}[General Growth function upper bound; Thm.\ 8.3~\citep{AnthonyBartlett2009}]\label{thm:AB83}
Let $\mathcal F\subseteq \mathbb R^{\mathbb R^{D}\times \mathcal Z}$ be a class of functions mapping from \(\mathbb{R}^D \times \mathcal Z\) to \(\mathbb{R}\) such that, for all \(z \in \mathcal Z\) and \(f \in \mathcal F\), the map \(a \mapsto f(a,z)\) is a polynomial on \(\mathbb{R}^D\) of degree at most \(r\).
Suppose that \(\mathcal{H}\) is a \(k\)-combination of \(\operatorname{sgn}(\mathcal F)\).
Then, if \(N \ge D/k\),
\[
\Pi_{\mathcal{H}}(N) \;\le\; 2\left(\frac{2 e N k r}{D}\right)^{D}.
\]
\end{theorem}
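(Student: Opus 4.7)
The plan is to derive Theorem~\ref{thm:AB83} by instantiating the general growth-function bound of Theorem~\ref{thm:AB76} with an explicit solution-set components bound tailored to polynomial families.

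First, I would verify that the hypotheses of Theorem~\ref{thm:AB76} hold. For any fixed $z\in\mathcal Z$, the map $a\mapsto f(a,z)$ is a polynomial of degree at most $r$ on $\mathbb R^{D}$, hence $C^{\infty}$ and in particular $C^{D}$. The class $\mathcal F$ is closed under addition of constants, because adding a constant to a polynomial of degree at most $r$ yields another polynomial of degree at most $r$. Thus Theorem~\ref{thm:AB76} applies, and the entire argument reduces to producing a suitable value of $B$ for the polynomial class $\mathcal G=\{a\mapsto f(a,z):f\in\mathcal F,\ z\in\mathcal Z\}$.

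Second, I would establish the components bound $B\le 2(2r)^{D}$. Fix $1\le k\le D$ and any $\{f_1,\dots,f_k\}\subseteq\mathcal G$ with regular zero-set intersections in the sense of Definition~\ref{def:regular}. Under this regularity hypothesis, the common zero set $V:=\bigcap_{i=1}^{k}\{a\in\mathbb R^{D}:f_i(a)=0\}$ is a smooth real algebraic variety of codimension $k$. The number of connected components $\mathrm{CC}(V)$ equals its zeroth Betti number $b_0(V)$, which is bounded by the classical Oleinik--Petrovsky--Milnor--Thom inequality on sums of Betti numbers of real varieties: for any $k$ polynomials of degree at most $r$ on $\mathbb R^{D}$, $\sum_i b_i(V)\le 2(2r)^{D}$. (One can either cite Milnor's 1964 bound directly or, as a self-contained route, combine Warren's sign-pattern estimate with a generic perturbation of the $f_i$ that preserves regular intersection and only increases component counts.) This gives the solution set components bound $B=2(2r)^{D}$ for $\mathcal G$.

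Third, I would plug $B=2(2r)^{D}$ into Theorem~\ref{thm:AB76}. For every $N\ge D/k$,
\[
\Pi_{\mathcal H}(N)\ \le\ B\Bigl(\tfrac{eNk}{D}\Bigr)^{D}\ \le\ 2(2r)^{D}\Bigl(\tfrac{eNk}{D}\Bigr)^{D}\ =\ 2\Bigl(\tfrac{2eNkr}{D}\Bigr)^{D},
\]
which is the desired inequality. The main obstacle is the second step: converting the abstract regularity assumption into a clean, explicit components bound. This is not a routine calculation but a substantive input from real algebraic geometry; once it is in place, the remainder of the argument is a direct substitution into the general machinery already set up in Theorem~\ref{thm:AB76}.
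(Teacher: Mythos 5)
The paper does not actually prove Theorem~\ref{thm:AB83}; it states it as a verbatim citation of Thm.~8.3 in \citep{AnthonyBartlett2009}. Your derivation reproduces precisely the textbook's route: apply Thm.~\ref{thm:AB76} with the solution-set components bound $B\le 2(2r)^{D}$, which is exactly Thm.~8.2 of \citep{AnthonyBartlett2009} (itself a corollary of Milnor's bound on Betti numbers of real varieties), and then carry out the arithmetic $2(2r)^{D}\bigl(eNk/D\bigr)^{D}=2\bigl(2eNkr/D\bigr)^{D}$. So the approach and the key inputs are the ones the cited source uses.

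One small but real slip: you assert that \emph{$\mathcal F$ is} closed under addition of constants ``because adding a constant to a polynomial of degree at most $r$ yields another polynomial of degree at most $r$.'' That reasoning shows the closure $\mathcal F':=\{f+c:f\in\mathcal F,\ c\in\mathbb R\}$ still consists of degree-$\le r$ polynomials, but it does not show $\mathcal F$ itself contains those shifts — nothing in the hypotheses of Thm.~\ref{thm:AB83} says it does. The correct move is to \emph{replace} $\mathcal F$ by $\mathcal F'$: $\mathcal F'$ satisfies the closure hypothesis of Thm.~\ref{thm:AB76}, retains the degree bound and hence the components bound $B\le 2(2r)^D$, and $\mathcal H$ is still a $k$-combination of $\operatorname{sgn}(\mathcal F')$ since $\mathcal F\subseteq\mathcal F'$. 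With that one-line fix, the rest of your argument is sound. (Your parenthetical ``self-contained route'' via Warren plus perturbation is hand-wavy, but since you also cite Milnor directly — which needs no regularity at all and is what the textbook uses — it is not load-bearing.)
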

\begin{remark}
   If $N<D/k$, we have a trivial bound $\Pi_{\mathcal{H}}(N)\leq 2^N<2^{D/k}\leq 2^D$, so for all $N\in\mathbb{N}$, $\Pi_{\mathcal{H}}(N)\leq \max\left\{2^D,2\left(\frac{2 e N k r}{D}\right)^{D} \right\}$.
\end{remark}

\begin{lemma}[Absorbing $\log n$; Lem.\ A.2~\citep{ShalevShwartzBenDavid2014}]\label{lem:absorb}
Let $A\ge 1$, $B\ge 0$, and $u>0$. If $u<A\log u + B$, then
\[
u \ <\ 4A\,\log\!\big(2A\big) \;+\; 2B.
\]
\end{lemma}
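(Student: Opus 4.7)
\textbf{Proof plan for Lemma~\ref{lem:absorb}.}
The strategy is to argue by contradiction: show that if $u$ were at least $4A\log(2A)+2B$, then the inequality $u\ge A\log u+B$ would hold, contradicting the hypothesis. The whole game therefore reduces to controlling $A\log u$ from above in terms of a small multiple of $u$ plus an additive constant depending only on $A$.

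The key inequality I would use is the elementary bound $\log y\le y-1\le y$ for $y>0$, applied with $y=u/(2A)$. This yields
\[
\log u \;=\; \log\!\Bigl(\tfrac{u}{2A}\Bigr) \;+\; \log(2A)
\;\le\; \tfrac{u}{2A} \;+\; \log(2A),
\]
so that $A\log u \le u/2 + A\log(2A)$. The choice of the scale $2A$ inside the logarithm is what absorbs the $A$ factor into a coefficient of $1/2$, leaving room to absorb the $B$ term.

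Adding $B$ and using the hypothesis $u<A\log u+B$ gives
\[
u \;<\; \tfrac{u}{2} \;+\; A\log(2A) \;+\; B,
\]
hence $u/2 < A\log(2A)+B$, i.e.\ $u < 2A\log(2A)+2B$. Since $A\ge 1$ implies $\log(2A)\ge \log 2>0$, this is weaker than (and thus implies) the stated conclusion $u<4A\log(2A)+2B$.

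There is really no serious obstacle here: the entire content is the observation that $\log$ composed with a linear rescaling can be decoupled via $\log(\alpha x)=\log x+\log\alpha$, and that $\log x\le x$ provides enough slack to make the leading $u$-term absorbable into $u/2$. The only mild care needed is choosing the scale inside the log (here $2A$, not $A$ or $eA$) so that the residual constant $A\log(2A)$ comes out clean and the coefficient in front of $u$ is strictly less than $1$.
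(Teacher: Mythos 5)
Your proof is correct, and in fact it establishes the slightly stronger bound $u < 2A\log(2A) + 2B$, which trivially implies the stated $u < 4A\log(2A) + 2B$ since $A\ge 1$ makes $2A\log(2A)\ge 0$. The paper itself gives no proof of this lemma---it only cites Lemma~A.2 of \citep{ShalevShwartzBenDavid2014}---so there is no internal argument to compare against; the standard reference proof is of the same flavor (rescale inside the log, then use $\log y \le y$ or an equivalent tangent-line bound to absorb the leading term with a coefficient below one), and yours is a clean, self-contained instance of it.

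Two small expository points, neither of which affects correctness. First, you announce an argument by contradiction but then carry out a direct argument: from $A\log u \le u/2 + A\log(2A)$ and the hypothesis $u < A\log u + B$ you immediately get $u/2 < A\log(2A) + B$; no contradiction is invoked. Second, the closing sentence has the direction of ``weaker'' reversed: the inequality $u < 2A\log(2A) + 2B$ you derived is \emph{stronger} than the stated conclusion (it bounds $u$ by a smaller quantity), and therefore implies it; as written, saying the derived inequality ``is weaker than'' the conclusion is backwards.
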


\begin{lemma}[Trigonometric Sum Polynomialization]\label{lem:trig_sum_poly}
Let $p \ge 1$ and $m \ge 0$ be integers. For any vector of non-negative integers $x=(x_1,\dots,x_p)$ such that $\sum_{v=1}^p x_v=m$, there exist polynomials
\[
S_x, C_x \in \mathbb{Z}[c_1, s_1, \dots, c_p, s_p]
\]
of total degree at most $m$ that satisfy
\[
S_x(c_1, s_1, \dots, c_p, s_p) = \sin\Big(\sum_{v=1}^p x_v \alpha_v\Big) \quad \text{and} \quad C_x(c_1, s_1, \dots, c_p, s_p) = \cos\Big(\sum_{v=1}^p x_v \alpha_v\Big)
\]
for all real angles $\alpha_1, \dots, \alpha_p$, where $c_v := \cos(\alpha_v)$ and $s_v := \sin(\alpha_v)$.

\end{lemma}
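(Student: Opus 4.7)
The plan is to proceed by induction on $m = \sum_v x_v$, using the angle-addition identities $\sin(A+B) = \sin A \cos B + \cos A \sin B$ and $\cos(A+B) = \cos A \cos B - \sin A \sin B$ to peel off one unit of $\alpha_{v^\star}$ at a time. The degree of the resulting polynomial in the variables $\{c_v, s_v\}$ increases by exactly one at each peeling step, which tracks $m$ exactly. Integer coefficients are preserved because the angle-addition formulas have coefficients in $\{\pm 1\}$.

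For the base case $m=0$, every $x_v = 0$, so $\sum_v x_v \alpha_v = 0$, and we may take $S_x = 0$ and $C_x = 1$, both of total degree $0$. For the inductive step, fix $m \geq 1$ and assume the claim holds for all vectors with total weight $m-1$. Given $x$ with $\sum_v x_v = m$, pick any index $v^\star$ with $x_{v^\star} \geq 1$ and set $x' := x - e_{v^\star}$, so that $\sum_v x'_v = m-1$. Writing $\sum_v x_v \alpha_v = \bigl(\sum_v x'_v \alpha_v\bigr) + \alpha_{v^\star}$ and applying the angle-addition formulas,
\begin{align*}
\sin\!\Bigl(\textstyle\sum_v x_v \alpha_v\Bigr) &= S_{x'} \cdot c_{v^\star} + C_{x'} \cdot s_{v^\star}, \\
\cos\!\Bigl(\textstyle\sum_v x_v \alpha_v\Bigr) &= C_{x'} \cdot c_{v^\star} - S_{x'} \cdot s_{v^\star}.
\end{align*}
Defining $S_x := S_{x'}\,c_{v^\star} + C_{x'}\,s_{v^\star}$ and $C_x := C_{x'}\,c_{v^\star} - S_{x'}\,s_{v^\star}$, each is in $\mathbb{Z}[c_1,s_1,\dots,c_p,s_p]$ because it is an integer linear combination of products of polynomials with integer coefficients. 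By the inductive hypothesis, $\deg S_{x'}, \deg C_{x'} \leq m-1$, so multiplying by $c_{v^\star}$ or $s_{v^\star}$ yields a polynomial of total degree at most $m$; the degree bound for $S_x$ and $C_x$ follows.

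There is no substantive obstacle here; the argument is a clean induction, with the only bookkeeping being to verify that the chosen pivot $v^\star$ exists (guaranteed by $m \geq 1$) and that the two output polynomials simultaneously advance from weight $m-1$ to weight $m$ without losing the degree bound or integrality. Since the construction is independent of the particular pivot chosen, the resulting polynomials $S_x, C_x$ depend only on $x$, and they satisfy the required identity for all real $\alpha_1,\dots,\alpha_p$ by construction.
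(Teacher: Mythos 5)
Your proof is correct and takes essentially the same route as the paper's: induction on $m$ with a single unit of some $\alpha_{v^\star}$ peeled off at each step via the angle-addition identities. The one cosmetic difference is that the paper fixes a deterministic pivot (the smallest $v$ with $x_v>0$) to have a canonical construction, whereas you allow an arbitrary pivot; this is fine for the existence claim, but your closing assertion that the result is ``independent of the particular pivot chosen'' is not actually argued and is not needed, so you could simply drop it.
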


\begin{proof}[Proof of Lem.~\ref{lem:trig_sum_poly}]
We prove the lemma by induction on $m = \sum_{v=1}^p x_v$. The uniqueness of the polynomials is guaranteed by the deterministic recursive construction.

Base Case ($m=0$): \\
If $m=0$, then $x=\mathbf{0}$ is the only possible vector. The sum of angles is $\sum x_v \alpha_v = 0$.
The defined polynomials are $S_{\mathbf{0}} = 0$ and $C_{\mathbf{0}} = 1$. These are integer-coefficient polynomials of degree 0. They correctly evaluate to $\sin(0) = 0$ and $\cos(0) = 1$.

Inductive Step:\\
Assume the claim holds for all vectors $y$ with component sum $m-1$. Let $x$ be a vector with component sum $m$.
Let $u = \min\{v \mid x_v > 0\}$ and define $y = x - e_u$. The components of $y$ sum to $m-1$. By the induction hypothesis, there exist polynomials $S_y$ and $C_y$ with integer coefficients and degree at most $m-1$ that represent $\sin(\sum y_v \alpha_v)$ and $\cos(\sum y_v \alpha_v)$.

We define $S_x$ and $C_x$ as per the recursion:
\[
S_x := s_u C_y + c_u S_y \qquad C_x := c_u C_y - s_u S_y
\]
\begin{enumerate}
    \item Coefficients and Degree: Since $S_y$ and $C_y$ have integer coefficients, and $s_u, c_u$ are variables, $S_x$ and $C_x$ are also polynomials with integer coefficients. Their total degrees are bounded by:
    \[
    \deg(S_x) \le 1 + \max(\deg(C_y), \deg(S_y)) \le 1 + (m-1) = m
    \]
    The same bound holds for $\deg(C_x)$.
    \item Trigonometric Identity: By the angle‑addition identities and the induction hypothesis:
    \begin{align*}
    S_x&= \sin(\alpha_u)\cos\Big(\sum_{v=1}^p y_v\alpha_v\Big) + \cos(\alpha_u)\sin\Big(\sum_{v=1}^p y_v\alpha_v\Big) \\
    &= \sin\Big(\alpha_u + \sum_{v=1}^p y_v\alpha_v\Big) = \sin\Big(\sum_{v=1}^p x_v\alpha_v\Big)
    \end{align*}
    Similarly,
    \begin{align*}
    C_x &= \cos(\alpha_u)\cos\Big(\sum_{v=1}^p y_v\alpha_v\Big) - \sin(\alpha_u)\sin\Big(\sum_{v=1}^p y_v\alpha_v\Big) \\
    &= \cos\Big(\alpha_u + \sum_{v=1}^p y_v\alpha_v\Big) = \cos\Big(\sum_{v=1}^p x_v\alpha_v\Big)
    \end{align*}
\end{enumerate}
This completes the induction.
\end{proof}

\subsubsection{Piecewise-polynomial activations}

\begin{theorem}[Two-layer piecewise-polynomial activations]\label{thm:ppoly-upper}
Let $\sigma$ be as in Def.~\ref{def:ppoly}. For the two-layer MLP defined in the model setup, \[\mathrm{Ndim}(\mathcal{H}_{\Theta})\le 2dp\left(6\log(6dp)+\log(2eL)+2\log(epr)\right)=\widetilde{\mathcal{O}}(dp)
\]
\end{theorem}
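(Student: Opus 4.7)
The plan is to reduce to the binary pairwise growth function via Lem.~\ref{lem:growth_function} and then bound it with a two-stage sign-pattern count that exploits the piecewise-polynomial structure of $\sigma$. Let $n:=\mathrm{Ndim}(\mathcal{H}_\Theta)$. By Lem.~\ref{lem:growth_function}, $2^n \le \Pi_{\mathcal G_\Theta}(N)$ with $N=np(p-1)/2$. I will bound $\log_2 \Pi_{\mathcal G_\Theta}(N)$ and then invert the resulting inequality via Lem.~\ref{lem:absorb}.

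The central structural observation is that, for each $z=(x,i,j)\in\mathcal Z_{\text{pair}}$, the sign $g_\theta(z)=\sgn(s_i^\theta(x)-s_j^\theta(x))$ is determined by two tiers of polynomial signs in $\theta$. The first tier consists of the $d(L-1)$ \emph{piece selectors} $(Wx)_k-b_\ell$ ($k\in[d]$, $\ell\in[L-1]$), which are linear in $W$ and which jointly specify the piece assignment $\pi^*(z,\theta):[d]\to[L]$ used by $\sigma$ at each neuron. Once $\pi$ is fixed, the score difference equals the specific polynomial $q_{\pi,z}(\theta):=\sum_k(V_{i,k}-V_{j,k})\,P_{\pi(k)}((Wx)_k)$ of total degree at most $r+1$ in $\theta$.

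I count sign patterns in two stages. In \textbf{Stage A}, I bound the number of distinct piece patterns realized by $\theta$ on the $N$ sample points. The auxiliary binary class $\{\theta\mapsto\sgn((Wx)_k-b_\ell)\}$ is a $1$-combination of signs of degree-$1$ polynomials over the reduced parameter space $W\in\mathbb{R}^{dp}$ (since the selectors do not depend on $V$); applying Thm.~\ref{thm:AB83} with the $Nd(L-1)$ selectors bounds the number of piece patterns by $2(2eN(L-1))^{dp}$. In \textbf{Stage B}, fix any such piece pattern: the $N$ score differences become specific polynomials of degree at most $r+1$ in $\theta\in\mathbb R^{2dp}$, and a second invocation of Thm.~\ref{thm:AB83} (with $k=1$, degree $r+1$, $D=2dp$) bounds the within-cell sign patterns by $2(eN(r+1)/(dp))^{2dp}$, since restricting a polynomial's domain can only decrease the number of realized sign patterns. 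Multiplying and substituting $N=np(p-1)/2$ yields
\[
\log_2\Pi_{\mathcal G_\Theta}(N) \;\le\; dp\log_2\!\bigl(2eN(L-1)\bigr) \;+\; 2dp\log_2\!\bigl(eN(r+1)/(dp)\bigr) \;+\; \mathcal{O}(1).
\]
Expanding $\log_2 N = \log_2 n + 2\log_2 p + \mathcal{O}(1)$ yields an inequality of the form $n \le 3dp\log_2 n + B$ with $B=\mathcal{O}\bigl(dp\log(dp\,L\,r\,p)\bigr)$. Lem.~\ref{lem:absorb} then gives $n=\widetilde{\mathcal{O}}(dp)$, and careful bookkeeping of constants produces the stated $2dp\bigl(6\log(6dp)+\log(2eL)+2\log(epr)\bigr)$.

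The main obstacle is avoiding an $L^d$ blowup: a one-shot application of Thm.~\ref{thm:AB83} with $k=d(L-1)+L^d$ (to simultaneously handle both selector signs and all $L^d$ candidate piece-assigned polynomials) would insert an $L^d$ factor inside the logarithm, inflating the bound to $\Theta(d^2 p\log L)$ and missing the $\widetilde{\mathcal{O}}(dp)$ target. The two-stage decomposition sidesteps this by (i) counting piece patterns in the smaller $W$-space using only degree-$1$ polynomials, and (ii) treating, inside each cell, only the $N$ realized pairwise-difference polynomials rather than all $NL^d$ candidates.
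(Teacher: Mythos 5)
Your proposal is correct and follows essentially the same route as the paper's proof: both reduce to the pairwise growth function via Lem.~\ref{lem:growth_function}, then perform the same two-stage sign-pattern count (partition the $W$-space by the $d(L-1)$ linear piece selectors via Thm.~\ref{thm:AB83}, then within each cell apply Thm.~\ref{thm:AB83} again to the degree-$(r+1)$ polynomial score differences in $\theta\in\mathbb{R}^{2dp}$), and finally invert with Lem.~\ref{lem:absorb}. The only cosmetic difference is that you over-count the selectors as $Nd(L-1)$ rather than the paper's $nd(L-1)$ (the selectors depend only on $x$, not on the pair $(i,j)$), and you drop a harmless $1/(dp)$ factor in the Stage~A bound — both still yield a valid, if slightly looser, upper bound with the same $\widetilde{\mathcal O}(dp)$ conclusion.
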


\begin{proof}[Proof of Thm.~\ref{thm:ppoly-upper}]

Let $S=\{x^{(1)},\dots,x^{(n)}\}\subset\mathcal X_m$ be Natarajan-shattered. By Lem.~\ref{lem:growth_function}, this implies $2^n \le \Pi_{\mathcal G_{\Theta}}\!\bigl(n\binom{p}{2}\bigr)$.
The parameter space $W \in \mathbb{R}^{dp}$ is partitioned into regions by the zero sets of $\{w_i x^{(j)} - b_\ell\}$ for $j\in[n], i\in[d], \ell\in[L-1]$. The number of regions, $R_S$, is the number of sign patterns on a sample of size $m=nd(L-1)$ by affine functions of $W$.

Notice that $\left\{w\mapsto\operatorname{sgn}\left(w x-b_{\ell}\right):w\in \mathbb{R}^{1\times p},x\in S, \ell\in[L-1]\right\}\subset \left\{-1,+1\right\}^{\mathcal{X}_m}$ is a $1$-combination of $\operatorname{sgn}\left(\left\{w\mapsto\left(w x-b_{\ell}\right):w\in \mathbb{R}^{1\times p},x\in S, \ell\in[L-1]\right\}\subset\mathbb{R}^{\mathcal{X}_m}\right)$.

By Thm.~\ref{thm:AB83}, $R_S \le \max\left\{2^{dp},2\left(\frac{2e \cdot nd(L-1)}{dp}\right)^{dp}\right\}$. Within each region, $s_i^\theta(x) - s_j^\theta(x)$ is a polynomial in $\theta \in \mathbb{R}^{2dp}$ of degree at most $r+1$. Let $N = n\binom{p}{2}$, $D=2dp$. The growth function is bounded by the product of the number of regions and the maximum growth function within a region. Applying Thm.~\ref{thm:AB83} in each region:
\[    \Pi_{\mathcal G_{\Theta}}(N) 
    \le R_S\cdot \max_{R} \Pi_{R}(N)
    \le \max\left\{2^{dp},2\left(\frac{2e \cdot nd(L-1)}{dp}\right)^{dp}\right\}\max\left\{2^{2dp}, 2\left(\frac{eN(r+1)}{dp}\right)^{2dp}\right\}\]

Substituting $N = \frac{np(p-1)}{2}$ into the inequality $2^n \le \Pi_{\mathcal G_{\Theta}}(N)$ gives $2^n \le \left(2enL\right)^{dp}\left(enpr\right)^{2dp}.
$
Taking the logarithm of both sides yields 
\[
n \le 3dp\log(n)/\log(2)+dp(\log(2eL)+2\log(epr))/\log(2).
\]
Lem.~\ref{lem:absorb} implies that 
\[
n \le 2dp\left(6\log(6dp)+\log(2eL)+2\log(epr)\right)/\log(2)=\widetilde{\mathcal{O}}(dp).
\]
Take supreum over $S$ yields the result.

\end{proof}

\subsubsection{Trigonometric-polynomial activations}\label{sec:piecewise-trig}

\begin{theorem}[Two-layer trigonometric-polynomial activations]\label{thm:tpoly-upper}
Let $\sigma$ be as in Def.~\ref{def:tp-activation}. For the two-layer MLP from the model setup,
\begin{equation*}
\mathrm{Ndim}(\mathcal{H}_{\Theta}) \le 2dp\Bigl(6\log(6dp) + 2\log\bigl(e p (K m+1)\bigr)\Bigr) = \widetilde{\mathcal{O}}(dp).
\end{equation*}
\end{theorem}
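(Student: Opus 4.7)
The plan is to mirror the proof of Thm.~\ref{thm:ppoly-upper} by reducing to the growth function of the pairwise reduction class via Lem.~\ref{lem:growth_function}, polynomializing the pairwise score difference using Lem.~\ref{lem:trig_sum_poly}, applying the Warren-type sign-pattern bound of Thm.~\ref{thm:AB83}, and absorbing the resulting $\log n$ via Lem.~\ref{lem:absorb}. Because $\sigma$ is real-analytic on all of $\mathbb{R}$, the piecewise region-counting step present in Thm.~\ref{thm:ppoly-upper} is unnecessary and is replaced by a single global polynomialization.

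First, for any $S\subset\mathcal{X}_m$ of size $n$ that is Natarajan-shattered, Lem.~\ref{lem:growth_function} yields $2^n\le \Pi_{\mathcal{G}_\Theta}(N)$ with $N:=np(p-1)/2$. Next, introduce auxiliary variables $c_{lv}:=\cos(W_{lv})$, $s_{lv}:=\sin(W_{lv})$ for $l\in[d]$, $v\in[p]$. The Chebyshev identities $\cos(k\alpha)=T_k(\cos\alpha)$ and $\sin(k\alpha)=\sin(\alpha)\,U_{k-1}(\cos\alpha)$ — combined with Lem.~\ref{lem:trig_sum_poly} applied to the angle vector $(kW_{l,1},\dots,kW_{l,p})$ and the integer exponent vector $x\in\mathcal{X}_m$ (so $\sum_v x_v=m$) — imply that $\cos(k w_l\cdot x)$ and $\sin(k w_l\cdot x)$ are polynomials of total degree at most $Km$ in the $2p$ variables $(c_{l,\cdot},s_{l,\cdot})$. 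Summing over $k\le K$, each $\sigma(w_l\cdot x)$ is a polynomial of degree at most $Km$ in $(c_l,s_l)$, so the pairwise score difference
\[
s^\theta_i(x)-s^\theta_j(x)\;=\;\sum_{l=1}^d(V_{il}-V_{jl})\,\sigma(w_l\cdot x)
\]
is a polynomial of total degree at most $r:=Km+1$ in the enlarged parameters $(V,c,s)$.

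Third, viewing $\mathcal{G}_\Theta$ as a $1$-combination of signs of these polynomials and invoking Thm.~\ref{thm:AB83} with the effective parameter dimension $D=2dp$ (obtained either by working directly in the $2dp$-dimensional $(W,V)$ ambient space via the identifications $(c_{lv},s_{lv})\leftrightarrow W_{lv}$, or by passing through the $3dp$-dimensional enlargement and absorbing the $dp$-dimensional $c_{lv}^2+s_{lv}^2=1$ constraint — both routes yield the same $\widetilde{\mathcal{O}}(dp)$ rate), we obtain
\[
\Pi_{\mathcal{G}_\Theta}(N)\ \le\ \max\!\left\{2^{2dp},\ 2\!\left(\tfrac{2eN(Km+1)}{2dp}\right)^{2dp}\right\}.
\]
Substituting $N=np(p-1)/2$ into $2^n\le\Pi_{\mathcal{G}_\Theta}(N)$, taking logarithms, and applying Lem.~\ref{lem:absorb} to eliminate the $\log n$ on the right-hand side delivers the stated bound $n\le 2dp\bigl(6\log(6dp)+2\log(ep(Km+1))\bigr)=\widetilde{\mathcal{O}}(dp)$.

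The main obstacle is the degree accounting in the polynomialization step: one must combine the Chebyshev composition (contributing a factor of $k\le K$) with the multinomial expansion of Lem.~\ref{lem:trig_sum_poly} (contributing the factor of $m$ from $\|x\|_1=m$) to certify total degree $km\le Km$, then add $1$ for the linear dependence on $V$. The rest of the argument transports directly from the proof of Thm.~\ref{thm:ppoly-upper}, with the pleasant simplification that no linear-region count is needed.
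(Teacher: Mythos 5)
Your proposal is structurally identical to the paper's proof: growth-function reduction via Lem.~\ref{lem:growth_function}, polynomialization of the pairwise margins in auxiliary trigonometric variables, the Warren-type bound of Thm.~\ref{thm:AB83}, and log absorption via Lem.~\ref{lem:absorb}. For the degree accounting, the paper applies Lem.~\ref{lem:trig_sum_poly} directly to the scaled exponent vector $k x^{(i)}$ (whose non-negative integer entries sum to $km$), obtaining degree at most $km$ in a single step; your two-stage Chebyshev argument reaches the same $Km+1$ bound but is more roundabout than necessary.

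The one step that does not hold up as written is the claim that $D=2dp$. After relaxation, the pairwise margin is a polynomial in the $3dp$-dimensional vector $\bigl(V,(c_{j,v})_{j,v},(s_{j,v})_{j,v}\bigr)$; the map $W_{lv}\mapsto(\cos W_{lv},\sin W_{lv})$ is transcendental, so there is no sense in which one can ``work directly in the $2dp$-dimensional $(W,V)$ ambient space'' and still have polynomials to feed into Thm.~\ref{thm:AB83}. Nor can one ``absorb'' the $dp$ algebraic constraints $c_{lv}^2+s_{lv}^2=1$ to shrink $D$: Thm.~\ref{thm:AB83} counts sign patterns over the full Euclidean parameter space, and the correct move (which the paper makes explicit) is the opposite one --- \emph{drop} the constraints, which can only enlarge the growth function, and apply the theorem with $D=3dp$. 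This costs only constant factors and still yields $\widetilde{\mathcal{O}}(dp)$; the paper's own proof indeed derives the final bound with the coefficient $3dp$, so the $2dp$ appearing in the theorem statement is itself a small slip, not something your argument should try to reproduce via an unjustified dimension reduction.
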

\begin{proof}[Proof of Thm.~\ref{thm:tpoly-upper}]

Let $S=\{x^{(1)},\dots,x^{(n)}\}\subset\mathcal X_m$ be Natarajan-shattered. By Lem.~\ref{lem:growth_function}, this implies $2^n \le \Pi_{\mathcal G_{\Theta}}\!\bigl(n\binom{p}{2}\bigr)$.

For $j\in[d],v\in[p]$ set $c_{j,v}:=\cos(w_{j,v})$ and $s_{j,v}:=\sin(w_{j,v})$, and regard
\begin{equation*}
a := \bigl(V,(c_{j,v})_{j,v},(s_{j,v})_{j,v}\bigr) \in \mathbb{R}^{3dp}
\end{equation*}
as the (relaxed) parameter vector; ignoring the constraints $c_{j,v}^2+s_{j,v}^2=1$ can only increase the growth function. For any $(i,y\neq y')$,
\begin{equation*}
s_y^{\theta}(x^{(i)})-s_{y'}^{\theta}(x^{(i)}) = \sum_{j=1}^{d}(V_{y j}-V_{y' j})\sigma(\langle w_j,x^{(i)}\rangle).
\end{equation*}
Writing $\sigma$ as in Def.~\ref{def:tp-activation} and applying Lem.~\ref{lem:trig_sum_poly} to $kx^{(i)}$ shows that each term $\cos(k\langle w_j,x^{(i)}\rangle)$ and $\sin(k\langle w_j,x^{(i)}\rangle)$ is a polynomial in $\bigl((c_{j,v})_v,(s_{j,v})_v\bigr)$ of degree at most $km\le Km$. Hence every pairwise margin is a polynomial in $a$ of degree at most $ K m + 1$. 

The reduction class $\mathcal{G}_{\Theta}$ is a $1$-combination of $\operatorname{sgn}(\mathcal F)$ with $\mathcal F$ being a family of polynomials of degree at most $Km+1$ in $D=3dp$ parameters. Applying Thm.~\ref{thm:AB83} with $N=n\binom{p}{2}$, $k=1$, $r=Km+1$,
\begin{equation*}
\Pi_{\mathcal{G}_{\Theta}}(N) \le \max\Bigl\{2^{D}, 2\Bigl(\frac{2 e N r}{D}\Bigr)^{D}\Bigr\}\leq 2\left(pKnm\right)^{3dp}.
\end{equation*}
Combine with $2^n\le \Pi_{\mathcal{G}_{\Theta}}(N)$, take logs: $n\log(2)\leq  \log(2)+3dp \log(pKm)+3dp\log(n)$.
Use Lem.~\ref{lem:absorb} to absorb the $\log n$ term, yielding
\begin{equation*}
n \le 12dp\log(6dp)/\log(2)+2\left(\log(2)+3dp \log(pKm)\right)/\log(2)=\widetilde{\mathcal{O}}\left(dp\right)
\end{equation*}
Taking the supremum over shattered $S$ gives the claim.
\end{proof}

\subsubsection{Rational-exponential activations}\label{sec:exp-based}

\begin{theorem}[Two-layer polynomial–rational–exponential activations]\label{thm:exp-upper}
Let $\sigma$ be as in Def.~\ref{def:exp-activation}. For the two-layer MLP from the model setup,
\begin{equation*}
\mathrm{Ndim}(\mathcal{H}_{\Theta}) \le 2dp\Bigl(6\log(6dp) + 2\log\bigl(e p (d m + r + 1)\bigr)\Bigr) = \widetilde{\mathcal{O}}(dp).
\end{equation*}
\end{theorem}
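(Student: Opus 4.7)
The plan is to mirror the strategies used for Thms.~\ref{thm:ppoly-upper} and~\ref{thm:tpoly-upper}. Assume $S=\{x^{(1)},\dots,x^{(n)}\}\subset\mathcal X_m$ is Natarajan-shattered; by Lem.~\ref{lem:growth_function} it suffices to upper-bound $\Pi_{\mathcal G_{\Theta}}(n\binom{p}{2})$ via Thm.~\ref{thm:AB83}. The only new ingredient is expressing the sign of each pairwise margin as the sign of a polynomial in a suitably enlarged parameter vector, which I will do by an exponential reparametrization analogous to the trigonometric substitution $(c_{j,v},s_{j,v})=(\cos w_{j,v},\sin w_{j,v})$ used in the proof of Thm.~\ref{thm:tpoly-upper}.

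Introduce auxiliary variables $u_{j,v}:=e^{k w_{j,v}}$ for $j\in[d]$, $v\in[p]$. Because each $x^{(i)}$ has nonnegative integer entries with $\|x^{(i)}\|_1=m$,
\[
E_j^{(i)}\ :=\ e^{k\langle w_j,\,x^{(i)}\rangle}\ =\ \prod_{v=1}^{p} u_{j,v}^{x^{(i)}_v}
\]
is a monomial of total degree exactly $m$ in $(u_{j,v})_v$. Since $c\ge 0$, $\tau>0$, and $E_j^{(i)}>0$ whenever $u_{j,v}=e^{k w_{j,v}}$, the denominator $\prod_{j=1}^d (c E_j^{(i)}+\tau)$ is strictly positive at every realizable $(W,V)$, so multiplying the pairwise margin $s_y^\theta(x^{(i)})-s_{y'}^\theta(x^{(i)})$ by this factor preserves its sign and produces
\[
F_{i,y,y'}(V,w,u)\ =\ \sum_{j=1}^d (V_{yj}-V_{y'j})\,P(\langle w_j,x^{(i)}\rangle)\,(a E_j^{(i)}+b)\prod_{j'\ne j}(c E_{j'}^{(i)}+\tau).
\]
Treating $(V,w,u)\in\mathbb R^{3dp}$ as free parameters, i.e.\ dropping the constraint $u_{j,v}=e^{k w_{j,v}}$, only enlarges the set of realizable sign patterns on the pairwise-reduction test set, so the growth function of $\mathcal G_{\Theta}$ is upper-bounded by the sign-pattern count of the family $\{F_{i,y,y'}\}$ on $\mathbb R^{3dp}$.

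In this enlarged parameter vector, $F_{i,y,y'}$ has degree at most $1$ in $V$, at most $\deg P=r$ in $w$ (from the factor $P(\langle w_j,x^{(i)}\rangle)$), and at most $dm$ in $u$ (one factor of degree $m$ per hidden unit: $m$ from $a E_j^{(i)}+b$ and $(d-1)m$ from $\prod_{j'\ne j}(c E_{j'}^{(i)}+\tau)$), giving total degree at most $dm+r+1$ in $D=3dp$ variables. The reduction class $\mathcal G_{\Theta}$ is therefore a $1$-combination of $\operatorname{sgn}$ of polynomials of degree $\le dm+r+1$ in $D=3dp$ variables. Applying Thm.~\ref{thm:AB83} with $N=n\binom{p}{2}$, $k=1$, and polynomial degree $dm+r+1$, combining with $2^n\le \Pi_{\mathcal G_{\Theta}}(N)$, taking logarithms, and absorbing the resulting $\log n$ via Lem.~\ref{lem:absorb} yields the stated $\widetilde{\mathcal O}(dp)$ bound with the explicit constants.

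The main obstacle is the bookkeeping for the reparametrization. One must verify (i) that $E_j^{(i)}$ really is a polynomial of degree $m$ in $(u_{j,v})_v$, which crucially uses that inputs are nonnegative integers summing to $m$; this is precisely where the structure of $\mathcal X_m$ enters, paralleling the role of integer inputs in the trigonometric case via Lem.~\ref{lem:trig_sum_poly}; and (ii) that the denominator stays strictly positive along the image of $W\mapsto e^{kW}$, so the sign-preserving clearance is valid at every $\theta\in\Theta$. Once these are in place, the degree and parameter counts plug directly into the template established for Thms.~\ref{thm:ppoly-upper} and~\ref{thm:tpoly-upper}.
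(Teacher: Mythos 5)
Your proposal is correct and follows essentially the same route as the paper: the exponential reparametrization $u_{j,v}=e^{kw_{j,v}}$, sign-preserving clearance of the strictly positive denominator, the degree count $dm+r+1$ in $D=3dp$ relaxed parameters, and then Lem.~\ref{lem:growth_function}, Thm.~\ref{thm:AB83}, and Lem.~\ref{lem:absorb} in that order. The only difference is that you leave the final constant arithmetic as a sketch, but every structural step matches the paper's argument.
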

\begin{proof}[Proof of Thm.~\ref{thm:exp-upper}]
Let $S=\{x^{(1)},\dots,x^{(n)}\}\subset\mathcal{X}_m$ be Natarajan-shattered and set $N:=n\binom{p}{2}$. For each example $i$ and hidden unit $j$, put $z_{j,i}:=e^{k\langle w_j,x^{(i)}\rangle}>0$. Since $c\geq 0$ and $\tau> 0$, the product
\begin{equation*}
D_i(W):=\prod_{j=1}^{d}\bigl(c z_{j,i}+\tau\bigr) > 0.
\end{equation*}
Multiplying any pairwise margin $G_{i,y,y'}:=s_y^{\theta}(x^{(i)})-s_{y'}^{\theta}(x^{(i)})$ by $D_i(W)$ preserves its sign and yields
\begin{equation*}
\widehat{G}_{i,y,y'}(W,V) = D_i(W) G_{i,y,y'}(W,V) = \sum_{j=1}^{d}(V_{y j}-V_{y' j})P(\langle w_j,x^{(i)}\rangle)(a z_{j,i}+b)\prod_{\ell\neq j}(c z_{\ell,i}+\tau).
\end{equation*}
Introduce relaxed variables $u_{j,v}:=e^{k w_{j,v}}\in(0,\infty)$. Then
\begin{equation*}
z_{j,i}=e^{k\langle w_j,x^{(i)}\rangle}=\prod_{v=1}^{p}u_{j,v}^{x^{(i)}_v},
\end{equation*}
a monomial of total degree $m$ in $U_j:=(u_{j,1},\dots,u_{j,p})$. Consequently, each summand in $\widehat{G}_{i,y,y'}$ is a product of: (i) a linear term in $V$; (ii) the degree-$r$ polynomial $P(\langle w_j,x^{(i)}\rangle)$ in $W_j$; (iii) a factor $(a z_{j,i}+b)\prod_{\ell\neq j}(c z_{\ell,i}+\tau)$ of total degree $dm$ in the $U$-variables. Thus every $\widehat{G}_{i,y,y'}$ is a polynomial in
\begin{equation*}
a := \bigl(V,(u_{j,v})_{j,v},(w_{j,v})_{j,v}\bigr) \in \mathbb{R}^{3dp}
\end{equation*}
of degree at most $\rho:=dm+r+1$. Treating $a$ as the parameter vector, the reduction class $\mathcal{G}_{\Theta}$ is a $1$-combination of $\operatorname{sgn}(\mathcal F)$ with $\mathcal F$ being a family of polynomials of degree at most $\rho$ in $D=3dp$ parameters. Applying Thm.~\ref{thm:AB83} with $k=1$, $D=3dp$, $N=n\binom{p}{2}$,
\begin{equation*}
\Pi_{\mathcal{G}_{\Theta}}(N) \le \max\Bigl\{2^{D}, 2\Bigl(\frac{2 e N \rho}{D}\Bigr)^{D}\Bigr\}\leq \left(np(dm+r+1)\right)^{3dp}.
\end{equation*}
Combine with Lem.~\ref{lem:growth_function} and absorb the $\log n$ term via Lem.~\ref{lem:absorb} to obtain
\begin{equation*}
n \le 12dp\log(6dp/\log(2))/\log(2)+6dp\log(p(dm+r+1))/\log(2)=\widetilde{\mathcal{O}}(dp)\end{equation*}
Taking the supremum over shattered $S$ gives the claim.
\end{proof}

\subsubsection{Uniform convergence guarantees}

Let $\mathcal{H}_\sigma \subseteq [p]^{\mathcal{X}}$ be a multiclass hypothesis class with Natarajan-dimension $\mathrm{Ndim}(\mathcal{H}_\sigma)<\infty$. Let $h\in\mathcal{H}$, denote by $\mathbb{P}_{(x,y)\in\mathcal{D}}\left[h(x)\neq y\right]$ the population $0$–$1$ risk and by $\mathbb{P}_{(x,y)\in\mathcal{D_{\text{train}}}}\left[h(x)\neq y\right]$ the empirical $0$–$1$ risk computed from an i.i.d. sample of size $n$.

\begin{theorem}[The Multiclass Fundamental Theorem, Thm.~29.3 of \citep{ShalevShwartzBenDavid2014}, Uniform convergence]\label{thm:unif_conv}
There exists a universal constant $C>0$ such that, for every $\delta\in(0,1)$, with probability at least $1-\delta$,
\[
\sup_{h\in\mathcal{H}_\sigma}\big|\mathbb{P}_{(x,y)\in\mathcal{D}}\left[h(x)\neq y\right]-\mathbb{P}_{(x,y)\in\mathcal{D_{\text{train}}}}\left[h(x)\neq y\right]\big|
\;\le\;
C\,\sqrt{\frac{\mathrm{Ndim}(\mathcal{H}_\sigma)\log p+\log(1/\delta)}{n}}\,.
\]
\end{theorem}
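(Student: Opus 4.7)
The plan is to follow the classical VC-style route adapted to multiclass: bound the growth function in terms of the Natarajan dimension, then pass from expectation of the uniform deviation to a high-probability bound via symmetrization and a union bound over the finite number of labelings realized on the sample.

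First I would establish a Natarajan analogue of Sauer's lemma, showing that any class $\mathcal{H}\subseteq[p]^{\mathcal X}$ with $\mathrm{Ndim}(\mathcal{H})=d_N$ satisfies $\Pi_{\mathcal{H}}(n)\le\sum_{i=0}^{d_N}\binom{n}{i}p^{2i}\le(enp^2/d_N)^{d_N}$, where $\Pi_{\mathcal{H}}(n):=\max_{|S|=n}|\mathcal{H}_{|_S}|$. The cleanest derivation is by Pajor's shifting argument: reduce each hypothesis to one of a canonical family indexed by N-shattered subsets, bounding each subset's contribution by the number of pairs $(f_1,f_2)$ of distinct labels per element, which yields the extra factor $p^{2d_N}$ relative to the binary Sauer bound.

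Next I would perform the standard double-sample symmetrization, introducing a ghost sample $S'\sim\mathcal{D}_m^n$ and reducing
\[
\mathbb{P}\!\left[\sup_{h\in\mathcal{H}}|L_{\mathcal{D}}(h)-L_S(h)|>\varepsilon\right]
\le 2\,\mathbb{P}\!\left[\sup_{h\in\mathcal{H}}|L_{S'}(h)-L_S(h)|>\varepsilon/2\right]
\]
for $n\varepsilon^2\ge 2$. Conditioning on the combined multiset $S\cup S'$ of size $2n$, the restricted loss class $\{(x,y)\mapsto\mathbf{1}\{h(x)\neq y\}:h\in\mathcal H\}$ takes at most $\Pi_{\mathcal{H}}(2n)$ distinct patterns; a random uniform permutation of $S\cup S'$ into $(S,S')$ acts as sign flips, so Hoeffding plus a union bound gives
\[
\mathbb{P}\!\left[\sup_{h}|L_{S'}(h)-L_S(h)|>\varepsilon/2\,\Big|\,S\cup S'\right]\le 2\,\Pi_{\mathcal{H}}(2n)\exp(-n\varepsilon^2/8).
\]
Substituting the Natarajan-Sauer bound, solving for $\varepsilon$ so that the total failure probability is at most $\delta$, and absorbing $\log n$ factors into the constant $C$ yields the advertised bound with $\mathrm{Ndim}(\mathcal H_\sigma)\log p+\log(1/\delta)$ inside the square root.

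The main obstacle is the extension of Sauer's lemma to the multiclass setting: the Pajor-style shifting operation must preserve N-shattering (which requires two \emph{fixed} alternative labels per shattered point), and one has to account for the combinatorial blow-up from the $\binom{p}{2}$ choices of label pairs without losing more than a $\log p$ factor in the final bound. A secondary obstacle is the removal of an extra $\log n$ factor that a naive union bound produces; obtaining the cleaner form stated requires either chaining via the Dudley entropy integral against the $L^2(P_n)$ metric, or the refined peeling argument of Haussler that gives $\Pi_{\mathcal{H}}(n)\le(n/d_N)^{O(d_N)}$ with sharper constants. Once these combinatorial steps are in place, the rest of the proof is a routine application of symmetrization and Hoeffding's inequality.
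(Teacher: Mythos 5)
The paper does not prove Theorem~\ref{thm:unif_conv} at all---it is stated verbatim as Theorem~29.3 of \citep{ShalevShwartzBenDavid2014} and simply cited. So there is no in-paper proof to compare against. Your reconstruction follows the standard textbook argument: a Natarajan analogue of Sauer's lemma ($\Pi_{\mathcal{H}}(n)\le n^{d_N}p^{2d_N}$, which is Lemma~29.4 in Shalev-Shwartz and Ben-David), then symmetrization over a ghost sample, a permutation/Hoeffding argument conditional on the double sample, and a union bound over realized labelings. This is the correct route, and you correctly flag the two genuine technical subtleties: (i) the shifting argument needs care because N-shattering requires two \emph{fixed} witness labelings per point (the textbook actually proves its Lemma~29.4 by an inductive reduction to the binary case rather than by Pajor shifting, so you would need to verify that the shift preserves N-shattering if you go that route); and (ii) the naive union bound leaves a $\log n$ factor that the theorem statement omits, so chaining or a Haussler-style refinement is needed to match the clean form $C\sqrt{(d_N\log p+\log(1/\delta))/n}$. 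With those two points resolved as you indicate, your sketch reproduces the cited result.
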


\begin{proof}[Proof of Thm.~\ref{thm:uniform-tilde}]
By Thm.~\ref{thm:ppoly-upper}, \ref{thm:tpoly-upper}, ~\ref{thm:exp-upper}$, \mathrm{Ndim}(\mathcal{H}_\sigma)=\widetilde{\mathcal{O}}(dp)$. Substituting this into the multiclass uniform convergence bound (Thm.~\ref{thm:unif_conv}) yields
$$
\sup_{h\in\mathcal{H}_\sigma}\Bigl|\mathbb{P}_{(x,y)\sim\mathcal D}\big[h(x)\neq y\big]-\mathbb{P}_{(x,y)\sim\mathcal D_{\text{train}}}\big[h(x)\neq y\big]\Bigr|\ \le\ C\sqrt{\frac{\mathrm{Ndim}(\mathcal{H}_\sigma)\log p+\log(1/\delta)}{n}}
\ =\ \widetilde{\mathcal{O}}\!\left(\sqrt{\frac{dp+\log(1/\delta)}{n}}\right),
$$
where the $\log p$ factor is absorbed into $\widetilde{\mathcal{O}}(\cdot)$.
\end{proof}

\subsection{Lower bound of Natarajan-dimension}

Let $\Theta$ denote the backbone parameter space determined by the architecture.
The multiclass hypothesis class is
\[
\mathcal{H} =
\bigl\{\, h_{\theta,V} \;:\; \theta \in \Theta,\; V \in \mathbb{R}^{p \times d} \,\bigr\}.
\]

\begin{definition}[Associated binary class]\label{def:associated_binary_class}
    We consider the binary (realizable) subclass of halfspaces in the representation:
\[
\mathcal{M} \;=\; \Bigl\{\, x\mapsto \mathbf{1}\big\{\langle v,\;f_{\theta}(x)\rangle \geq  0\big\} \ :\ (\theta,v)\in\Theta\times\mathbb{R}^{d} \Bigr\}
\;\subseteq\;\{0,1\}^{\mathcal X}.
\]
\end{definition}

\begin{lemma}[VC-dimension is bounded by the Natarajan-dimension]\label{thm:pairwise_leq_natarajan_model}
For the multiclass hypothesis class $\mathcal H$ with $p\ge 2$,
\[
\mathrm{VCdim}\!\big(\mathcal{M}\big) \ \le\ \mathrm{Ndim}(\mathcal H).
\]
\end{lemma}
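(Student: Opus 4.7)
The plan is to show that any set VC-shattered by $\mathcal{M}$ is also N-shattered by $\mathcal{H}$, which directly yields $\mathrm{VCdim}(\mathcal{M})\le\mathrm{Ndim}(\mathcal{H})$. Fix a set $T=\{x^{(1)},\dots,x^{(n)}\}$ of size $n=\mathrm{VCdim}(\mathcal{M})$ shattered by $\mathcal{M}$. Using $p\ge 2$, take the distinguishing maps $f_1\equiv 1$ and $f_2\equiv 2$ on $T$. For each binary selector $b\in\{1,2\}^T$ encode it as the pattern $y\in\{0,1\}^n$ with $y_i=\mathbf{1}\{b(x^{(i)})=1\}$; shattering supplies $(\theta,v)\in\Theta\times\mathbb{R}^d$ with $\mathbf{1}\{\langle v,f_\theta(x^{(i)})\rangle\ge 0\}=y_i$ for every~$i$.

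The key construction turns this binary witness into a multiclass predictor: set $V\in\mathbb{R}^{p\times d}$ with first row $v$, second row $-v$, and rows $3,\dots,p$ equal to zero. With $a_i:=\langle v,f_\theta(x^{(i)})\rangle$, the score vector at $x^{(i)}$ is $(a_i,-a_i,0,\dots,0)$: whenever $a_i>0$ the unique argmax is label~$1$, and whenever $a_i<0$ it is label~$2$, so in both cases $h_{\theta,V}(x^{(i)})=b(x^{(i)})=f_{b(x^{(i)})}(x^{(i)})$. Repeating this for every selector $b$ exhibits $T$ as N-shattered by $\mathcal{H}$, which gives $\mathrm{Ndim}(\mathcal{H})\ge n$.

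The main obstacle is the tie case $a_i=0$. The weak inequality $\ge 0$ in $\mathcal{M}$'s definition only forces $a_i\ge 0$ on $y_i=1$ coordinates, so a zero there collapses all $p$ scores to zero and causes $\mathrm{uargmax}$ to return $\bot$, breaking the required match. I close this gap by upgrading the shattering witness to satisfy strict inequalities on both sides before running the construction above. Concretely, shattering also furnishes a companion witness $(\theta',v')$ for the complementary pattern $1-y$, and its strict negativity $\langle v',f_{\theta'}(x^{(i)})\rangle<0$ at every $y_i=1$ coordinate points to a direction $w$ (taken as $-v'$ when $\theta'=\theta$, or obtained by a linear-feasibility argument over $\mathbb{R}^d$ against the boundary features $\{f_\theta(x^{(i)}):a_i=0\}$ otherwise) along which a small perturbation $\tilde v=v+\eta w$ becomes strictly positive at every $y_i=1$ coordinate while preserving the strict negativities at $y_i=0$ coordinates. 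In residual degenerate configurations a generic perturbation of $\theta$, which avoids a zero-measure set of tied alignments, removes any remaining ties. This strict-realization step is the delicate technical piece; once it is in hand, the multiclass construction above completes the proof.
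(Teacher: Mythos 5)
Your plan follows the paper's route: fix the distinguishing pair $f_1\equiv 1$, $f_2\equiv 2$ and, for each selector $b$, convert the binary-shattering witness $(\theta,v)$ for the corresponding pattern into a $p\times d$ second layer. Your choice of $V$ (row $1=v$, row $2=-v$, remaining rows zero) is in fact more careful than the paper's (row $1=v_b$, remaining rows zero): with the paper's $V$ and $p>2$, a point $x$ where $\langle v,f_\theta(x)\rangle<0$ produces the score vector $(\text{negative},0,\dots,0)$, whose maximum $0$ is attained at the $p-1\ge 2$ zero coordinates, so $\mathrm{uargmax}$ returns $\bot$ rather than $2$. Your $\pm v$ pair correctly yields label $2$ in that case.

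However, the tie case $a_i:=\langle v,f_\theta(x^{(i)})\rangle=0$ (which $\mathcal{M}$'s $\ge 0$ convention permits on the $y_i=1$ coordinates) is a genuine obstruction, and your strict-realization step does not close it. The claimed "linear-feasibility" direction $w$ need not exist: if a boundary feature $f_\theta(x^{(i)})$ is the zero vector (for $\sigma=\mathrm{ReLU}$ this happens whenever all preactivations are nonpositive), no $w$ can satisfy $\langle w,f_\theta(x^{(i)})\rangle>0$. The companion witness $(\theta',v')$ generally lives at a different backbone parameter $\theta'\ne\theta$, so $-v'$ cannot be combined with $v$ inside a single representation $f_\theta$, and a "generic perturbation of $\theta$" is not shown to preserve the sign pattern you need. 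In fact the strict-realization claim is not a consequence of VC-shattering alone: take $d=1$, $\sigma=\mathrm{ReLU}$, and linearly independent $x_1,x_2$. The pattern $(1,0)$ on $(x_1,x_2)$ is realizable in $\mathcal{M}$ only with $\mathrm{ReLU}(w^\top x_1)=0$, since producing a $0$ label forces $v<0$ while $\mathrm{ReLU}\ge 0$, making the $1$-labeled coordinate a tie; yet with $d=1$ every score vector $V f_\theta(x)$ is a nonnegative scalar multiple of the fixed column $V$, so $\mathcal{H}$ can never output two distinct valid labels on $\{x_1,x_2\}$. That set is VC-shattered by $\mathcal{M}$ but not N-shattered by $\mathcal{H}$, so the inequality as stated needs an extra strictness hypothesis on $\mathcal{M}$'s shattering (or a tie-resolving decoder for $\mathcal{H}$) that neither your proof nor the paper's own proof supplies. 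The gap you flagged is real, and it is not removable by perturbation arguments alone.
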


\begin{proof}
Let $S\subseteq\mathcal X$ be a finite set that is VC-shattered by $\mathcal M$.

Fix $f_1,f_2\in[p]^S$ by $f_1(x)\equiv 1$ and $f_2(x)\equiv 2$ for all $x\in S$ (possible since $p\ge 2$).
Let $b:S\to\{1,2\}$ be an arbitrary selector. Define the induced binary labeling
\[
y_b(x)\;=\; \mathbf{1}\{b(x)=1\}\ \in\ \{0,1\}\qquad(x\in S).
\]
Since $S$ is VC-shattered by $\mathcal M$, there exist $\theta_b\in\Theta$ and $v_b\in\mathbb{R}^d$ such that
\[
y_b(x) \;=\; \mathbf{1}\big\{ \langle v_b, f_{\theta_b}(x)\rangle \geq  0 \big\}\quad\text{for all }x\in S.
\]
Construct $V_b\in\mathbb{R}^{p\times d}$ such that the first row is $v_b$, and all remaining rows are $0$.
Then, for each $x\in S$,
\[
h_{\theta_b,V_b}(x) \;=\; 
\begin{cases}
1, & \text{if } y_b(x)=1\ (b(x)=1),\\
2, & \text{if } y_b(x)=0\ (b(x)=2),
\end{cases}
\]
i.e., $h_{\theta_b,V_b}(x)=f_{b(x)}(x)$ for all $x\in S$.
Since $b$ was arbitrary, $S$ is Natarajan-shattered by $\mathcal H$ with witnesses $(f_1,f_2)$.
Therefore $|S|\le \mathrm{Ndim}(\mathcal H)$.
Taking the supremum over all such $S$ gives $\mathrm{VCdim}(\mathcal M)\le \mathrm{Ndim}(\mathcal H)$.

\end{proof}

\section{Capacity bounds in Tab.~\ref{tab:capacity}}
\label{app:capacity-sources}

\textbf{Notation and scope.}
All entries are for two-layer MLPs (one hidden layer) with $W$ trainable parameters and width $d$. Throughout, $\widetilde{\Theta}(\cdot)$ hides polylogarithmic factors in $W$, $d$, and the bound $M$ on the input.


\subsection{Two-layer MLP setup in Tab.~\ref{tab:capacity}}
\label{app:mlp-format}

\paragraph{Model.}\label{model_tab}
For width $d$, input dimension $n$, output dimension $K\ge 1$, and elementwise activation $\sigma$.
The score map is
\[
s_\theta(x)=V_2\,\sigma(V_1x) \in \mathbb{R}^K,
\qquad
V_1\in\mathbb{R}^{d\times n},\; V_2\in\mathbb{R}^{K\times d}
\]
Total parameters
\[
W \;=\; Kd+dn 
\]
and the table reports bounds as functions of $W$.

\paragraph{Input regimes.}
All constraints apply to the vector \emph{entering the first linear layer}.
\begin{enumerate}
\item Real-valued: $\mathcal{X}=\mathbb{R}^n$.
\item Integer-valued, bounded by $M$: $\mathcal{X}=\{x\in\mathbb{Z}^n:\|x\|_\infty\le M\}$.
\item Integer-valued, unbounded: $\mathcal{X}=\mathbb{Z}^n$.
\end{enumerate}

\paragraph{Representation, scores, and hypothesis classes.}
Let $\Theta=\{(V_1,V_2): V_1\in\mathbb{R}^{d\times n},\ V_2\in\mathbb{R}^{K\times d}\}$.
We write the learned representation and scores as
\[
f_\theta(x)=\sigma(V_1 x)\in\mathbb{R}^d,
\qquad
s_\theta(x)=V_2\,f_\theta(x)\in\mathbb{R}^K.
\]
For $K\ge2$ (multiclass), the hypothesis class is
\[
\mathcal H
=\Bigl\{\,h_\theta:\mathcal X\to[K]\cup\{\bot\},\quad
h_\theta(x)=\psi_{\mathrm{uargmax}}\!\big(s_\theta(x)\big)\ \Big|\ \theta\in\Theta \Bigr\},
\]
where $\psi_{\mathrm{uargmax}}(u)$ returns the index of the unique maximum of the vector $u$, or $\bot$ (invalid) if the maximum is not unique.

\begin{definition}[Associated binary class restated; Def.~\ref{def:associated_binary_class}]\label{def:associated_binary_class_re}
We consider the binary (realizable) subclass of halfspaces in the representation:
\[
\mathcal{M} \;=\; \Bigl\{\, x\mapsto \mathbf{1}\big\{\langle v,\;f_{\theta}(x)\rangle \geq 0\big\} \ :\ (\theta,v)\in\Theta\times\mathbb{R}^{d} \Bigr\}
\;\subseteq\;\{0,1\}^{\mathcal X}.
\]
\end{definition}

\paragraph{Bound on integer inputs.}
In the bounded–integer regime we fix $M\in\mathbb{N}$ and take
\[
\mathcal X=\{x\in\mathbb{Z}^n:\ \|x\|_\infty\le M\}.
\]
Throughout, $\widetilde{\Theta}(\cdot)$ and $\widetilde{\mathcal O}(\cdot)$ hide polylogarithmic factors in $W$, $d$, and (when applicable) $M$.

\paragraph{Output Type and Complexity Measure.}

Predictions are obtained from scores via a fixed decoder:
\begin{enumerate}
\item \textbf{VC-dimension (binary subclass):} take $K=1$ and $\psi_{\rm sign}(z)=\operatorname{sgn}(z)$ (assigning $\operatorname{sgn}(0)=1$); VCdim is measured on $\{\psi_{\rm sign}\!\circ s_\theta\}$.
\item \textbf{Natarajan dimension (multiclass):} take any $K\ge2$ and $\psi_{\mathrm{uargmax}}$ (strict uniqueness) as defined above; Ndim is measured on $\{\psi_{\mathrm{uargmax}}\!\circ s_\theta\}$.
\end{enumerate}

\paragraph{Scope of the table.}
Tab.~\ref{tab:capacity} concerns the general class above, only the total parameter count $W$ and the width $d$ matter for the rates. 

\subsection{Sources for the VC-dimension bounds}\label{subsec:VC-dimension}

\paragraph{Piecewise linear, real-valued ($\mathrm{VCdim} = \Theta(W\log W)$).}\label{tb:pl-real}
The nearly-tight bounds for piecewise-linear networks are summarized by \cite[Eq.~(2)]{BartlettHarveyLiawMehrabian2017}; for fixed depth $L=2$ this specializes to $\Theta(W\log W)$. 

\paragraph{Piecewise-polynomial, real-valued ($\mathrm{VCdim}=\Theta(W\log W)$).}\label{tb:pp-real}
 \cite[Thm.~8.8]{AnthonyBartlett2009} prove an upper bound of $\mathcal{O}(WL\log W + WL^2)$ for networks with piecewise-polynomial activations of bounded degree and a bounded number of pieces; in the depth-$2$ case this simplifies to $\mathcal{O}(W\log W)$. A matching lower bound of $\Omega(W\log W)$ for  two-layer linear-threshold networks (a special case with degree~$0$) appears in \cite[Thm.~6.4]{AnthonyBartlett2009}. Using a refined bit-extraction technique, \cite[Thm.~3]{BartlettHarveyLiawMehrabian2017} further gives an explicit construction achieving $\Omega(WL\log(W/L))$ for ReLU networks, which in particular gives $\Omega(W\log W)$ for depth-$2$ networks.

\paragraph{Pfaffian activations (incl. standard sigmoid), real-valued ($\mathrm{VCdim} = \mathcal{O}(d^2W^2)$).}\label{tb:pfaffian-real}
A general upper bound $\mathcal{O}\!\left(W^2k^2\right)$ for standard sigmoid networks is given in \cite[Thm.~8.13]{AnthonyBartlett2009}, where $k$ is the number of computation units; in a two-layer networks, $k=d$.
The Pfaffian extension follows from Khovanskii’s \textit{Fewnomials}: the theorem underlying Lemma 8.15 (see also \cite[\S 8.6]{AnthonyBartlett2009}) bounds the number of connected components (Betti numbers) of semi-Pfaffian sets defined by functions from a fixed Pfaffian chain. Plugging this component bound into the standard growth function argument used for the exponential case yields the same $\mathcal{O}(d^2W^2)$ VC-dimension bound for networks whose activations lie in a fixed Pfaffian chain (with order/degree independent of the data).

\paragraph{Standard sigmoid, real-valued ($\mathrm{VCdim} = \Omega(W\log W)$).}\label{tb:sigmoid-real}
The reduction from linear-threshold to smooth sigmoids \cite[Thm.~6.5]{AnthonyBartlett2009} implies that the two-layer linear-threshold lower bound \cite[Thm.~6.4]{AnthonyBartlett2009} carries over to standard sigmoid networks on the same finite set of inputs. This yields $\Omega(W)$lower bound, and $\Omega(W\log W)$ under the construction of ``bit extraction''\cite[Rmk. 4]{BartlettHarveyLiawMehrabian2017}.

\paragraph{Standard sigmoid, integer-valued bounded inputs ($\mathrm{VCdim} = \widetilde{\Theta}(W)$).}\label{tb:sigmoid-disc-bounded}
For two-layer standard sigmoid networks with integer-valued inputs and first-layer fan-in $\le N$, \cite[Thm.~8.11]{AnthonyBartlett2009} gives $\mathrm{VCdim} \le 2W\log_2(60ND)=\widetilde{\mathcal{O}}(W)$. The paragraph following the theorem constructs a two-layer linear-threshold network with $\mathrm{VCdim}=\Omega(W)$; \cite[Thm.~6.5]{AnthonyBartlett2009} transfers this lower bound to sigmoids. Hence the bound is $\widetilde{\Theta}(W)$.

\paragraph{Sine, integer-valued unbounded inputs ($\mathrm{VCdim} = \infty$).}\label{tb:sine-disc-unbounded}
\cite[Lemma~7.2]{AnthonyBartlett2009} shows that $\{x \mapsto \operatorname{sgn}(\sin(ax))\}$ has infinite VC-dimension. Thus, restricting to two labels, the corresponding multiclass Natarajan-dimension is also infinite.

\subsection{Natarajan-dimension lower bounds}
\label{sec:ndim-lower}

We now transfer known VC-dimension lower bounds for the binary subclass to the multiclass setting via Lem.~\ref{thm:pairwise_leq_natarajan_model}.

\begin{theorem}[Natarajan-dimension lower bounds for two-layer MLPs]
\label{thm:ndim-lower}
Consider the two-layer MLP family in App.~\ref{model_tab}, with width $d$, total parameter count $W=Kd+dn$, and $K\ge 2$ classes decoded by $\psi_{\rm argmax}$. Let $\mathcal H$ denote the resulting multiclass hypothesis class and let $\mathcal M$ be the associated binary subclass of halfspaces in the learned representation (Def.~\ref{def:associated_binary_class}). Then
\[
\mathrm{Ndim}(\mathcal H)\ \ge\ \mathrm{VCdim}(\mathcal M),
\]
and, in particular, for the activation/input regimes appearing in Tab.~\ref{tab:capacity} for which VC-dimension lower bounds are known, the following existential lower bounds hold:
\begin{enumerate}
\item \textbf{Piecewise linear, real-valued inputs:}
$\mathrm{Ndim}(\mathcal H)\ \ge\ \Omega\!\big(W\log W\big)$.
\item \textbf{Piecewise polynomial, real-valued inputs:}
$\mathrm{Ndim}(\mathcal H)\ \ge\ \Omega\!\big(W\log W\big)$.
\item \textbf{Standard sigmoid, real-valued inputs:}
$\mathrm{Ndim}(\mathcal H)\ \ge\ \Omega\!\big(W\log W\big)$.
\item \textbf{Standard sigmoid, integer-valued bounded inputs:}
$\mathrm{Ndim}(\mathcal H)\ \ge\ \Omega(W)$.
\item \textbf{Sine, integer-valued unbounded inputs:}
$\mathrm{Ndim}(\mathcal H)=\infty$. 
\end{enumerate}
All bounds are stated for the two-layer architecture in App.~\ref{model_tab}.
\end{theorem}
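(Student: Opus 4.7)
The first assertion, $\mathrm{Ndim}(\mathcal H)\ge \mathrm{VCdim}(\mathcal M)$, is already established in the excerpt as Lem.~\ref{thm:pairwise_leq_natarajan_model}, so the plan is to invoke that lemma once and then derive each of the five listed specializations as an immediate corollary by pairing it with the corresponding VC-dimension lower bound cataloged in App.~\ref{subsec:VC-dimension}. The conceptual picture is that $\mathcal M$ is the binary subclass obtained by replacing the $\mathrm{uargmax}$ decoder with a single halfspace readout over the representation $f_\theta(x)=\sigma(V_1x)$, so any set shattered in the binary sense by $\mathcal M$ is automatically Natarajan-shattered in the multiclass sense by $\mathcal H$ via the two-class witnesses $f_1\equiv 1$ and $f_2\equiv 2$ constructed in the proof of the lemma.

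\textbf{Catalog step.} For each regime I would proceed as follows. For piecewise-linear and piecewise-polynomial activations on real-valued inputs, I would cite the bit-extraction lower bound $\mathrm{VCdim}(\mathcal M)=\Omega(W\log W)$ of \cite{BartlettHarveyLiawMehrabian2017} specialized to depth two, together with the matching two-layer threshold construction of \cite[Thm.~6.4]{AnthonyBartlett2009}. For the standard sigmoid on real-valued inputs, the same $\Omega(W\log W)$ rate transfers from linear thresholds via the soft-threshold reduction \cite[Thm.~6.5]{AnthonyBartlett2009}. For the sigmoid on bounded-integer inputs, I would invoke the $\Omega(W)$ threshold shattering construction following \cite[Thm.~8.11]{AnthonyBartlett2009}, again lifted to sigmoid by \cite[Thm.~6.5]{AnthonyBartlett2009}. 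For sine on unbounded integers, Example~\ref{ex:sine-disc-unbounded} directly exhibits an infinite shattered set inside the one-parameter subfamily $\{x\mapsto \operatorname{sgn}(\sin(ax))\}\subset\mathcal M$. Chaining each VC lower bound through Lem.~\ref{thm:pairwise_leq_natarajan_model} then yields the claimed Natarajan lower bound in that regime.

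\textbf{Main obstacle.} I expect the only nontrivial bookkeeping to be the matching of parameter counts: the literature bounds are stated for a two-layer \emph{binary} network with parameter count $d(n+1)$, whereas the Natarajan bounds in the theorem are phrased in the multiclass count $W=d(K+n)$. Since $\mathcal M$ allows any first-layer matrix $V_1\in\mathbb R^{d\times n}$ paired with any readout $v\in\mathbb R^d$, every existing binary shattering construction embeds directly into $\mathcal M$ using only its $d(n+1)$ degrees of freedom, and the two counts agree up to a constant factor in the constant-$K$ (more generally $K=\mathcal O(n)$) regime underlying Tab.~\ref{tab:capacity}, so the $\Omega(W)$ and $\Omega(W\log W)$ rates transfer verbatim. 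The sine case requires no counting at all since $\infty$ is preserved under any reduction. Beyond this matching argument and the already-proved Lem.~\ref{thm:pairwise_leq_natarajan_model}, no new mathematical content is needed.
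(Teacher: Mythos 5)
Your proof is correct and follows the same two-step route as the paper's: invoke Lem.~\ref{thm:pairwise_leq_natarajan_model} to get $\mathrm{VCdim}(\mathcal M)\le\mathrm{Ndim}(\mathcal H)$, then instantiate each regime from the VC-dimension catalog in Sec.~\ref{subsec:VC-dimension}. Your explicit note that the binary lower bounds are phrased in the parameter count $d(n+1)$, which agrees with $W=d(K+n)$ up to a constant factor whenever $K=\mathcal O(n)$, is a useful clarification that the paper's proof glosses over by simply asserting $\mathcal M$ is ``exactly'' the binary class from the table.
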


\begin{proof}
By Lem.~\ref{thm:pairwise_leq_natarajan_model}, $\mathrm{VCdim}(\mathcal M)\le \mathrm{Ndim}(\mathcal H)$, where $\mathcal M$ is the binary subclass.

It remains to instantiate $\mathrm{VCdim}(\mathcal M)$ in each regime. By construction (Def.~\ref{thm:pairwise_leq_natarajan_model} and App.~\ref{model_tab}), $\mathcal M$ is exactly the binary two-layer network class used in the VC-dimension entries of Tab.~\ref{tab:capacity}. Therefore, from Sec.~\ref{subsec:VC-dimension}:
\begin{enumerate}
\item For piecewise-linear and piecewise-polynomial activations with real-valued inputs, $\mathrm{VCdim}(\mathcal M)\ge \Omega(W\log W)$, yielding the first two claims.
\item For standard sigmoids with real-valued inputs, $\mathrm{VCdim}(\mathcal M)\ge \Omega(W\log W)$, yielding the third claim.
\item For standard sigmoids with bounded integer inputs, $\mathrm{VCdim}(\mathcal M)\ge \Omega(W)$, yielding the fourth claim.
\item For sine activations with unbounded integer inputs, the binary subclass has infinite VC-dimension, hence $\mathrm{Ndim}(\mathcal H)=\infty$.
\end{enumerate}
These bounds are existential: for each $(W,d)$ there exist network parameters achieving the stated shattering.
\end{proof}
\subsection{Natarajan-dimension upper bounds}
\label{sec:ndim-upper}

We restate the Natarajan-dimension upper bounds for the two-layer MLP in App.~\ref{model_tab}. 
Throughout, the score map is $s_\theta(x)=V_2\,\sigma(V_1x)\in\mathbb{R}^K$ with width $d$, output size $K\ge 2$.
The proofs are essentially the same as in Thms.~\ref{thm:ppoly-upper}, \ref{thm:exp-upper}, and \ref{thm:tpoly-upper}, with only the notational substitution of the updated score map and parameter count.

\begin{theorem}[Piecewise-polynomial activations, real-valued inputs]
\label{thm:ppoly-upper_tab}
Assume the model of App.~\ref{model_tab} with scores $s_\theta(x)=V_2\,\sigma(V_1 x)\in\mathbb R^{K}$,
prediction by $\psi_{\mathrm{uargmax}}$, and total parameters $W=dn+Kd$.
Let $\sigma$ be piecewise-polynomial on $\mathbb R$ with at most $L$ pieces and maximal piece degree $r$, where $L,r$ are absolute constants (do not grow with $n,d,K$).
For the input domain $\mathcal X=\mathbb R^n$,
the multiclass hypothesis class $\mathcal H=\{\psi_{\mathrm{uargmax}}\!\circ s_\theta\}$ satisfies
\[
\mathrm{Ndim}(\mathcal H)\ \le\ \widetilde{\mathcal O}(W),
\]
where $\widetilde{\mathcal O}(\cdot)$ hides polylogarithmic factors in $W$, $d$ (and in the structural constants $L,r$).
\end{theorem}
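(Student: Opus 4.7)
The plan is to adapt the proof of Thm.~\ref{thm:ppoly-upper} to the more general two-layer setting of App.~\ref{app:mlp-format}, the only bookkeeping changes being that the parameter dimension is now $W=dn+Kd$ (rather than $2dp$) and the label set has size $K$ (rather than $p$). First I would invoke Lem.~\ref{lem:growth_function}: if $S=\{x^{(1)},\dots,x^{(N)}\}\subset\mathbb{R}^n$ is Natarajan-shattered by $\mathcal H$, then $2^N\le \Pi_{\mathcal G_\Theta}(N\,K(K-1)/2)$, so it suffices to bound the growth function of the pairwise reduction class $\mathcal G_\Theta$ at sample size $N':=N\,K(K-1)/2$.

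Next I would perform the two-stage region-plus-polynomial count. Partition the parameter space $\Theta\cong\mathbb{R}^W$ by the $N d(L-1)$ affine hyperplanes $\{\langle w_i,x^{(j)}\rangle=b_\ell\}$, which are each a $1$-combination of $\operatorname{sgn}$ applied to polynomials of degree $1$ in the $W$ parameters; Thm.~\ref{thm:AB83} then bounds the number of open sign-cells by $R_S\le\max\{2^{W},\,2(2eNd(L-1)/W)^{W}\}$. On any such cell, each preactivation $\langle w_i,x^{(j)}\rangle$ lies in a fixed piece of $\sigma$, so $\sigma$ acts as a polynomial of degree at most $r$ in $W$, and consequently each pairwise margin $s^\theta_y(x^{(j)})-s^\theta_{y'}(x^{(j)})$ is a polynomial in $\theta$ of degree at most $r+1$. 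A second application of Thm.~\ref{thm:AB83} inside each cell, with $D=W$, $k=1$, and degree $r+1$, bounds the number of sign patterns of these $N'$ pairwise margins by $\max\{2^W,\,2(2eN'(r+1)/W)^{W}\}$.

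Multiplying the two counts gives $\Pi_{\mathcal G_\Theta}(N')\le (C\,Nd L)^{W}\,(C\,NK r)^{W}$ for an absolute constant $C$, hence $2^N\le (C'\,N\,dKL r)^{2W}$. Taking logarithms yields $N\le 2W\log N + 2W\log(C'dKL r)$, and Lem.~\ref{lem:absorb} absorbs the $\log N$ term to produce $N\le \widetilde{\mathcal O}(W)$, with the hidden polylog depending on $W$, $d$, $K$, and the constants $L,r$ (which are fixed by hypothesis). Taking the supremum over Natarajan-shattered $S$ completes the bound.

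The proof is almost mechanical given the template in Thm.~\ref{thm:ppoly-upper}; the only real bookkeeping obstacle is correctly tracking the two distinct roles of $W$, namely as the ambient parameter dimension used in Thm.~\ref{thm:AB83} (which forces $D=W=dn+Kd$) and as the quantity appearing on the right-hand side of the final inequality. As long as this substitution $(2dp\leftrightarrow dn+Kd,\ p\leftrightarrow K)$ is made consistently, the logarithmic inversion via Lem.~\ref{lem:absorb} carries through unchanged and delivers $\mathrm{Ndim}(\mathcal H)=\widetilde{\mathcal O}(W)$.
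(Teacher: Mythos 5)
Your proof is correct and matches the paper's intended argument. The paper explicitly states (just before Thm.~\ref{thm:ndim-upper-unified}) that these appendix theorems are proved by the same two-stage region-plus-polynomial count as Thm.~\ref{thm:ppoly-upper}, with only the notational substitution $(2dp\mapsto dn+Kd,\ p\mapsto K)$; your proposal carries that substitution through consistently, applies Lem.~\ref{lem:growth_function}, Thm.~\ref{thm:AB83}, and Lem.~\ref{lem:absorb} in exactly the same order, and lands at $\widetilde{\mathcal O}(W)$ as required.
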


\begin{theorem}[Polynomial–rational–exponential activations (incl.\ logistic sigmoid), bounded integer-valued inputs]
\label{thm:exp-upper_tab}
Assume the model of App.~\ref{model_tab} with scores $s_\theta(x)=V_2\,\sigma(V_1 x)\in\mathbb R^{K}$,
prediction by $\psi_{\mathrm{uargmax}}$, and total parameters $W=dn+Kd$.
Let $\sigma$ be of the form in Def.~\ref{def:exp-activation}, i.e.,
\[
\sigma(t)=P(t)\,\frac{a e^{k t}+b}{c e^{k t}+\tau},
\]
with fixed scalars $k\in\mathbb R\setminus\{0\}$, $c\ge 0$, $\tau>0$, $a,b\in\mathbb R$, and $\deg P\le r$, where $r$ is an absolute constant (do not grow with $n,d,K,M$).
For the input domain $\mathcal X=\{x\in\mathbb Z^n:\|x\|_\infty\le M\}$ with $M\in\mathbb N$,
the multiclass hypothesis class $\mathcal H=\{\psi_{\mathrm{uargmax}}\!\circ s_\theta\}$ satisfies
\[
\mathrm{Ndim}(\mathcal H)\ \le\ \widetilde{\mathcal O}(W),
\]
where $\widetilde{\mathcal O}(\cdot)$ hides polylogarithmic factors in $W$, $d$, $M$ (and in the structural constant $r$).
\end{theorem}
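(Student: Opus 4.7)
\textbf{Proof plan for Thm.~\ref{thm:exp-upper_tab}.} The strategy mirrors the proof of Thm.~\ref{thm:exp-upper} verbatim, with three notational substitutions ($V_1,V_2$ for $W,V$; input dimension $n$ in place of $p$; output size $K$ in place of $p$ so that $W=dn+Kd$) and one genuine technical adaptation to accommodate bounded integer inputs whose coordinates can be \emph{negative}. First I would fix a Natarajan-shattered set $S=\{x^{(1)},\dots,x^{(N)}\}\subset\mathcal X$ and apply Lem.~\ref{lem:growth_function} to get $2^{N}\le \Pi_{\mathcal G_{\Theta}}\!\bigl(N\binom{K}{2}\bigr)$, reducing the problem to bounding the growth function of the pairwise reduction class $\mathcal G_{\Theta}$.

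Next I would clear denominators exactly as in Thm.~\ref{thm:exp-upper}: set $z_{j,i}:=e^{k\langle (V_1)_j,x^{(i)}\rangle}$, observe that $D_i(V_1):=\prod_{j=1}^d(cz_{j,i}+\tau)>0$ (using $c\ge 0$ and $\tau>0$), and replace every pairwise margin $G_{i,y,y'}=s^{\theta}_y(x^{(i)})-s^{\theta}_{y'}(x^{(i)})$ by the sign-preserving product $\widehat G_{i,y,y'}=D_i(V_1)\cdot G_{i,y,y'}$, which is a sum over $j$ of terms of the form $(V_{2,yj}-V_{2,y'j})\,P(\langle (V_1)_j,x^{(i)}\rangle)\,(az_{j,i}+b)\prod_{\ell\ne j}(cz_{\ell,i}+\tau)$.

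The \emph{main technical obstacle}, and the only substantive deviation from Thm.~\ref{thm:exp-upper}, is that when I introduce the relaxation $u_{j,v}:=e^{k(V_1)_{j,v}}\in(0,\infty)$, the identity $z_{j,i}=\prod_{v=1}^{n}u_{j,v}^{x^{(i)}_v}$ produces a \emph{Laurent} monomial, since in the new regime $x^{(i)}_v$ may be negative. To handle this I would multiply each $\widehat G_{i,y,y'}$ further by the positive quantity $\prod_{j=1}^{d}\prod_{v=1}^{n}u_{j,v}^{M}$; this preserves signs and raises every exponent $x^{(i)}_v\in\{-M,\dots,M\}$ to $x^{(i)}_v+M\in\{0,\dots,2M\}$, turning each $z_{j,i}$ (after multiplication) into a genuine polynomial in the $u$-variables of total degree at most $2nM$. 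The cleared pairwise margin is then a polynomial in $a:=\bigl(V_2,(u_{j,v})_{j,v},(V_1)_{j,v}\bigr)\in\mathbb R^{D}$ with $D=Kd+dn+dn\le 2W+Kd$, of total degree at most $\rho:=O(dnM+r)=\widetilde{O}(WM)$ (linear factor from $V_2$, degree $r$ from $P(\langle(V_1)_j,x^{(i)}\rangle)$, and the polynomial degree from the product of the $d$ cleared factors).

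Finally I would apply Thm.~\ref{thm:AB83} with parameters $k=1$, $D$, and $\rho$ to the $1$-combination class $\mathcal G_{\Theta}\subseteq\{-1,+1\}^{\mathcal Z_{\rm pair}}$ to obtain
\[
\Pi_{\mathcal G_{\Theta}}\!\bigl(N\tbinom{K}{2}\bigr)\ \le\ \max\Bigl\{2^{D},\,2\bigl(2eN K^{2}\rho/D\bigr)^{D}\Bigr\},
\]
combine with $2^{N}\le\Pi_{\mathcal G_{\Theta}}\!\bigl(N\binom{K}{2}\bigr)$, take logarithms to get a bound of the form $N\le c_1 D\log N + c_2 D\log(K\rho)$, and invoke Lem.~\ref{lem:absorb} to absorb the $\log N$ term. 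This yields $N=\widetilde{\mathcal O}(W)$ with polylog factors in $W,d,M$; taking the supremum over all Natarajan-shattered $S$ gives $\mathrm{Ndim}(\mathcal H)\le\widetilde{\mathcal O}(W)$, as claimed. The only new bookkeeping relative to Thm.~\ref{thm:exp-upper} is the $M$-dependence, which enters $\rho$ through the Laurent-clearing step and is harmless since it contributes only a $\log M$ factor hidden in $\widetilde{\mathcal O}(\cdot)$.
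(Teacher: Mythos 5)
Your proposal is essentially the paper's intended argument (the paper states the proofs of Thms.~\ref{thm:ppoly-upper_tab}--\ref{thm:tpoly-upper_tab} are the same as Thms.~\ref{thm:ppoly-upper}--\ref{thm:exp-upper} ``with only the notational substitution''), and your reduction via Lem.~\ref{lem:growth_function}, denominator clearing, the relaxation $u_{j,v}:=e^{k(V_1)_{j,v}}$, Thm.~\ref{thm:AB83}, and Lem.~\ref{lem:absorb} follow the original template exactly. However, you also go beyond the paper's remark by correctly identifying a genuine technical point that the paper glosses over: in Thm.~\ref{thm:exp-upper} the inputs lie in $\{0,\dots,m\}^p$ so every $z_{j,i}=\prod_v u_{j,v}^{x^{(i)}_v}$ is already a genuine monomial, whereas on $\{x\in\mathbb Z^n:\|x\|_\infty\le M\}$ the coordinates can be negative and $z_{j,i}$ becomes a Laurent monomial, so the cleared margin $\widehat G_{i,y,y'}$ is not directly a polynomial in the relaxed parameter vector. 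Your fix---multiplying each $\widehat G_{i,y,y'}$ by the strictly positive factor $\prod_{j,v}u_{j,v}^{M}$, which is sign-preserving and shifts every exponent into $\{0,\dots,2M\}$---is correct and gives a bona fide polynomial of degree $O(dnM+r)=\widetilde{\mathcal O}(WM)$ in $D=Kd+2dn=O(W)$ parameters, after which Thm.~\ref{thm:AB83} and Lem.~\ref{lem:absorb} apply as you describe and the $M$-dependence survives only as a $\log M$ factor absorbed into $\widetilde{\mathcal O}(\cdot)$. This is a real improvement in rigor over the paper's one-line justification, and the overall argument is sound.
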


\begin{theorem}[Trigonometric-polynomial activations, bounded integer-valued inputs]
\label{thm:tpoly-upper_tab}
Assume the model of App.~\ref{model_tab} with scores $s_\theta(x)=V_2\,\sigma(V_1 x)\in\mathbb R^{K}$,
prediction by $\psi_{\mathrm{uargmax}}$,
and total parameters $W=dn+Kd$.
Let $\sigma$ be a trigonometric polynomial of degree at most $T$, where $T$ is an absolute constant.
For the input domain $\mathcal X=\{x\in\mathbb Z^n:\|x\|_\infty\le M\}$ with $M\in\mathbb N$,
the multiclass hypothesis class $\mathcal H=\{\psi_{\mathrm{uargmax}}\!\circ s_\theta\}$ satisfies
\[
\mathrm{Ndim}(\mathcal H)\ \le\ \widetilde{\mathcal O}(W),
\]
where $\widetilde{\mathcal O}(\cdot)$ hides polylogarithmic factors in $W$, $d$, $M$ (and in the structural constant $T$).
\end{theorem}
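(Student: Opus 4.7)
The plan is to mirror the proof of Thm.~\ref{thm:tpoly-upper}, adapting it to the updated model (scores $s_\theta(x)=V_2\sigma(V_1x)\in\mathbb R^K$, parameter count $W=dn+Kd$) and the bounded integer input domain $\mathcal X=\{x\in\mathbb Z^n:\|x\|_\infty\le M\}$. The three-step reduction outlined in App.~\ref{app:roadmap} applies: (i) bound the Natarajan dimension by the growth function of the pairwise reduction class via Lem.~\ref{lem:growth_function}; (ii) polynomialize each pairwise score difference by introducing the relaxed variables $c_{j,v}:=\cos(w_{j,v})$ and $s_{j,v}:=\sin(w_{j,v})$ (so that the reduction class becomes a $1$-combination of signs of polynomials in a vector of dimension $D=Kd+2dn=\mathcal O(W)$); (iii) apply the Goldberg–Jerrum / Anthony–Bartlett polynomial sign bound (Thm.~\ref{thm:AB83}) and absorb the resulting $\log$ term via Lem.~\ref{lem:absorb}.

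The main technical step is the polynomialization. For each hidden unit $j$, each frequency $k\le T$, and each input $x^{(i)}\in\mathcal X$, I need to express $\sin\!\big(k\langle w_j,x^{(i)}\rangle\big)$ and $\cos\!\big(k\langle w_j,x^{(i)}\rangle\big)$ as polynomials in $\{c_{j,v},s_{j,v}\}_{v\in[n]}$. Write $kx^{(i)}_v=\varepsilon_v|kx^{(i)}_v|$ with $\varepsilon_v\in\{\pm1\}$; since $\sin(-\alpha)=-\sin\alpha$ and $\cos(-\alpha)=\cos\alpha$, I can absorb the signs into the $s_{j,v}$ variables and then apply Lem.~\ref{lem:trig_sum_poly} to the non-negative integer vector $(|kx^{(i)}_v|)_{v\in[n]}$ whose total size is at most $T\|x^{(i)}\|_1\le TnM$. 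This yields polynomials of degree at most $TnM$ in the relaxed variables. Combining with the linear dependence on $V_2$ shows that every pairwise margin $s_y^\theta(x^{(i)})-s_{y'}^\theta(x^{(i)})$ is a polynomial of degree at most $\rho:=TnM+1$ in the relaxed parameter vector $a\in\mathbb R^{D}$ with $D=Kd+2dn$.

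Given this, the reduction class $\mathcal G_\Theta$ is a $1$-combination of signs of polynomials of degree $\rho$ in $D$ variables. Applying Thm.~\ref{thm:AB83} with $N=n_{\text{samples}}\binom{K}{2}$ gives $\Pi_{\mathcal G_\Theta}(N)\le 2(2eNr/D)^D$; combined with Lem.~\ref{lem:growth_function}, this yields $2^{n_{\text{samples}}}\le 2(2eNr/D)^{D}$. Taking logarithms, using $D=\Theta(W)$ and $\rho=\mathcal O(TnM)=\widetilde{\mathcal O}(M)$ (since $T$ is absolute and $n\le W/d\le W$), and invoking Lem.~\ref{lem:absorb} to absorb the $\log n_{\text{samples}}$ term, produces
\[
n_{\text{samples}}\;\le\;\widetilde{\mathcal O}\!\big(W\cdot \log(WKM)\big)\;=\;\widetilde{\mathcal O}(W),
\]
where the $\widetilde{\mathcal O}(\cdot)$ suppresses polylogarithmic factors in $W$, $d$, $M$ (and the absolute structural constant $T$). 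Taking the supremum over Natarajan-shattered sets gives the claim.

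The only real obstacle is the sign bookkeeping in the polynomialization: unlike in the proof of Thm.~\ref{thm:tpoly-upper}, where inputs lie in $\mathcal X_m\subseteq\mathbb Z_{\ge 0}^p$ so Lem.~\ref{lem:trig_sum_poly} applies directly, here $x^{(i)}_v$ can be negative, so one must either (a) fold the signs into the $s_{j,v}$ variables as above, or equivalently (b) prove a signed variant of Lem.~\ref{lem:trig_sum_poly}. Either route is elementary and does not affect the degree bound $TnM$.
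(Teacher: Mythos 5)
Your proposal is correct and mirrors the paper's intended proof: the paper simply notes in Sec.~\ref{sec:ndim-upper} that the argument of Thm.~\ref{thm:tpoly-upper} carries over with notational substitution ($V_2$ for $V$, $V_1$ for $W$, $K$ classes, input dimension $n$, parameter count $W=dn+Kd$, relaxed dimension $D=Kd+2dn=\Theta(W)$, and degree bound $\rho=TnM+1$ in place of $Km+1$). The one substantive detail you add — that for $\mathcal X=\{x\in\mathbb Z^n:\|x\|_\infty\le M\}$ the coordinates may be negative, so Lem.~\ref{lem:trig_sum_poly} (stated for non-negative integer multi-indices) needs a sign-absorption step before it applies — is a genuine wrinkle that the paper's ``only the notational substitution'' remark glosses over, and your fix of folding $\mathrm{sign}(x^{(i)}_v)$ into the $s_{j,v}$ variables (legal since $\cos$ is even and $\sin$ is odd) is correct and leaves the degree bound unchanged.
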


Immediate by combining Thms.~\ref{thm:ppoly-upper_tab}, \ref{thm:exp-upper_tab}, and \ref{thm:tpoly-upper_tab}, we have 
\begin{theorem}[$\widetilde{\mathcal O}(W)$ Natarajan dimension upper bound for two-layer MLPs]\label{thm:ndim-upper-unified}
Consider the model of App.~\ref{model_tab} with scores $s_\theta(x)=V_2\,\sigma(V_1 x)\in\mathbb R^{K}$, where $V_1\in\mathbb R^{d\times n}$, $V_2\in\mathbb R^{K\times d}$, and total parameters $W=dn+Kd$. Prediction is by $\psi_{\mathrm{uargmax}}$. Assume throughout that all structural constants below are absolute (do not grow with $n,d,K,M$):
\begin{enumerate}
\item $\sigma$ is piecewise-polynomial with at most $L$ pieces and maximal piece degree $r$ on $\mathbb R$, and $\mathcal X=\mathbb R^n$;
\item $\sigma$ is polynomial–rational–exponential whose polynomial factor $P(t)$ has degree at most $s$, and $\mathcal X=\{x\in\mathbb Z^n:\|x\|_\infty\le M\}$;
\item $\sigma$ is a trigonometric-polynomial of degree at most $T$, and $\mathcal X=\{x\in\mathbb Z^n:\|x\|_\infty\le M\}$.
\end{enumerate}
Then the multiclass hypothesis class $\mathcal H=\{\psi_{\mathrm{uargmax}}\!\circ s_\theta\}$ satisfies
\[
\mathrm{Ndim}(\mathcal H)\ \le\ \widetilde{\mathcal O}(W),
\]
where $\widetilde{\mathcal O}(\cdot)$ hides polylogarithmic factors in $W$, $d$, and, when applicable, $M$ (as well as in the structural constants $L,r,T,s$).
\end{theorem}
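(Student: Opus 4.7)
The plan is to reduce the unified statement to the three specialized upper bounds proved immediately above, one for each activation/input regime; the proof is a straightforward case split on $(\sigma, \mathcal{X})$ since the wrapper statement is essentially a repackaging of Theorems~\ref{thm:ppoly-upper_tab}, \ref{thm:exp-upper_tab}, and \ref{thm:tpoly-upper_tab}.

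First I would verify that the score map $s_\theta(x) = V_2\,\sigma(V_1 x)$, the parameter count $W = dn + Kd$, and the decoding rule $\psi_{\mathrm{uargmax}}$ in Theorem~\ref{thm:ndim-upper-unified} match verbatim those used in the three specialized theorems, so no reparameterization of the hypothesis class is needed. I would then invoke each specialized result in the appropriate regime: case (i) follows from Theorem~\ref{thm:ppoly-upper_tab} (piecewise-polynomial $\sigma$ with absolute constants $L, r$ on $\mathcal{X} = \mathbb{R}^n$); case (ii) from Theorem~\ref{thm:exp-upper_tab} (polynomial--rational--exponential $\sigma$ with $\deg P \le s$ an absolute constant on bounded integer inputs); and case (iii) from Theorem~\ref{thm:tpoly-upper_tab} (trigonometric-polynomial $\sigma$ of absolute-constant degree $T$ on bounded integer inputs). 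Each of these already delivers $\mathrm{Ndim}(\mathcal H) \le \widetilde{\mathcal O}(W)$ with polylog factors in $W$, $d$, and (when applicable) $M$.

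The remaining step is to confirm that the three bounds can be absorbed under a single $\widetilde{\mathcal O}(\cdot)$. Since each theorem already hides polylog factors in the same list of variables --- $W$, $d$, and $M$ when applicable, together with the fixed structural constants --- taking the maximum across the three cases yields one uniform upper bound. The structural constants $L, r, s, T$ are absolute by the hypothesis of Theorem~\ref{thm:ndim-upper-unified} and are therefore swallowed into $\widetilde{\mathcal O}(\cdot)$.

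The \textbf{main obstacle} is really only notational bookkeeping: checking that the absolute-constant hypothesis in the unified statement is compatible with the regime-specific constants used in each of the three specialized theorems, and that the hidden polylog factors align. Because the hypothesis explicitly stipulates that these structural constants do not grow with $n, d, K, M$, this compatibility is automatic, and no fresh combinatorial or analytic argument (beyond the pairwise-reduction plus sign-pattern-counting machinery already used to prove the three specialized bounds) is required.
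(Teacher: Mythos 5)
Your proposal is correct and matches the paper's own (implicit) proof exactly: the paper introduces Thm.~\ref{thm:ndim-upper-unified} with the phrase ``Immediate by combining Thms.~\ref{thm:ppoly-upper_tab}, \ref{thm:exp-upper_tab}, and \ref{thm:tpoly-upper_tab},'' which is precisely the case split on $(\sigma,\mathcal X)$ that you carry out. The bookkeeping you flag (matching parameter count, decoding rule, and the absolute-constant hypothesis across the three regimes) is the right thing to check and is indeed all that is needed.
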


\section{High-margin interpolating solutions}\label{sec:high_margin_construction}

In this section, we present high-margin interpolating solutions for two-layer MLPs with sine and ReLU activations for a fixed length.

\subsection{Sine activation $\sigma(z)=\sin z$}\label{subsec:sine_high_margin_construction}

\begin{theorem}[High margin, $d=2p$]
There exists a construction with hidden dimension $d=2p$ and sine activation computes $\sum_{i=1}^m s_i\bmod p$ for all $x=(s_1,\cdots,s_m)\in\mathcal{X}_m$, achieving margin $\gamma=p$ and $\|V\|_2=\sqrt{p}$, $\|W\|_F\leq \pi\sqrt{2}p$.
\end{theorem}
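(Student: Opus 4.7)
The plan is to implement a full discrete Fourier basis on $\mathbb{Z}/p\mathbb{Z}$ with $d=2p$ hidden units, one neuron per character of frequency $k\in\{0,1,\ldots,p-1\}$ in each of two copies (a ``sine'' copy and a ``cosine'' copy). This lifts the width-$2$ single-frequency construction in the proof of Thm.~\ref{thm:sine_low_width_construction} to \emph{every} frequency in parallel, so that orthogonality of the $p$-th roots of unity forces the score of the correct class to $p$ and every incorrect score to $0$.

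First I will specify $W\in\mathbb{R}^{2p\times p}$ with rows indexed by pairs $(k,\tau)$, $k\in\{0,\ldots,p-1\}$, $\tau\in\{s,c\}$: set $W_{(k,s),r}:=\tfrac{2\pi k r}{p}\pmod{2\pi}$ (reduced to $(-\pi,\pi]$) and $W_{(k,c),r}:=\bigl(\tfrac{2\pi k r}{p}+\tfrac{\pi}{2m}\bigr)\pmod{2\pi}$ (same reduction). Because $x\in\mathbb{Z}^p$ and $\sin$ is invariant under integer-weighted $2\pi$-shifts of the row entries, the first copy yields $\sin((Wx)_{(k,s)})=\sin\bigl(\tfrac{2\pi k}{p}y(x)\bigr)$; and since $\sum_r x_r=m$ the extra phase sums to $\tfrac{\pi}{2m}\cdot m=\tfrac{\pi}{2}$, so the second copy yields $\sin((Wx)_{(k,c)})=\cos\bigl(\tfrac{2\pi k}{p}y(x)\bigr)$.

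Next I will choose $V\in\mathbb{R}^{p\times 2p}$ by $V_{\ell,(k,s)}:=\sin(\tfrac{2\pi k\ell}{p})$ and $V_{\ell,(k,c)}:=\cos(\tfrac{2\pi k\ell}{p})$. The cosine angle-subtraction identity then gives $s^\theta_\ell(x)=\sum_{k=0}^{p-1}\cos\bigl(\tfrac{2\pi k}{p}(y(x)-\ell)\bigr)$, which by orthogonality of characters of $\mathbb{Z}/p\mathbb{Z}$ equals $p$ if $\ell\equiv y(x)\pmod p$ and $0$ otherwise. Hence modular addition is realized exactly on $\mathcal{X}_m$ with margin $\gamma=p$.

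Finally I will verify the norm bounds. Each entry of $W$ has magnitude at most $\pi$ and $W$ contains $2p^2$ entries, so $\|W\|_F\le\pi p\sqrt{2}$. For $V$, a direct computation gives $(VV^\top)_{\ell\ell'}=\sum_{k=0}^{p-1}\cos\bigl(\tfrac{2\pi k}{p}(\ell-\ell')\bigr)=p\,\delta_{\ell\ell'}$, so $VV^\top=pI_p$ and $\|V\|_2=\sqrt{p}$. I do not anticipate a substantive obstacle; the only delicate point is synthesizing a cosine hidden activation from a bias-free sine nonlinearity, which is handled by the $m$-dependent phase shift $\tfrac{\pi}{2m}\cdot\|x\|_1=\tfrac{\pi}{2}$, making the construction tied to the fixed length $m$.
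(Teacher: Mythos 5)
Your proof is correct and essentially identical to the paper's: both build a full discrete Fourier basis with $2p$ hidden units, two per frequency, use the $\pi/(2m)$ phase shift combined with $\|x\|_1=m$ to synthesize cosine from a bias-free sine, invoke orthogonality of $p$-th roots of unity to get $s^\theta_\ell(x)=p\,\mathbf{1}\{\ell\equiv y(x)\}$, and compute $VV^\top=pI_p$ for the spectral norm. The only cosmetic difference is that you index frequencies over $k\in\{0,\ldots,p-1\}$ while the paper uses $k\in\{1,\ldots,p\}$; since $\cos(2\pi p\cdot t/p)=\cos(0)$ for integer $t$, these are the same construction.
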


\begin{proof}
Index hidden units by $h\in\{1,\dots,2p\}$ and group them as $(2k-1,2k)$ for frequencies $k\in\{1,\dots,p\}$. Let $\phi_k=\tfrac{2\pi k}{p}$.

\textbf{First-Layer Weights}
$W\in\mathbb{R}^{2p\times p}$. For each $k\in\{1,\dots,p\}$ and $r\in[p]$,
$$
W_{2k-1,r}=\bigl(\phi_k r\bigr)\bmod 2\pi\in[-\pi,\pi),\qquad
W_{2k,r}=\bigl(\phi_k r+\tfrac{\pi}{2m}\bigr)\bmod 2\pi\in[-\pi,\pi).
$$

Then 
$$
(Wx)_{2k-1}=\phi_k S,\qquad (Wx)_{2k}=\phi_k S+\tfrac{\pi}{2},
$$

and
$$
\sigma\!\bigl((Wx)_{2k-1}\bigr)=\sin(\phi_k S),\qquad
\sigma\!\bigl((Wx)_{2k}\bigr)=\cos(\phi_k S).
$$
The first layer satisfies $\|W\|_\infty\le \pi$, hence $\|W\|_F\le \pi\sqrt{(2p)p}=\pi\sqrt{2}\,p$.

\textbf{Second-Layer Weights}
$V\in\mathbb{R}^{p\times 2p}$. For each $q\in[p]$ and $k\in\{1,\dots,p\}$,
$$
V_{q,\,2k-1}=\sin(\phi_k q),\qquad
V_{q,\,2k}  =\cos(\phi_k q).
$$

\textbf{Verification}
For the $q$-th output,
$$
\begin{aligned}
s^\theta_q(x)
&=\sum_{k=1}^p\!\Big[\sin(\phi_k q)\sin(\phi_k S)+\cos(\phi_k q)\cos(\phi_k S)\Big]\\
&=\sum_{k=1}^p \cos\!\bigl(\phi_k(S-q)\bigr)
=\Re\!\left(\sum_{k=1}^p e^{\,i\frac{2\pi k}{p}(S-q)}\right)\\
&=\begin{cases}
p,& S\equiv q\pmod p,\\
0,& \text{otherwise}.
\end{cases}
\end{aligned}
$$

Hence $h_\theta(x)=S\bmod p$ with margin $\gamma=p$.
The construction achieves $100\%$ accuracy with width $d=2p$ and satisfies $\|W\|_\infty\le \pi$, $\|V\|_\infty\le 1$.
\end{proof}

\begin{lemma}[Singular values of $V$]
In the high-margin construction, all singular values of $V$ are exactly $\sqrt{p}$, so $\|V\|_2=\sqrt{p}$.
\end{lemma}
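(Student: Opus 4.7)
The plan is to compute $VV^{\!\top}\in\mathbb{R}^{p\times p}$ directly and show it equals $p\,I_p$, which immediately pins down all singular values. Since $V$ is a $p\times 2p$ matrix with more columns than rows, and the rank is $p$, it suffices to analyze the Gram matrix on the smaller side.

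First I would write out the $(q,q')$ entry of $VV^{\!\top}$ using the construction's formulas and collapse the sine/cosine pair via the angle-difference identity:
\[
(VV^{\!\top})_{q,q'}=\sum_{k=1}^{p}\!\bigl[\sin(\phi_k q)\sin(\phi_k q')+\cos(\phi_k q)\cos(\phi_k q')\bigr]
=\sum_{k=1}^{p}\cos\!\bigl(\phi_k(q-q')\bigr),
\]
where $\phi_k=2\pi k/p$. This is exactly the same trigonometric collapse used in the margin verification of the construction, so no new identity is needed.

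Next I would evaluate the resulting sum via a roots-of-unity argument: write $\sum_{k=1}^{p}\cos(\phi_k t)=\Re\sum_{k=1}^{p}e^{2\pi i k t/p}$, and note that $k=1,\dots,p$ runs through a complete residue system modulo $p$, so the exponential sum equals $p$ if $t\equiv 0\pmod p$ and $0$ otherwise. Setting $t=q-q'$ with $q,q'\in[p]$, this yields $(VV^{\!\top})_{q,q'}=p\cdot\mathbf{1}\{q=q'\}$, i.e.\ $VV^{\!\top}=p\,I_p$.

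Finally I would conclude that the eigenvalues of $VV^{\!\top}$ are all equal to $p$, hence the singular values of $V$ are all equal to $\sqrt{p}$; in particular $\|V\|_2=\sqrt{p}$. There is no real obstacle here: the only mild subtlety is being careful that the index range $k\in\{1,\dots,p\}$ (rather than $\{0,\dots,p-1\}$) still constitutes a complete residue system mod $p$, so the discrete Fourier orthogonality applies without modification.
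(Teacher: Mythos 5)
Your proposal is correct and follows essentially the same route as the paper: compute $VV^{\top}$ entrywise, collapse the sine/cosine products via the angle-difference identity, evaluate $\sum_{k=1}^p\cos(\phi_k(q-q'))$ by roots-of-unity orthogonality to get $p\,I_p$, and read off the singular values. Your explicit note that $k=1,\dots,p$ is a complete residue system mod $p$ is a nice sanity check but adds nothing beyond what the paper already does implicitly.
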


\begin{proof}
Compute $V V^\top$ entrywise. For $q,r\in[p]$,
$$
\begin{aligned}
(V V^\top)_{q r}
&=\sum_{k=1}^p\!\Big(\sin(\phi_k q)\sin(\phi_k r)+\cos(\phi_k q)\cos(\phi_k r)\Big)\\
&=\sum_{k=1}^p \cos\big(\phi_k(q-r)\big) \qquad\text{(using }\cos(a-b)=\cos a\cos b+\sin a\sin b\text{)}\\
&=\Re\!\left(\sum_{k=1}^p e^{\,i\frac{2\pi k}{p}(q-r)}\right)
\\
&=\begin{cases}
    0,&\text{ if }q-r\not\equiv 0 \mod p\\
    p,&\text{ if }q-r\equiv 0 \mod p\\
\end{cases}
\end{aligned}
$$
Therefore all eigenvalues of $V V^\top$ are exactly $p$, so all singular values of $V$ are exactly $\sqrt{p}$. In particular, the spectral norm is $\|V\|_2=\sqrt{p}$.
\end{proof}

	\subsection{ReLU activation $\sigma(z)=\mathrm{ReLU}(z)$}\label{subsec:relu_high_margin_construction}

For a multi-index $a=(a_1,\dots,a_s)\in\mathbb{Z}_{\ge 0}^s$, denote $|a|:= \sum_{i=1}^s a_i$.

\begin{lemma}[Polynomial sign polarization]\label{lem:polarization}
For $s\ge 1$,
\[
x_1x_2\cdots x_s
=\frac{1}{s!\,2^s}\sum_{\varepsilon\in\{\pm1\}^s}\Bigl(\prod_{i=1}^s \varepsilon_i\Bigr)\Bigl(\sum_{i=1}^s \varepsilon_i x_i\Bigr)^s.
\]
\end{lemma}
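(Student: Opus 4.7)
The plan is to expand the $s$-th power by the multinomial theorem and then interchange the order of summation, so that the outer sum over $\varepsilon\in\{\pm 1\}^s$ acts coordinatewise on the signs and kills most monomials.

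More precisely, I would first write
\[
\Bigl(\sum_{i=1}^s \varepsilon_i x_i\Bigr)^s=\sum_{\substack{a\in\mathbb{Z}_{\ge 0}^s\\ |a|=s}}\binom{s}{a}\prod_{i=1}^s \varepsilon_i^{a_i} x_i^{a_i},
\]
multiply by $\prod_i \varepsilon_i$, and swap the inner and outer sums. The $\varepsilon$-sum then factorizes as
\[
\sum_{\varepsilon\in\{\pm 1\}^s}\prod_{i=1}^s \varepsilon_i^{a_i+1}=\prod_{i=1}^s\Bigl(\sum_{\varepsilon_i\in\{\pm 1\}}\varepsilon_i^{a_i+1}\Bigr),
\]
and each factor equals $2$ when $a_i+1$ is even (i.e.\ $a_i$ odd) and $0$ otherwise. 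Hence only multi-indices $a$ with all $a_i$ odd contribute, giving a factor $2^s$.

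Next I would use the degree constraint $|a|=s$ together with $a_i\ge 1$ odd: the only way $s$ positive odd integers can sum to $s$ is $a=(1,1,\dots,1)$. For this unique multi-index, $\binom{s}{a}=s!$ and $\prod_i x_i^{a_i}=\prod_i x_i$. Collecting these factors gives
\[
\sum_{\varepsilon\in\{\pm 1\}^s}\Bigl(\prod_{i=1}^s \varepsilon_i\Bigr)\Bigl(\sum_{i=1}^s \varepsilon_i x_i\Bigr)^s=s!\,2^s\prod_{i=1}^s x_i,
\]
and dividing by $s!\,2^s$ yields the claimed identity.

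There is no real obstacle: the argument is a bookkeeping exercise once one notices that the factor $\prod_i \varepsilon_i$ shifts each exponent by one parity. The only point that requires a sentence of justification is the combinatorial step forcing $a=(1,\ldots,1)$, which follows immediately from the observation that $s$ positive odd integers can sum to $s$ only when each equals $1$.
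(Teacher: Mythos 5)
Your proposal is correct and follows essentially the same route as the paper: multinomial expansion of $(\sum_i \varepsilon_i x_i)^s$, interchange of summation, factorization of the $\varepsilon$-sum into the parity product $\prod_i((+1)^{a_i+1}+(-1)^{a_i+1})$, and the observation that $|a|=s$ with each $a_i$ odd forces $a=(1,\dots,1)$. No gaps.
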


\begin{proof}

Multinomial expansion gives
\[
\Bigl(\sum_{i=1}^s \varepsilon_i x_i\Bigr)^s
=\sum_{|k|=s}\frac{s!}{k_1!\cdots k_s!}\prod_{i=1}^s (\varepsilon_i x_i)^{k_i}.
\]
Multiplying by $\prod_{j=1}^s \varepsilon_j$ and summing over $\varepsilon$ gives
\[
\sum_{\varepsilon\in\{\pm1\}^s}\Bigl(\prod_{j=1}^s \varepsilon_j\Bigr)\Bigl(\sum_{i=1}^s \varepsilon_i x_i\Bigr)^s
=\sum_{|k|=s}\frac{s!}{k_1!\cdots k_s!}\,x_1^{k_1}\cdots x_s^{k_s}
\sum_{\varepsilon\in\{\pm1\}^s}\prod_{i=1}^s \varepsilon_i^{\,k_i+1}.
\]

Observe that 
\[\sum_{\varepsilon\in\{\pm1\}^s}\prod_{i=1}^s \varepsilon_i^{\,k_i+1}=\prod_{i=1}^s\bigl((+1)^{k_i+1}+(-1)^{k_i+1}\bigr)=
\begin{cases}
2^s,& \text{if each $k_i+1$ is even},\\
0,& \text{otherwise.}
\end{cases}\]
Because $|k|=s$ and each $k_i\ge 1$ must be odd so that $\sum_{\varepsilon\in\{\pm1\}^s}\prod_{i=1}^s \varepsilon_i^{\,k_i+1}\neq 0$, the only possibility is $k=(1,\dots,1)$. 

Therefore, 
\[ \sum_{\varepsilon\in\{\pm1\}^s}\Bigl(\prod_{i=1}^s\varepsilon_i\Bigr)\Bigl(\sum_{i=1}^s \varepsilon_i x_i\Bigr)^s=s!2^sx_1x_2\cdots x_s.
\]
Dividing by $s!\,2^s$ yields the stated identity.

\end{proof}

\begin{lemma}[Uniform ReLU–spline approximation of power functions]\label{lem:approx_power}
Let $s\ge 1$, and $\varepsilon>0$. Partition $[-1,1]$ uniformly with knots $z_k=-1+\frac{2k}{N}$, $k=0,1,\dots,N$. Let $g$ be the linear spline that interpolates $f_s(z)=z^s$ at these knots. Then
\[
\|f_s-g\|_{L_\infty([-1,1])}\ \le\ \frac{s(s-1)}{2N^2}.
\]

Moreover, $g$ admits an exact one-hidden-layer ReLU representation on $[-1,1]$ of the form
\[
\Phi_s(z)=\sum_{i=1}^{M} c_i\,\mathrm{ReLU}(a_i z-b_i),
\]
with at most $M\le N+1$ units and
\[
|a_i|\le 1,\qquad |b_i|\le 1,\qquad
|c_i|\le \max\!\left\{\,s+\tfrac12,\ \frac{2\,s(s-1)}{N}\right\}.
\]

Choosing
\[
N \ \ge\ \max\!\left\{1,\ \left\lceil\sqrt{\frac{s(s-1)}{2\varepsilon}}\right\rceil\right\}
\]
ensures $\|f_s-g\|_{L_\infty([-1,1])}\le\varepsilon$. Thus, the number of required ReLU units to achieve accuracy $\varepsilon$ is $M=O\!\left(\frac{s}{\sqrt{\varepsilon}}\right)$.

\end{lemma}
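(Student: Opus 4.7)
The plan proceeds in three steps: establish the uniform approximation error by classical linear-interpolation estimates; construct an explicit ReLU representation of the piecewise linear interpolant $g$; and verify the parameter bounds, from which the claimed choice of $N$ follows immediately. For the error, on each subinterval $[z_k, z_{k+1}]$ of length $h = 2/N$, the standard bound $\|f - L[f]\|_{L_\infty([z_k, z_{k+1}])} \le h^2 \|f''\|_{L_\infty([z_k, z_{k+1}])}/8$ combined with $|f_s''(z)| = |s(s-1) z^{s-2}| \le s(s-1)$ on $[-1,1]$ yields $\|f_s - g\|_\infty \le s(s-1)/(2N^2)$. Inverting this estimate gives $N \ge \lceil\sqrt{s(s-1)/(2\varepsilon)}\rceil$ and $M = \mathcal{O}(s/\sqrt{\varepsilon})$.

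For the representation, I will use
\[
\Phi_s(z) = a_0\, \mathrm{ReLU}(z+1) + b_0\, \mathrm{ReLU}(1-z) + \sum_{k=1}^{N-1} c_k\, \mathrm{ReLU}(z - z_k),
\]
which has $M = N+1$ units. On $[-1,1]$, the two boundary units reduce to the affine functions $z+1$ and $1-z$, whose linear span equals the full space of affine functions on $[-1,1]$; each interior unit $\mathrm{ReLU}(z - z_k)$ introduces a slope jump at $z_k$. Matching $g$ on the leftmost piece $[-1, z_1]$ forces $a_0 - b_0 = m_0$ (slope) and $a_0 + b_0 = g(-1) + m_0$ (intercept at $z=0$), giving $b_0 = (-1)^s/2$ and $a_0 = m_0 + (-1)^s/2$, where $m_k$ denotes the slope of $g$ on $[z_k, z_{k+1}]$. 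Matching the slope jumps at interior knots then forces $c_k = m_k - m_{k-1}$.

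For the parameter bounds, the inner weights satisfy $|a_i|, |b_i| \le 1$ by inspection since every knot lies in $[-1,1]$. The coefficient bounds reduce to two elementary estimates: $|b_0| = 1/2$ and $|a_0| \le |m_0| + 1/2 \le s + 1/2$, using $|m_0| \le \max_{[-1,1]}|f_s'| = s$ from the mean value theorem; and $|c_k| = |m_k - m_{k-1}| \le 2 s(s-1)/N$, which follows from the centered-second-difference identity $m_k - m_{k-1} = h\, f_s''(\eta_k)$ for some $\eta_k \in (z_{k-1}, z_{k+1})$ together with $|f_s''| \le s(s-1)$. Collecting everything yields $|c_i| \le \max\{s + 1/2,\ 2 s(s-1)/N\}$ for all units, as claimed.

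The only mildly subtle point is the appearance of the constant ``$s + 1/2$'' rather than $s$: a naive representation that isolates the value $g(-1) = (-1)^s$ via an $a_i = 0$ bias unit would give $|c| \le \max\{1, s\} = s$, but the cleaner two-boundary-unit representation above must absorb $g(-1)$ into the combination of $\mathrm{ReLU}(1-z)$ and $\mathrm{ReLU}(z+1)$, which inflates the boundary coefficient by $1/2$. Beyond this bookkeeping detail, the entire argument reduces to textbook linear-interpolation theory combined with a telescoping slope-jump expansion, so no substantial obstacle is anticipated.
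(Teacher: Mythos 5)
Your proof is correct and matches the paper's argument essentially step for step: the same textbook linear-interpolation error bound, the same two-boundary-units-plus-interior-hinges representation with coefficients $b_0=(-1)^s/2$, $a_0=m_0+(-1)^s/2$, $c_k=m_k-m_{k-1}$, the same mean-value bound $|m_0|\le s$, and the same centered-second-difference identity $m_k-m_{k-1}=h\,f_s''(\eta_k)$ giving $|c_k|\le 2s(s-1)/N$ (the paper phrases this via an explicit quadratic and Rolle's theorem, but it is the same estimate). No gaps.
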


\begin{proof}
The case $s=1$ is trivial since $f_1(z)=z$ is linear and equals its linear spline interpolant.

For $s\ge 2$, $f_s\in C^2([-1,1])$ with $f_s''(z)=s(s-1)z^{\,s-2}$ and $\|f_s''\|_{L_\infty([-1,1])}=s(s-1)$. Fix $z\in[z_k,z_{k+1}]$ and define
\[
\varphi(t)=f_s(t)-g(t)-\frac{f_s(z)-g(z)}{(z-z_k)(z-z_{k+1})}(t-z_k)(t-z_{k+1}).
\]
Then $\varphi(z_k)=\varphi(z_{k+1})=\varphi(z)=0$ and, by Rolle’s theorem, there exists $\xi_z\in(z_k,z_{k+1})$ such that
\[
|f_s(z)-g(z)|=\left|\frac{f_s''(\xi_z)}{2}\,(z-z_k)(z_{k+1}-z)\right|.
\]
Hence, with $h=\tfrac{2}{N}$,
\[
\max_{z\in[z_k,z_{k+1}]}\!|f_s(z)-g(z)|\le \tfrac12\,\|f_s''\|_{L_\infty}\,\frac{h^2}{4}
= \frac{s(s-1)}{2N^2}.
\]
Taking the maximum over $k$ yields the stated uniform bound.

For the ReLU representation, write $h=\tfrac{2}{N}$ and set the interval slopes
\[
m_k=\frac{z_{k+1}^{\,s}-z_k^{\,s}}{h},\quad k=0,\dots,N-1,\qquad
\gamma_j=m_j-m_{j-1}=\frac{z_{j+1}^{\,s}-2z_j^{\,s}+z_{j-1}^{\,s}}{h},\quad j=1,\dots,N-1.
\]
Then $g$ admits the exact expansion on $[-1,1]$:
\[
g(z)=c_1\,\mathrm{ReLU}(z+1)+c_2\,\mathrm{ReLU}(1-z)+\sum_{j=1}^{N-1}\gamma_j\,\mathrm{ReLU}(z-z_j),
\]
with
\[
c_2=\frac{f_s(-1)}{2}=\frac{(-1)^s}{2},\qquad
c_1=m_0+\frac{(-1)^s}{2},
\]
and $(a_1,b_1)=(1,-1)$, $(a_2,b_2)=(-1,-1)$, $(a_{j+2},b_{j+2})=(1,z_j)$ for $j=1,\dots,N-1$.

 Since $|z_j|\le 1$, we have $|a_i|\le 1$ and $|b_i|\le 1$. By the mean value theorem, $|m_0|\le\|f_s'\|_{L_\infty}=s$, hence $|c_1|\le s+\tfrac12$ and $|c_2|\le \tfrac12\le s+\tfrac12$.

 Moreover, define $\Psi\in C^2[z_j-h,z_j+h]$ where
 \[\Psi(t)=f_s(t)-\left(f_s(z_j)+\frac{f_s(z_j+h)-f_s(z_j-h)}{2h}\,(t-z_j)+\frac{f_s(z_j+h)-2f_s(z_j)+f_s(z_j-h)}{2h^2}\,(t-z_j)^2\right).\]
 
By Rolle's theorem,
\[
\gamma_j=\frac{f_s(z_j+h)-2f_s(z_j)+f_s(z_j-h)}{h}
= h\,f_s''(\xi_j)\quad\text{for some }\xi_j\in(z_j-h,z_j+h),
\]
so $|\gamma_j|\le h\,\|f_s''\|_{L_\infty}=\tfrac{2}{N}\,s(s-1)$. Counting two boundary hinges and $N-1$ interior hinges gives $M\le N+1$ units.

\end{proof}

\begin{lemma}[Polarized Newton expansion for $f_{\cos}$ and $f_{\sin}$]\label{lem:polarized_expansion}

Let $m\ge 1$. For angles $(\theta_1, \dots, \theta_m)$, define
$$C_k = \sum_{i=1}^m \cos(k\theta_i), \qquad S_k = \sum_{i=1}^m \sin(k\theta_i).$$

Let
$$
\mathcal{K}_m:=\Bigl\{k=(k_1,\dots,k_m)\in\mathbb{Z}_{\ge0}^m:\ \sum_{j=1}^m j\,k_j=m\Bigr\}.
$$
We index $$\varepsilon=(\varepsilon_{1,1},\ldots,\varepsilon_{1,k_1},\varepsilon_{2,1},\ldots,\varepsilon_{m,k_m})\in\{\pm1\}^{|k|}.$$

For $p=(p_1,\dots,p_m)$ we denote $p\leq k$ if $0\le p_j\le k_j$. Then
$$
\begin{aligned}
 f_{\cos}&=\cos\!\Bigl(\sum_{i=1}^m \theta_i\Bigr) =\sum_{k\in\mathcal{K}_m}\sum_{\substack{p\leq k\\|p|-|k|\text{ even}}}\sum_{\varepsilon\in\{\pm1\}^{|k|}}\alpha_{k,p,\varepsilon}G_{k,p,\varepsilon}^{|k|}\\
f_{\sin}&=\sin\!\Bigl(\sum_{i=1}^m \theta_i\Bigr)=\sum_{k\in\mathcal{K}_m}\sum_{\substack{p\leq k\\|p|-|k|\text{ odd}}}\sum_{\varepsilon\in\{\pm1\}^{|k|}}\beta_{k,p,\varepsilon}G_{k,p,\varepsilon}^{|k|}\\
\end{aligned}
$$
Where \begin{align*}
G_{k,p,\varepsilon}&=\sum_{j=1}^m\Bigl(\sum_{\ell=1}^{p_j}\varepsilon_{j,\ell}\Bigr)C_j
+\sum_{j=1}^m\Bigl(\sum_{\ell=p_j+1}^{k_j}\varepsilon_{j,\ell}\Bigr)S_j\\
    \alpha_{k,p,\varepsilon}&=\frac{(-1)^{m-\sum k_j}}{\prod_{j=1}^m k_j! j^{k_j}}(-1)^{\frac{|k|-|p|}{2}}\left( \prod_{j=1}^m \binom{k_j}{p_j} \right)\frac{1}{|k|!\,2^{|k|}}\Biggl(\prod_{j=1}^m\prod_{\ell=1}^{k_j}\varepsilon_{j,\ell}\Biggr)\\
    \beta_{k,p,\varepsilon}&=\frac{(-1)^{m-\sum k_j}}{\prod_{j=1}^m k_j! j^{k_j}}(-1)^{\frac{|k|-|p|-1}{2}}\left( \prod_{j=1}^m \binom{k_j}{p_j} \right)\frac{1}{|k|!\,2^{|k|}}\Biggl(\prod_{j=1}^m\prod_{\ell=1}^{k_j}\varepsilon_{j,\ell}\Biggr)
\end{align*}
and thus \[    
| \alpha_{k,p,\varepsilon} | = | \beta_{k,p,\varepsilon} | = \frac{1}{\left(\prod_{j=1}^{m} j^{k_j}\right)\left(\prod_{j=1}^{m} p_j!(k_j - p_j)!\right) |k|! 2^{|k|}}\leq \frac{1}{2}\]
Furthermore, for $N_{\text{tot}}(m)$, the total amount of triples $(k,p,\varepsilon)$ (with $k\in\mathcal K_m$, $p\le k$, $\varepsilon\in\{\pm1\}^{|k|}$),
$$
N_{\text{tot}}(m)=\sum_{k\in\mathcal K_m} 2^{|k|}\prod_{j=1}^m (k_j+1)\in[m2^m,13 m2^m].
$$

\end{lemma}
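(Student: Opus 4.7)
\textbf{Plan of proof for Lem.~\ref{lem:polarized_expansion}.}
The strategy is to pass through the complex exponential $e^{i\sum_j\theta_j}=\prod_j e^{i\theta_j}$, writing the right-hand side as the elementary symmetric polynomial $e_m(z_1,\dots,z_m)$ in $z_j=e^{i\theta_j}$, express $e_m$ in terms of the complex power sums $P_j=\sum_\ell z_\ell^{\,j}=C_j+iS_j$, and then recover $\cos$ and $\sin$ as the real and imaginary parts. The key algebraic identity is the EGF form of Newton's identities: from $\prod_j(1+z_j t)=\exp\!\bigl(\sum_{j\ge1}\tfrac{(-1)^{j-1}}{j}P_j t^j\bigr)$, extracting $[t^m]$ yields
\[
e_m \;=\; \sum_{k\in\mathcal K_m}\frac{(-1)^{m-\sum k_j}}{\prod_j k_j!\,j^{k_j}}\,\prod_j P_j^{k_j},
\]
which already supplies the prefactor $(-1)^{m-\sum k_j}/(\prod_j k_j! j^{k_j})$ appearing in $\alpha_{k,p,\varepsilon}$ and $\beta_{k,p,\varepsilon}$.

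Next I would expand each $P_j^{k_j}=(C_j+iS_j)^{k_j}$ via the binomial theorem, giving $\sum_{p_j\le k_j}\binom{k_j}{p_j}i^{k_j-p_j}C_j^{p_j}S_j^{k_j-p_j}$; this injects the factor $i^{|k|-|p|}\prod_j\binom{k_j}{p_j}$ into the sum. The resulting monomial $\prod_j C_j^{p_j}S_j^{k_j-p_j}$ has exactly $|k|$ factors, so I can apply Lem.~\ref{lem:polarization} with $s=|k|$, labelling the $|k|$ positions by $(j,\ell)$ with $\ell=1,\dots,p_j$ indexing copies of $C_j$ and $\ell=p_j+1,\dots,k_j$ indexing copies of $S_j$; the linear form $\sum_i\varepsilon_i x_i$ then becomes precisely $G_{k,p,\varepsilon}$, and one obtains the factor $\tfrac{1}{|k|!\,2^{|k|}}\prod_{j,\ell}\varepsilon_{j,\ell}$. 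Collecting real and imaginary parts of $e_m=\cos(\sum\theta_j)+i\sin(\sum\theta_j)$, the condition $|k|-|p|$ even (respectively odd) selects the cosine (respectively sine) contribution, and $i^{|k|-|p|}=(-1)^{(|k|-|p|)/2}$ or $i\cdot(-1)^{(|k|-|p|-1)/2}$ accordingly, producing exactly the stated $\alpha_{k,p,\varepsilon}$ and $\beta_{k,p,\varepsilon}$.

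For the magnitude bound I would simplify $|\alpha_{k,p,\varepsilon}|=|\beta_{k,p,\varepsilon}|$ by cancelling $\binom{k_j}{p_j}=k_j!/(p_j!(k_j-p_j)!)$ against the $\prod_j k_j!$ in the denominator, yielding $|\alpha|=\bigl[\prod_j j^{k_j}\prod_j p_j!(k_j-p_j)!\cdot|k|!\cdot 2^{|k|}\bigr]^{-1}\le 1/2$, since $\sum_j j k_j=m\ge 1$ forces $|k|\ge 1$ and hence $2^{|k|}\ge 2$, while all other factors are at least $1$. For the count $N_{\text{tot}}(m)=\sum_{k\in\mathcal K_m}2^{|k|}\prod_j(k_j+1)$, the lower bound $N_{\text{tot}}(m)\ge m\,2^m$ follows immediately from the single composition $k=(m,0,\dots,0)$, which contributes $2^m(m+1)$. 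For the upper bound I would use the generating function identity
\[
\sum_{m\ge 0}N_{\text{tot}}(m)\,z^m \;=\; \prod_{j\ge 1}\frac{1}{(1-2z^j)^2} \;=\; \frac{1}{(1-2z)^2}\cdot H(z), \qquad H(z):=\prod_{j\ge 2}\frac{1}{(1-2z^j)^2},
\]
and since $H$ has nonnegative coefficients with $\log H(1/2)=2\sum_{n\ge 1}\tfrac{1}{n(2^n-1)}$ a rapidly convergent series bounded by roughly $2.6$, one gets $H(1/2)\le 13$ by term-by-term estimation; convolving with the coefficients of $(1-2z)^{-2}$ then gives $N_{\text{tot}}(m)\le H(1/2)(m+1)2^m\le 13 m\,2^m$ for all sufficiently large $m$, with the finitely many remaining cases (small $m$) verified by direct computation. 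The main obstacle I anticipate is the careful bookkeeping in fusing the Newton-identity sign $(-1)^{m-|k|}$, the binomial power $i^{|k|-|p|}$, and the polarization sign $\prod\varepsilon_{j,\ell}$ into a single pair of coefficients matching the stated $\alpha_{k,p,\varepsilon}$ and $\beta_{k,p,\varepsilon}$, together with the explicit constant in the $N_{\text{tot}}(m)$ upper bound, which ultimately reduces to evaluating a convergent log-series.
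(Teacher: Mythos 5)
Your plan is correct and follows essentially the same route as the paper: pass to complex exponentials $z_j=e^{i\theta_j}$, use Newton's identities to write $e_m$ as a sum over $\mathcal K_m$ of products of power sums $P_j=C_j+iS_j$, expand each $P_j^{k_j}$ binomially, polarize the resulting monomial $\prod_j C_j^{p_j}S_j^{k_j-p_j}$ via Lem.~\ref{lem:polarization}, take real and imaginary parts, and bound $N_{\text{tot}}(m)$ through the generating function $\prod_{j\ge1}(1-2t^j)^{-2}=(1-2t)^{-2}H(t)$. Your cancellation argument for $|\alpha_{k,p,\varepsilon}|\le 1/2$ (via $|k|\ge1\Rightarrow 2^{|k|}\ge2$) is also exactly right and in fact spells out a step the paper leaves implicit.

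Two small points of divergence. First, you obtain the Newton-identity coefficient $\tfrac{(-1)^{m-|k|}}{\prod_j k_j!\,j^{k_j}}$ by extracting $[t^m]$ from the exp--log form of $\prod_j(1+z_jt)$, whereas the paper simply invokes the classical statement; these are equivalent. Second, for $H(1/2)=\prod_{r\ge1}(1-2^{-r})^{-2}$ the paper uses Bernoulli's inequality to get $\prod_{r\ge1}(1-2^{-r})\ge 9/32$, hence $H(1/2)\le 1024/81<13$, while you propose bounding $\log H(1/2)=2\sum_{n\ge1}\tfrac{1}{n(2^n-1)}$ numerically. That route works, but your stated estimate (``roughly $2.6$'') is not quite tight enough, since $e^{2.6}\approx 13.46>13$; the sum is actually $\approx1.24$, so $\log H(1/2)\approx2.48$ and $H(1/2)\approx 11.94<13$, which you should state more precisely. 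Finally, you are right (and arguably more careful than the paper) to flag that the chain $H(1/2)(m+1)2^m\le 13m\,2^m$ requires $(m+1)/m\le 13/H(1/2)$, which fails for the smallest $m$; the conclusion $N_{\text{tot}}(m)\le13m2^m$ still holds there, but one must either check small $m$ directly (e.g.\ $N_{\text{tot}}(1)=4\le 26$) or tighten the bound on $H(1/2)$.
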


\begin{proof}

	Let $z_j = e^{i\theta_j}$ for $j=1, \dots, m$. The $k$-th power sum is $Z_k = \sum_{j=1}^m z_j^k = C_k + iS_k$. Let $e_m = \prod_{j=1}^m z_j = e^{i \sum_{j=1}^m \theta_j}$ be the $m$-th elementary symmetric polynomial in $z_1, \dots, z_m$. The target functions are $f_{\cos} = \Re(e_m)$ and $f_{\sin} = \Im(e_m)$.
	
	Newton's sum identities provide a formula expressing $e_m$ as a polynomial in the power sums $Z_1, \dots, Z_m$:
	\[  e_m = P(Z_1, \dots, Z_m) = \sum_{k\in\mathcal{K}_m} c_{k} \prod_{j=1}^m Z_j^{k_j} \]
	where the coefficients $c_{k}$ are given by $c_{k} = \frac{(-1)^{m-\sum k_j}}{\prod_{j=1}^m k_j! j^{k_j}}$.

    Binomial expansion yields
	\[ \prod_{j=1}^m Z_j^{k_j}=\prod_{j=1}^m (C_j + iS_j)^{k_j}=\sum_{p\leq k}\prod_{j=1}^m \left( \binom{k_j}{p_j} C_j^{p_j} (iS_j)^{k_j-p_j} \right) =\sum_{p\leq k} i^{|k|-|p|} \left( \prod_{j=1}^m \binom{k_j}{p_j} \right) \left( \prod_{j=1}^m C_j^{p_j} S_j^{k_j-p_j} \right) \]
    
For each pair of $(p,k)$, where $p\leq k $ and $k\in\mathcal{K}_m$, let
$$
s:=\sum_{j=1}^m k_j=|k|,
\qquad
x_{j,\ell}:=\begin{cases}
C_j,& 1\le \ell\le p_j,\\[2pt]
S_j,& p_j<\ell\le k_j.
\end{cases}
$$
List the $s$ variables as $(x_{1,1},\dots,x_{1,k_1},x_{2,1},\dots,x_{m,k_m})$.
Applying Lem.~\ref{lem:polarization} to $x_1\cdots x_s=\prod_{j=1}^m C_j^{p_j}S_j^{k_j-p_j}$ gives
$$
\prod_{j=1}^m C_j^{p_j}S_j^{k_j-p_j}
=\frac{1}{|k|!\,2^{|k|}}
\sum_{(\varepsilon_{j,\ell})\in\{\pm1\}^{|k|}}
\Biggl(\prod_{j=1}^m\prod_{\ell=1}^{k_j}\varepsilon_{j,\ell}\Biggr)
\Biggl(\sum_{j=1}^m\Bigl(\sum_{\ell=1}^{p_j}\varepsilon_{j,\ell}\Bigr)C_j
+\sum_{j=1}^m\Bigl(\sum_{\ell=p_j+1}^{k_j}\varepsilon_{j,\ell}\Bigr)S_j\Biggr)^{|k|}
$$
Therefore, 
\begin{align*}
    e_m &= \sum_{k\in\mathcal{K}_m} c_{k} \prod_{j=1}^m Z_j^{k_j} \\
    &=\sum_{k\in\mathcal{K}_m} c_{k} \sum_{p\leq k} i^{|k|-|p|} \left( \prod_{j=1}^m \binom{k_j}{p_j} \right) \left( \prod_{j=1}^m C_j^{p_j} S_j^{k_j-p_j} \right) \\
    &=\sum_{k\in\mathcal{K}_m} c_{k} \sum_{p\leq k} i^{|k|-|p|} \left( \prod_{j=1}^m \binom{k_j}{p_j} \right) \frac{1}{|k|!\,2^{|k|}}
\sum_{(\varepsilon_{j,\ell})\in\{\pm1\}^{|k|}}
\Biggl(\prod_{j=1}^m\prod_{\ell=1}^{k_j}\varepsilon_{j,\ell}\Biggr)
\Biggl(\sum_{j=1}^m\Bigl(\sum_{\ell=1}^{p_j}\varepsilon_{j,\ell}\Bigr)C_j
+\sum_{j=1}^m\Bigl(\sum_{\ell=p_j+1}^{k_j}\varepsilon_{j,\ell}\Bigr)S_j\Biggr)^{|k|}
\end{align*}
Separate Real and Imaginary part yields the polarized Newton expansion for $f_{\cos}$ and $f_{\sin}$.

For $j\ge1$,

$$
\sum_{k_j\ge0} (k_j+1)\,2^{k_j}\,t^{j k_j}
=\sum_{r\ge0} (r+1)(2 t^j)^r
=\frac{1}{(1-2t^j)^2}.
$$

Multiplying over $j$ gives the ordinary generating function

$$
F(t):=\sum_{m\ge0} N_{\text{tot}}(m)t^m
=\prod_{j\ge1}\frac{1}{(1-2t^j)^2}
=\frac{1}{(1-2t)^2}\cdot H(t),
$$

where

$$
H(t):=\prod_{j\ge2}(1-2t^j)^{-2}
=\sum_{r\ge0} h_r t^r,
\qquad h_r\ge0.
$$

Since $(1-2t)^{-2}=\sum_{n\ge0}(n+1)2^n t^n$, the Cauchy product gives

$$
N_{\text{tot}}(m)
=\sum_{r=0}^m h_r\, (m-r+1)\,2^{\,m-r}
\le (m+1)2^m \sum_{r=0}^\infty h_r\,2^{-r}
=(m+1)2^m\,H\!\left(\tfrac12\right).
$$

Here $H(\tfrac12)=\prod_{j\ge2}(1-2^{1-j})^{-2} =\prod_{r\ge1}(1-2^{-r})^{-2}<\infty$ is a finite absolute constant.

By Bernoulli's inequality, for all $x_i\in [0,1]$, \[(1-x_1)(1-x_2)\cdots(1-x_s) \ge 1 - (x_1 + x_2+ \cdots + x_s ).\]

Now observe that $\prod_{r\ge1}(1-2^{-r})=\frac{3}{8}\prod_{r\ge3}(1-2^{-r})$, we have \[\prod_{r\ge1}(1-2^{-r})=\frac{3}{8}\prod_{r\ge3}(1-2^{-r})\geq \frac{3}{8}(1-\sum_{r=3}^{\infty}2^{-r})=\frac{9}{32}\]
Therefore, \[H(\tfrac12) \le \frac{1}{(9/32)^2} = \frac{1024}{81} \leq 13\]

Thus

$$
N_{\text{tot}}(m)\ \le\ H\!\left(\tfrac12\right)\,(m+1)\,2^m\;\leq\;13 m 2^m.
$$

On the other hand, taking just the term $k=(m,0,0,\dots)\in\mathcal K_m$ yields

$$
N_{\text{tot}}(m)\ \ge\ 2^{\,|k|}\prod_j (k_j+1)
=2^m(m+1)\geq m2^m,
$$

so $m2^m\leq N_{\text{tot}}(m)\leq 13 m2^m$.

\end{proof}
  We are finally able to provide interpolations for ReLU networks, whose embedding weights echoes with ``Pizza'' algorithm in~\citep{zhong2023}.
\begin{theorem}[ReLU construction]\label{thm:single-multifreq-relu-explicit-noLambda}
Fix integers $m\ge 1$ and $p\ge 2$. On
$$
\mathcal X_m=\{x\in\{0,1,\dots,m\}^p:\ \|x\|_1=m\},
$$
let the target be $y(x)\equiv(\sum_{i=1}^m s_i)\bmod p$ for $x=\sum_{i=1}^m e_{s_i}$.
For any $\tau\in(0,\tfrac14]$, there exists a two-layer ReLU network $s^\theta(x)=V\,\sigma(Wx)\in\mathbb R^p$ such that, for all $x\in\mathcal X_m$,
$$
h_\theta(x)=\mathrm{uargmax}_{q\in[p]} s^\theta_q(x)=y(x),
\qquad
s^\theta_{y(x)}(x)-\max_{q\ne y(x)} s^\theta_q(x)\ \ge\ (1-4\tau)\,p.
$$
Moreover, the width $d$ is bounded by
\begin{equation}\label{eq:width-explicit-noLambda}
d\ \le\ 13 p m 2^m
\Bigl( m\sqrt{\tfrac{e m}{\tau}}(1+2 e m)^{\frac{m-1}{2}} + 2\Bigr),
\end{equation}
and the weights satisfy the bounds
\begin{equation}\label{eq:W-V-bounds}
\|W\|_\infty\ \le\ \frac{2}{m},\qquad
\|W\|_F\ \le\frac{2}{m}p\sqrt{13 m 2^m
\Bigl( m\sqrt{\tfrac{e m}{\tau}}(1+2 e m)^{\frac{m-1}{2}} + 2\Bigr)},\qquad
\|V\|_\infty\ \le\ \frac{(m+\tfrac12)m^{2m}}{m!\,2^m}.
\end{equation}
In addition, the second layer enjoys the spectral-norm bound
\begin{equation}\label{eq:V-spectral}
\|V\|_2\ \le\ \sqrt{p}\sqrt{13 m 2^m
\Bigl( m\sqrt{\tfrac{e m}{\tau}}(1+2 e m)^{\frac{m-1}{2}} + 2\Bigr)}\cdot
\frac{\sqrt{2}(m+\tfrac12)m^{2m}}{m!\,2^m}.
\end{equation}
\end{theorem}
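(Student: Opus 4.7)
The strategy is to mimic the exact sine construction from App.~\ref{subsec:sine_high_margin_construction} with a ReLU network. Recall that with $\phi_k=2\pi k/p$ and $S=\sum_i s_i$, the sine network achieves $s^\theta_q(x)=\sum_{k=1}^p\cos(\phi_k(S-q))=\sum_k[\cos(\phi_k S)\cos(\phi_k q)+\sin(\phi_k S)\sin(\phi_k q)]$, which equals $p$ at $q=y(x)$ and $0$ elsewhere. Since ReLU cannot realize sinusoids exactly, I would approximate each of $\cos(\phi_k S)$ and $\sin(\phi_k S)$ by a ReLU subnetwork, then linearly combine via second-layer weights that reproduce the rank-$2$ factors $(\cos(\phi_k q),\sin(\phi_k q))$ of the sine construction. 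The frequency index $k$ gives $p$ groups; within each group I must approximate the pair $(\cos(\phi_k S),\sin(\phi_k S))$.

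\textbf{Reduction to powers of linear forms.} Setting $\theta_i=\phi_k s_i$, Lem.~\ref{lem:polarized_expansion} expresses $\cos(\sum_i\theta_i)$ and $\sin(\sum_i\theta_i)$ as sums over triples $(k',p',\varepsilon')$ of $\alpha_{k',p',\varepsilon'}\,G_{k',p',\varepsilon'}^{|k'|}$ (and similarly with $\beta$), where $G_{k',p',\varepsilon'}$ is a signed $\pm 1$ combination of the trigonometric moments $C_j=\sum_i\cos(j\theta_i)$ and $S_j=\sum_i\sin(j\theta_i)$. The key observation is that $C_j=\sum_r x_r\cos(j\phi_k r)$ and $S_j=\sum_r x_r\sin(j\phi_k r)$ are linear in the bag-of-tokens $x$ with coefficients of magnitude $\le 1$, so each $G_{k',p',\varepsilon'}$ is itself a linear functional of $x$ with coefficients bounded by $|k'|\le m$. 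After normalization by $B:=m^2$, the scaled functional $G/B$ lies in $[-1,1]$ and is realized by a single row of $W$ with entries of magnitude $\le 1/m$. I then invoke Lem.~\ref{lem:approx_power} to approximate $z\mapsto z^{|k'|}$ on $[-1,1]$ by a ReLU spline $\Phi_{|k'|}$ with at most $\lceil\sqrt{|k'|(|k'|-1)/(2\varepsilon_{k'})}\rceil+2$ hinge units, and mount $B^{|k'|}\Phi_{|k'|}(G/B)$ as a hidden block that approximates $G_{k',p',\varepsilon'}^{|k'|}$.

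\textbf{Assembly, error budget, and norms.} The full network stacks, over $k\in[p]$ and the $N_{\text{tot}}(m)\le 13 m\,2^m$ triples per frequency from Lem.~\ref{lem:polarized_expansion}, one spline block each. The second-layer weights for output $q$ combine the blocks with factors $c_i\alpha_{k',p',\varepsilon'}B^{|k'|}\cos(\phi_k q)$ and $c_i\beta_{k',p',\varepsilon'}B^{|k'|}\sin(\phi_k q)$, reconstructing $\sum_k\cos(\phi_k(S-q))$ up to the propagated spline error. I choose $\varepsilon_{k'}$ so that the aggregated per-score error stays below $2\tau p$, which yields margin $\ge(1-4\tau)p$ by a triangle inequality comparing to the exact sine scores. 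Summing block widths produces the stated bound $d\le 13 p m 2^m(m\sqrt{em/\tau}(1+2em)^{(m-1)/2}+2)$. The first-layer bound $\|W\|_\infty\le 2/m$ follows from the $1/m$ entry bound, and $\|W\|_F\le\|W\|_\infty\sqrt{dp}$ gives the Frobenius estimate. The $\|V\|_\infty$ bound follows from $|\alpha|,|\beta|\le 1/2$, $|c_i|=\mathcal{O}(m)$, and $B^{|k'|}\le m^{2m}$. For $\|V\|_2$, I would decompose $V$ into $p$ frequency-$k$ contributions whose rows factor through $(\cos(\phi_k q),\sin(\phi_k q))$: this rank-$2$ structure per frequency has operator norm $\sqrt{p}$ times the internal amplitude, and a triangle inequality over blocks yields \eqref{eq:V-spectral}.

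\textbf{Main obstacle.} The delicate step is the coupled choice of per-block accuracy $\varepsilon_{k'}$, spline width, and the amplification by $B^{|k'|}\le m^{2m}$ that would naively blow up the width by a $\sqrt{m^{2m}}=m^m$ factor. This is absorbed by exploiting the sharp coefficient decay $|\alpha_{k',p',\varepsilon'}|=1/((\prod j^{k'_j})(\prod p'_j!(k'_j-p'_j)!)|k'|!\,2^{|k'|})$ from Lem.~\ref{lem:polarized_expansion}, which cancels most of the $B^{|k'|}$ inflation and produces the $(1+2em)^{(m-1)/2}$ factor in \eqref{eq:width-explicit-noLambda} instead of $m^m$. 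Achieving the precise width in \eqref{eq:width-explicit-noLambda} requires tracking the factorial cancellations per triple and combining them with the ReLU-spline scaling $N\le\lceil\sqrt{s(s-1)/(2\varepsilon)}\rceil$ and the count $N_{\text{tot}}(m)\le 13 m 2^m$. Obtaining a $\sqrt{p}$ (rather than $p$) dependence in $\|V\|_2$ similarly requires exploiting the orthogonality of the trigonometric second-layer coefficients across $q\in[p]$.
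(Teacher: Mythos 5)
Your overall route matches the paper's almost step for step: both proofs reconstruct the sine-network scores via the polarized Newton expansion of Lem.~\ref{lem:polarized_expansion}, reduce each summand to a power of a bounded linear functional of $x$, replace that power by the uniform ReLU spline of Lem.~\ref{lem:approx_power}, allocate a spline accuracy budget so that the aggregate propagated error on each Fourier mode is at most $\tau$, and finally bound $\|V\|_2$ by $\sqrt{p}$ via the orthogonality $TT^\top=pI_p$ of the stacked cosine/sine vectors together with the block-diagonal structure of the per-frequency coefficient matrix.

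The one place your proposal departs from the paper, and it matters quantitatively, is the normalization of $G_{k',p',\varepsilon'}$ before invoking the spline lemma. You normalize every $G$ by the uniform constant $B=m^2$ and then scale back by $B^{|k'|}=m^{2|k'|}$. Since $|G_{k',p',\varepsilon'}|\le m\,|k'|$, the tight choice is the \emph{per-term} normalizer $m|k'|$, which the paper uses (writing $z^{(\nu)}_{\kappa,\pi,\varepsilon}=\langle u,x\rangle/(m r)$ with $r=|\kappa|$). This choice is exactly what makes the error budget telescope: with amplification $(mr)^r$, the bound $|\alpha|\le 1/(r!\,2^r\cdot\cdots)$ combined with the cycle-index identity $\sum_{|\kappa|=r}\frac{1}{\prod j^{\kappa_j}\kappa_j!}=\frac{1}{m!}\bigl[\!\begin{smallmatrix}m\\r\end{smallmatrix}\!\bigr]$, the Stirling-number bound $\bigl[\!\begin{smallmatrix}m\\r\end{smallmatrix}\!\bigr]\le\binom{m-1}{r-1}m!$, and $r!\ge(r/e)^r$ yield $\frac{(mr)^r}{r!}\le(em)^r$, collapsing the sum to the geometric form $(2em)\sum_{t}\binom{m-1}{t}(2em)^t=(2em)(1+2em)^{m-1}$. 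Substituting $B^{|k'|}=m^{2r}$ instead gives $\frac{m^{2r}}{r!}\le(em^2/r)^r$, which is no longer geometric in $r$; a similar calculation shows the resulting error constant exceeds the paper's $\Lambda_m$ by a factor roughly $e^{m/e}$ to $e^m$, so the spline width (which scales as $m\sqrt{\Lambda/\tau}$) would exceed the theorem's stated bound by an extra $\exp(\Theta(m))$. You correctly identify that tracking the factorial cancellations is the delicate step, but your committed choice $B=m^2$ does not thread that needle; switching to the per-term normalizer $m|k'|$ is the necessary fix. (A minor wording issue: to get $\sqrt{p}$ rather than $p$ in $\|V\|_2$ you should take a maximum, not a triangle inequality, over the orthogonal frequency blocks — i.e., use $\|B\|_2=\max_\nu\|B_\nu\|_2$ since $BB^\top$ is block diagonal — though you do flag the orthogonality requirement.)
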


\begin{proof}
For $t\in\mathbb Z$, define $c^{\langle t\rangle},s^{\langle t\rangle}\in\mathbb R^p$ by
$c^{\langle t\rangle}_r=\cos(2\pi tr/p)$ and $s^{\langle t\rangle}_r=\sin(2\pi tr/p)$ for $r=0,\dots,p-1$.
For $x=\sum_{i=1}^m e_{s_i}$ and any $j\ge 1$,
$$
\sum_{i=1}^m\cos\!\bigl(j\tfrac{2\pi \nu}{p}s_i\bigr)=\langle c^{\langle \nu j\rangle},x\rangle,\qquad
\sum_{i=1}^m\sin\!\bigl(j\tfrac{2\pi \nu}{p}s_i\bigr)=\langle s^{\langle \nu j\rangle},x\rangle,\quad \nu\in[p].
$$
Fix $\nu\in[p]$ and apply Lem.~\ref{lem:polarized_expansion} to $\theta_i^{(\nu)}=2\pi \nu s_i/p$.
The multi-indices are $\kappa=(\kappa_1,\dots,\kappa_m)\in\mathcal K_m$ and $\pi=(\pi_1,\dots,\pi_m)\le\kappa$, and write $r:=|\kappa|=\sum_j\kappa_j$.
Define
$$
u^{(\nu)}_{\kappa,\pi,\varepsilon}
=\sum_{j=1}^m\Bigl(\sum_{\ell=1}^{\pi_j}\varepsilon_{j,\ell}\Bigr)c^{\langle \nu j\rangle}
+\sum_{j=1}^m\Bigl(\sum_{\ell=\pi_j+1}^{\kappa_j}\varepsilon_{j,\ell}\Bigr)s^{\langle \nu j\rangle}.
$$
Then
$$
C_\nu(x)=\!\!\sum_{\substack{\kappa,\pi,\varepsilon\\ |\pi|-|\kappa|\text{ even}}}\!\!\alpha_{\kappa,\pi,\varepsilon}\,\langle u^{(\nu)}_{\kappa,\pi,\varepsilon},x\rangle^r,\quad
S_\nu(x)=\!\!\sum_{\substack{\kappa,\pi,\varepsilon\\ |\pi|-|\kappa|\text{ odd}}}\!\!\beta_{\kappa,\pi,\varepsilon}\,\langle u^{(\nu)}_{\kappa,\pi,\varepsilon},x\rangle^r,
$$
with
$$
|\alpha_{\kappa,\pi,\varepsilon}|=|\beta_{\kappa,\pi,\varepsilon}|=
\frac{1}{\Bigl(\prod_{j=1}^m j^{\kappa_j}\Bigr)\Bigl(\prod_{j=1}^m \pi_j!\,(\kappa_j-\pi_j)!\Bigr)\ r!\ 2^r}\ \le\ \frac{1}{r!\,2^r}.
$$
As $|\langle c^{\langle \nu j\rangle},x\rangle|,|\langle s^{\langle \nu j\rangle},x\rangle|\le m$, we have
$|\langle u^{(\nu)}_{\kappa,\pi,\varepsilon},x\rangle|\le mr$ and thus
$z^{(\nu)}_{\kappa,\pi,\varepsilon}(x):=\langle u^{(\nu)}_{\kappa,\pi,\varepsilon},x\rangle/(mr)\in[-1,1]$.

By Lem.~\ref{lem:approx_power}, for each $r\in\{1,\dots,m\}$ and $\delta>0$ there exists
$$
\Phi_r(z)=\sum_{i=1}^{M_r} c_{r,i}\,\mathrm{ReLU}(a_{r,i}z-b_{r,i}),\qquad |a_{r,i}|,|b_{r,i}|\le 1,
$$
such that $\sup_{|z|\le 1}|z^r-\Phi_r(z)|\le \delta$ and
\begin{equation}\label{eq:MrCr-final}
M_r\ \le\ \frac{m}{\sqrt{2\delta}}+2,\qquad |c_{r,i}|\ \le\ r+\tfrac12.
\end{equation}

In Equation~\ref{eq:MrCr-final}, summing $|\alpha|$ (or $|\beta|$) over $\varepsilon\in\{\pm 1\}^{r}$ and summing over $\pi\le\kappa$ factorizes:
$$
\sum_{\substack{\pi\le \kappa\\ \varepsilon\in\{\pm 1\}^{r}}}\!\!|\alpha_{\kappa,\pi,\varepsilon}|
=\frac{1}{r!}\cdot\frac{2^{r}}{\prod_{j=1}^{m} j^{\kappa_j}\,\kappa_j!}.
$$
Summing over $\kappa\in\mathcal K_m$ with $|\kappa|=r$ and using the classical cycle-index identity
$$
\sum_{\substack{\kappa\in\mathcal K_m\\ |\kappa|=r}}\frac{1}{\prod_{j=1}^{m} j^{\kappa_j}\,\kappa_j!}\ =\ \frac{1}{m!}\,\Bigl[\!\!\begin{smallmatrix}m\\r\end{smallmatrix}\!\!\Bigr],
$$
where $\bigl[\begin{smallmatrix}m\\r\end{smallmatrix}\bigr]$ are the unsigned Stirling numbers of the first kind.

Now we have
$$
\sum_{\substack{\kappa,\varepsilon\\\pi\leq \kappa}}\!|\alpha_{\kappa,\pi,\varepsilon}|\ =\ \sum_{r=1}^{m}\frac{2^{r}}{r!\,m!}\,\Bigl[\!\!\begin{smallmatrix}m\\r\end{smallmatrix}\!\!\Bigr],
\qquad
\sum_{\substack{\kappa,\varepsilon\\\pi\leq \kappa}}\!|\beta_{\kappa,\pi,\varepsilon}|\ =\ \sum_{r=1}^{m}\frac{2^{r}}{r!\,m!}\,\Bigl[\!\!\begin{smallmatrix}m\\r\end{smallmatrix}\!\!\Bigr].
$$
As each power is approximated within $\delta$ and $|\langle u,x\rangle|\le mr$, the uniform error is bounded by
$$
\sum_{r=1}^{m}\frac{2^{r}(mr)^{r}}{r!\,m!}\,\Bigl[\!\!\begin{smallmatrix}m\\r\end{smallmatrix}\!\!\Bigr]\ \cdot\ \delta.
$$
We choose
$$
\delta\ :=\ \frac{\tau}{\Lambda_m},\qquad
\Lambda_m\ :=\ \sum_{r=1}^{m}\frac{2^{r}(mr)^{r}}{r!\,m!}\,\Bigl[\!\!\begin{smallmatrix}m\\r\end{smallmatrix}\!\!\Bigr],
$$
which ensures $\max\{|C_\nu-\widehat C_\nu|, |S_\nu-\widehat S_\nu|\}\le \tau$ uniformly on $\mathcal X$ for all $\nu$.

We now prove by induction on $m$ that
\begin{equation}\label{eq:Stirling-ineq}
\Bigl[\begin{smallmatrix}m\\r\end{smallmatrix}\Bigr]\ \le\ \binom{m-1}{r-1}m!,\qquad 1\le r\le m.
\end{equation}
For $m=1$, both sides equal $1$. Assume \eqref{eq:Stirling-ineq} holds for $m-1$. Using the recurrence
$\bigl[\begin{smallmatrix}m\\r\end{smallmatrix}\bigr]=\bigl[\begin{smallmatrix}m-1\\r-1\end{smallmatrix}\bigr]+(m-1)\bigl[\begin{smallmatrix}m-1\\r\end{smallmatrix}\bigr]$,
$$
\begin{aligned}
\Bigl[\!\!\begin{smallmatrix}m\\r\end{smallmatrix}\!\!\Bigr]
&\le \binom{m-2}{r-2}(m-1)!+(m-1)\binom{m-2}{r-1}(m-1)!\\
&=(m-1)!\Bigl[\binom{m-2}{r-2}+(m-1)\binom{m-2}{r-1}\Bigr]\\
&\le m\,(m-1)!\,\binom{m-1}{r-1}
=\binom{m-1}{r-1}\,m!,
\end{aligned}
$$
since $\binom{m-1}{r-1}=\binom{m-2}{r-2}+\binom{m-2}{r-1}$.
This proves \eqref{eq:Stirling-ineq}.
\medskip

Using \eqref{eq:Stirling-ineq} and Stirling’s lower bound $r!\ge (r/e)^r$, we have
$$
\Lambda_m\ \le\ \sum_{r=1}^{m}\frac{2^{r}(mr)^{r}}{r!}\binom{m-1}{r-1}
\ \le\ \sum_{r=1}^{m}(2 e m)^{r}\binom{m-1}{r-1}
\ =\ (2 e m)\,\sum_{t=0}^{m-1}\binom{m-1}{t}(2 e m)^{t}
\ =\ (2 e m)\,(1+2 e m)^{m-1}.
$$
Hence
\begin{equation}\label{eq:Lambda-explicit}
\frac{1}{\sqrt{2\delta}}=\sqrt{\frac{\Lambda_m}{2\tau}}\ \le\ \sqrt{\frac{e m}{\tau}}(1+2 e m)^{\frac{m-1}{2}}.
\end{equation}
For $x\in\mathcal X$, $\langle\mathbf{1} ,x\rangle=m$. So
$$
\mathrm{ReLU}\bigl(a_{r,i}z^{(\nu)}_{\kappa,\pi,\varepsilon}(x)-b_{r,i}\bigr)
=\sigma\!\left(\Big\langle \frac{a_{r,i}}{mr}\,u^{(\nu)}_{\kappa,\pi,\varepsilon}-\frac{b_{r,i}}{m}\mathbf{1},\ x\Big\rangle\right),
$$
Each spline unit is a single ReLU of a linear form. Explicitly, $W\in\mathbb{R}^{d\times p}$ has rows $W_{j,:}=\frac{a_{r,i}}{mr}\,u^{(\nu)}_{\kappa,\pi,\varepsilon}-\frac{b_{r,i}}{m}\mathbf{1}$ for $j=(\nu,\kappa,\pi,\varepsilon,i)$ with $r=|\kappa|$ and $u^{(\nu)}_{\kappa,\pi,\varepsilon}=\sum_{t=1}^m\!\big(\sum_{\ell=1}^{\pi_t}\varepsilon_{t,\ell}\big)c^{\langle \nu t\rangle}+\sum_{t=1}^m\!\big(\sum_{\ell=\pi_t+1}^{\kappa_t}\varepsilon_{t,\ell}\big)s^{\langle \nu t\rangle}$.
Since $\|u^{(\nu)}_{\kappa,\pi,\varepsilon}\|_\infty\le r$ and $|a_{r,i}|,|b_{r,i}|\le1$, each coordinate obeys $|W_{j,t}|\le \frac{|a_{r,i}|}{mr}\,r+\frac{|b_{r,i}|}{m}\le \frac{1}{m}+\frac{1}{m}=\frac{2}{m}$, hence $\|W\|_\infty\le \frac{2}{m}$.

For class $q\in[p]$ and hidden index $(\nu,\kappa,\pi,\varepsilon,i)$ set
$$
V_{q,(\nu,\kappa,\pi,\varepsilon,i)}
=\Bigl[\cos\!\bigl(\tfrac{2\pi \nu}{p}q\bigr)\,\alpha_{\kappa,\pi,\varepsilon}
+\sin\!\bigl(\tfrac{2\pi \nu}{p}q\bigr)\,\beta_{\kappa,\pi,\varepsilon}\Bigr]\,(mr)^r\,c_{r,i},
$$
so that
$s^\theta_q(x)=\sum_{\nu=0}^{p-1}\bigl[\cos(\tfrac{2\pi \nu}{p}q)\,\widehat C_\nu(x)+\sin(\tfrac{2\pi \nu}{p}q)\,\widehat S_\nu(x)\bigr].$
Let $q^\star\equiv (\sum_i s_i)\bmod p$. Discrete Fourier orthogonality gives
$s_q^\star(x)=\sum_{\nu=0}^{p-1}\cos(\tfrac{2\pi \nu}{p}(\sum_i s_i-q))=\mathbf{1}{\{q=q^\star\}}p$.
Since each mode is within $\tau$, we have
$\max_q|s^\theta_q(x)-s^\star_q(x)|\le 2p\tau$ and thus the claimed margin $(1-4\tau)p$.

For each fixed $\nu\in[p]$, by Lem.~\ref{lem:polarized_expansion}, there are $N_{\mathrm{tot}}(m)$ triples $(\kappa,\pi,\varepsilon)$, each contributes at most $M_r$ units, with $M_r$ bounded in \eqref{eq:MrCr-final}.
Hence for each $\nu$, the width is at most $N_{\mathrm{tot}}(m)\bigl(\frac{m}{\sqrt{2\delta}}+2\bigr)$.

Summing over $\nu=0,1,\dots,p-1$ and using \eqref{eq:Lambda-explicit},
$$
d\ \le\ p\,N_{\mathrm{tot}}(m)\Bigl(\frac{m}{\sqrt{2\delta}}+2\Bigr)
\ \le\ p\,N_{\mathrm{tot}}(m)
\Bigl( m\sqrt{\tfrac{e m}{\tau}}\,(1+2 e m)^{\frac{m-1}{2}} + 2 \Bigr),
$$
and the bound $N_{\mathrm{tot}}(m)\le 13 m 2^m$ gives \eqref{eq:width-explicit-noLambda}.

Thus, $\|W\|_F\leq \|W\|_\infty\sqrt{dp}\le \frac{2}{m}p\sqrt{13 m 2^m
\Bigl( m\sqrt{\tfrac{e m}{\tau}}(1+2 e m)^{\frac{m-1}{2}} + 2\Bigr)}$.

Finally, using $|\alpha|,|\beta|\le 1/(r!\,2^r)$ and \eqref{eq:MrCr-final},
$$
|V_{q,(\nu,\kappa,\pi,\varepsilon,i)}|
\ \le\ |c_{r,i}|\,(mr)^r\cdot\frac{1}{r!\,2^r}
\ \le\ \frac{(r+\tfrac12)(mr)^r}{r!\,2^r},
$$
so taking the maximum over all hidden indices yields \eqref{eq:W-V-bounds}.

For the spectral norm, denote the matrix $$T=\begin{pmatrix}
    c^{\langle 0 \rangle} & c^{\langle 1\rangle} & s^{\langle 1\rangle} & \cdots & c^{\langle p-1\rangle} & s^{\langle p-1\rangle}
\end{pmatrix}$$

So \[    
TT^{\top} = c^{(0)}c^{(0)\top} + \sum_{\nu=1}^{p-1} \left( c^{(\nu)}c^{(\nu)\top} + s^{(\nu)}s^{(\nu)\top} \right) = pI_p,\text{ and thus }\|T\|_2=\sqrt{p}.\]

Index the hidden units by $j=(\nu,\kappa,\pi,\varepsilon,i)$, with $r=|\kappa|$.
For that unit, the corresponding column of $V$ was

$$
V_{:,j}\;=\;\bigl[\alpha_{\kappa,\pi,\varepsilon}(mr)^r c_{r,i}\bigr]\,c^{\langle \nu\rangle}
\;+\;
\bigl[\beta_{\kappa,\pi,\varepsilon}(mr)^r c_{r,i}\bigr]\,s^{\langle \nu\rangle}.
$$

Hence $V_{:,j}$ is a linear combination of the two columns of $S_\nu$.

Define $B\in\mathbb{R}^{(2p-1)\times d}$, for each column $j=(\nu,\kappa,\pi,\varepsilon,i)$,
$$
B_{k,j}\ =\
\begin{cases}
\alpha_{\kappa,\pi,\varepsilon}(mr)^r c_{r,i}, & k=0\ \text{and}\ \nu=0,\\[4pt]
\alpha_{\kappa,\pi,\varepsilon}(mr)^r c_{r,i}, & k=2\nu\ \text{with }\nu\in\{1,\dots,p-1\},\\[4pt]
\beta_{\kappa,\pi,\varepsilon}(mr)^r c_{r,i}, & k=2\nu-1\ \text{with }\nu\in\{1,\dots,p-1\},\\[4pt]
0, & \text{otherwise}.
\end{cases}
$$

One has $V=TB$, and each column $b_j$ of $B$ has support in at most two rows (one when $\nu=0$). Thus, 
$$
\|b_j\|_2=\sqrt{\alpha_{\kappa,\pi,\varepsilon}^2+\beta_{\kappa,\pi,\varepsilon}^2}\;\cdot\;|c_{r,i}|\,(mr)^r
\ \le\ \frac{\sqrt{2}\,(r+\tfrac12)(mr)^r}{r!\,2^r}
\ \le\ \frac{\sqrt{2}\,(m+\tfrac12)\,m^{2m}}{m!\,2^m}.
$$

Let $n_\nu$ be the number of hidden units at frequency $\nu$. From the construction,
$$
n_\nu\ \le\ N_{\mathrm{tot}}(m)\Bigl(\frac{m}{\sqrt{2\delta}}+2\Bigr)\ \le\ 13\,m\,2^m\,
\Bigl( m\sqrt{\tfrac{e m}{\tau}}\,(1+2 e m)^{\frac{m-1}{2}} + 2\Bigr).
$$
Since $B B^\top$ is block diagonal across frequencies, $\|B\|_2=\max_\nu \|B_\nu\|_2\le \max_\nu \sqrt{n_\nu}\cdot\frac{\sqrt{2}(m+\tfrac12)m^{2m}}{m!\,2^m}$. Therefore
$$
\|V\|_2\ \le\ \|T\|_2\,\|B\|_2\ \le\ \sqrt p\,
\sqrt{\,\max_\nu n_\nu\,}\ \cdot\ \frac{\sqrt2\,(m+\tfrac12)\,m^{2m}}{m!\,2^m},
$$
which gives \eqref{eq:V-spectral}.

\end{proof}

\begin{corollary}[Explicit two-layer ReLU construction for $m=2$]\label{cor:m2-relu}
Fix $p\ge 2$. Define the input set
\[
\mathcal X_2=\bigl\{x\in\{0,1,2\}^p:\ \|x\|_1=2\bigr\}.
\]
There exists a two-layer ReLU network $s^\theta(x)=V\,\sigma(Wx)\in\mathbb{R}^p$ of width $d=36p$ such that, for all $x\in\mathcal X_2$,
\[
h_\theta(x)=\mathrm{uargmax}_{q\in[p]} s^\theta_q(x)=\Bigl(\sum_{i=1}^2 s_i\Bigr)\bmod p,
\qquad
s^\theta_{y(x)}(x)-\max_{q\ne y(x)} s^\theta_q(x)\ \ge\ \frac{25}{49}p+\frac{20}{49}.
\]
Moreover, the weights satisfy
\[
\|W\|_\infty\le 1,\qquad \|V\|_\infty\le \tfrac{34}{7},\qquad
\|V\|_2\ \le\ 11\sqrt{p}.
\]

\end{corollary}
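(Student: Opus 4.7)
The plan is to instantiate the construction in the proof of Theorem~\ref{thm:single-multifreq-relu-explicit-noLambda} at $m=2$, exploiting the simplicity of Newton's identity in the degree-two case to obtain sharper constants than the general bound \eqref{eq:width-explicit-noLambda} affords. Using $e_2 = z_1 z_2 = \tfrac{1}{2}(Z_1^2 - Z_2)$ with $Z_k^{(\nu)} := \sum_{i=1}^{2} e^{i(2\pi\nu k/p)s_i} = C_k^{(\nu)} + i S_k^{(\nu)}$ and $C_k^{(\nu)} = \langle c^{\langle\nu k\rangle}, x\rangle$, $S_k^{(\nu)} = \langle s^{\langle\nu k\rangle}, x\rangle$, taking real and imaginary parts gives
\[
f_{\cos}^{(\nu)} = \tfrac{1}{2}\!\bigl((C_1^{(\nu)})^2 - (S_1^{(\nu)})^2 - C_2^{(\nu)}\bigr), \qquad
f_{\sin}^{(\nu)} = C_1^{(\nu)}S_1^{(\nu)} - \tfrac{1}{2} S_2^{(\nu)}.
\]
Polarizing the bilinear cross term as $C_1 S_1 = \tfrac{1}{4}\bigl((C_1+S_1)^2 - (C_1-S_1)^2\bigr)$ reduces everything to approximating $\phi(z) = z^2$ on $[-1,1]$, and the tight entrywise bounds $\|c^{\langle\nu\rangle}\|_\infty, \|s^{\langle\nu\rangle}\|_\infty \le 1$ together with $\|c^{\langle\nu\rangle}\pm s^{\langle\nu\rangle}\|_\infty \le \sqrt{2}$ yield scale factors $2$ and $2\sqrt{2}$ for the four squared arguments.

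Next, I would apply Lemma~\ref{lem:approx_power} with $s=2$ and $N=7$ uniform knots, producing a spline with $M = N+1 = 8$ ReLU hinges, spline coefficients bounded by $\max_i |c_i| = 17/14$ (the extremal hinge is $c_1 = m_0 + 1/2 = -17/14$ since $m_0 = z_1 + z_0 = -12/7$), and uniform error $1/N^2 = 1/49$. For each nontrivial frequency $\nu \in \{1,\dots,p-1\}$ the scoring uses four rescaled squares ($4\cdot 8 = 32$ units) together with two linear terms $C_2^{(\nu)}, S_2^{(\nu)}$, each realized as $\mathrm{ReLU}(z) - \mathrm{ReLU}(-z)$ ($2\cdot 2 = 4$ units), totaling exactly $36$ units per frequency; the $\nu=0$ term contributes $+1$ to every score without any hidden unit, so $d \le 36(p-1) \le 36p$. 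For $\|W\|_\infty$, each first-layer row has the form $w = (a_i/\sigma)\,v - (b_i/m)\mathbf 1$ with $v \in \{c^{\langle\nu\rangle}, s^{\langle\nu\rangle}, c^{\langle\nu\rangle}\pm s^{\langle\nu\rangle}, c^{\langle 2\nu\rangle}, s^{\langle 2\nu\rangle}\}$ and scale $\sigma\in\{2,2\sqrt{2}\}$ matched to $v$; the tight $\|v\|_\infty$ bounds yield $\|w\|_\infty \le |a_i|/2 + |b_i|/2 \le 1$.

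Propagating the $1/49$ spline errors through the scale factors gives $|\widehat f_{\cos}^{(\nu)} - f_{\cos}^{(\nu)}|, |\widehat f_{\sin}^{(\nu)} - f_{\sin}^{(\nu)}| \le 4/49$, hence $|s^\theta_q(x) - s^\star_q(x)| \le 8(p-1)/49$ and the margin is at least $p - 16(p-1)/49 = (33p+16)/49 \ge (25p + 20)/49$ for $p \ge 2$. For the spectral bound, I would reuse the $V = TB$ factorization from Theorem~\ref{thm:single-multifreq-relu-explicit-noLambda}: $T$ has orthogonal columns with $\|T\|_2 = \sqrt p$ (from $TT^\top = pI_p$), and every unit's column in $B$ is supported on a single $(c^{\langle\nu\rangle},s^{\langle\nu\rangle})$-row, so $BB^\top$ is diagonal with nontrivial entries $8\sum_i c_i^2 + \tfrac{1}{2} = 2937/98$ (using $\sum_i c_i^2 = (17/14)^2 + (1/2)^2 + 6(4/7)^2 = 361/98$); the largest eigenvalue of $VV^\top$ is $p\cdot 2937/98$, so $\|V\|_2 \le \sqrt{p\cdot 2937/98} \le 11\sqrt p$. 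The entry bound $\|V\|_\infty \le 34/7$ follows by tracking per-unit coefficients ($2c_i$ times a single $\cos$ or $\sin$ with $|c_i|\le 17/14$) with slack to accommodate cross-contributions into a common row. The main obstacle will be the bookkeeping: the general-$m$ bound \eqref{eq:width-explicit-noLambda} evaluates to thousands of units at $m=2,\tau=6/49$, so the stated $36p$ (and the tight $11\sqrt p$ spectral bound) hinges critically on substituting Newton's identity for the general polarized Newton expansion and using the sharp $|C_1\pm S_1|\le 2\sqrt 2$ estimate in place of the coarser $|u^{(\nu)}_{\kappa,\pi,\varepsilon}|\le mr$ used in the theorem's general proof.
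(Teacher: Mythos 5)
Your proposal is correct and follows the same strategy as the paper's proof: instantiate Newton's identity $e_2=\tfrac12(Z_1^2-Z_2)$ and polarize $C_1S_1$, approximate the four squares via the $N=7$ spline of Lemma~\ref{lem:approx_power} ($8$ hinges, $\max_i|c_i|=17/14$), realize $C_2,S_2$ as ReLU pairs for $36$ units per frequency, and use the $V=TB$ factorization with $TT^\top=pI_p$ for the spectral bound. Your only departure is a cosmetic tightening of constants—scaling the cross-term arguments by $2\sqrt2$ (via $\|c^{\langle\nu\rangle}\pm s^{\langle\nu\rangle}\|_\infty\le\sqrt2$) rather than $4$, and placing the $\tfrac12$ for $C_2,S_2$ in $V$ rather than $W$—which yields strictly tighter intermediate quantities ($|\widehat S_\nu-S_\nu|\le 4/49$, $\|V\|_\infty\le 17/7$, margin $\ge (33p+16)/49$) that all dominate the claimed bounds.
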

\begin{proof}

For $\nu\in[p]$ let $c^{\langle \nu\rangle},s^{\langle \nu\rangle}\in\mathbb{R}^p$ be defined by
$c^{\langle \nu\rangle}_r=\cos(2\pi \nu r/p)$ and $s^{\langle \nu\rangle}_r=\sin(2\pi \nu r/p)$.
For inputs $x\in\mathcal X$, write
\[
C_k=\langle c^{\langle k\nu\rangle},x\rangle,\qquad S_k=\langle s^{\langle k\nu\rangle},x\rangle\qquad(k=1,2).
\]
From Lem.~\ref{lem:polarized_expansion}, for any $\theta_1,\theta_2\in\mathbb{R}$,
    \begin{align*}
        \cos(\theta_1 + \theta_2) = \frac{1}{2}(C_1^2 - S_1^2 - C_2)=2(\frac{1}{2}C_1)^2 - 2(\frac{1}{2}S_1)^2 -\frac{1}{2} C_2\\
        \sin(\theta_1 + \theta_2) = C_1 S_1 - \frac{1}{2}S_2=4\left(\left( \frac{C_1+S_1}{4} \right)^2-\left( \frac{C_1 - S_1}{4} \right)^2 \right)- \frac{1}{2}S_2
    \end{align*}
    
For $\|x\|_1=2$, we have $\tfrac{C_1}{2},\tfrac{S_1}{2},\tfrac{C_1\pm S_1}{4}\in[-1,1]$.

Let $\Phi_2$ be the piecewise-linear interpolant of $z^2$ on the uniform grid
$z_k=-1+\tfrac{2k}{7}$, $k=0,\dots,7$.

Using Lem.~\ref{lem:approx_power} with $s=2$, $N=7$,
$\|\Phi_2-z^2\|_{L_\infty([-1,1])}\le 1/49$, and $\Phi_2(z)=\sum_{i=1}^8 c_i\,\mathrm{ReLU}(a_i z-b_i)$, where
\[
\begin{array}{c|cccccccc}
i & 1 & 2 & 3 & 4 & 5 & 6 & 7 & 8\\\hline
(a_i,b_i) & (1,-1) & (-1,-1) & (1,-\tfrac57) & (1,-\tfrac37) & (1,-\tfrac17) & (1,\tfrac17) & (1,\tfrac37) & (1,\tfrac57)\\
c_i & -\tfrac{17}{14} & \tfrac12 & \tfrac47 & \tfrac47 & \tfrac47 & \tfrac47 & \tfrac47 & \tfrac47
\end{array}
\]
We now construct a two-layer $\mathrm{ReLU}$ MLP with total width $d=36p$.

First layer.
For $r\in[p]$ and $i=1,\dots,8$ define
\[
\begin{aligned}
w^{(\nu,1,i)}_r&=\tfrac{a_i}{2}\,c^{\langle \nu\rangle}_r-\tfrac{b_i}{2},&
w^{(\nu,2,i)}_r&=\tfrac{a_i}{2}\,s^{\langle \nu\rangle}_r-\tfrac{b_i}{2},\\
w^{(\nu,3,i)}_r&=\tfrac{a_i}{4}\bigl(c^{\langle \nu\rangle}_r+s^{\langle \nu\rangle}_r\bigr)-\tfrac{b_i}{2},&
w^{(\nu,4,i)}_r&=\tfrac{a_i}{4}\bigl(c^{\langle \nu\rangle}_r-s^{\langle \nu\rangle}_r\bigr)-\tfrac{b_i}{2},\\
w^{(\nu,C_2^\pm)}_r&=\pm \tfrac12\,c^{\langle 2\nu\rangle}_r,&
w^{(\nu,S_2^\pm)}_r&=\pm \tfrac12\,s^{\langle 2\nu\rangle}_r.
\end{aligned}
\]
Then $\sigma(\langle w^{(\nu,1,i)},x\rangle)=\mathrm{ReLU}(a_i C_1/2-b_i)$, etc. Since \(|a_i|\le1\), \(|b_i|\le1\), and \(|c^{\langle \nu\rangle}_r|,|s^{\langle \nu\rangle}_r|\le1\), we have $\|W\|_\infty\le 1$.

Second layer.
For $q\in[p]$, $\nu\in[p]$ set
\[
\begin{aligned}
V_{q,(\nu,1,i)}&=+\,2c_i\,\cos(2\pi\nu q/p),&
V_{q,(\nu,2,i)}&=-\,2c_i\,\cos(2\pi\nu q/p),\\
V_{q,(\nu,3,i)}&=+\,4c_i\,\sin(2\pi\nu q/p),&
V_{q,(\nu,4,i)}&=-\,4c_i\,\sin(2\pi\nu q/p),
\end{aligned}\quad i=1,\dots,8,
\]
and
\[
V_{q,(\nu,C_2^\pm)}=\mp\,\cos(2\pi\nu q/p),\qquad
V_{q,(\nu,S_2^\pm)}=\mp\,\sin(2\pi\nu q/p).
\]
We have $\|V\|_\infty\le \max\{|4c_i|,1\}=\tfrac{34}{7}$.

Let $T=[\,c^{\langle 0\rangle}\; c^{\langle 1\rangle}\; s^{\langle 1\rangle}\; \cdots\; c^{\langle p-1\rangle}\; s^{\langle p-1\rangle}\,]$ and write $V=TB$. Then
\[
T T^\top
= c^{\langle 0\rangle}c^{\langle 0\rangle\!\top} + \sum_{\nu=1}^{p-1}
\Big(c^{\langle \nu\rangle}c^{\langle \nu\rangle\!\top}+s^{\langle \nu\rangle}s^{\langle \nu\rangle\!\top}\Big)
= p\,I_p,
\]
so $\|T\|_2=\sqrt{p}$.

Each hidden unit loads a single row in $B$, hence $B B^\top$ is diagonal. The largest row norm equals $\sqrt{2\sum_{i=1}^8(4c_i)^2+2}=\tfrac{\sqrt{5874}}{7}$, so
\[
\|V\|_2\le \|T\|_2\,\|B\|_2\leq11\sqrt{p}.
\]

Finally, define
\[
\widehat C_\nu(x)=2\,\Phi_2\!\bigl(\tfrac{C_1}{2}\bigr)-2\,\Phi_2\!\bigl(\tfrac{S_1}{2}\bigr)-\tfrac12 C_2,\quad
\widehat S_\nu(x)=4\,\Phi_2\!\bigl(\tfrac{C_1+S_1}{4}\bigr)-4\,\Phi_2\!\bigl(\tfrac{C_1-S_1}{4}\bigr)-\tfrac12 S_2,
\]
and logits $s^\theta_q(x)=\sum_{\nu=0}^{p-1}\bigl[\cos(2\pi\nu q/p)\,\widehat C_\nu(x)+\sin(2\pi\nu q/p)\,\widehat S_\nu(x)\bigr]$.
Since $\|\Phi_2-z^2\|_\infty\le 1/49$ and \(\nu=0\) contributes a class-independent offset, for $\nu\ge1$, 
\[
|\widehat C_\nu-C_\nu|\le 4/49 \quad \text{and}\quad |\widehat S_\nu-S_\nu|\le 8/49.
\]
Therefore,
\[
\max_q|s^\theta_q(x)-s^\star_q(x)|\le \tfrac{12}{49}(p-1)+\frac{2}{49},
\]
where $s^\star_q(x)=\sum_{\nu=0}^{p-1}\cos\!\bigl(2\pi\nu(\sum_i s_i-q)/p\bigr)$ satisfies $s^\star_{y(x)}(x)=p$ and $s^\star_q(x)=0$ if $q\neq y(x)$. The margin follows:
\[
s^\theta_{y(x)}(x)-\max_{q\ne y(x)} s^\theta_q(x)\ \ge\ p-2\left(\tfrac{12}{49}(p-1)+\frac{2}{49}\right)
= \frac{25}{49}p+\frac{20}{49}.
\]
\end{proof}

\section{Margin bounds via $\ell_\infty$ vector contraction}
\label{app:margin-bound-linf}

\subsection{Margin surrogates and empirical $\gamma$-margin error}
Given scores $s\in\mathbb R^p$ for an example with label $y\in[p]$, the
\emph{sample margin error}
\[
\phi_y(s)=\max_{k\neq y}(s_k-s_y)
\]
The \emph{$\gamma$-ramp loss}
\[
\psi_\gamma(u)=\min\{1,\ \max\{0,\,1+u/\gamma\}\}\in[0,1].
\]

The map $u\mapsto \psi_\gamma(u)$ is $1/\gamma$-Lipschitz on $\mathbb R$, and
$\phi_y$ is $2$-Lipschitz w.r.t.\ $\|\cdot\|_\infty$ (changing any coordinate of $s$
by at most $\varepsilon$ changes $\phi_y$ by at most $2\varepsilon$), hence
\[
g_y:=\psi_\gamma\circ\phi_y \quad\text{is}\quad \tfrac{2}{\gamma}\text{-Lipschitz w.r.t.\ }\|\cdot\|_\infty,\qquad g_y\in[0,1].
\]

\begin{definition}[Empirical Margin Error]\label{Empirical margin error}
    For a score function $s^\theta$ and sample $S=\{(x^{(i)},y^{(i)})\}_{i=1}^n$, the \emph{empirical $\gamma$-margin error} is
\[
\widehat{\mathcal{R}}_{\gamma}(s^\theta;S)
\;=\;\frac{1}{n}\sum_{i=1}^n
\mathbf{1}\!\left\{s^\theta\!\big(x^{(i)}\big)_{y^{(i)}}\le \gamma + \max_{j\neq y^{(i)}} s^\theta\!\big(x^{(i)}\big)_j\right\}.
\]
\end{definition}
For an interpolating solution, it suffices to take $\gamma=\gamma_\theta(S)$, the minimum sample margin, in which case $\widehat{\mathcal R}_\gamma(s^\theta;S)=0$.

\begin{definition}[Empirical Rademacher complexity]
Let $S=\bigl\{z_i=(x^{(i)},y^{(i)})\bigr\}_{i=1}^n$ be fixed, and let $\mathcal{G}\subset[0,1]^{\mathcal Z}$.
Let $\epsilon=(\epsilon_1,\dots,\epsilon_n)$ be i.i.d.\ Rademacher variables ($\mathbb{P}[\epsilon_i=1]=\mathbb{P}[\epsilon_i=-1]=1/2$).
The empirical Rademacher complexity of $\mathcal{G}$ on $S$ is
\[
\mathfrak{R}_{S}(\mathcal{G})
\;=\;\frac{1}{n}
\mathbb{E}_{\epsilon}\left[\; \sup_{g\in \mathcal{G}}\;
\sum_{i=1}^{n}\epsilon_i\, g(z_i) \;\right].
\]
\end{definition}

\begin{theorem}[Rademacher Generalization Bounds, Thm.~3.3 of \citep{mohri2018foundations}]\label{thm:mohri33}
Let $\mathcal{D}$ be the true distribution, $\mathcal{G}\subset[0,1]^{\mathcal Z}$ and let $S=(z_1,\dots,z_n)\sim\mathcal D^n$. With probability at least $1-\delta$ over $S$, the following holds simultaneously for all $g\in \mathcal{G}$:
\[
\mathbb{E}_{z\sim\mathcal D}[g(z)]
\ \le\
\frac{1}{n}\sum_{i=1}^n g(z_i)
\;+\; 2\,\mathfrak{R}_S(\mathcal{G})
\;+\; 3\sqrt{\frac{\ln(2/\delta)}{2n}}\,,
\]
where $\mathfrak{R}_S(\mathcal{G})$ is the empirical Rademacher complexity of $\mathcal{G}$ on $S$.
\end{theorem}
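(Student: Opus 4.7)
The plan is to prove this by the classical symmetrization–concentration template: control the one-sided uniform deviation
\[
\Phi(S)\ :=\ \sup_{g\in\mathcal{G}}\Bigl(\mathbb{E}_{z\sim\mathcal{D}}[g(z)]-\tfrac{1}{n}\sum_{i=1}^n g(z_i)\Bigr)
\]
by first concentrating it around its mean via a bounded-differences inequality, then bounding that mean by the expected Rademacher complexity via a ghost-sample/Rademacher symmetrization, and finally de-randomizing the population Rademacher complexity into the empirical one at the cost of a second concentration step.

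First I would verify bounded differences for $\Phi$. Since each $g\in\mathcal{G}$ takes values in $[0,1]$, replacing any single coordinate $z_i$ by $z_i'$ changes $\tfrac{1}{n}\sum_i g(z_i)$ by at most $1/n$ uniformly in $g$, and hence $|\Phi(S)-\Phi(S^{(i)})|\le 1/n$. McDiarmid's inequality then gives, with probability at least $1-\delta/2$,
\[
\Phi(S)\ \le\ \mathbb{E}_S[\Phi(S)]\ +\ \sqrt{\tfrac{\ln(2/\delta)}{2n}}.
\]
Next I would bound $\mathbb{E}_S[\Phi(S)]$. Introduce an independent ghost sample $S'=(z_1',\dots,z_n')$ from $\mathcal{D}^n$. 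By Jensen and linearity,
\[
\mathbb{E}_S[\Phi(S)]\ \le\ \mathbb{E}_{S,S'}\!\left[\sup_{g\in\mathcal{G}}\tfrac{1}{n}\sum_{i=1}^n\bigl(g(z_i')-g(z_i)\bigr)\right].
\]
Since $(z_i,z_i')$ are i.i.d.\ exchangeable pairs, multiplying each term by an independent Rademacher sign $\epsilon_i$ does not change the distribution, so after splitting the supremum and using symmetry of $\epsilon_i$ the right-hand side is upper bounded by $2\,\mathbb{E}_S[\mathfrak{R}_S(\mathcal{G})]$.

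Finally I would replace the expected Rademacher complexity by its empirical counterpart: as $\mathfrak{R}_S(\mathcal{G})$ is again a $1/n$-bounded-differences function of $S$ (each $g\in[0,1]$), another McDiarmid application yields, with probability at least $1-\delta/2$,
\[
\mathbb{E}_S[\mathfrak{R}_S(\mathcal{G})]\ \le\ \mathfrak{R}_S(\mathcal{G})\ +\ \sqrt{\tfrac{\ln(2/\delta)}{2n}}.
\]
A union bound over the two McDiarmid events, together with $2(\tfrac{1}{2}+\tfrac{1}{2})\le 3$ after combining the two square-root tails, gives the stated bound with the $3\sqrt{\ln(2/\delta)/(2n)}$ slack.

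The only delicate point is bookkeeping the constants and the direction of the deviation. Since the theorem is one-sided (population minus empirical), one must take care that both McDiarmid applications are aligned (upper-tail on $\Phi$, upper-tail on $\mathfrak{R}_S$), that the Rademacher symmetrization does not introduce absolute values that would weaken the bound to two-sided form, and that the two $\sqrt{\ln(2/\delta)/(2n)}$ terms combine cleanly with the factor $2$ in front of the Rademacher term to yield the $3\sqrt{\ln(2/\delta)/(2n)}$ constant (rather than a $1+2=3$ that happens to match here). This is mechanical but is the only place where the precise constant in the statement is pinned down.
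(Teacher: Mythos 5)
The paper does not prove this result; it cites it directly as Theorem~3.3 of \citep{mohri2018foundations}. Your sketch reconstructs exactly the standard Mohri--Rostamizadeh--Talwalkar argument: McDiarmid on the one-sided uniform deviation $\Phi(S)$, ghost-sample/Rademacher symmetrization to get $\mathbb{E}[\Phi]\le 2\,\mathbb{E}_S[\mathfrak{R}_S(\mathcal{G})]$, a second McDiarmid to de-randomize $\mathfrak{R}_S$, and a union bound at confidence $\delta/2$ on each event. All three steps are correctly invoked, the bounded-differences constant $1/n$ is right for both $\Phi$ and $\mathfrak{R}_S$ given $\mathcal{G}\subset[0,1]^{\mathcal{Z}}$, and the constant arithmetic is $1 + 2 = 3$ (one unit from the first McDiarmid, two from the second after the factor $2$ is pushed through). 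The only blemish is the expression ``$2(\tfrac12+\tfrac12)\le 3$'' in your final paragraph, which does not describe the actual bookkeeping; the correct accounting is precisely the $1+2=3$ you mention in the parenthetical immediately after, so the substance is fine but the phrasing should be cleaned up.
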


Apply Thm.~\ref{thm:mohri33} with $\mathcal{G}=\mathcal F_\gamma:=\{(x,y)\mapsto  \psi_\gamma\circ\phi_y (f(x)): f\in\mathcal F\}$, and note
$\mathbf{1}\{\mathrm{uargmax}_i f_i(x)\ne y\}\le  \psi_\gamma\!\circ\!\phi_y \big(f(x)\big)$. That yields the following corollary:
\begin{corollary}[Rademacher complexity and Multiclassification]\label{cor:margin_generalization}
\begin{equation}\label{eq:RC-loss}
    \mathbb{P}_{(x,y)\in\mathcal{D}}\big[f(x)\neq y\big]\leq \widehat{\mathcal{R}}_{\gamma}(f)+2\,\mathfrak{R}_S(\mathcal{F}_{\gamma})+3\sqrt{\frac{\ln(2/\delta)}{2n}}. 
\end{equation}
\end{corollary}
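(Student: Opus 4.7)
The plan is to apply the Rademacher generalization bound (Thm.~\ref{thm:mohri33}) to the surrogate loss class $\mathcal{F}_\gamma$, then sandwich the $0$--$1$ multiclass error between the pointwise ramp surrogate above and the empirical margin indicator below. The whole argument is a two-sided envelope check plus one invocation of the general theorem.

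First I would verify that $\mathcal{F}_\gamma \subset [0,1]^{\mathcal X \times [p]}$, which is immediate because $\psi_\gamma$ takes values in $[0,1]$ by construction. This legalizes the use of Thm.~\ref{thm:mohri33} with $\mathcal G = \mathcal F_\gamma$ and $z=(x,y)$; it yields, with probability $\ge 1-\delta$ and uniformly over $f \in \mathcal F$,
\[
\mathbb{E}_{(x,y)\sim\mathcal D}\!\left[\psi_\gamma\!\circ\!\phi_y(f(x))\right]
\;\le\;
\frac{1}{n}\sum_{i=1}^{n}\psi_\gamma\!\circ\!\phi_{y^{(i)}}(f(x^{(i)}))
\;+\;2\,\mathfrak R_S(\mathcal F_\gamma)\;+\;3\sqrt{\tfrac{\ln(2/\delta)}{2n}}.
\]

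Next I would sandwich this population expectation and empirical mean. For the lower (population) side, I note that whenever $\mathrm{uargmax}_i f_i(x) \ne y$ (including the invalid tie case $\bot$), some $k\ne y$ satisfies $f_k(x)\ge f_y(x)$, so $\phi_y(f(x))\ge 0$ and $\psi_\gamma(\phi_y(f(x)))=1$; thus $\mathbf{1}\{f(x)\ne y\}\le \psi_\gamma\circ\phi_y(f(x))$ pointwise, and the population misclassification probability upper-bounds the LHS of Mohri. For the upper (empirical) side, I would use the elementary step-function bound $\psi_\gamma(u)\le \mathbf{1}\{u\ge -\gamma\}$, which holds on all three pieces of the ramp ($u<-\gamma$, $-\gamma\le u\le 0$, $u>0$). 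Plugging $u=\phi_{y^{(i)}}(f(x^{(i)}))$ converts the ramp average directly into $\widehat{\mathcal R}_\gamma(f)$ as specified in Def.~\ref{Empirical margin error}.

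Combining these two one-line inequalities with the Mohri bound yields the claim. There is no real obstacle here; the only bookkeeping subtlety is the invalid-prediction symbol $\bot$, but it is handled cleanly by the first sandwich since $\phi_y\ge 0$ in any tie at $y$, so $\psi_\gamma=1$ dominates the $0$--$1$ indicator. The complexity term $\mathfrak R_S(\mathcal F_\gamma)$ is left intact at this stage; subsequent uses (Thms.~\ref{thm:sin-width-margin-gen}, \ref{thm:relu-width-margin-gen}) will control it via the $\ell_\infty$ vector-contraction inequality followed by the Dudley integral or the layerwise peeling argument.
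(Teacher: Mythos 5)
Your proposal is correct and matches the paper's (very terse) argument: invoke Thm.~\ref{thm:mohri33} on the ramp loss class $\mathcal F_\gamma$, then sandwich the $0$--$1$ loss below the ramp on the population side and the ramp below the margin indicator on the empirical side. The paper only states the first sandwich explicitly; your added verification of the second sandwich $\psi_\gamma(u)\le \mathbf{1}\{u\ge -\gamma\}$ and the careful handling of the $\bot$ tie case are exactly the bookkeeping the paper leaves implicit.
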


Let $S=\{(x^{(i)},y^{(i)})\}_{i=1}^n$ be the training sample generated from the true distribution and write
\[
Q_2(S)\;=\;\Big(\tfrac{1}{n}\sum_{i=1}^n \|x^{(i)}\|_2^2\Big)^{1/2}.
\]
\subsection{Margin bounds for sine MLP}

\begin{definition}[Covering Number for sets]
Let $(X, d)$ be a metric space, $F \subseteq X$ a non-empty subset, and $r > 0$. The covering number of $F$, denoted $\mathcal{N}(F, d, r)$, is
$$
\mathcal{N}(F, d, r) = \min \left\{ k \in \mathbb{N} \mid \exists \{x_1, \dots, x_k\} \subseteq X \text{ such that } F \subseteq \bigcup_{i=1}^k B_d(x_i, r) \right\},
$$
where $B_d(x, r) = \{y \in X \mid d(x,y) \le r\}$ is the closed ball of radius $r$ centered at $x$.
\end{definition}
\begin{definition}[Empirical $L_2$ covering number of a function class]
Let $\mathcal{F}\subseteq \{f:\mathcal{X}\to\mathbb{R}\}$ be a class of real-valued functions and let $x_{1:n}=(x_1,\dots,x_n)\in\mathcal{X}^n$. Define the empirical $L_2$ metric
\[
d_{2,x_{1:n}}(f,g)\;:=\;\Big(\frac{1}{n}\sum_{i=1}^n\big(f(x_i)-g(x_i)\big)^2\Big)^{1/2}.
\]
For $\varepsilon>0$, the empirical $L_2$ covering number of $\mathcal{F}$ at scale $\varepsilon$ with respect to the sample $x_{1:n}$ is
\[
\mathcal{N}_2(\varepsilon,\mathcal{F},x_{1:n})
\;:=\;\min\Big\{k\in\mathbb{N}\,:\,\exists\,f_1,\dots,f_k \text{ such that }
\mathcal{F}\subseteq \bigcup_{j=1}^k B_{d_{2,x_{1:n}}}(f_j,\varepsilon)\Big\},
\]
where $B_{d_{2,x_{1:n}}}(f,\varepsilon)=\{g:\,d_{2,x_{1:n}}(f,g)\le\varepsilon\}$. 
\end{definition}

\begin{lemma}[Covering the box $[-\pi,\pi)^p$ by Euclidean balls]\label{lemma:covering bound}
Fix $p\in\mathbb{N}$ and $r>0$. Then
\[
  \mathcal{N}\!\big([-\pi,\pi)^p,\|\cdot\|_2,r\big)
  \;\le\;
  \Big\lceil \frac{\pi\sqrt{p}}{r}\Big\rceil^{\!p}.
\]
\end{lemma}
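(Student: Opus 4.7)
The plan is a straightforward uniform grid cover. Let $N:=\lceil \pi\sqrt{p}/r\rceil$. Partition $[-\pi,\pi)$ into $N$ subintervals of equal length $2\pi/N$, and let $c_1,\dots,c_N$ denote their midpoints. Define the center set
\[
\mathcal{C}\;:=\;\{c_{i_1},\dots,c_{i_p}\}_{(i_1,\dots,i_p)\in[N]^p}\;\subset\;[-\pi,\pi)^p,\qquad |\mathcal{C}|=N^p.
\]

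For any $x=(x_1,\dots,x_p)\in[-\pi,\pi)^p$, pick for each coordinate $j$ the midpoint $c_{i_j}$ of the subinterval containing $x_j$. Then $|x_j-c_{i_j}|\le \pi/N$ for every $j$, so
\[
\|x-c\|_2\;=\;\Big(\sum_{j=1}^p (x_j-c_{i_j})^2\Big)^{1/2}\;\le\;\sqrt{p}\cdot\frac{\pi}{N}\;=\;\frac{\pi\sqrt{p}}{N}\;\le\;r,
\]
by the choice of $N$. Hence $\mathcal{C}$ is an $r$-cover of $[-\pi,\pi)^p$ in the Euclidean metric, and
\[
\mathcal{N}\!\big([-\pi,\pi)^p,\|\cdot\|_2,r\big)\;\le\;|\mathcal{C}|\;=\;N^p\;=\;\Big\lceil\frac{\pi\sqrt{p}}{r}\Big\rceil^{\!p}.
\]

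There is essentially no obstacle here; the only care needed is to pass from an $\ell_\infty$ coordinate-wise bound of $\pi/N$ to an $\ell_2$ bound via the factor $\sqrt{p}$, which is what forces the choice $N=\lceil \pi\sqrt{p}/r\rceil$ rather than $\lceil 2\pi/r\rceil$. This lemma will later be combined with a Dudley entropy integral to control the Rademacher complexity of the sine hypothesis class, as outlined in App.~\ref{app:roadmap}.
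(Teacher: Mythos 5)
Your proof is correct and takes essentially the same approach as the paper: a uniform grid of midpoints with the $\ell_\infty$-to-$\ell_2$ conversion $\|x-c\|_2\le\sqrt{p}\,\|x-c\|_\infty$. The only cosmetic difference is that the paper first translates $[-\pi,\pi)^p$ to $[0,2\pi)^p$ and names a grid step $h=2r/\sqrt{p}$, whereas you work directly with $N=\lceil\pi\sqrt{p}/r\rceil$; the arithmetic and conclusion are identical.
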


\begin{proof}
Covering numbers are translation invariant: for any $a\in\mathbb{R}^p$,
$\mathcal N(F,\|\cdot\|_2,r)=\mathcal N(F+a,\|\cdot\|_2,r)$.
Hence it suffices to cover $[0,2\pi)^p$.

Set the grid step $h := 2r/\sqrt{p}$ and the number of points per dimension $m:=\big\lceil 2\pi/h\big\rceil=\big\lceil \pi\sqrt p/r\big\rceil$.
Along each coordinate, place grid points with a half-step offset from the origin:
\[
  G_1 := \bigl\{ (j+\tfrac12)h \;\colon\; j=0,1,\dots,m-1 \bigr\},
\]
so $|G_1|=m$. Let the full grid be the Cartesian product $G:=G_1^{\,p}$; then $|G|=m^p$.

Given any point $x\in[0,2\pi)^p$, choose $g\in G$ by rounding each coordinate of $x$ to the nearest point in $G_1$ (breaking ties arbitrarily). By construction, the distance from any coordinate $x_i$ to its corresponding grid point $g_i$ is at most half the grid step, so $\|x-g\|_\infty \le h/2 = r/\sqrt p$. We have
\[
\|x-g\|_2 \le \sqrt p\,\|x-g\|_\infty \le \sqrt p \cdot \frac{r}{\sqrt p} = r.
\]
Therefore, the set of closed $\ell_2$-balls $\{B_2(g,r): g\in G\}$ covers the box $[0,2\pi)^p$, and
\[
  \mathcal{N}\!\big([0,2\pi)^p,\|\cdot\|_2,r\big)\le |G|
  = m^p = \left(\Big\lceil \frac{\pi\sqrt p}{r}\Big\rceil\right)^{\!p}.
\]
\end{proof}
\begin{lemma}[Standard Dudley entropy integral 
]\label{lem:dudley_entropy}
   Assume that all $\mathcal{F}_{x_{1:n}} \subset \mathbb{R}^n$. Let $\mathfrak{R}_n(\mathcal{F})$ be the empirical Rademacher number of $\mathcal{F}$ on $x_{1:n}$. We have:
\[
    \mathfrak{R}_n(\mathcal{F}) \leq \inf_{\alpha \geq 0} \left( 4\alpha + 12 \int_{\alpha}^{\infty} \sqrt{\frac{\log N_2(\epsilon, \mathcal{F}, x_{1:n})}{n}} d\epsilon \right)
\] 
\end{lemma}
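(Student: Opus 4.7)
The plan is to apply the classical chaining argument: this is the standard Dudley entropy integral bound for empirical Rademacher complexity, so the proof follows the usual pipeline of iteratively approximating each $f\in\mathcal F$ by successively finer empirical $L_2$-covers and controlling the increments with Massart's finite-class lemma. I would begin by identifying each $f$ with its evaluation vector $(f(x_1),\dots,f(x_n))\in\mathbb R^n$ under the normalized empirical metric $d_{2,x_{1:n}}(f,g)=\|f-g\|_2/\sqrt n$, so that covering numbers for $\mathcal F_{x_{1:n}}$ in this metric are exactly $N_2(\epsilon,\mathcal F,x_{1:n})$ and the chaining geometry becomes standard Euclidean geometry on $\mathbb R^n$.

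Next, fix $\alpha\ge 0$, pick $\epsilon_0$ larger than the empirical radius of $\mathcal F_{x_{1:n}}$ (so the coarsest cover is trivial and its Rademacher complexity vanishes), and set a dyadic scale $\epsilon_k=2^{-k}\epsilon_0$. For each $k$, let $\mathcal C_k$ be a minimal empirical $L_2$ $\epsilon_k$-cover with $|\mathcal C_k|=N_2(\epsilon_k,\mathcal F,x_{1:n})$, and let $\pi_k(f)\in\mathcal C_k$ be nearest to $f$. Choose $K$ with $\epsilon_{K+1}<\alpha\le\epsilon_K$ and telescope $f=\pi_0(f)+\sum_{k=1}^K\bigl(\pi_k(f)-\pi_{k-1}(f)\bigr)+\bigl(f-\pi_K(f)\bigr)$. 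Passing this decomposition through the supremum inside $\mathfrak R_n(\mathcal F)$ bounds the complexity by the sum of Rademacher complexities of the increment classes $\mathcal D_k=\{\pi_k(f)-\pi_{k-1}(f):f\in\mathcal F\}$, plus a residual arising from $f-\pi_K(f)$. Apply Massart's lemma to each $\mathcal D_k$: since $|\mathcal D_k|\le N_2(\epsilon_k)\cdot N_2(\epsilon_{k-1})\le N_2(\epsilon_k)^2$ and every element has normalized $L_2$ norm at most $\epsilon_k+\epsilon_{k-1}\le 3\epsilon_k$, its contribution is at most $3\epsilon_k\sqrt{4\log N_2(\epsilon_k)/n}=6\epsilon_k\sqrt{\log N_2(\epsilon_k,\mathcal F,x_{1:n})/n}$. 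Writing $\epsilon_k=2(\epsilon_k-\epsilon_{k+1})$ and exploiting monotonicity of $\epsilon\mapsto \log N_2(\epsilon)$ converts the geometric sum into the Dudley integral $12\int_\alpha^\infty\sqrt{\log N_2(\epsilon,\mathcal F,x_{1:n})/n}\,d\epsilon$. The residual term $f-\pi_K(f)$ is controlled by Cauchy--Schwarz and $d_{2,x_{1:n}}(f,\pi_K(f))\le\epsilon_K\le 2\alpha$, which together with the slack of the top-of-chain produces a contribution bounded by $4\alpha$.

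The main obstacle I expect is the bookkeeping that pins down the exact constants $4$ and $12$: choosing $\epsilon_0$ large enough that the coarsest cover is a singleton, truncating the chain cleanly at scale $\alpha$ without losing a constant factor, comparing the staircase sum to the integral by monotonicity of the log-covering number, and producing a uniform Cauchy--Schwarz residual bound that applies simultaneously to all $f\in\mathcal F$. Each of these is routine but dictates the precise multiplicative constants. Taking the infimum over $\alpha\ge 0$ of the sum of the $4\alpha$ residual and the $12$-times-Dudley-integral then yields the stated inequality.
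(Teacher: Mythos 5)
The paper invokes this lemma as the ``Standard Dudley entropy integral'' without supplying its own proof, so there is no in-paper argument to compare against. Your sketch is the correct classical chaining proof — dyadic scales, minimal empirical $L_2$-covers with nearest-neighbor projections $\pi_k$, Massart's lemma on the increment classes $\mathcal D_k=\{\pi_k(f)-\pi_{k-1}(f)\}$ (giving $6\epsilon_k\sqrt{\log N_2(\epsilon_k)/n}$ each), the rewrite $\epsilon_k=2(\epsilon_k-\epsilon_{k+1})$ combined with monotonicity of $\log N_2$ to pass to the integral, and a Cauchy--Schwarz bound on the residual $f-\pi_K(f)$ — and the Massart accounting (cover size $N_2(\epsilon_k)N_2(\epsilon_{k-1})\le N_2(\epsilon_k)^2$, increment norm $\le 3\epsilon_k$) is all correct.

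The one place you need to be careful, which you flag as ``bookkeeping,'' is the truncation index. With your choice $\epsilon_{K+1}<\alpha\le\epsilon_K$, the geometric sum over $k\le K$ is compared to $12\int_{\epsilon_{K+1}}^{\epsilon_0}$, whose lower endpoint $\epsilon_{K+1}$ lies \emph{below} $\alpha$, so this does not sit inside $12\int_\alpha^\infty$ without an extra step to absorb $\int_{\epsilon_{K+1}}^{\alpha}$ into the additive term. The clean fix is to stop the chain at the largest $K$ with $\epsilon_K\ge 2\alpha$ (equivalently, $\epsilon_{K+1}\ge\alpha>\epsilon_{K+2}$): then the integral comparison has lower endpoint $\epsilon_{K+1}\ge\alpha$ and fits inside $12\int_\alpha^\infty$, while the residual contributes at most $\epsilon_K=2\epsilon_{K+1}<4\alpha$, which is exactly the $4\alpha$ term. (Choosing $\epsilon_0$ so the coarsest cover is a singleton kills the contribution of $\pi_0(f)$, as you note.) With this adjustment your argument delivers the stated constants.
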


\begin{theorem}[Width-independent multiclass margin bound for the sine MLP]\label{thm:margin_bound_sine-app}

Consider the two-layer sine network with parameters $\theta=(W,V)\in\mathbb{R}^{d\times p}\times \mathbb{R}^{p\times d }$, where the output matrix satisfies $\|V\|_\infty\le S_1$. Then for any $\gamma>0$ and $\delta\in(0,1)$, with probability at least $1-\delta$ over the random draw of the training samples $S$, the following holds simultaneously for all such $\theta$:
\[
\mathbb{P}_{(X,Y)\sim\mathcal D}\!\big[h_\theta(X)\neq Y\big]
\ \le\
\widehat{\mathcal R}_\gamma\!\big(s^\theta\big)
\;+\;\widetilde{\mathcal{O}}\!\left(\frac{S_1}{\gamma}\cdot\frac{p}{\sqrt{n}}\right)
\;+\;\widetilde{\mathcal{O}}\!\left(\frac{1}{\sqrt{n}}\right).
\]
\end{theorem}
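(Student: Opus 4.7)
The plan follows the App.~\ref{app:roadmap} roadmap for the sine case, combining the ramp-loss reduction, the $\ell_\infty$ vector contraction, and a Dudley-entropy bound that exploits the periodicity of $\sin$ on integer inputs.

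\textbf{Step 1 (reduction to a surrogate Rademacher bound).} I would start from Cor.~\ref{cor:margin_generalization}: with probability $1-\delta$,
\[
\mathbb{P}_{(X,Y)\sim\mathcal{D}}[h_\theta(X)\neq Y]
\;\le\;\widehat{\mathcal{R}}_\gamma(s^\theta;S)
\;+\;2\,\mathfrak{R}_S(\mathcal{F}_\gamma)
\;+\;3\sqrt{\tfrac{\ln(2/\delta)}{2n}},
\]
where $\mathcal{F}_\gamma=\{(x,y)\mapsto \psi_\gamma(\phi_y(s^\theta(x)))\}$. The function $s\mapsto \psi_\gamma(\phi_y(s))$ is $(2/\gamma)$-Lipschitz w.r.t.\ $\|\cdot\|_\infty$ on $\mathbb{R}^p$ and takes values in $[0,1]$, so the remaining task is to control $\mathfrak{R}_S(\mathcal{F}_\gamma)$ by per-coordinate complexities of the score class $\mathcal{H}_k:=\{x\mapsto s^\theta_k(x)\}$ for $k\in[p]$.

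\textbf{Step 2 ($\ell_\infty$ vector contraction).} I would then invoke the $\ell_\infty$ vector-contraction inequality of \cite{foster2019linfty}. This reduces $\mathfrak{R}_S(\mathcal{F}_\gamma)$ to a quantity of the form $\widetilde{\mathcal{O}}(\tfrac{1}{\gamma})\,\sqrt{p}\cdot\max_k \mathfrak{R}_S(\mathcal{H}_k)$ (up to polylog factors), which is the key device for killing the hidden-width dependence: the per-coordinate classes $\mathcal{H}_k=\{x\mapsto \langle v_k,\sin(Wx)\rangle:\|v_k\|_1\le S_1\}$ sit inside the convex hull $S_1\cdot\mathrm{conv}(\pm\mathcal{G})$ of the single-unit sine class $\mathcal{G}=\{x\mapsto \sin(\langle w,x\rangle):w\in\mathbb{R}^p\}$, so by convex-hull invariance of Rademacher complexity,
\[
\max_k \mathfrak{R}_S(\mathcal{H}_k)\ \le\ S_1\,\mathfrak{R}_S(\mathcal{G}).
\]

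\textbf{Step 3 (Dudley entropy bound on $\mathcal{G}$).} Here the periodicity of $\sin$ paired with the integer-valued, bag-of-tokens inputs is crucial: for every $x\in\mathcal{X}_m$ and $w\in\mathbb{R}^p$, $\sin(\langle w,x\rangle)$ depends only on $w\bmod 2\pi$ coordinatewise, so I may restrict $w\in[-\pi,\pi)^p$. For $w,w'\in[-\pi,\pi)^p$, the $1$-Lipschitzness of $\sin$ and $\|x\|_2\le m$ give
\[
d_{2,x_{1:n}}(f_w,f_{w'})\ \le\ m\,\|w-w'\|_2.
\]
Combining with Lem.~\ref{lemma:covering bound} yields $\mathcal{N}_2(\varepsilon,\mathcal{G},x_{1:n})\le (\pi\sqrt{p}\,m/\varepsilon)^p$. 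Plugging this covering bound into the Dudley integral (Lem.~\ref{lem:dudley_entropy}) with cutoff $\alpha=1/\sqrt{n}$ gives
\[
\mathfrak{R}_S(\mathcal{G})\ \le\ \widetilde{\mathcal{O}}\!\left(\sqrt{\tfrac{p}{n}}\right),
\]
where polylog factors in $n$, $m$, $p$ are hidden.

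\textbf{Step 4 (assembly).} Concatenating Steps 1--3,
\[
\mathfrak{R}_S(\mathcal{F}_\gamma)\ \le\ \widetilde{\mathcal{O}}\!\left(\tfrac{1}{\gamma}\right)\,\sqrt{p}\,\cdot\,S_1\,\widetilde{\mathcal{O}}\!\left(\sqrt{\tfrac{p}{n}}\right)
\;=\;\widetilde{\mathcal{O}}\!\left(\tfrac{S_1}{\gamma}\cdot\tfrac{p}{\sqrt{n}}\right),
\]
which combined with Cor.~\ref{cor:margin_generalization} yields the claimed bound. The main obstacle is balancing the $p$-factors in Steps 2 and 3 so the final dependence is linear rather than $p^{3/2}$: the $\ell_\infty$ contraction must contribute only $\sqrt{p}$ (via the $\max_k$ form, not $\sum_k$), and the Dudley integral on $\mathcal{G}$ must be tight enough to yield $\sqrt{p/n}$—both of which hinge on exploiting (i) the integer-valuedness of inputs to fold $W$ into a compact $[-\pi,\pi)^p$ box, and (ii) the $\ell_1$ bound on rows of $V$ to recast $\mathcal{H}_k$ as a scaled convex hull. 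A secondary bookkeeping point is to absorb the $O(\sqrt{\log(1/\delta)/n})$ and Dudley polylog factors into $\widetilde{\mathcal{O}}(1/\sqrt{n})$ cleanly.
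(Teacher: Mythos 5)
Your proposal is correct and follows essentially the same route as the paper's proof: margin/ramp reduction via Cor.~\ref{cor:margin_generalization}, $\ell_\infty$ vector contraction of \cite{foster2019linfty} to reduce to per-coordinate complexities, collapsing the coordinate class to the single-sine family, folding $W$ into $[-\pi,\pi)^p$ by integer-valuedness of inputs, and Dudley plus Lem.~\ref{lemma:covering bound}. The only cosmetic difference is that you phrase the coordinate reduction as convex-hull invariance ($\mathcal{H}_k\subseteq S_1\cdot\mathrm{conv}(\pm\mathcal{G})$, valid since $\mathcal{G}$ is symmetric under $w\mapsto -w$) whereas the paper writes out the $\ell_1$--$\ell_\infty$ duality $\sup_{\|v\|_1\le S_1}\langle v,b\rangle = S_1\|b\|_\infty$; these are the same argument in different clothing.
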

\begin{proof}
Because inputs are bag of words ($x\in\{0,1,\ldots,m\}^p$ with $\|x\|_1=m$), shifting any element of $W$ by $2\pi k$ ($k\in\mathbb Z$) does not change $s^{\theta}(x)=V\sin(W x)$. Hence without loss of generality, each element of $W$ may be reduced to modulo $2\pi$ to $[-\pi,\pi)$ with no effect on the model output. This periodic reduction is the core argument in the sine analysis.

Notice that $g_y:=\psi_\gamma\circ\phi_y$ is $\tfrac{2}{\gamma}$-Lipschitz w.r.t.\ $\|\cdot\|_\infty$ and $g_y\in[0,1]$. Applying Thm.~\ref{thm:mohri33} with $\mathcal{G}=\mathcal F_\gamma:={(x,y)\mapsto g_y(s^\theta(x)):\theta}$ and recalling $\mathbf{1}\{\mathrm{uargmax} f\ne y\}\le g_y$, we obtain
\begin{equation}\label{eq:main-decomp-sine}
\mathbb{P}\big[h_\theta(X)\neq Y\big]
\ \le \
\widehat{\mathcal R}_\gamma(s^\theta) + 2\mathfrak{R}_S(\mathcal{F}_\gamma) + 3\sqrt{\frac{\ln(2/\delta)}{2n}}.
\end{equation}

$\bm{\ell}_\infty$ \textbf{vector contraction}.
Let $\mathcal{S}:=\{s^\theta:\theta=(W,V),\ \|V\|_\infty\le S_1,\ W\in[-\pi,\pi)^{d\times p}\}$ and denote the coordinate classes
\[
\mathcal{S}|_j\ :=\ \bigl\{\,x\mapsto v_j^\top \sin(Wx):\ \|v_j\|_1\le S_1,\ W\in[-\pi,\pi)^{d\times p}\bigr\}.
\]
For fixed $S=(x^{(1)},\ldots,x^{(n)})$ and the Lipschitz maps $\varphi_i\equiv g_{y^{(i)}}$ (each $\tfrac{2}{\gamma}$-Lipschitz w.r.t.\ $\|\cdot\|_\infty$), the $\ell_\infty$ vector contraction inequality (Thm.~1 of \citep{foster2019linfty}) gives
\begin{equation}\label{eq:FR-sine}
\mathfrak{R}_S(\mathcal{F}_\gamma)
\ \le \
C \frac{2}{\gamma} \sqrt{p} \max_{j\in[p]} \mathfrak{R}_S(\mathcal{S}|_j)
\log^{\frac{3}{2}+\delta_0}\Big(\frac{\beta}{\max_{j} \mathfrak{R}_S(\mathcal{S}|_j)}\Big),
\end{equation}
for any fixed $\delta_0>0$ and some $C=C(\delta_0)$. Since $\sin(Wx)\in[-1,1]^d$ and $\|v_j\|_1\le S_1$, we have
\begin{equation}\label{eq:beta-sine}
\|s^\theta(x)\|_\infty \le S_1, \text{ and thus }
\beta\ \le\ 1+S_1.
\end{equation}

\textbf{Coordinate reduction via }$\bm{\ell_1}$-$\bm{\ell_\infty}$ \textbf{duality}.
For any fixed $S=(x^{(1)},\ldots,x^{(n)})$ and $j\in[p]$,
\begin{align}
n\mathfrak{R}_S(\mathcal{S}|_j)
&= \mathbb{E}_\epsilon \sup_{\substack{\|v_j\|_1\le S_1 \\ W\in\mathbb{R}^{d\times p}}} \sum_{i=1}^n \epsilon_i v_j^\top \sin(W x^{(i)})\nonumber\\
&= \mathbb{E}_\epsilon \sup_{\substack{\|v_j\|_1\le S_1 \\ W\in\mathbb{R}^{d\times p}}} v_j^\top \Big(\sum_{i=1}^n \epsilon_i \sin(W x^{(i)})\Big) \nonumber\\
&\le S_1 \mathbb{E}_\epsilon \sup_{W\in\mathbb{R}^{d\times p}} \Big\|\sum_{i=1}^n \epsilon_i \sin(W x^{(i)})\Big\|_\infty\nonumber\\
&= S_1 \mathbb{E}_\epsilon \sup_{w\in\mathbb{R}^p} \Big|\sum_{i=1}^n \epsilon_i \sin(w^\top x^{(i)})\Big| \nonumber\\
&= S_1 \mathbb{E}_\epsilon \sup_{w\in\mathbb{R}^p} \sum_{i=1}^n \epsilon_i \sin(w^\top x^{(i)})\nonumber\\
&= S_1 \mathbb{E}_\epsilon \sup_{w\in[-\pi,\pi)^p} \sum_{i=1}^n \epsilon_i \sin(w^\top x^{(i)})\nonumber\\
&= S_1 n \mathfrak{R}_S(\mathcal{F}_{\sin}).\label{eq:coord-to-single-sine}
\end{align}
Here we used $\sup_{\|a\|_1\le S_1}\langle a,b\rangle=S_1\|b\|_\infty$, and denoted the single‑sine family
    $$
    \mathcal{F}_{\sin} := \bigl\{x\mapsto \sin(w^\top x) : w\in[-\pi,\pi)^p\bigr\}
    .$$

\textbf{Rademacher complexity of the single-sine family.}

Endow $\mathcal{F}_{\sin}$ with the empirical $L_2$ metric
\[
d(w,w')^2\ :=\ \frac1n\sum_{i=1}^n\bigl(\sin(w^\top x^{(i)})-\sin(w'^\top x^{(i)})\bigr)^2.
\]
Notice that $d(w,w')\leq 2$ for all $w,w'$, so for any $\varepsilon\in(2,\infty)$, $\mathcal{N}_2(\varepsilon,\mathcal{F}_{\sin},x_{1:n})=1.$

For any $i$,
\[
\bigl|\sin(w^\top x^{(i)})-\sin(w'^\top x^{(i)})\bigr|
\ \le\ |(w-w')^\top x^{(i)}|
\ \le\ \|w-w'\|_2\,\|x^{(i)}\|_2
\leq  m\,\|w-w'\|_2
\]
so if $\|w-w'\|_2\le \varepsilon/m$ then $d(w,w')\le \varepsilon$. Consequently, for any $\varepsilon\in(0,2]$,
\begin{equation}\label{eq:covering-transfer}
\mathcal{N}_2(\varepsilon,\mathcal{F}_{\sin},x_{1:n})
\ \le \
\mathcal{N}\big([-\pi,\pi)^p,\|\cdot\|_2,\varepsilon/m\big)
\ \le \
\Big\lceil \frac{\pi m\sqrt{p}}{\varepsilon}\Big\rceil^p,
\end{equation}
where we used Lem.~\ref{lemma:covering bound}.

Applying the standard Dudley entropy integral with any $\alpha\in(0,1]$ yields
\begin{equation}\label{eq:dudley-pre}
    \mathfrak{R}_S(\mathcal{F}_{\sin})
\le 4\alpha\ +\ 12\int_{\alpha}^{2}\sqrt{\frac{\log \mathcal{N}_2(\varepsilon,\mathcal{F}_{\sin},x_{1:n})}{n}}\ \,d\varepsilon
\end{equation}

Let $C:=\pi m\sqrt{p}>2$. Then $\Big\lceil \frac{\pi m\sqrt{p}}{\varepsilon}\Big\rceil\leq \frac{\pi m\sqrt{p}}{\varepsilon}+1\leq \frac{2\pi m\sqrt{p}}{\varepsilon}$, for all $\varepsilon\in(0,2]$. Thus
$$
\log \mathcal N_2(\varepsilon,\mathcal F_{\sin},x_{1:n})
\;\le\; \log\!\Big(\Big\lceil \frac{\pi m\sqrt{p}}{\varepsilon}\Big\rceil^p\Big)
\;\leq\; p\log\!\left(\frac{2\pi m\sqrt{p}}{\varepsilon}\right)$$

Hence for any $\alpha\in(0,1]$,
$$
\int_{\alpha}^{2}\sqrt{\frac{\log \mathcal N_2(\varepsilon,\mathcal F_{\sin},x_{1:n})}{n}}\,d\varepsilon
\;\le\;\int_{\alpha}^{2}\sqrt{\frac{p}{n}\log\left(\frac{2\pi m\sqrt{p}}{\varepsilon}\right)}d\varepsilon\leq (2-\alpha)\sqrt{\frac{p}{n} \log\left(\frac{2\pi m\sqrt{p}}{\alpha}\right)}
$$

Plugging this into \eqref{eq:dudley-pre} gives 
$$
\mathfrak{R}_S(\mathcal F_{\sin})
\;\le\; 4\alpha \;+\; 12(2-\alpha)\sqrt{\frac{p}{n} \log\left(\frac{2\pi m\sqrt{p}}{\alpha}\right)}
$$

Choosing $\alpha=\frac{1}{\pi mn\sqrt{p}}\in(0,1]$. Then
$$
\log\!\Big(\frac{2\pi m\sqrt{p}}{\alpha}\Big)
=\log\!\big(2\pi m\sqrt{p}\cdot\pi mn\sqrt{p}\big)=\log(2\pi^2m^2pn),
$$
So 
\begin{equation}\label{eq:single-sine-RC}
\mathfrak{R}_S(\mathcal F_{\sin})
\;\le\; \frac{4}{\pi mn\sqrt{p}}
\;+\; 24\sqrt{\frac{p}{n}}\sqrt{\log\left(2\pi^2m^2pn\right)}=\widetilde{\mathcal{O}}\left(\sqrt{\frac{p}{n}}\right).
\end{equation}

Combining \eqref{eq:coord-to-single-sine} and \eqref{eq:single-sine-RC} we obtain, for every $S$,
\begin{equation}\label{eq:max-coord-sine}
\max_{j\in[p]}\mathfrak{R}_S(\mathcal{S}|_j)\ \le\ S_1\mathfrak{R}_S(\mathcal{F}_{\sin})=\widetilde{\mathcal{O}}\left(S_1\sqrt{\frac{p}{n}}\right).
\end{equation}

Fix $\delta_0=\tfrac12$, substituting \eqref{eq:max-coord-sine} into \eqref{eq:main-decomp-sine} yields
\[
\mathbb{P}\big[h_\theta(X)\neq Y\big]
\ \le\
\widehat{\mathcal R}_\gamma(s^\theta)
\;+\;\widetilde{\mathcal{O}}\!\left(\frac{S_1}{\gamma}\cdot\frac{p}{\sqrt{n}}\right)
\;+\;\widetilde{\mathcal{O}}\!\left(\frac{1}{\sqrt{n}}\right).
\]
\end{proof}

\subsection{Margin bounds for ReLU MLP}
\begin{lemma}\label{lem:Frob-equals-sumN-app}
Let $Z \in \mathbb{R}^{p \times n}$ be the data matrix whose $i$-th column is $z_i=\sum_{k=1}^m e_{s_{i,k}}\in\{0,1,\dots,m\}^p$.
Let $N_{j\ell}=\sum_{i=1}^{n}\mathbf{1}\left\{{s_{i,j}=s_{i,\ell}}\right\}$.
Then $\|Z\|_F^{2}=\sum_{j=1}^{m}\sum_{\ell=1}^{m}N_{j\ell}$.
\end{lemma}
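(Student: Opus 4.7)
The plan is to unfold the Frobenius norm column by column and expand each column squared norm using the one-hot structure of the bag-of-tokens embedding. By definition $\|Z\|_F^2 = \sum_{i=1}^n \|z_i\|_2^2$, so the entire claim reduces to computing $\|z_i\|_2^2$ for a single example and then summing over $i$.

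Fixing $i$, I would write $z_i = \sum_{k=1}^m e_{s_{i,k}}$ and expand the square:
\[
\|z_i\|_2^2 \;=\; \Bigl\langle \sum_{j=1}^m e_{s_{i,j}},\; \sum_{\ell=1}^m e_{s_{i,\ell}} \Bigr\rangle
\;=\; \sum_{j=1}^m \sum_{\ell=1}^m \langle e_{s_{i,j}}, e_{s_{i,\ell}} \rangle.
\]
Since the standard basis vectors of $\mathbb{R}^p$ are orthonormal, $\langle e_a, e_b \rangle = \mathbf{1}\{a=b\}$, so each inner product collapses to the indicator $\mathbf{1}\{s_{i,j} = s_{i,\ell}\}$.

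Summing over $i$ and swapping the order of summation (valid because it is a finite sum of nonnegative terms) gives
\[
\|Z\|_F^2 \;=\; \sum_{i=1}^n \sum_{j=1}^m \sum_{\ell=1}^m \mathbf{1}\{s_{i,j} = s_{i,\ell}\}
\;=\; \sum_{j=1}^m \sum_{\ell=1}^m \sum_{i=1}^n \mathbf{1}\{s_{i,j} = s_{i,\ell}\}
\;=\; \sum_{j=1}^m \sum_{\ell=1}^m N_{j\ell},
\]
matching the definition of $N_{j\ell}$ in the statement.

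There is essentially no obstacle here: the identity is a direct consequence of expanding a squared norm of a sum of one-hot vectors and recognizing that pairwise inner products become equality indicators. The only minor care needed is that the double sum over $(j,\ell)$ includes the diagonal terms $j=\ell$ (which always contribute $1$ per example, reflecting the fact that $N_{jj} = n$); this is consistent with the stated formula, which does not exclude the diagonal.
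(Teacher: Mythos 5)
Your proof is correct and is essentially the same computation as the paper's: both reduce the claim to the observation that $\langle e_a, e_b\rangle = \mathbf{1}\{a=b\}$ and sum the resulting indicators over $(i,j,\ell)$. The only difference is organizational — you expand $\|Z\|_F^2$ column by column as $\sum_i \|z_i\|_2^2$, whereas the paper first decomposes $Z=\sum_{j=1}^m Z_j$ into position matrices and uses $\operatorname{tr}(Z_j^\top Z_\ell)$; your version is slightly more elementary since it avoids the trace identity entirely.
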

\begin{proof}
Write $Z=\sum_{j=1}^m Z_j$ where $Z_j:=\left(e_{s_{1,j}},\dots,e_{s_{n,j}}\right)\in\mathbb{R}^{p\times n}$. Then
$$
\|Z\|_F^{2}
=\Big\langle\sum_{j=1}^m Z_j,\sum_{\ell=1}^m Z_\ell\Big\rangle_F
=\sum_{j=1}^m\sum_{\ell=1}^m \operatorname{tr}(Z_j^\top Z_\ell).
$$
For $r,c$,
$$
(Z_j^\top Z_\ell)_{rc}=\sum_{s=1}^p (Z_j)_{sr}(Z_\ell)_{sc}=(e_{s_{r,j}})^\top e_{s_{c,\ell}},
$$
so $(Z_j^\top Z_\ell)_{ii}=(e_{s_{i,j}})^\top e_{s_{i,\ell}}=\mathbf{1}\left\{s_{i,j}=s_{i,\ell}\right\}$. Hence
$$
\operatorname{tr}(Z_j^\top Z_\ell)=\sum_{i=1}^n \mathbf{1}\left\{s_{i,j}=s_{i,\ell}\right\}=N_{j\ell},
$$
and substituting yields $\|Z\|_F^{2}=\sum_{j=1}^{m}\sum_{\ell=1}^{m}N_{j\ell}$.
\end{proof}

\begin{lemma}[Hoeffding bound]
\label{lem:sum_Njl_bound-app}
Assume that for each $i\in[n]$, the symbols $(s_{i,1},\ldots,s_{i,m})$ are i.i.d.\ uniform on $[p]:=\{1,\ldots,p\}$, and that they are independent across $i$.
Let $z_i=\sum_{k=1}^m e_{s_{i,k}}\in\mathbb{R}^p$, $Z=(z_1,\ldots,z_n)\in\mathbb{R}^{p\times n}$, and $x^{(i)}:=z_i$.
Then for any $\delta'\in(0,1)$, with probability at least $1-\delta'$,
\[
\sum_{i=1}^n\|x^{(i)}\|_2^2
\;\le\; nm\!\left(1+\frac{m-1}{p}\right)+m(m-1)\sqrt{\frac{n\log(1/\delta')}{2}},
\]
and therefore
\[
Q_2(S)\ :=\ \Big(\tfrac1n\sum_{i=1}^n\|x^{(i)}\|_2^2\Big)^{1/2}
\ \le\ \overline Q_2(m,p,n,\delta')
:=\Bigg[m\!\left(1+\frac{m-1}{p}\right)+m(m-1)\sqrt{\frac{\log(1/\delta')}{2n}}\Bigg]^{\!1/2}.
\]
\end{lemma}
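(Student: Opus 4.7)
\textbf{Proof proposal for Lemma~\ref{lem:sum_Njl_bound-app}.}

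The plan is to reduce the claim to a one-sided Hoeffding bound on a sum of i.i.d.\ bounded variables, using the Frobenius identity already proved in Lemma~\ref{lem:Frob-equals-sumN-app}. First I would write
\[
\sum_{i=1}^n \|x^{(i)}\|_2^2 \;=\; \|Z\|_F^2 \;=\; \sum_{j=1}^{m}\sum_{\ell=1}^{m} N_{j\ell},
\]
by Lemma~\ref{lem:Frob-equals-sumN-app}, then split the double sum into the diagonal part $\sum_{j=\ell} N_{jj} = nm$ (since $s_{i,j}=s_{i,j}$ always) and the off-diagonal part $T := \sum_{j\neq \ell} N_{j\ell}$.

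Second, I would exhibit $T$ as a sum of independent random variables across the training index $i$. Define
\[
T_i \;:=\; \sum_{\substack{j,\ell\in[m]\\ j\neq \ell}} \mathbf{1}\{s_{i,j}=s_{i,\ell}\},\qquad T \;=\; \sum_{i=1}^n T_i.
\]
Independence across $i$ follows from the assumption that the rows $(s_{i,1},\dots,s_{i,m})$ are independent across $i$ (within a single row the indicators are correlated, but that is immaterial for Hoeffding). Each $T_i$ is bounded: $T_i\in[0,m(m-1)]$, since there are exactly $m(m-1)$ ordered pairs with $j\neq\ell$. By uniformity of $s_{i,j}$ on $[p]$ and independence of $s_{i,j}$ and $s_{i,\ell}$ for $j\neq\ell$, $\mathbb{E}[\mathbf{1}\{s_{i,j}=s_{i,\ell}\}] = 1/p$, giving $\mathbb{E}[T_i] = m(m-1)/p$ and $\mathbb{E}[T] = nm(m-1)/p$.

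Third, apply the one-sided Hoeffding inequality to $T=\sum_{i=1}^n T_i$ with range $m(m-1)$ per summand: for every $t\ge 0$,
\[
\mathbb{P}\!\left[T - \mathbb{E}[T] \ge t\right] \;\le\; \exp\!\left(-\frac{2t^2}{n\,m^2(m-1)^2}\right).
\]
Setting the right-hand side equal to $\delta'$ and solving yields $t = m(m-1)\sqrt{n\log(1/\delta')/2}$. Combining with the deterministic diagonal contribution,
\[
\sum_{i=1}^n \|x^{(i)}\|_2^2 \;=\; nm + T \;\le\; nm + \frac{nm(m-1)}{p} + m(m-1)\sqrt{\frac{n\log(1/\delta')}{2}}
\]
with probability at least $1-\delta'$, which is exactly the first inequality. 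Dividing by $n$, taking square roots, and recalling the definition of $Q_2(S)$ yields the stated bound on $\overline Q_2(m,p,n,\delta')$.

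There is no real obstacle here: the only thing to take care of is to apply Hoeffding at the level of the i.i.d.\ summands $T_i$ (rather than at the level of individual indicators, which are correlated within a row) and to track that the per-summand range is $m(m-1)$, not $1$, so that the constant $m(m-1)$ appears correctly outside the square root.
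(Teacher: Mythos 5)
Your proof is correct and essentially matches the paper's own argument. The paper works directly with $Y_i := \sum_{j,\ell}\mathbf{1}\{s_{i,j}=s_{i,\ell}\}=\|z_i\|_2^2 \in[m,m^2]$ and applies Hoeffding to $\sum_i Y_i$; you separate the deterministic diagonal $nm$ from the off-diagonal sum $T_i\in[0,m(m-1)]$, but since $Y_i=m+T_i$ this is the same random variable up to a constant shift, yielding an identical range $m(m-1)$ and hence the same Hoeffding bound.
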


\begin{proof}
For a fixed $i$, define
\[
Y_i\ :=\ \sum_{j,\ell=1}^m \mathbf{1}\left\{s_{i,j}=s_{i,\ell}\right\}.
\]
Note that $z_i=\sum_{k=1}^m e_{s_{i,k}}$ has coordinates $z_i(c)=\sum_{k=1}^m \mathbf{1}\left\{s_{i,k}=c\right\}$, hence
\[
\|z_i\|_2^2=\sum_{c=1}^p z_i(c)^2
=\sum_{c=1}^p\Big(\sum_{j=1}^m \mathbf{1}\left\{s_{i,j}=c\right\}\Big)\Big(\sum_{\ell=1}^m \mathbf{1}\left\{s_{i,\ell}=c\right\}\Big)
=\sum_{j,\ell=1}^m \mathbf{1}\left\{s_{i,j}=s_{i,\ell}\right\}=Y_i.
\]
Therefore $\sum_{i=1}^n\|x^{(i)}\|_2^2=\sum_{i=1}^n\|z_i\|_2^2=\sum_{i=1}^n Y_i$. Observe that
\[
\mathbb{E}[Y_i]
=\sum_{j=1}^m \mathbb{E}\,\mathbf{1}\left\{s_{i,j}=s_{i,j}\right\}
+\sum_{\substack{j,\ell=1\\ j\neq \ell}}^m \mathbb{E}\,\mathbf{1}\left\{s_{i,j}=s_{i,\ell}\right\}
= m + m(m-1)\cdot \mathbb{P}\left[s_{i,1}=s_{i,2}\right].
\]
Since $s_{i,1},s_{i,2}$ are independent uniform on $[p]$, $\mathbb{P}\left[s_{i,1}=s_{i,2}\right]=1/p$, hence
\[
\mathbb{E}[Y_i]=m\Big(1+\frac{m-1}{p}\Big),\qquad
\mathbb{E}\Big[\sum_{i=1}^n Y_i\Big]=n\,m\Big(1+\frac{m-1}{p}\Big).
\]

Also notice that $m\le Y_i\le m^2$ and $(Y_i)_{i=1}^n$ are independent, let $S_n:=\sum_{i=1}^n Y_i$.
Hoeffding’s inequality for independent $Y_i\in[a_i,b_i]$ gives
\[
\mathbb{P}\!\left[S_n-\mathbb{E}S_n\ge t\right]
\ \le\ \exp\!\left(-\frac{2t^2}{\sum_{i=1}^n (b_i-a_i)^2}\right)
=\exp\!\left(-\frac{2t^2}{n\,(m^2-m)^2}\right).
\]
Set the right-hand side to $\delta'$ and solve for $t$ to get
\[
t\;=\; (m^2-m)\sqrt{\frac{n\log(1/\delta')}{2}} \;=\; m(m-1)\sqrt{\frac{n\log(1/\delta')}{2}}.
\]
Therefore, with probability at least $1-\delta'$,
\[
\sum_{i=1}^n\|x^{(i)}\|_2^2
=\sum_{i=1}^n Y_i
\le n\,m\Big(1+\frac{m-1}{p}\Big)\;+\;m(m-1)\sqrt{\frac{n\log(1/\delta')}{2}}.
\]
Dividing by $n$ and taking square roots yields the stated bound on $Q_2(S)$.
\end{proof}

We now state and prove the width-independent multiclass margin bound for homogeneous activation. The main idea is to use $\ell_\infty$ contraction to reduce the problem to the real output, and then utilize a technical lemma from~\citep{Golowich2017SizeIndependent}. The core part of the proof is almost identical, and is included only for completeness.

\begin{lemma}[Lem.~1 of~\citep{Golowich2017SizeIndependent}]\label{lem:Golowich2017}
    Let $\sigma$ be a 1-Lipschitz, positive-homogeneous activation function which is applied element-wise (such as the ReLU). Then for any class of vector-valued functions $\mathcal{F}$, and any convex and monotonically increasing function $g : \mathbb{R} \to [0, \infty)$,
\[
\mathbb{E}_{\epsilon} \sup_{f \in \mathcal{F}, W: \|W\|_F \le R} g\left(\left\|\sum_{i=1}^m \epsilon_i \sigma(W f(x_i))\right\|_2\right) \le 2 \cdot \mathbb{E}_{\epsilon} \sup_{f \in \mathcal{F}} g\left(R \cdot \left\|\sum_{i=1}^m \epsilon_i f(x_i)\right\|_2\right).
\]
\end{lemma}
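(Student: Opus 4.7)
The plan is to prove this lemma via two successive ``peeling'' steps: first extract the Frobenius-norm budget $R$ as an explicit scalar factor using positive homogeneity of $\sigma$, and then remove $\sigma$ together with the residual supremum over unit-Frobenius $\tilde W$, paying a constant factor of $2$. This is the standard peeling template for positive-homogeneous nets, with the factor $2$ reflecting the non-oddness of ReLU (or any positive-homogeneous, $1$-Lipschitz $\sigma$ that is not odd).

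First I would execute the homogeneity step: factor $W = \|W\|_F\,\tilde W$ with $\|\tilde W\|_F=1$, so that elementwise positive homogeneity gives $\sigma(Wf(x_i)) = \|W\|_F\,\sigma(\tilde W f(x_i))$. Hence $\bigl\|\sum_i \epsilon_i \sigma(Wf(x_i))\bigr\|_2$ is non-decreasing in $\|W\|_F$ for each fixed $\tilde W$, and because $g$ is monotonically increasing, the supremum over $\|W\|_F\le R$ is attained at $\|W\|_F=R$. This converts the LHS into
\[
\mathbb{E}_\epsilon \sup_{f\in\mathcal F,\ \|\tilde W\|_F\le 1} g\!\left(R\,\Bigl\|\sum_i \epsilon_i \sigma(\tilde W f(x_i))\Bigr\|_2\right).
\]

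Next I would peel off $\sigma$ and $\tilde W$ together. Dualize $\|v\|_2 = \sup_{\|u\|_2\le 1} u^\top v$ and push the monotone $g$ through the sup; for each fixed $u$, positive homogeneity lets me absorb $|u_j|$ into the $j$-th row $\tilde w_j$ of $\tilde W$: writing $v_j := |u_j|\tilde w_j$ and $\eta_j := \operatorname{sgn}(u_j)$, one obtains $u^\top\sigma(\tilde W f(x_i)) = \sum_j \eta_j\,\sigma(v_j^\top f(x_i))$ with budget $\sum_j\|v_j\|_2^2 \le \|u\|_2^2\|\tilde W\|_F^2 \le 1$. The scalar Rademacher contraction principle for the $1$-Lipschitz $\sigma$ (the non-symmetric variant, which pays a factor of $2$ for not being odd) then strips $\sigma$, after which the linearization $\sum_i \epsilon_i v_j^\top f(x_i) = v_j^\top\sum_i \epsilon_i f(x_i)$ combined with the budget $\sum_j\|v_j\|_2^2\le 1$ collapses the residual supremum to $\|\sum_i \epsilon_i f(x_i)\|_2$. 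Undoing the dualization and re-introducing $R$ and $g$ yields the RHS.

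The main obstacle will be the activation-peeling step: because $\sigma$ is not odd, Rademacher signs cannot be absorbed into $\sigma$ without loss, which is precisely what forces the constant $2$. A subsidiary difficulty is carrying the convex, monotone $g$ through the dualization without inflating the argument of $g$ (e.g., producing $g(2\|\cdot\|)$ instead of $g(\|\cdot\|)$); this is handled by pushing $g$ through the sup once via monotonicity and applying the contraction to the linear functional rather than to the norm itself, which keeps the constant out front at exactly $2$. Everything else---Frobenius-vs.-operator-norm domination, and the sup-of-unit-$\tilde W$ reducing to $\|\sum_i\epsilon_i f(x_i)\|_2$---is a routine consequence of Cauchy--Schwarz.
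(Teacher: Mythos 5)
Your overall peeling strategy is the right one, but there is a genuine gap in the budget step, and it propagates fatally through the rest of the argument. After you reparametrize $v_j := |u_j|\tilde w_j$ (with $\|u\|_2\le 1$, $\|\tilde W\|_F\le 1$), the tight constraint from Cauchy--Schwarz is an $\ell_1$ budget on the row norms, $\sum_j\|v_j\|_2 = \sum_j|u_j|\,\|\tilde w_j\|_2\le\|u\|_2\,\|\tilde W\|_F\le 1$, \emph{not} the $\ell_2$ budget $\sum_j\|v_j\|_2^2\le 1$. Your stated inequality $\sum_j\|v_j\|_2^2\le\|u\|_2^2\|\tilde W\|_F^2$ is true (via $\|u\|_\infty\le\|u\|_2$), but it is strictly weaker, and the slack is where the width dependence leaks back in. With $\sum_j\|v_j\|_2^2\le1$, your final collapse ``$\sup\sum_j\eta_j v_j^\top z = \|z\|_2$'' is false: choose all $\hat v_j$ aligned with $\pm z$ and $\|v_j\|_2 = 1/\sqrt{d}$ to get $\sqrt{d}\,\|z\|_2$. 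The same leak shows up one step earlier: with the weak budget, $\sup_{(v_j),(\eta_j)}\sum_j\eta_j\sum_i\epsilon_i\sigma(v_j^\top f(x_i))$ equals $\sqrt{d}\,\sup_{\|w\|_2\le1}\bigl|\sum_i\epsilon_i\sigma(w^\top f(x_i))\bigr|$, so you lose width-independence, which is the entire point of the lemma.

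With the correct $\ell_1$ budget $\sum_j\|v_j\|_2\le1$, positive homogeneity (write $\|v_j\|_2=\alpha_j$, $\hat v_j = v_j/\alpha_j$) followed by concentrating $(\alpha_j)$ on the maximizing index collapses the whole multi-row sum to a single row: $\sup\sum_j\eta_j\sum_i\epsilon_i\sigma(v_j^\top f(x_i)) = \sup_{\|w\|_2\le1}\bigl|\sum_i\epsilon_i\sigma(w^\top f(x_i))\bigr|$. This is exactly the intermediate identity in Golowich et al., who obtain it directly (without dualizing $\|\cdot\|_2$) from the observation that $\sup_{\|W\|_F\le R}\bigl\|\sum_i\epsilon_i\sigma(Wf(x_i))\bigr\|_2 = \sup_{\|w\|_2\le R}\bigl|\sum_i\epsilon_i\sigma(w^\top f(x_i))\bigr|$: each row contributes at most $\|w_j\|_2^2$ times the squared single-row supremum, and a matrix with one nonzero row attains it. Collapsing to a single row is also what makes the scalar contraction step legal: the Ledoux--Talagrand comparison with convex increasing $g$ peels one $1$-Lipschitz $\phi$ out of a single Rademacher sum, not a $d$-term sum $\sum_j\eta_j\alpha_j\sigma(\cdot)$ of such compositions. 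Fix the budget to $\ell_1$, collapse to a single row, and then your contraction-plus-linearization conclusion goes through and recovers the factor~$2$ and the factor~$R$ exactly as you described.
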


\begin{theorem}[Width-independent multiclass margin bound for homogeneous activation]
\label{thm:margin_bound_homo-app}
Assume $p>m$ and $n>m^2$, $n\geq 17$, and $\sigma$ is a 1-Lipschitz, positive-homogeneous activation function. For any $\gamma>0$ and $\delta\in(0,1)$, with probability at least $1-\delta$ over the random draw of the training samples $S$, the following holds simultaneously for all $\theta=(W,V)$ with $\|V\|_2\le S_2$ and $\|W\|_F\le B$, 
\[
\mathbb{P}_{(X,Y)\in\mathcal{D}}\big[h_\theta(X)\neq Y\big]
\ \le\
\widehat{\mathcal{R}}_{\gamma}\!\big(s^\theta\big)
\;+\; \widetilde{\mathcal{O}}\!\left(\frac{S_2 B}{\gamma}\,\sqrt{\frac{p\,m}{n}}\right)
\;+\; \widetilde{\mathcal{O}}\!\left(\frac{1}{\sqrt{n}}\right).
\]
Here $\widetilde{\mathcal{O}}(\cdot)$ hides factors polylogarithmic in $n$ and $\delta^{-1}$.
\end{theorem}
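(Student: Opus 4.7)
The plan is to follow the same three-step template used for the sine case in Thm.~\ref{thm:margin_bound_sine-app}, but replace the Dudley-integral cover of the sine-coordinate class with the Golowich--Shamir--Srebro peeling lemma (Lem.~\ref{lem:Golowich2017}), which exploits positive homogeneity of $\sigma$. First I will invoke Cor.~\ref{cor:margin_generalization} to bound the $0$--$1$ loss by $\widehat{\mathcal R}_\gamma(s^\theta) + 2\mathfrak R_S(\mathcal F_\gamma) + O\!\bigl(\sqrt{\log(1/\delta)/n}\bigr)$, and then apply the $\ell_\infty$ vector contraction of \citep{foster2019linfty}, exactly as in \eqref{eq:FR-sine}, to reduce $\mathfrak R_S(\mathcal F_\gamma)$ to $\sqrt{p}\max_{j\in[p]}\mathfrak R_S(\mathcal S|_j)$, at the cost of the $2/\gamma$ Lipschitz factor of $\psi_\gamma\circ\phi_y$ and a polylogarithmic factor driven by a uniform sup-bound $\beta$ on $\|s^\theta(x)\|_\infty$.

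For each coordinate Rademacher complexity, I use that $\|V\|_2\le S_2$ implies $\|v_j\|_2 = \|V^\top e_j\|_2 \le \|V\|_2 \le S_2$ for every row, so $\ell_2$ duality yields
\begin{equation*}
n\,\mathfrak R_S(\mathcal S|_j) \;\le\; S_2\;\mathbb E_\epsilon \sup_{\|W\|_F \le B} \Bigl\|\sum_{i=1}^n \epsilon_i\,\sigma(W x^{(i)})\Bigr\|_2.
\end{equation*}
I then apply Lem.~\ref{lem:Golowich2017} with $g(t)=t$ and the singleton inner class $\mathcal F=\{\mathrm{id}\}$ to strip off one layer of $\sigma(W\cdot)$, upper-bounding the right-hand side by $2 S_2 B\,\mathbb E_\epsilon\,\|\sum_i \epsilon_i x^{(i)}\|_2$. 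Jensen's inequality then gives $\mathbb E_\epsilon\,\|\sum_i \epsilon_i x^{(i)}\|_2 \le \sqrt{n}\,Q_2(S)$, and Lem.~\ref{lem:sum_Njl_bound-app} combined with the hypotheses $p>m$ and $n>m^2$ shows $Q_2(S) = \widetilde{\mathcal O}(\sqrt{m})$ on an event of probability at least $1-\delta/2$. Hence $\max_j \mathfrak R_S(\mathcal S|_j) = \widetilde{\mathcal O}\!\bigl(S_2 B\,\sqrt{m/n}\bigr)$.

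To handle the logarithmic factor in Foster--Rakhlin I use the deterministic output bound $\|s^\theta(x)\|_\infty \le \|V\|_2\,\|\sigma(Wx)\|_2 \le S_2\,\|W\|_F\,\|x\|_2 \le S_2 B m$, using $\sigma(0)=0$ together with $1$-Lipschitzness, and $\|x\|_2 \le \sqrt{\|x\|_\infty\|x\|_1} \le m$. This gives $\beta = O(1 + S_2 B m)$, which is polylog-safe. Substituting into the contraction inequality produces $\mathfrak R_S(\mathcal F_\gamma) \le \widetilde{\mathcal O}\!\bigl(\tfrac{S_2 B}{\gamma}\sqrt{pm/n}\bigr)$, and a final union bound over the Hoeffding event in Lem.~\ref{lem:sum_Njl_bound-app} and the generalization event in Cor.~\ref{cor:margin_generalization} completes the proof.

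The main obstacle is the bookkeeping of norm constraints rather than any deep inequality: the peeling lemma requires a Frobenius bound on the matrix whose $\sigma$-image is being stripped, while the $\ell_\infty$ vector contraction requires each \emph{scalar} coordinate class to have small Rademacher complexity. Choosing the spectral bound $\|V\|_2 \le S_2$, rather than a Frobenius bound on $V$, is precisely what transfers cleanly to a row-wise $\ell_2$ bound on the $v_j$'s and keeps the coordinate analysis uniform in $p$; any looser norm choice here would inject an extra $\sqrt{p}$ factor, spoiling the clean $\|V\|_2\|W\|_F$ normalization advertised in Thm.~\ref{thm:relu-width-margin-gen}.
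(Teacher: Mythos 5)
Your proposal is correct and, in fact, takes a slightly cleaner route than the paper on the coordinate-complexity step. Both arguments share the same skeleton: the ramp-loss surrogate and Cor.~\ref{cor:margin_generalization}, the $\ell_\infty$ vector-contraction of \citep{foster2019linfty} with the $2/\gamma$ Lipschitz constant and the $\beta\le 1+S_2Bm$ sup-bound, the $\ell_2$-duality reduction of each coordinate class using $\|v_j\|_2\le\|V\|_2$, Lem.~\ref{lem:Golowich2017} to peel off one layer, Jensen plus $\|x\|_2\le m$, and Lem.~\ref{lem:sum_Njl_bound-app} with $p>m,\ n>m^2$ to certify $Q_2(S)=\widetilde{\mathcal O}(\sqrt m)$. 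The one place you genuinely diverge is the choice of $g$ in the peeling lemma: the paper follows the textbook Golowich--Shamir--Srebro route, taking $g(t)=\exp(S_2\lambda t)$, bounding the MGF of $Z=M\|\sum_i\epsilon_i x^{(i)}\|_2$ by a bounded-difference (sub-Gaussian) argument, and optimizing $\lambda$, which yields the constant $\sqrt{2\log 2}+1$. You instead apply the lemma directly with the (essentially) identity $g$ and then Jensen, giving the constant $2$. For a single peel the exponential-moment machinery buys nothing --- its purpose is to turn the multiplicative $2^L$ of an $L$-layer peel into an additive $L\log 2/\lambda$ term --- so your version is shorter and even marginally tighter. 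One small formal caveat: as stated, Lem.~\ref{lem:Golowich2017} requires $g:\mathbb{R}\to[0,\infty)$, so you should take $g(t)=\max\{t,0\}$ rather than $g(t)=t$; since the argument is always a nonnegative norm, the two agree on the relevant domain and the conclusion is unaffected. Everything else matches the paper's proof, and the claim about $\|V\|_2$ being the right norm to transfer uniformly to the rows $v_j$ is a correct (if tangential) observation.
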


\begin{proof}[Proof of Thm.~\ref{thm:margin_bound_homo-app}]
The multiclass margin satisfies $|\phi_y(s)-\phi_y(s')|\le 2\|s-s'\|_\infty$ for all $s,s'$, hence
$g_y:=\psi_\gamma\circ\phi_y$ is $\frac{2}{\gamma}$-Lipschitz w.r.t.\ $\|\cdot\|_\infty$ and $|g_y|\le 1$.

$\bm{\ell_{\infty}}$-\textbf{vector contraction}.
For a vector class $\mathcal{S}\subset\{x\mapsto s(x)\in\mathbb{R}^p\}$ and
$L$-Lipschitz maps $\{\varphi_i\}_{i=1}^n$ w.r.t.\ $\|\cdot\|_\infty$, a standard $\ell_\infty$ vector contraction inequality (see, e.g., Thm.~1 in \citep{foster2019linfty}) implies that for the fixed sample $S=(x^{(1)},\dots,x^{(n)})$,
\begin{equation}\label{eq:FR19}
\mathfrak{R}_S(\varphi\circ\mathcal{S})
:=\frac{1}{n}\mathbb{E}_\varepsilon\Big[\sup_{s\in\mathcal{S}}\sum_{i=1}^n \varepsilon_i\,\varphi_i(s(x^{(i)}))\Big]
\ \le\
C\,L\,\sqrt{p}\ \max_{j\in[p]} \mathfrak{R}_S(\mathcal{S}|_j)\ \log^{\frac{3}{2}+\delta_0}\!\Big(\frac{\beta}{\max_{j} \mathfrak{R}_S(\mathcal{S}|_j)}\Big),
\end{equation}
for any fixed $\delta_0>0$, with $C=C_{\delta_0}<\infty$. Here
\[
\mathfrak{R}_S(\mathcal{S}|_j)\ :=\ \frac{1}{n}\,\mathbb{E}_\varepsilon\!\left[\sup_{s\in\mathcal{S}}\sum_{i=1}^n \varepsilon_i\, s_j(x^{(i)})\right],
\qquad
\beta\ \ge\ \sup_{\theta}\max_{i}\big\{|\varphi_i(s^\theta(x^{(i)}))|,\ \|s^\theta(x^{(i)})\|_\infty\big\}.
\]

Let $\mathcal{S}=\{s^\theta:\ \|V\|_2\le S_2,\ \|W\|_F\le B\}$ and
$\mathcal{S}|_j=\{x\mapsto v_j^\top\sigma(Wx):\ \|V\|_2\le S_2,\ \|W\|_F\le B\}$, where $v_j\in\mathbb{R}^{d}$ is the $j$-th row of $V$.
Fix $\lambda > 0$, to be chosen later. For any fixed $x_{1:n}$, the Rademacher complexity can be upper bounded as
\begin{align*}
n\,\mathfrak{R}_S(\mathcal{S}|_j)
&= \mathbb{E}_{\epsilon} \sup_{\substack{\|V\|_2\le S_2\\\ \|W\|_F\le B}} \sum_{i=1}^n \epsilon_i\, v_j^{\top} \sigma\!\left(W x^{(i)}\right) \\
&\le \mathbb{E}_{\epsilon} \sup_{\substack{\|v_j\|_2\le S_2\\\ \|W\|_F\le B}} \sum_{i=1}^n \epsilon_i\, v_j^{\top} \sigma\!\left(W x^{(i)}\right) \qquad\text{(Cauchy--Schwarz)}\\
&\le \frac{1}{\lambda} \log \mathbb{E}_{\epsilon}  \sup_{\substack{\|v_j\|_2\le S_2\\\ \|W\|_F\le B}}\exp \left( \lambda \sum_{i=1}^n \epsilon_i\, v_j^{\top} \sigma\!\left(W x^{(i)}\right) \right) \\
&\le \frac{1}{\lambda} \log \mathbb{E}_{\epsilon}  \sup_{\substack{\|v_j\|_2\le S_2\\\ \|W\|_F\le B}} \exp \left(\|v_j\|_2\cdot \lambda \left\| \sum_{i=1}^n \epsilon_i\, \sigma\!\left(W x^{(i)}\right)\right\|_2 \right)\\
&\le \frac{1}{\lambda} \log \mathbb{E}_{\epsilon}  \sup_{\|W\|_F\le B} \exp \left(S_2\cdot \lambda \left\| \sum_{i=1}^n \epsilon_i\, \sigma\!\left(W x^{(i)}\right)\right\|_2 \right)\!.
\end{align*}
Applying Lem.~\ref{lem:Golowich2017} with the given $1$-Lipschitz, positive-homogeneous $\sigma$, $\mathcal{F}=\{f:\ f(x)=x\}$ (identity class), and $g(t)=\exp(S_2\lambda t)$, we obtain
\begin{align*}
    \frac{1}{\lambda} \log \mathbb{E}_{\epsilon}  \sup_{\|W\|_F\le B} \exp \left(S_2\cdot \lambda \left\| \sum_{i=1}^n \epsilon_i \sigma\!\left(W x^{(i)}\right)\right\|_2 \right)
    \ \le\ \frac{1}{\lambda} \log \!\left(2\, \mathbb{E}_{\epsilon}  \exp \left(S_2\cdot \lambda B \left\| \sum_{i=1}^n \epsilon_i x^{(i)}\right\|_2 \right)\right)\!.
\end{align*}
Denote $M=S_2B$, and define the random variable (as a function of $\epsilon=(\epsilon_1, \dots, \epsilon_n)$):
\[
Z \;=\; M \cdot \left\| \sum_{i=1}^n \epsilon_i x^{(i)} \right\|_2.
\]
Then
\begin{equation*}
\frac{1}{\lambda} \log \left(2 \,\mathbb{E}_{\epsilon} \exp (\lambda Z) \right) \;=\; \frac{\log 2}{\lambda} + \frac{1}{\lambda} \log \left( \mathbb{E}_{\epsilon} \exp\left( \lambda (Z - \mathbb{E}Z) \right)\right) + \mathbb{E}Z.
\end{equation*}
By Jensen's inequality,
\[
\mathbb{E}Z\ \le\ M\sqrt{\mathbb{E}_{\epsilon}\left[\left\|\sum_{i=1}^n \epsilon_i x^{(i)} \right\|_2^2\right]}  =M\sqrt{\mathbb{E}_{\epsilon}\left[\sum_{i,i'=1}^m \epsilon_i \epsilon_{i'} x_i^\top x_{i'}\right]} \;=\; M\sqrt{\sum_{i=1}^n \|x^{(i)}\|_2^2}.
\]
Moreover, $Z$ satisfies a bounded-difference condition
\[
Z(\epsilon_1, \dots, \epsilon_i, \dots, \epsilon_n) - Z(\epsilon_1, \dots, -\epsilon_i, \dots, \epsilon_n) \le 2M \|x^{(i)}\|_2,
\]
and hence is sub-Gaussian with variance factor
$v = M^2 \sum_{i=1}^n \|x^{(i)}\|_2^2$, yielding
\[
\frac{1}{\lambda} \log \left( \mathbb{E}_{\epsilon} \exp \lambda (Z - \mathbb{E}Z) \right) \le \frac{\lambda M^2}{2} \sum_{i=1}^n \|x^{(i)}\|_2^2.
\]
Choosing $\lambda = \frac{\sqrt{2\log 2}}{M \sqrt{\sum_{i=1}^n \|x^{(i)}\|_2^2}}$ gives
\[
\frac{1}{\lambda} \log \left( 2 \cdot \mathbb{E}_{\epsilon} \exp (\lambda Z) \right) \;\le\; M \left( \sqrt{2 \log 2} + 1 \right) \sqrt{\sum_{i=1}^n \|x^{(i)}\|_2^2}.
\]
Therefore,
\begin{equation}\label{eq:coord-R}
    \mathfrak{R}_S(\mathcal{S}|_j)\ \le\  S_2B \left( \sqrt{2 \log 2} + 1 \right) 
    \frac{1}{\sqrt{n}}\sqrt{\frac{1}{n}\sum_{i=1}^n \|x^{(i)}\|_2^2}\,.
\end{equation}

\textbf{Controlling} $\bm{\max_j \mathfrak{R}_S(\mathcal{S}|_j)}$ \textbf{and the log term.}
Define the ``good'' subset
\[
\mathcal{X}_{\mathrm{good}}^n(\delta')
:=\Big\{x_{1:n}\in\mathcal{X}^n:\ \tfrac{1}{n}\sum_{i=1}^n \|x^{(i)}\|_2^2 \le \overline Q_2(m,p,n,\delta')^2\Big\}.
\]
By Lem.~\ref{lem:sum_Njl_bound-app}, with probability $\ge 1-\delta'$ the realized sample satisfies $x_{1:n}\in\mathcal{X}_{\mathrm{good}}^n(\delta')$.
On this event, \eqref{eq:coord-R} yields
\begin{equation}\label{eq:coord-R-good}
0\leq \max_{j\in[p]} \mathfrak{R}_S(\mathcal{S}|_j)\ \le\ S_2B \left( \sqrt{2 \log 2} + 1 \right) 
    \frac{1}{\sqrt{n}}\overline Q_2(m,p,n,\delta').
\end{equation}
Furthermore, for any $\theta$ and $x$,
$\|s^\theta(x)\|_\infty\le \|V\|_2\,\|\sigma(Wx)\|_2\le S_2 B \|x\|_2$, and since here $x\in\{0,1,\dots,m\}^p$ with $\|x\|_1=m$, we have $\|x\|_2\le m$. Thus we may take the simple, deterministic bound
\[
\beta\ \le\ 1 + S_2 B\,m.
\]
To upper bound the logarithm in \eqref{eq:FR19} more conveniently, also define
\[
b\ :=\ 1+S_2 B\,\sqrt{n}\,\overline Q_2(m,p,n,\delta')\,,
\]
so that $\beta\le b$ and hence $\log(\beta/t)\le \log(b/t)$ for all $t>0$.

Applying \eqref{eq:FR19} with $L=2/\gamma$ and using \eqref{eq:coord-R-good}, we obtain on the event of Lem.~\ref{lem:sum_Njl_bound-app}
\begin{align*}
\mathfrak{R}_S(\mathcal{F}_\gamma)
&\leq
    C\,\frac{2}{\gamma}\,\sqrt{p}\ \max_{j\in[p]} \mathfrak{R}_S(\mathcal{S}|_j)\ \log^{\frac{3}{2}+\delta_0}\!\Big(\frac{\beta}{\max_j \mathfrak{R}_S(\mathcal{S}|_j)}\Big)\\
&\leq
    C\,\frac{2}{\gamma}\,\sqrt{p}\ \max_{j\in[p]} \mathfrak{R}_S(\mathcal{S}|_j)\ \log^{\frac{3}{2}+\delta_0}\!\Big(\frac{b}{\max_j \mathfrak{R}_S(\mathcal{S}|_j)}\Big).
\end{align*}
Let
\[
h(t)\;=\;t\,\log^{\,a}\!\Big(\frac{b}{t}\Big),\qquad a:=\tfrac32+\delta_0> \tfrac32\,.
\]
Substituting $\delta_0=0.5$ gives $a=2$. From \eqref{eq:coord-R-good}, with $t:=\max_{j} \mathfrak{R}_S(\mathcal{S}|_j)$ we have
\[
t\ \le\ \frac{\sqrt{2\log 2}+1}{\sqrt n}\,S_2B\,\overline Q_2(m,p,n,\delta') \ =\ \frac{\sqrt{2\log 2}+1}{n}\,\big(b-1\big)\ \le\ \frac{\sqrt{2\log 2}+1}{n}\,b.
\]
Since $n\geq 17\geq e^2(\sqrt{2\log 2}+1)$, we have $t\le b\,e^{-2}$; on $[0,\,b e^{-2}]$ the function $h$ is increasing, hence
\[
h(t)\ \le\ h\!\Big(\tfrac{\sqrt{2\log 2}+1}{n}\,b\Big)
\ =\ \frac{\sqrt{2\log 2}+1}{n}\,b\ \log^2\!\Big(\frac{b}{b(\sqrt{2\log 2}+1)/n}\Big)
\ =\ \frac{\sqrt{2\log 2}+1}{n}\,b\ \log^2\!\Big(\frac{n}{\sqrt{2\log 2}+1}\Big).
\]
Therefore, for some absolute $C'>0$,
\begin{equation}\label{eq:gen-bound}
    \mathfrak{R}_S(\mathcal{F}_\gamma)\ \le\ C'\,\frac{1}{\gamma}\,\sqrt{\frac{p}{n}}\;S_2B\ \overline Q_2(m,p,n,\delta')\ \log^2\!\Big(\frac{n}{\sqrt{2\log 2}+1}\Big).
\end{equation}

\textbf{Final bound.}
By Lem.~\ref{lem:sum_Njl_bound-app}, with probability at least $1-\delta'$,
\[
\overline Q_2(m,p,n,\delta')^2
= m\!\left(1+\frac{m-1}{p}\right) + m(m-1)\sqrt{\frac{\log(1/\delta')}{2n}}
\ \le\ 2m + m\sqrt{\log(1/\delta')},
\]
where we used $p>m$ and $n>m^2$. Hence $\overline Q_2(m,p,n,\delta') = \widetilde{\mathcal{O}}(\sqrt{m})$.
Since $\frac{1}{n}\sum_{i=1}^n \psi_\gamma(\phi_{y^{(i)}}(s^\theta(x^{(i)})))\le \widehat{\mathcal{R}}_\gamma(s^\theta)$, combining
\eqref{eq:RC-loss} and \eqref{eq:gen-bound}, and taking a union bound with the choice $\delta'=\delta/2$ while applying Cor.~\ref{cor:margin_generalization} with confidence parameter $\delta/2$, yields the stated result with overall probability at least $1-\delta$.
\end{proof}

\begin{remark}[Data-dependent specialization]
The bound is width-independent and depends on the sample only through $Q_2(S)$. In our setup, $x\in\{0,1,\dots,m\}^p$ with $\|x\|_1=m$; thus $\|x\|_2\le m$, so $\beta\le 1+S_2 B m$ deterministically. We further used distributional assumptions on $s_{1:m}$ (e.g., i.i.d.\ uniform over $[p]$) only to obtain sharper high-probability bounds on $Q_2(S)$.
\end{remark}

We are now able to prove theorems in Sec.~\ref{sec:overparameterized}:

\begin{proof}[Proof of Thm.~\ref{thm:sin-width-margin-gen}]

The proof consists of showing all networks with small training error and small normalized margin generalize, and at least one such network exist.

In Thm.~\ref{thm:margin_bound_sine-app}, set $\gamma=\gamma_\theta(\mathcal{D}_{\mathrm{train}})$, then the empirical $\gamma$-margin error is
\[
\widehat{\mathcal{R}}_{\gamma}\!\big(s^\theta\big)=\ \frac{1}{n}\sum_{i=1}^n
\mathbf{1}\!\left\{f(x_i)_{y_i} \le \gamma + \max_{j\neq y_i} f(x_i)_j\right\}=0.
\]

Notice that $\overline{\gamma}_\theta = \frac{\gamma_\theta(\mathcal{D}_{\mathrm{train}})}{\|V\|_{1,\infty}}$, by Thm.~\ref{thm:margin_bound_sine-app}, \[
\mathbb{P}_{(X,Y)\in\mathcal{D}}\big[h_\theta(X)\neq Y\big]
\ \le\
 \widetilde{\mathcal{O}}\!\left(\frac{1}{\overline{\gamma}_\theta}\,p\sqrt{\frac{1}{n}}\right)
\;+\; \widetilde{\mathcal{O}}\!\left(\frac{1}{\sqrt{n}}\right)\leq \widetilde{\mathcal{O}}\!\left(p\,\sqrt{\frac{1}{n}}\right)\;+\;\widetilde{\mathcal{O}}\!\left(\frac{1}{\sqrt n}\right)= \widetilde{\mathcal{O}}\!\left(p\,\sqrt{\frac{1}{n}}\right).
\]

When $2p\leq d$, Sec.~\ref{subsec:sine_high_margin_construction} gives a network whose normalized margin is 
$$
\overline{\gamma}_\theta\;=\;\frac{\gamma_\theta(\mathcal{D}_{\mathrm{train}})}{\|V\|_{1,\infty}}
\;\ge\;\frac{p}{2p}\;=\;\frac{1}{2}=\Omega(1).
$$

\end{proof}

\begin{proof}[Proof of Thm.~\ref{thm:relu-width-margin-gen}]

In Thm.~\ref{thm:margin_bound_homo-app}, set $\gamma=\gamma_\theta(\mathcal{D}_{\mathrm{train}})$. Then the empirical $\gamma$-margin error is zero,
$$
\widehat{\mathcal R}_{\gamma}\!\big(s^\theta\big)
=\frac{1}{n}\sum_{i=1}^n
\mathbf{1}\!\left\{f(x_i)_{y_i}\le \gamma+\max_{j\ne y_i} f(x_i)_j\right\}=0,
$$
and Thm.~\ref{thm:margin_bound_homo-app} gives
\[
\mathbb{P}_{(X,Y)\in\mathcal{D}}\big[h_\theta(X)\neq Y\big]
\ \le\
\widetilde{\mathcal{O}}\!\left(\frac{1}{\overline{\gamma}_\theta}\,\sqrt{\frac{p\,m}{n}}\right)
\;+\; \widetilde{\mathcal{O}}\!\left(\frac{1}{\sqrt{n}}\right).
\]
Apply Thm.~\ref{thm:single-multifreq-relu-explicit-noLambda} with $\tau=0.1$, which yields a margin $\gamma(x)\ge 0.6p$ on $\mathcal{X}_m$ and width $d\le p\,C_m$, where
\[
C_m \;=\; 13\,m\,2^{m}\,\Bigl(m\sqrt{10 e m}\,\bigl(1+2 e m\bigr)^{\frac{m-1}{2}}+2\Bigr).
\]
Using $(1+2em)^{(m-1)/2}\le (2em)^{(m-1)/2}e^{1/(4e)}$, we obtain
\[
C_m \;\le\; 26\sqrt{5}\,e^{\frac{1}{4e}}\;m^{\frac{m}{2}+2}\,(\sqrt{8e})^m
\ \le\ 64\,m^{\frac{m}{2}+2}\,(4.67)^m.
\]
Thus the width condition in the statement $d\ge 64\,p\,m^{\frac{m}{2}+2}(4.67)^m$ is sufficient for $d\ge p\,C_m$.

From \eqref{eq:W-V-bounds}–\eqref{eq:V-spectral} in Thm.~\ref{thm:single-multifreq-relu-explicit-noLambda},
\[
\|W\|_F\ \le\ \frac{2}{m}p\sqrt{C_m},
\qquad
\|V\|_2\ \le\ \sqrt{2p}\,\sqrt{C_m}\,\frac{(m+\tfrac12)m^{2m}}{m!\,2^m}.
\]
Write
\[
K_m\ :=\ C_m\,\frac{(m+\tfrac12)m^{2m}}{m!\,2^m}.
\]
Using Stirling’s lower bound $m!\ge \sqrt{2\pi m}\,(m/e)^m$ and the same $(1+2em)$ bound as above gives the clean upper bound
\[
K_m\ \le\ \frac{39\sqrt{5}\,e^{1/(4e)}}{\sqrt{4\pi e}}\;m^{1.5m+2.5}\,(\sqrt{2}\,e^{3/2})^m
\ \le\ 17\,m^{1.5m+2.5}\,(6.34)^m.
\]
Consequently,
\[
\|V\|_2\|W\|_F\ \le\ 2\sqrt{2}\,p\sqrt{p}\,K_m,
\]
and \[
\overline{\gamma}_\theta\ =\ \frac{\gamma_\theta(\mathcal{D}_{\mathrm{train}})}{\|V\|_2\|W\|_F}
\ \ge\ \frac{0.6\,p}{2\sqrt{2}\,p\sqrt{p}\,K_m}\ =\ \frac{0.3}{\sqrt{2}}\cdot \frac{1}{K_m\sqrt{p}}=\Omega\left(\frac{1}{\sqrt{p}}\cdot \frac{1}{\,m^{1.5m+2.5}\,(6.34)^m\,}\right).\]
\end{proof}

\end{document}